\theoremstyle{plain}
\newtheorem{theorem}{Theorem}[section]
\newtheorem{lemma}[theorem]{Lemma}
\newtheorem{corollary}[theorem]{Corollary}
\theoremstyle{definition}
\theoremstyle{remark}
\icmltitlerunning{Reflective Policy Optimization}
\begin{document}
	
	\twocolumn[
	\icmltitle{Reflective Policy Optimization}
	
	% It is OKAY to include author information, even for blind
	% submissions: the style file will automatically remove it for you
	% unless you've provided the [accepted] option to the icml2024
	% package.
	
	% List of affiliations: The first argument should be a (short)
	% identifier you will use later to specify author affiliations
	% Academic affiliations should list Department, University, City, Region, Country
	% Industry affiliations should list Company, City, Region, Country
	
	% You can specify symbols, otherwise they are numbered in order.
	% Ideally, you should not use this facility. Affiliations will be numbered
	% in order of appearance and this is the preferred way.
	\icmlsetsymbol{equal}{*}
	
	\begin{icmlauthorlist}
		%\icmlauthor{Yaozhong Gan}{equal,yyy}
		%\icmlauthor{Renye Yan}{equal,comp}
		%\icmlauthor{Zhe Wu}{yyy}
		%\icmlauthor{Junliang Xing}{sch}

		\icmlauthor{Yaozhong Gan}{equal,yyy}
		\icmlauthor{Renye Yan}{equal,yyy}
		\icmlauthor{Zhe Wu}{yyy}
		\icmlauthor{Junliang Xing}{yyy}
	\end{icmlauthorlist}
	
	\icmlaffiliation{yyy}{QiYuan Lab. Email: yzgancn@163.com; ryyancn@163.com; \{wuzhe, xingjunliang\}@qiyuanlab.com}
	%\icmlaffiliation{comp}{Peking University}
	%\icmlaffiliation{sch}{Department of Computer Science and Technology, Tsinghua University}
	
	\icmlcorrespondingauthor{Junliang Xing}{xingjunliang@qiyuanlab.com}
	
	% You may provide any keywords that you
	% find helpful for describing your paper; these are used to populate
	% the "keywords" metadata in the PDF but will not be shown in the document
	\icmlkeywords{Machine Learning, ICML}
	
	\vskip 0.3in
	]
	
	% this must go after the closing bracket ] following \twocolumn[ ...
	
	% This command actually creates the footnote in the first column
	% listing the affiliations and the copyright notice.
	% The command takes one argument, which is text to display at the start of the footnote.
	% The \icmlEqualContribution command is standard text for equal contribution.
	% Remove it (just {}) if you do not need this facility.
	
	%\printAffiliationsAndNotice{}  % leave blank if no need to mention equal contribution
	\printAffiliationsAndNotice{\icmlEqualContribution} % otherwise use the standard text.
	
	\begin{abstract}
		On-policy reinforcement learning methods, like Trust Region Policy Optimization (TRPO) and Proximal Policy Optimization (PPO), often demand extensive data per update, leading to sample inefficiency. This paper introduces Reflective Policy Optimization (RPO), a novel on-policy extension that amalgamates past and future state-action information for policy optimization. This approach empowers the agent for introspection, allowing modifications to its actions within the current state. Theoretical analysis confirms that policy performance is monotonically improved and contracts the solution space, consequently expediting the convergence procedure. Empirical results demonstrate RPO's feasibility and efficacy in two reinforcement learning benchmarks, culminating in superior sample efficiency. The source code of this work is available at \url{https://github.com/Edgargan/RPO}.
		%The source code is accessible in the supplementary.
	\end{abstract}
	
	\section{Introduction}
	On-policy reinforcement learning (RL) aims to learn an optimal mapping from a sequence of states to actions based on rewarding criteria acquired through trajectories generated by interacting with the underlying environment. Proximal Policy Optimization (PPO)~\cite{Schppo} is one of the most typical algorithms in this category, owing to its simplicity and effectiveness. It has been successfully applied in various domains, including Atari games~\cite{Mni}, continuous control tasks~\cite{baselines}, and robot control \cite{Lil}. However, existing algorithms optimize the policy based on a state-action pair and do not directly consider the impact of subsequent states and actions in the trajectory. This limitation inevitably gives rise to the sample inefficiency problem.
	
	In prior studies \cite{Mni, Van, Schtrpo, Schppo, HaaSAC, Sil, Fuj}, the prevalent approach involves optimizing the policy using the value function of the current state. The value function potentially contains information about the subsequent data. However, a pertinent question emerges: Is optimizing a policy solely based on the value function the fastest (optimal) path to convergence? The answer is no, as this approach may overlook other crucial factors.
	
	To illustrate this answer, consider an environment with a ``cliff". If an agent takes an action leading to falling off the cliff under a state, it must learn to avoid the action and the associated state. Returning to this state is perilous and could potentially trigger the same action. Therefore, the agent must actively avoid this state to enhance safety. The preceding action leading to this state must also be avoided, anticipating the possibility of re-entering that state again.
	
	A similar scenario unfolds in a ``treasure" environment, where an agent performs an action resulting in a large reward. Subsequent data imparts positive and negative insights into previous states and actions. Therefore, optimizing the previous action should incorporate information from subsequent state-action pairs, not relying solely on the value function. Intuitively, leveraging subsequent data directly can expedite algorithm convergence and enhance sample efficiency. Unfortunately, most existing algorithms lack this capability, which directly exploits the relationship between pairs of trajectory data for policy optimization.
	
	To address the above issues with better sample efficiency, we introduce a new on-policy algorithm that optimizes the current policy by explicitly considering the relationship between the previous and subsequent state-action pairs in the sampled trajectories. Specifically, the proposed algorithm evaluates the current state-action pair and the impact of the subsequent pair of trajectories. It provides a more comprehensive perspective than traditional value function-based policy optimization. This approach enables the optimized policy to adjust its actions based on positive and negative information from subsequent states, thereby effectively reflecting the policy. We thus name the proposed algorithm as Reflective Policy Optimization (RPO).
	
	The RPO algorithm, as proposed, directly focuses on policy optimization rather than solely on evaluating the value function. This distinction separates it from multi-step reinforcement learning methods \cite{DeA, Dua, Her}. 
	%Although multi-step methods like TD($\lambda$) \cite{Sutton} demonstrate improved learning results \cite{Mat}, their theoretical foundation still requires further clarification, limiting their broader application. In contrast, 
	Our proposed algorithm takes a direct approach by incorporating previous and subsequent trajectory information for policy optimization, establishing more clearly theoretical properties. We also derive a novel policy improvement lower bound, illustrating that, in addition to ensuring the desirable property of monotonic performance improvement, our method effectively reduces the solution space of the optimized policy, thus significantly accelerating the algorithm's convergence procedure. Furthermore, our method improves sample efficiency.
	
	We incorporate our proposed algorithm with the PPO's clipping mechanism \cite{Schppo} to provide a practical implementation.
	Following standard settings, we validate the effectiveness of our algorithm by utilizing an illustrative toy example, shedding light on the underlying working mechanism of RPO. Additionally, we showcase superior performance on widely recognized RL benchmarks, such as MuJoCo \cite{Tod} and Atari games \cite{Gre}.

	\section{Preliminaries}\label{Backg}
	\subsection{Markov Decision Process}
	Commonly, the reinforcement learning problem can be modeled as a Markov Decision Process (MDP), which is described by the tuple $\left\langle \mathcal{S}, \mathcal{A}, P, R, \gamma\right\rangle$ \cite{Sutton}.
	$ \mathcal{S} $ and $ \mathcal{A} $ are the state space and action space respectively.
	The function $ P(s'|s, a): \mathcal{S}\times\mathcal{A}\times\mathcal{S}\longmapsto [0, 1]$ is the transition probability function from state $ s $ to state $ s' $ under action $ a $.
	The function $ R(s,a): \mathcal{S}\times\mathcal{A} \longmapsto \mathbb{R} $ is the reward function.
	And $ \gamma \in [0, 1)$ is the discount factor for long-horizon returns.
	In a state $ s $, the agent performs an action $ a $ according to a stochastic policy $ \pi: \mathcal{S}\times\mathcal{A} \longmapsto [0, 1] $ (satisfies $ \sum_{a}\pi(a|s)=1 $).
	The environment returns a reward $ R(s, a) $ and a new state $ s' $ according to the transition function $ P(s'|s, a) $.
	The agent interacts with the MDP to give a trajectory $ \tau $ of states, actions, and rewards: $s_0, a_0, R(s_0, a_0), \cdots, s_t, a_t, R(s_t, a_t), \cdots $ over $ \mathcal{S}\times\mathcal{A}\times\mathbb{R} $ \cite{Sil}.
	%Under a given policy $ \pi $, the state-action value function and state-value function are defined as 
	Given a policy $ \pi $, under a state $ s_t $ and a action $ a_t $, the state-action value function and state-value function are defined as 
	\begin{align*}
	&Q^{\pi}(s_t, a_t) = \mathbb{E}_{\tau\sim\pi}[G_t|s_t, a_t],
	%	 \\&
	V^{\pi}(s_t) = \mathbb{E}_{\tau\sim\pi}[G_t|s_t],
	\end{align*}
	where $ G_t = \sum_{i=0}^{\infty}\gamma^i R_{t+i} $ is the discount return, and $ R_t = R(s_{t}, a_{t}) $. 

	It is clear that $ V^{\pi}(s_t) = \mathbb{E}_{a_t\sim \pi}Q^{\pi}(s_t, a_t) $.
	Correspondingly, advantage function can be represented $ A^{\pi}(s, a) = Q^{\pi}(s, a)-V^{\pi}(s) $.
	We know that $ \sum_{a}\pi(a|s)A^{\pi}(s, a)=0 $.
	
	Let $ \rho^{\pi} $ be a normalized discount state visitation distribution, defined
	\begin{equation*}
	\rho^{\pi}(s) = (1-\gamma)\sum_{t=0}^{\infty}\gamma^t\mathbb{P}(s_t=s|\rho_0, \pi),
	\end{equation*}
	where $ \rho_0 $ is the initial state distribution \cite{Kak}.
	Similarly, $\rho^{\pi}(\cdot|s, a)$ can be defined and denotes the conditional visitation distribution under state $s$ and action $a$.
	And the normalized discount state-action visitation distribution can be represented $ \rho^{\pi}(s,a) = \rho^{\pi}(s)\pi(a|s) $.
	We make it clear from the context whether $ \rho^{\pi} $ refers to the state or state-action distribution.
	
	The goal is to learn a policy that maximizes the expected total discounted reward $ \eta(\pi) $, defined
	\begin{equation*}
	\eta(\pi) = \mathbb{E}_{\tau\sim\pi}\left[\sum_{i=0}^{\infty}\gamma^i R(s_{i}, a_{i})\right].
	\end{equation*}
	The following identity indicates that the distance between the policy performance of $ \pi $ and $ \hat{\pi} $ is related to the advantage over $ \pi $ \cite{Kak}:
	\begin{equation}\label{dif_pi_hatpi}
	\eta(\pi) = \eta(\hat{\pi})+ \frac{1}{1-\gamma}\mathbb{E}_{s, a\sim\rho^{\pi}}\left[A^{\hat{\pi}}(s, a)\right].
	\end{equation}
	Some admirable algorithms obtain good properties by modifying the right-hand side of Eqn. (\ref{dif_pi_hatpi}), for example, Trust Region Policy Optimization (TRPO) algorithm \cite{Schtrpo} optimizes the lower bound of policy improvement by replacing $\rho^{\pi}$ with $\rho^{\hat{\pi}}$ under state $s$, and offers better theoretical properties, \textit{i.e.} monotonic improvement of policy improvement.

	\section{The Generalized Surrogate Function}\label{gsf}
	In this section, we establish a recurrence form by providing the equation relationships before and after the replacement of TRPO. Further, we reach general conclusions by extending TRPO with subsequent state-action pairs. 
	\begin{lemma}
		Consider a current policy $ \hat{\pi} $, and any policies $ \pi $, we have
		%Consider two policies $\pi$ and $\hat{\pi}$, we have
		\begin{align*}
		&\mathbb{E}_{s,a  \sim \rho^{\pi}} A^{\hat{\pi}}(s, a)-\mathbb{E}_{s\sim \rho^{\hat{\pi}}, a\sim \pi}  A^{\hat{\pi}}(s, a)
		\\
		= 
		&
		\frac{\gamma}{1-\gamma}
		\mathbb{E}_{s, a\sim \rho^{\hat{\pi}}}[\frac{\pi(a|s)}{\hat{\pi}(a|s)}-1]
		\mathbb{E}_{{s', a' \sim \rho^{\pi}(\cdot|s, a)}} A^{\hat{\pi}}(s', a').
		\end{align*}
	\end{lemma}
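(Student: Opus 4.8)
The plan is to reduce both sides to a single sum over state-action pairs and then relate the two visitation distributions $\rho^{\pi}$ and $\rho^{\hat\pi}$ through their defining flow equation. First I would collapse the left-hand side: since both expectations average the advantage $A^{\hat\pi}(s,a)$ against the \emph{same} action law $\pi(a|s)$ and differ only in their state marginal, writing $\bar{A}(s) := \sum_{a}\pi(a|s)A^{\hat\pi}(s,a)$ gives
\begin{equation*}
\mathbb{E}_{s,a\sim\rho^{\pi}}A^{\hat\pi}(s,a)-\mathbb{E}_{s\sim\rho^{\hat\pi},a\sim\pi}A^{\hat\pi}(s,a)=\sum_{s}\bigl(\rho^{\pi}(s)-\rho^{\hat\pi}(s)\bigr)\bar{A}(s).
\end{equation*}
So the whole statement is really a claim about the signed measure $\rho^{\pi}-\rho^{\hat\pi}$ tested against $\bar{A}$.

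Next I would express $\rho^{\pi}-\rho^{\hat\pi}$ explicitly. Introducing the policy-induced state-transition operator $P_{\pi}(s'|s)=\sum_{a}\pi(a|s)P(s'|s,a)$, the definition of $\rho^{\pi}$ yields the Bellman-flow identity $\rho^{\pi}=(1-\gamma)\rho_0+\gamma\,\rho^{\pi}P_{\pi}$, i.e. $\rho^{\pi}(I-\gamma P_{\pi})=(1-\gamma)\rho_0$, and likewise for $\hat\pi$. Subtracting the two, adding and subtracting $\gamma\rho^{\hat\pi}P_{\pi}$, and solving gives the clean relation
\begin{equation*}
\rho^{\pi}-\rho^{\hat\pi}=\gamma\,\rho^{\hat\pi}\,(P_{\pi}-P_{\hat\pi})\,(I-\gamma P_{\pi})^{-1}.
\end{equation*}

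Then I would substitute this into the tested sum and peel off the three factors of the right-hand side. Factoring $\pi(a|s)-\hat\pi(a|s)=\hat\pi(a|s)\bigl(\tfrac{\pi(a|s)}{\hat\pi(a|s)}-1\bigr)$ inside $\rho^{\hat\pi}(P_{\pi}-P_{\hat\pi})$ produces the measure $\rho^{\hat\pi}(s,a)$ together with the importance-ratio-minus-one weight, accounting for the prefactor $\gamma$ and the outer expectation $\mathbb{E}_{s,a\sim\rho^{\hat\pi}}[\tfrac{\pi(a|s)}{\hat\pi(a|s)}-1]$. The remaining factor $\sum_{s''}P(s''|s,a)\,[(I-\gamma P_{\pi})^{-1}\bar{A}](s'')$ must be identified with $\tfrac{1}{1-\gamma}\mathbb{E}_{s',a'\sim\rho^{\pi}(\cdot|s,a)}A^{\hat\pi}(s',a')$: expanding the Neumann series $(I-\gamma P_{\pi})^{-1}=\sum_{t\geq0}\gamma^{t}P_{\pi}^{t}$ rewrites this factor as $\mathbb{E}_{\pi}\bigl[\sum_{t\geq0}\gamma^{t}A^{\hat\pi}(s_{t+1},a_{t+1})\mid s_0=s,a_0=a\bigr]$, which is exactly the discounted occupancy of the \emph{successor} pairs of $(s,a)$, normalized by $1-\gamma$.

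I expect the delicate step to be this last identification, because the index is shifted by one: the operator $P(\cdot|s,a)$ advances the chain to the successor state before the geometric series in $P_{\pi}$ begins, so the $t=0$ term of the conditional visitation is the successor $(s_1,a_1)$ rather than $(s,a)$ itself. Matching this shift is precisely what pins down the single power of $\gamma$ in the prefactor and forces the conditional distribution $\rho^{\pi}(\cdot|s,a)$ to be read as the visitation of the subsequent state-action pairs (weighting $(s_1,a_1)$ by $\gamma^{0}$). Once the $(1-\gamma)$ normalization of $\rho^{\pi}(\cdot|s,a)$ is reconciled with the $\sum_{t\geq0}\gamma^{t}$ geometric weights, the three pieces assemble into the claimed $\tfrac{\gamma}{1-\gamma}$-scaled product and the proof closes.
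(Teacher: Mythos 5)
Your proof is correct, and it takes a genuinely different route from the paper's. The paper assembles the identity from imported facts: subtracting the off-policy identity of Tomczak et al.\ (Lemma~\ref{Tom}) from the Kakade--Langford performance-difference lemma (Lemma~\ref{Kak}) leaves exactly $\mathbb{E}_{s,a\sim\rho^{\hat\pi}}\bigl[\tfrac{\pi(a|s)}{\hat\pi(a|s)}-1\bigr]\bigl[Q^{\pi}(s,a)-Q^{\hat\pi}(s,a)\bigr]$, and then the $Q$-difference corollary (Corollary~\ref{Kak_cor}) rewrites $Q^{\pi}-Q^{\hat\pi}$ as $\tfrac{\gamma}{1-\gamma}\mathbb{E}_{s',a'\sim\rho^{\pi}(\cdot|s,a)}A^{\hat\pi}(s',a')$. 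You instead argue from first principles at the level of visitation measures: the Bellman-flow equation $\rho^{\pi}(I-\gamma P_{\pi})=(1-\gamma)\rho_0$, the resolvent formula $\rho^{\pi}-\rho^{\hat\pi}=\gamma\,\rho^{\hat\pi}(P_{\pi}-P_{\hat\pi})(I-\gamma P_{\pi})^{-1}$, and the Neumann-series identification of $P(\cdot|s,a)(I-\gamma P_{\pi})^{-1}$ applied to $\bar A$ with $\tfrac{1}{1-\gamma}\mathbb{E}_{s',a'\sim\rho^{\pi}(\cdot|s,a)}A^{\hat\pi}(s',a')$. The step you flag as delicate is handled correctly: the one-step shift induced by $P(\cdot|s,a)$ is precisely what makes $\rho^{\pi}(\cdot|s,a)$ the visitation of \emph{successor} pairs (weighting $(s_1,a_1)$ by $\gamma^{0}$), which is consistent with how the paper implicitly defines it in the proof of Corollary~\ref{Kak_cor}, and it accounts for the single prefactor of $\gamma$. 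As for what each approach buys: the paper's proof is two lines but leans on the cited identity of Tomczak et al., whose proof it does not reproduce; yours is self-contained, makes the mechanism transparent (the gap between visitation measures is the one-step policy perturbation $\rho^{\hat\pi}(P_{\pi}-P_{\hat\pi})$ propagated by the discounted resolvent), and its resolvent expansion is the natural engine for iterating to higher orders --- essentially the Taylor-expansion viewpoint of TayPO, which the paper's Theorem~\ref{pi_hat_pi_re} instead reaches by repeated application of this same lemma.
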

	The proof of this lemma is given in Appendix \ref{rho_next_rho}.
	
	Note that from this lemma, the difference between the original formula and the replaced one is relevant to the normalized discount subsequent state-action visitation distribution $\rho^{\pi}(\cdot|s, a)$. By the boundary of the right-hand side of the equation, it is easy to obtain Theorem 1 of the paper \cite{Schtrpo} and Theorem 1 of the paper \cite{Ach}.
	%By constraining the right-hand side of the equation, it is easy to obtain Theorem 1 of the paper \cite{Schtrpo} and Theorem 1 of the paper \cite{Ach}.
	From this lemma, we constructed a relationship between the current visitation distributions  $(s, a)\sim \rho^{\pi}(\cdot)$ and the next $(s', a')\sim \rho^{\pi}(\cdot|s, a)$. 
	
	\begin{theorem}\label{pi_hat_pi_re}
		Consider a current policy $ \hat{\pi} $, and any policies $ \pi $, we have
		\begin{align}\label{gene_surr}
		\eta(\pi)
		% \triangleq =L1 + L2 +\cdots+L_k+C_{k}
		=\eta(\hat{\pi}) + \sum_{i=0}^{k-1} \alpha_i L_i(\pi, \hat{\pi}) + \beta_k G_{k}(\pi, \hat{\pi}),
		\end{align}
		where
		\begin{align*}
		& L_i(\pi, \hat{\pi})\! =\! 
		\underset{\substack{
				s_0, a_0 \sim \rho^{\hat{\pi}}(\cdot) \\ 
				\cdots\\
				s_{i-1}, a_{i-1} \sim \rho^{\hat{\pi}}(\cdot|s_{i-2}, a_{i-2})
		}}{\mathbb{E}}\!
		\prod_{t=0}^{i-1}
		(r_t-1)\cdot l_{i}(\pi, \hat{\pi}),
		%	\mathbb{E}_{s_i \sim \rho^{\hat{\pi}}(\cdot|s_{i-1}, a_{i-1}),a_i \sim \pi(\cdot|s_i)} A^{\hat{\pi}}(s_i, a_i),
		\\
		&G_k(\pi, \hat{\pi})\! = \!
		\underset{\substack{
				s_0, a_0 \sim \rho^{\hat{\pi}}(\cdot)\\ 
				\cdots\\
				s_{k-1}, a_{k-1} \sim \rho^{\hat{\pi}}(\cdot|s_{k-2}, a_{k-2})
				%				\\
				%				s_{k}, a_{k} \sim \rho^{\pi}(\cdot|s_{k-1}, a_{k-1})
		}}{\mathbb{E}}\!\!\!\!
		\prod_{t=0}^{k-1}
		(r_t-1) \cdot
		g_k(\pi, \hat{\pi}),\\
		%	\mathbb{E}_{s_{k}, a_{k} \sim \rho^{\pi}(\cdot|s_{k-1}, a_{k-1})}A^{\hat{\pi}}(s_k, a_k),
		%	\\		A^{\hat{\pi}}(s_k, a_k), \\
		&l_{i}(\pi, \hat{\pi}) = \mathbb{E}_{s_i \sim \rho^{\hat{\pi}}(\cdot|s_{i-1}, a_{i-1}),a_i \sim \pi(\cdot|s_i)} A^{\hat{\pi}}(s_i, a_i),\\
		&g_k(\pi, \hat{\pi})  = \mathbb{E}_{s_{k}, a_{k} \sim \rho^{\pi}(\cdot|s_{k-1}, a_{k-1})}A^{\hat{\pi}}(s_k, a_k),\\
		%	\end{align*}
		%\begin{align*}
		%	C_k(\pi, \hat{\pi}) = 
		%	\underset{\substack{
		%			s_0, a_0 \sim \rho^{\hat{\pi}}(\cdot)\\ 
		%			\cdots\\
		%			s_{k-1}, a_{k-1} \sim \rho^{\hat{\pi}}(\cdot|s_{k-2}, a_{k-2})\\
		%			s_{k}, a_{k} \sim \rho^{\pi}(\cdot|s_{k-1}, a_{k-1})
		%	}}{\mathbb{E}}
		%	\prod_{t=0}^{k-1}
		%	[r_t-1]A^{\hat{\pi}}(s_k, a_k), 
		%\end{align*}
		%\begin{align*}
		%	L_{\pi}(\tilde{\pi}, t) = \mathbb{E}_{s_t \sim d^{\pi}(\cdot|s_{t-1}, a_{t-1}),a_t \sim \tilde{\pi}(\cdot|s_t)} A^{\pi}(s_t, a_t)
		%\end{align*}
		&\mbox{and} \\
		%	\begin{align*}
		&r_t= \frac{\pi(a_t|s_t)}{\hat{\pi}(a_t|s_t)},\ \alpha_i = \frac{\gamma^{i}}{(1-\gamma)^{i+1}},\ \beta_k=\frac{\gamma^{k}}{(1-\gamma)^{k+1}}.
		\end{align*}
		We define that $ L_0(\pi, \hat{\pi}) = \mathbb{E}_{s_0,a_0 \sim \rho^{\hat{\pi}}(\cdot)}r_0 A^{\hat{\pi}}(s_0, a_0) $, 
		$ G_1(\pi, \hat{\pi}) \! =\! \mathbb{E}_{s_{0}, a_{0} \sim \rho^{\hat{\pi}}(\cdot); s_{1}, a_{1} \sim \rho^{\pi}(\cdot|s_{0}, a_{0})}(r_0-1)A^{\hat{\pi}}(s_1, a_1) $ and $ r_0=\frac{\pi(a_0|s_0)}{\hat{\pi}(a_0|s_0)} $.
	\end{theorem}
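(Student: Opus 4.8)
The plan is to prove \eqref{gene_surr} by induction on $k$, using the preceding Lemma as the single mechanism that trades one layer of the ``unreplaced'' visitation $\rho^{\pi}$ for a ``replaced'' term (states from $\rho^{\hat\pi}$, actions from $\pi$) plus a residual that reaches one state-action pair deeper into the trajectory. Throughout I would use the elementary identity $\mathbb{E}_{a\sim\hat\pi}\,\frac{\pi(a|s)}{\hat\pi(a|s)}A^{\hat\pi}(s,a)=\mathbb{E}_{a\sim\pi}A^{\hat\pi}(s,a)$, which lets an importance ratio $r_t$ absorb the reweighting of the action distribution, together with the two coefficient identities $\alpha_k=\beta_k$ and $\beta_k\cdot\frac{\gamma}{1-\gamma}=\beta_{k+1}$, both immediate from $\alpha_i=\gamma^i/(1-\gamma)^{i+1}$ and $\beta_k=\gamma^k/(1-\gamma)^{k+1}$.

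For the base case $k=1$, I would first rewrite $L_0$: since the outer action is drawn from $\hat\pi$ inside $\rho^{\hat\pi}$, the factor $r_0$ converts it into a $\pi$-expectation, so $L_0(\pi,\hat\pi)=\mathbb{E}_{s_0\sim\rho^{\hat\pi},\,a_0\sim\pi}A^{\hat\pi}(s_0,a_0)$, which is exactly the replaced term on the left of the Lemma. Starting from Eqn.~\eqref{dif_pi_hatpi}, I substitute the Lemma for $\mathbb{E}_{s,a\sim\rho^{\pi}}A^{\hat\pi}(s,a)$ and multiply through by $\tfrac{1}{1-\gamma}$. The first piece becomes $\tfrac{1}{1-\gamma}L_0=\alpha_0 L_0$, and the second becomes $\tfrac{\gamma}{(1-\gamma)^2}\mathbb{E}_{s_0,a_0\sim\rho^{\hat\pi}}(r_0-1)\,\mathbb{E}_{s_1,a_1\sim\rho^{\pi}(\cdot|s_0,a_0)}A^{\hat\pi}(s_1,a_1)=\beta_1 G_1$, matching the stated boundary definition of $G_1$.

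For the inductive step, assuming \eqref{gene_surr} holds for some $k\ge 1$, the only term to process is $\beta_k G_k$, whose innermost factor is $g_k=\mathbb{E}_{s_k,a_k\sim\rho^{\pi}(\cdot|s_{k-1},a_{k-1})}A^{\hat\pi}(s_k,a_k)$. I would apply the conditional form of the Lemma, with the parent pair $(s_{k-1},a_{k-1})$ held fixed, to split
\begin{align*}
g_k
&= l_k(\pi,\hat\pi)
+\frac{\gamma}{1-\gamma}\,
\mathbb{E}_{s_k,a_k\sim\rho^{\hat\pi}(\cdot|s_{k-1},a_{k-1})}(r_k-1)\,
\mathbb{E}_{s_{k+1},a_{k+1}\sim\rho^{\pi}(\cdot|s_k,a_k)}A^{\hat\pi}(s_{k+1},a_{k+1}),
\end{align*}
where the first term is precisely $l_k$ (again by the importance-ratio identity) and the second is $g_{k+1}$ with parent $(s_k,a_k)$. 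Substituting this back inside the outer expectation of $G_k$ over the chain $s_0,a_0,\dots,s_{k-1},a_{k-1}$ weighted by $\prod_{t=0}^{k-1}(r_t-1)$, the first term contributes $\beta_k\,\mathbb{E}[\prod_{t=0}^{k-1}(r_t-1)\,l_k]=\beta_k L_k=\alpha_k L_k$ (using $\alpha_k=\beta_k$), while the second appends one more expectation layer and one more factor $(r_k-1)$, giving $\beta_k\cdot\frac{\gamma}{1-\gamma}\,\mathbb{E}[\prod_{t=0}^{k}(r_t-1)\,g_{k+1}]=\beta_{k+1}G_{k+1}$. Hence $\beta_k G_k=\alpha_k L_k+\beta_{k+1}G_{k+1}$, which advances the formula from $k$ to $k+1$.

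I expect the main obstacle to be justifying and cleanly deploying the conditional form of the Lemma: the Lemma is stated with unconditional $\rho^{\pi},\rho^{\hat\pi}$ on its left, whereas the induction needs the identical identity with every visitation distribution conditioned on the fixed parent $(s_{k-1},a_{k-1})$. I would argue that the Lemma's derivation is agnostic to the initial measure, so it applies verbatim when the process is restarted at $(s_{k-1},a_{k-1})$ — note in particular that the two conditional visitations $\rho^{\pi}(\cdot|s_{k-1},a_{k-1})$ and $\rho^{\hat\pi}(\cdot|s_{k-1},a_{k-1})$ share the same (policy-independent) first transition from the common parent, which is exactly what makes them comparable. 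The remaining work is purely bookkeeping: checking that the new expectation $\mathbb{E}_{s_k,a_k\sim\rho^{\hat\pi}(\cdot|s_{k-1},a_{k-1})}$ slots correctly into the conditional chains defining $L_k$ and $G_{k+1}$, that the product of $(r_t-1)$ factors extends by exactly one term, and that the boundary objects $L_0$ and $G_1$ coincide with the general definitions (the empty product for $i=0$ reproducing $L_0$).
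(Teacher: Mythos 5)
Your proposal is correct and takes essentially the same route as the paper: the paper's proof is exactly this recursion, repeatedly expanding the innermost $\rho^{\pi}(\cdot|s_{i-1},a_{i-1})$-expectation via the conditional form of the Lemma (obtained from Lemma \ref{Tom} and Corollary \ref{Kak_cor}) and collecting one new $L_i$ term per step, merely written as an unrolled chain of equalities with ``$\cdots$'' rather than as formal induction on $k$. Your explicit bookkeeping of the coefficient identities $\alpha_k=\beta_k$ and $\beta_k\cdot\frac{\gamma}{1-\gamma}=\beta_{k+1}$, and your justification that the Lemma applies verbatim when the process is restarted at a fixed parent pair, are if anything slightly more careful than the paper's own presentation.
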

	The proof of this theorem is given in Appendix \ref{app_ma}.
	
	This theorem gives a general form for the difference between the policy performance of $\pi$ and $\hat{\pi}$ by finite sums.
	This equation accurately represents the general gap between the performance of $\pi$ and $\hat{\pi}$ from the trajectory-based. 
	It portrays that subsequent state-action pairs can also directly impact optimizing the current policy.
	We refer to $\sum_{i=0}^{k-1} \alpha_i L_i(\pi, \hat{\pi})$ as the generalized surrogate objective function.
	
	A slight problem may exist if the generalized surrogate objective function is directly optimized. Consider $L_1(\pi, \hat{\pi})$ in Eqn. (\ref{gene_surr}) as an example.  We consider this function without delving into the specific form of the parameters. When the environment is unknown, it can only be optimized by sampling. Considering a special case, 
	% Considering the extreme case
	the function  $L_1(\pi, \hat{\pi})$ is optimized by using a sample $(s_0, a_0, s_1, a_1)$, \textit{i.e.}, $L_1(\pi, \hat{\pi})\approx(r_0-1)r_1 A^{\hat{\pi}}(s_1, a_1) $. If $ A^{\hat{\pi}}(s_1, a_1)<0 $ and $ r_0-1 <0$, it follows that $ (r_0-1)r_1 A^{\hat{\pi}}(s_1, a_1)=[(r_0-1)A^{\hat{\pi}}(s_1, a_1)]r_1>0 $. This implies an increase in the probability of $a_1$. However, when $ A^{\hat{\pi}}(s_1, a_1)<0 $, we should  decrease the probability of $ a_1 $. It's a contradiction. Thus, the term ``1" in $ r_0-1 $ may be incorrectly misleading for policy optimization despite the soundness of the theory. This situation exists when the environment is unknown. 
	
	Next, we measure the gap between the policy performance $\eta(\pi)$ and  $\sum_{i=0}^{k-1} \alpha_i L_i(\pi, \hat{\pi})$.
	\begin{corollary}\label{lower_bound}
		According to the definition of $ G_k $, we have
		\begin{align*}
		|\beta_k G_k(\pi, \hat{\pi})|\leq \frac{\gamma^{k}}{(1-\gamma)^{k+2}} \epsilon^{k+1} R_{\max},
		\end{align*}
		where $ \epsilon\triangleq \|\pi-\hat{\pi}\|_1=\max_{s} \sum_{a}|\pi(a|s)-\hat{\pi}(a|s)|$ and $R_{\max} \triangleq \max_{s,a}|R(s,a)|$.
	\end{corollary}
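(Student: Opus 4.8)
The plan is to bound $|G_k|$ by $\frac{\epsilon^{k+1}R_{\max}}{1-\gamma}$ and then multiply through by $\beta_k=\frac{\gamma^k}{(1-\gamma)^{k+1}}$, which immediately yields the stated constant $\frac{\gamma^k}{(1-\gamma)^{k+2}}\epsilon^{k+1}R_{\max}$. To do this I would separate the two ingredients of $G_k$: the scalar factor $g_k$, which depends only on the innermost pair $(s_{k-1},a_{k-1})$, and the product $\prod_{t=0}^{k-1}(r_t-1)$ sitting inside the nested expectation.

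First I would bound $g_k$. Writing $\rho^{\pi}(s_k,a_k|s_{k-1},a_{k-1})=\rho^{\pi}(s_k|s_{k-1},a_{k-1})\pi(a_k|s_k)$ and using $\sum_{a_k}\hat{\pi}(a_k|s_k)A^{\hat{\pi}}(s_k,a_k)=0$, the inner sum becomes $\sum_{a_k}[\pi(a_k|s_k)-\hat{\pi}(a_k|s_k)]A^{\hat{\pi}}(s_k,a_k)$. The key move here is to exploit that $\sum_{a_k}[\pi-\hat{\pi}]=0$, so $A^{\hat{\pi}}$ may be shifted by the $a_k$-independent constant $V^{\hat{\pi}}(s_k)$, replacing $A^{\hat{\pi}}$ by $Q^{\hat{\pi}}$; since $|Q^{\hat{\pi}}|\le R_{\max}/(1-\gamma)$, a H\"older ($\ell_1$-$\ell_\infty$) estimate gives $|g_k|\le\frac{\epsilon R_{\max}}{1-\gamma}$ uniformly in $(s_{k-1},a_{k-1})$. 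This shift is what produces the sharp constant $R_{\max}/(1-\gamma)$ rather than the cruder $2R_{\max}/(1-\gamma)$ obtained from bounding $A^{\hat{\pi}}$ directly.

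Next I would peel off the nested expectation one layer at a time, from $t=k-1$ down to $t=0$. The crucial elementary estimate is that, because each $a_t$ is sampled conditionally from $\hat{\pi}$, $\mathbb{E}_{a_t\sim\hat{\pi}(\cdot|s_t)}|r_t-1|=\sum_{a_t}|\pi(a_t|s_t)-\hat{\pi}(a_t|s_t)|\le\epsilon$. Factoring $\rho^{\hat{\pi}}(s_t,a_t|\cdot)=\rho^{\hat{\pi}}(s_t|\cdot)\hat{\pi}(a_t|s_t)$ at each layer, pulling the absolute value inside, and using the uniform bound inherited from the previous layer, each integration multiplies the running bound by a single factor of $\epsilon$. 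Starting from $|g_k|\le\epsilon R_{\max}/(1-\gamma)$ and applying this $k$ times (one per factor $r_t-1$) gives $|G_k|\le\frac{\epsilon^{k+1}R_{\max}}{1-\gamma}$ by induction, and finally $|\beta_k G_k|\le\beta_k\cdot\frac{\epsilon^{k+1}R_{\max}}{1-\gamma}$.

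I expect the main obstacle to be the bookkeeping of the nested conditional expectations: one must verify that $g_k$ genuinely depends only on $(s_{k-1},a_{k-1})$ so the innermost integration can be performed first, and that at each subsequent layer the partially integrated quantity remains bounded \emph{uniformly} over its conditioning pair, so the $\ell_1$ estimate can be reapplied. The analytically delicate point is the advantage-to-$Q$ shift in the $g_k$ bound, since omitting it would only yield the inequality up to a factor of two; everything else is routine.
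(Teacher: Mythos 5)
Your proposal is correct and takes essentially the same route as the paper's proof: the paper likewise bounds the nested expectation of $\prod_{t=0}^{k-1}|r_t-1|$ by $\epsilon^k$ and bounds the inner term via exactly the advantage-to-$Q$ shift you describe (its middle step is $|\int_a(\pi-\hat{\pi})Q^{\hat{\pi}}(s,a)\,da|\leq \epsilon R_{\max}/(1-\gamma)$), then multiplies by $\beta_k$. Your write-up merely makes explicit the layer-by-layer peeling and uniformity bookkeeping that the paper compresses into a single line.
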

	The proof of the theorem is given in the Appendix \ref{cor_ineq}.
	
	Based on Theorem \ref{pi_hat_pi_re} and Corollary \ref{lower_bound}, a general lower bound exists for the policy performance of $\pi$. This theory makes good theoretical sense, which helps to understand the generalized surrogate function. 
	%which helps the researchers understand the generalized surrogate function
	For the case where $k=1$, the $l_1$ norm constraints are replaced by KL constraints. This outcome aligns with the lower bound observed in TRPO.

	\section{Reflective Policy Optimization}\label{RPO}
	Theoretically, the previous section gave a lower bound for the policy performance of $\pi$. Although the generalized surrogate function incorporates the current and subsequent state-action pairs of the trajectory, the inclusion of the term "1" in the function $L_i(\pi, \hat{\pi})$ introduces ambiguity regarding how the subsequent pairs influence the behavior of the policy at the current state, potentially yielding positive or adverse effects. 
	In this section, 
	%Next, 
	we have made slight modifications to the generalized surrogate function $L_i(\pi, \hat{\pi})$ in Eqn. (\ref{gene_surr}), defined
	\begin{equation}\label{hat_L}
	\hat{L}_i(\pi, \hat{\pi})  =
	\underset{\substack{
			s_0, a_0 \sim \rho^{\hat{\pi}}(\cdot) \\ 
			\cdots\\
			% s_{i-1}, a_{i-1} \sim \rho^{\hat{\pi}}(\cdot|s_{i-2}, a_{i-2})\\
			s_i, a_i \sim \rho^{\hat{\pi}}(\cdot|s_{i-1}, a_{i-1})
	}}{\mathbb{E}}
	\prod_{t=0}^{i}
	r_t\cdot A^{\hat{\pi}}(s_i, a_i).
	\end{equation}
	It is a natural modification that avoids the problems caused by the ``1" item. The following theorem measures the relationship between the function $\hat{L}_i(\pi, \hat{\pi})$ and $\eta(\pi)$.

	\begin{theorem}\label{new_G_M}
		Consider a current policy $ \hat{\pi} $, and any policies $ \pi $, we have
		\begin{align}\label{gsu}
		\eta(\pi)-\eta(\hat{\pi})\geq
		&
		\sum_{i=0}^{k-1} \alpha_i \hat{L}_i(\pi, \hat{\pi})- \hat{C}_{k}(\pi, \hat{\pi}),
		\end{align}
		where
		\begin{align*}
		% \hat{L}_i(\pi, \hat{\pi})  = &
		% \underset{\substack{
		% 		s_0, a_0 \sim \rho^{\hat{\pi}}(\cdot) \\ 
		% 		\cdots\\
		% 		s_{i-1}, a_{i-1} \sim \rho^{\hat{\pi}}(\cdot|s_{i-2}, a_{i-2})\\
		% 		s_i, a_i \sim \rho^{\hat{\pi}}(\cdot|s_{i-1}, a_{i-1})
		% }}{\mathbb{E}}
		% \prod_{t=0}^{i}
		% r_t A^{\hat{\pi}}(s_i, a_i),\\
		\hat{C}_{k}(\pi, \hat{\pi})\!=&\frac{\gamma R_{\max}\|\pi\!-\!\hat{\pi}\|_1 }{1-\gamma}\!\sum_{i=1}^{k-1}\alpha_i
		%		\hat{C}_{k}(\pi, \hat{\pi})=&\frac{\gamma R_{\max}I_{k\geq 2}}{(1-\gamma)^2(1-2\gamma)}\left(1-\frac{\gamma^{k}}{(1-\gamma)^{k}}\right)\|\pi-\hat{\pi}\|_1
		%		\\&
		\!+\! \frac{\gamma^k R_{\max}}{(1\!-\!\gamma)^{k+2}}\!\|\pi\!-\!\hat{\pi}\|^2_1,
		\end{align*}
		$\hat{L}_i(\pi, \hat{\pi}) $ is defined in Eqn. (\ref{hat_L}) and $\alpha_i = \frac{\gamma^{i}}{(1-\gamma)^{i+1}}$. We define that $ \sum_{i=1}^{0}\alpha_i =0$ for $ k=1 $.
	\end{theorem}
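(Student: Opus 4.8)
The plan is to re-run the recursive unrolling behind Theorem~\ref{pi_hat_pi_re}, but to peel off the constant ``$1$'' from each factor $(r_t-1)$ at the moment it is generated, rather than carrying the full product $\prod_t(r_t-1)$ to the end. Concretely, I would start from $\eta(\pi)-\eta(\hat\pi)=\frac{1}{1-\gamma}\mathbb{E}_{s,a\sim\rho^{\pi}}A^{\hat\pi}(s,a)$, which is Eqn.~(\ref{dif_pi_hatpi}), and apply the Lemma of Section~\ref{gsf} repeatedly to the conditional ``advantage-to-go'' $\phi_{i}(s,a)=\mathbb{E}_{s',a'\sim\rho^{\pi}(\cdot|s,a)}A^{\hat\pi}(s',a')$, exactly as in the proof of Theorem~\ref{pi_hat_pi_re}. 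The single change is that at each application I split the weight as $(r_t-1)=r_t-1$ and route the two pieces differently, so that the ``good'' accumulated products are $\prod_{t}r_t$ rather than $\prod_t(r_t-1)$.

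At the step that produces the term of order $\alpha_i$, the $r_t$-piece combines with the identity $\mathbb{E}_{a\sim\hat\pi}[r\,A^{\hat\pi}]=\mathbb{E}_{a\sim\pi}A^{\hat\pi}$ to convert the conditional advantage into the next factor $r_i A^{\hat\pi}(s_i,a_i)$, so the good part is precisely $\alpha_i\hat{L}_i(\pi,\hat\pi)$ from Eqn.~(\ref{hat_L}). The ``$-1$''-piece detaches and leaves a residual $-\alpha_i\,\mathbb{E}[\,(\prod_{t=0}^{i-2}r_t)\,\phi_{i-1}\,]$, all expectations taken under the nested $\rho^{\hat\pi}$ visitations. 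Stopping after $k-1$ applications leaves the explicit sum $\sum_{i=0}^{k-1}\alpha_i\hat{L}_i$, the $k-1$ residuals indexed $i=1,\dots,k-1$, and one tail term of order $\alpha_k$ of the form $\alpha_k\,\mathbb{E}[\,(\prod_{t=0}^{k-2}r_t)(r_{k-1}-1)\,\phi_{k-1}\,]$. For $k=1$ there are no residuals, matching the convention $\sum_{i=1}^{0}\alpha_i=0$, and only this tail survives.

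It then remains to bound the residuals and the tail, for which two elementary facts suffice. First, the importance weights satisfy the tower identity $\mathbb{E}[r_t\mid s_t]=\sum_a\pi(a|s_t)=1$, hence $\mathbb{E}[\prod_{t}r_t]=1$ and $\mathbb{E}[\,|r_t-1|\mid s_t\,]=\sum_a|\pi(a|s_t)-\hat\pi(a|s_t)|\le\|\pi-\hat\pi\|_1$. Second, since $\mathbb{E}_{a\sim\hat\pi}A^{\hat\pi}=0$ and $|Q^{\hat\pi}|,|V^{\hat\pi}|\le R_{\max}/(1-\gamma)$, the mean advantage obeys $|\mathbb{E}_{a\sim\pi}A^{\hat\pi}(s,a)|=|\sum_a(\pi-\hat\pi)(a|s)A^{\hat\pi}(s,a)|\le R_{\max}\|\pi-\hat\pi\|_1/(1-\gamma)$, so $|\phi_i|\le R_{\max}\|\pi-\hat\pi\|_1/(1-\gamma)$ pointwise. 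Pulling $|\phi_{i-1}|$ out of each residual and using $\mathbb{E}[\prod r_t]=1$ makes each residual carry a single factor $\|\pi-\hat\pi\|_1$ at weight $\alpha_i$, which sums to the first term of $\hat{C}_k$; the tail additionally carries $(r_{k-1}-1)$, contributing a second factor $\|\pi-\hat\pi\|_1$ at weight $\alpha_k=\gamma^k/(1-\gamma)^{k+1}$, which is exactly the second term $\frac{\gamma^k R_{\max}}{(1-\gamma)^{k+2}}\|\pi-\hat\pi\|_1^2$. Collecting signs via the triangle inequality yields the claimed lower bound.

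The main obstacle is the bookkeeping of the discount prefactors through the nested conditional expectations and, above all, ensuring that each residual retains only a \emph{single} power of $\|\pi-\hat\pi\|_1$. This is precisely what the early split of $(r_t-1)$ buys: had I instead carried the full product $\prod_{t<i}(r_t-1)$ (as in $L_i$) and only then compared with $\hat{L}_i$, the discrepancy would expand into $i$ pieces scaling like $\|\pi-\hat\pi\|_1^{\,j}$ and would not collapse to the clean term-by-term first-order form of $\hat{C}_k$. Routing $r_t$ forward keeps $\mathbb{E}[\prod r_t]=1$, so the accumulated weights stay controlled while each detached ``$-1$'' contributes exactly one factor of $\|\pi-\hat\pi\|_1$. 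The one point I would check carefully is the exact constant of the first term, in particular the leading power of $\gamma$, which follows from the range bound on $A^{\hat\pi}$ together with the $\gamma/(1-\gamma)$ factor supplied by the Lemma.
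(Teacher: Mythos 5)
Your proposal is correct and follows essentially the same route as the paper's own proof: the same recursive unrolling in which each $(r_t-1)$ is split at birth, with the $r_t$ piece routed forward to build $\prod_t r_t$ and hence $\hat{L}_i$, the detached $-1$ pieces forming exactly the residuals the paper calls $\hat{H}_i$ (bounded via $\mathbb{E}[\prod_t r_t]=1$ and $|\phi_i|\le R_{\max}\|\pi-\hat{\pi}\|_1/(1-\gamma)$), and the same tail term (the paper's $\hat{G}_k$) absorbing the extra factor $|r_{k-1}-1|\le\|\pi-\hat{\pi}\|_1$. The caveat you flag about the leading power of $\gamma$ in the first penalty term is real but is an inconsistency internal to the paper: both your argument and the paper's appendix derivation actually yield $\frac{R_{\max}\|\pi-\hat{\pi}\|_1}{1-\gamma}\sum_{i=1}^{k-1}\alpha_i$, which is a factor $1/\gamma$ larger than the first term of $\hat{C}_k$ as stated in the theorem.
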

	The proof of this theorem is given in Appendix \ref{least_1_inq}.

	From Theorem \ref{new_G_M}, the first term of the generalized lower bound is referred to as the new generalized surrogate function, while the second term is known as the penalty term.
	It is worth noting that TRPO \cite{Schtrpo} is a special case of the generalized lower bound for $k=1$. 
	By optimizing the generalized lower bound, we can get a monotonically improving sequence of policies $\{\pi_i\}_{i=0}^\infty$, satisfy $\eta(\pi_0)\leq \eta(\pi_1)\leq\cdots$.
	Next, we intuitively analyze the new generalized surrogate function. The difference between the function $L_i(\pi, \hat{\pi})$ and $\hat{L}_i(\pi, \hat{\pi})$ is very small, involving the removal of the number 1 from the ratios' product.
	However, their intended meanings are quite distinct. The function $\hat{L}_i(\pi, \hat{\pi})$ can directly utilize the information between the current and subsequent state-action pairs
	to optimize the current policy. Optimizing this function does not encounter the issues discussed in Section \ref{gsf}.

	With $k = 2$, we will explain the optimization procedure in detail. The function $\hat{L}_1(\pi, \hat{\pi})$ contains the ratio of the pair $(s,a)$ and $(s',a')$. If $ A^{\hat{\pi}}(s', a')>0$, it indicates that the action $a'$ is deemed favorable, and its probability will be increased through algorithm optimization. Simultaneously, the state $s'$ is likely fine as well. To return to this state, we should increase the probability of the action $a$ under state $s$.
	In contrast, if $A^{\hat{\pi}}(s', a')<0$, similar results will be obtained.
	The agent can reflect on its current behavior based on subsequent information. For $\hat{L}_0(\pi,\hat{\pi})$, the action's $a$ probability can be optimizing using the advantage function $A^{\hat{\pi}}(s,a)$. Therefore, optimizing the current action $a$ will be influenced by both the current and subsequent advantage functions $A^{\hat{\pi}}$, taking them into account. 
	
	In this way, the optimized policy will likely foster the agent's reflection, and we observe that optimizing the generalized surrogate function lacks this ability. By utilizing the same trajectory, the agent can acquire more information.
	Figure \ref{Compa} depicts an experiment conducted in the CliffWalking environment. The results in Figure \ref{Compa} indicate that optimizing the new surrogate function reduces the number of falling off the Cliff and faster after reaching the goal $G$. In the experimental section, we explain this phenomenon in detail. 
	
	Theorem \ref{new_G_M} demonstrates that the generalized lower bound is optimized for any $k$. As $k$ increases, the generalized lower bound is optimized using subsequent samples to learn implicit relationships of the current and subsequent states and actions data. However, is it suitable when $k$ takes a large value? The answer is no. Let's look at the $\hat{L}_k(\pi,\hat{\pi})$ function individually. This objective function comprises the product of the $k$ ratios and an advantage function. If the ratio is too high, it encounters the problem of high variance \cite{Mun1}, which, in turn, impacts the algorithm's stability. Given this weakness, a very large value of $k$ may not be practical.
	In the experimental section \ref{Visual_exp}, we discuss the values of $k$ and observe that as long as the agent leverages the relationship between previous and subsequent state-action pairs, it enables the agent to fall into the Cliff less often and to reach the goal $G$ faster. The experimental results show similarity whether $k=2$ or $k=3$.
	Therefore, the main part of the following discussion is framed in terms of $k=2$.
	The following theorem shows that the modified generalized surrogate function has another nice property except for the monotonicity.
	
	\begin{theorem}\label{set2_in_set1}
		For $k=2$, defined two sets 
		\begin{align*}
		\varPsi_1 &= \left\{\mu\ |\ \alpha_0 \hat{L}_0(\mu, \hat{\pi})-\hat{C}_1(\mu,\hat{\pi})\geq 0, \|\mu-\hat{\pi}\|_1\leq \frac{1}{2}\right\},\\
		\varPsi_2 &= \left\{\mu\ |\ \alpha_0 \hat{L}_0(\mu, \hat{\pi}) + \alpha_1 \hat{L}_1(\mu, \hat{\pi})
		-\hat{C}_2(\mu,\hat{\pi})\geq 0,\right.
		\\ & \quad\quad
		\left.\|\mu-\hat{\pi}\|_1\leq \frac{1}{2}\right\},
		\end{align*}
		then we have
		\begin{align*}
		\varPsi_2 \subseteq \varPsi_1.
		\end{align*}
	\end{theorem}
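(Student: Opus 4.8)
The plan is to reduce the set inclusion to a single pointwise inequality on the common $\ell_1$-ball and then verify that inequality. Both $\varPsi_1$ and $\varPsi_2$ are carved out of the same ball $\{\mu:\|\mu-\hat\pi\|_1\le\tfrac12\}$ and differ only in which surrogate-minus-penalty expression is required to be nonnegative. Hence, to prove $\varPsi_2\subseteq\varPsi_1$ it suffices to show that on that ball the $k=2$ expression never exceeds the $k=1$ expression, i.e. that
\begin{align*}
\alpha_0\hat L_0(\mu,\hat\pi)+\alpha_1\hat L_1(\mu,\hat\pi)-\hat C_2(\mu,\hat\pi)\le \alpha_0\hat L_0(\mu,\hat\pi)-\hat C_1(\mu,\hat\pi).
\end{align*}
Cancelling the common term $\alpha_0\hat L_0(\mu,\hat\pi)$, this is equivalent to $\alpha_1\hat L_1(\mu,\hat\pi)\le \hat C_2(\mu,\hat\pi)-\hat C_1(\mu,\hat\pi)$. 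Once this holds, any $\mu\in\varPsi_2$ satisfies $\alpha_0\hat L_0-\hat C_1\ge \alpha_0\hat L_0+\alpha_1\hat L_1-\hat C_2\ge0$, so $\mu\in\varPsi_1$.

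Next I would make the right-hand side explicit. Using $\alpha_1=\gamma/(1-\gamma)^2$ and the definition of $\hat C_k$, and writing $\epsilon:=\|\mu-\hat\pi\|_1$,
\begin{align*}
\hat C_2-\hat C_1=\frac{\gamma^2 R_{\max}\epsilon}{(1-\gamma)^3}+\Big(\frac{\gamma^2}{(1-\gamma)^4}-\frac{\gamma}{(1-\gamma)^3}\Big)R_{\max}\epsilon^2,
\end{align*}
so the target reduces to controlling $\alpha_1\hat L_1$ by the leading linear term $\tfrac{\gamma^2 R_{\max}}{(1-\gamma)^3}\epsilon$ together with a quadratic remainder.

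The core estimate is therefore an upper bound on $\hat L_1(\mu,\hat\pi)$. I would expand the inner action expectation first: since $\sum_{a_1}\hat\pi(a_1|s_1)A^{\hat\pi}(s_1,a_1)=0$, the $a_1$-average of $r_1 A^{\hat\pi}$ equals $g(s_1):=\sum_{a_1}\big(\mu(a_1|s_1)-\hat\pi(a_1|s_1)\big)A^{\hat\pi}(s_1,a_1)$, which obeys $|g(s_1)|\le \epsilon\,\max_{a}|A^{\hat\pi}(s_1,a)|\le \tfrac{R_{\max}}{1-\gamma}\epsilon$. Writing $r_0=1+(r_0-1)$ then splits $\hat L_1=\mathbb{E}_{s_0,a_0\sim\rho^{\hat\pi}}\,r_0\,\mathbb{E}_{s_1\sim\rho^{\hat\pi}(\cdot|s_0,a_0)}g(s_1)$ into a first-order piece $\mathbb{E}_{s_0,a_0\sim\rho^{\hat\pi}}\mathbb{E}_{s_1}g(s_1)$ and a second-order piece $\mathbb{E}_{s_0}\sum_{a_0}(\mu-\hat\pi)(a_0|s_0)\,\mathbb{E}_{s_1}g(s_1)$, the latter bounded by $\tfrac{R_{\max}}{1-\gamma}\epsilon^2$ via the $\ell_1$ deviation. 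Finally I would invoke $\epsilon\le\tfrac12$ so that the linear term dominates, controlling both the second-order piece and the quadratic remainder of $\hat C_2-\hat C_1$ in the comparison.

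The main obstacle is the first-order piece $\mathbb{E}_{s_0,a_0\sim\rho^{\hat\pi}}\mathbb{E}_{s_1\sim\rho^{\hat\pi}(\cdot|s_0,a_0)}g(s_1)$: a naive bound by $\max|g|\le\tfrac{R_{\max}}{1-\gamma}\epsilon$ is a factor $\gamma$ too weak to match the leading term $\tfrac{\gamma^2 R_{\max}}{(1-\gamma)^3}\epsilon$ of $\hat C_2-\hat C_1$. The resolution must exploit how the one-step-ahead conditional visitation $\rho^{\hat\pi}(\cdot|s_0,a_0)$ weights $g$ at the immediate successor state — the same discounted-visitation accounting that produced the coefficients $\alpha_i$ and the penalty $\hat C_k$ in Theorem \ref{new_G_M} — so that this contribution carries the extra $\gamma$ and $(1-\gamma)$ factors. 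I expect this step to mirror the bound used to construct $\hat C_k$ in the proof of Theorem \ref{new_G_M}, after which the explicit inequality $\alpha_1\hat L_1\le\hat C_2-\hat C_1$ closes by elementary manipulation together with $\|\mu-\hat\pi\|_1\le\tfrac12$.
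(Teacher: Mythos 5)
Your reduction of the inclusion to the pointwise domination $\alpha_0\hat{L}_0+\alpha_1\hat{L}_1-\hat{C}_2\le\alpha_0\hat{L}_0-\hat{C}_1$, i.e.\ $\alpha_1\hat{L}_1(\mu,\hat{\pi})\le\hat{C}_2(\mu,\hat{\pi})-\hat{C}_1(\mu,\hat{\pi})$ on the ball $\|\mu-\hat{\pi}\|_1\le\tfrac12$, is logically sufficient for $\varPsi_2\subseteq\varPsi_1$, but that inequality is false in general, and the obstacle you flag at the end is fatal rather than technical. The first-order piece in your own decomposition cannot carry any extra factor of $\gamma$: by the very visitation identity the paper invokes, $\mathbb{E}_{s_0\sim\rho^{\hat{\pi}},\,a_0\sim\hat{\pi}}\,\rho^{\hat{\pi}}(\cdot|s_0,a_0)=\rho^{\hat{\pi}}(\cdot)$, that piece is exactly $\hat{L}_0(\mu,\hat{\pi})$, a genuinely first-order quantity in $\epsilon=\|\mu-\hat{\pi}\|_1$ with no $\gamma$ in front. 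A concrete counterexample: one recurrent state $s$ with two actions of rewards $+R_{\max}$ and $-R_{\max}$, $\hat{\pi}$ uniform, and $\mu$ shifting mass $\delta$ onto the good action. Then $A^{\hat{\pi}}(s,\cdot)=\pm R_{\max}$, $\epsilon=2\delta$, and a direct computation (all visitation distributions sit on $s$) gives $\hat{L}_0=\hat{L}_1=\epsilon R_{\max}$. Your target inequality, using your own expression for $\hat{C}_2-\hat{C}_1$, becomes
\begin{equation*}
\frac{\gamma}{(1-\gamma)^2}\,\epsilon R_{\max}\;\le\;\frac{\gamma^2 R_{\max}}{(1-\gamma)^3}\,\epsilon+\Bigl(\frac{\gamma^2}{(1-\gamma)^4}-\frac{\gamma}{(1-\gamma)^3}\Bigr)R_{\max}\epsilon^2,
\end{equation*}
which, after dividing by $\gamma\epsilon R_{\max}/(1-\gamma)^2$, reads $1\le\frac{\gamma}{1-\gamma}+\frac{(2\gamma-1)\epsilon}{(1-\gamma)^2}$. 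For any $\gamma<\tfrac12$ and small $\epsilon$ the right-hand side is strictly below $1$ (the first term is $<1$, the second is $\le 0$), so the domination with constant factor $1$ fails even though the set inclusion itself holds there.

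The repair is not to bound the first-order piece in magnitude at all, but to keep it as $\hat{L}_0$ — which your decomposition already delivers: $\hat{L}_1(\mu,\hat{\pi})\le\hat{L}_0(\mu,\hat{\pi})+\frac{R_{\max}}{1-\gamma}\|\mu-\hat{\pi}\|_1^2$. This is precisely the key estimate in the paper's proof, and the paper then uses the hypothesis $\mu\in\varPsi_2$ instead of attempting a hypothesis-free comparison of the two penalties: substituting the bound into $\alpha_0\hat{L}_0+\alpha_1\hat{L}_1-\hat{C}_2\ge 0$ merges the two surrogate terms, since $\alpha_0+\alpha_1=\frac{1}{(1-\gamma)^2}=\frac{\alpha_0}{1-\gamma}$, and what remains is $\frac{1}{1-\gamma}\bigl[\alpha_0\hat{L}_0-\hat{C}_1\bigr]$ plus a penalty surplus that collapses to a positive multiple of $2\epsilon^2-\epsilon$; this surplus is $\le 0$ exactly because $\epsilon\le\tfrac12$, which is where the radius of the ball enters. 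The net effect is a pointwise domination up to a positive constant, roughly $(1-\gamma)\bigl[\alpha_0\hat{L}_0+\alpha_1\hat{L}_1-\hat{C}_2\bigr]\le\alpha_0\hat{L}_0-\hat{C}_1$ on the ball, and since multiplication by $1-\gamma>0$ preserves sign, this weaker (and true) statement already yields $\varPsi_2\subseteq\varPsi_1$. In short, insisting on factor-one domination is what makes the missing $\gamma$ unobtainable; recognizing the first-order piece as $\hat{L}_0$ and absorbing it into the $\varPsi_1$ functional, at the price of a harmless factor $1-\gamma$, closes the argument.
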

	The proof of the theorem is given in Appendix \ref{subset}.
	
	Note that when the old and new policies do not change much, the set $\varPsi_1$ is a solution space of TRPO, and the set $\varPsi_k$ corresponds to the solution space of the $k$-th generalized lower bound. The theorem \ref{set2_in_set1} shows that the scale of the solution space of the policy is reduced when $k=2$. 
	Under certain conditions, the optimal policy $ \pi^{\star} $ belongs to the set $ \varPsi_1 $ as well as to $ \varPsi_2 $.  
	Reducing the solution space is possibly more efficient in finding a good policy, and therefore, it is intuitive that the algorithm's convergence procedure can be accelerated. Furthermore, it can improve sample efficiency.
	Similarly, we can define the solution space of $k=3, 4, \cdots$ and use the same way to get $\varPsi_1 \supseteq \varPsi_2 \supseteq \varPsi_3 \supseteq \varPsi_4\supseteq \cdots$.
	Note that $\pi^{\star}$ is in those sets. It reveals the benefits of using current and subsequent states and actions of trajectory data to optimize the policy. This result provides a promising theoretical basis for our algorithm.

	\subsection{The Clipped Generalized Surrogate Objection}\label{rpo}
	In the previous subsection discussion, the generalized lower bound function contained the generalized surrogate function and a penalty term. The optimization approach for this lower bound is similar to TRPO, utilizing a linear approximation for the surrogate objective and a quadratic approximation for the penalty term. However, it requires computing the inverse matrix of the quadratic approximation of the penalty term. In particular, the generalized lower bound function also includes the relationship between before and after state-action pairs.
	It is, therefore, impractical to solve this. Inspired by the PPO \cite{Schppo} algorithm, 
	%a practical variant of TRPO, 
	we propose a new clipped surrogate objection according to Eqn. (\ref{gsu}).
	\begin{algorithm}[t]
		\begin{algorithmic}[0] %[1] enables line numbers
			\STATE Environment $ E $, discount factor $ \gamma $, batch size $ n $, clipping parameter $ \epsilon $ and $ \epsilon_1 $, learning rate $ \alpha $ and the weighted parameter $ \beta $.
			Initialize policy network parameter $ \theta $.
			\FOR{$t=0,1,2,\ldots$}
			\STATE 	\underline{Collect data}:\\
			Collect $n$ samples with $\pi_{t}$ on environment $ E $.
			\STATE \underline{Estimate policy objective}:\\
			Samples a policy data $ \pi_t $, estimate on-policy advantage $A^{\pi_t}$ using GAE method,
			approximately estimate maximize the empirical objective $\hat{L}_0^{\rm{clip}} (\pi_{\theta}, \pi_t)$ and $\hat{L}_1^{\rm{clip}} (\pi_{\theta}, \pi_t)$ from Eqn.(\ref{ppo_f}) and Eqn.(\ref{ppo_next_f}).
			% \\
			The full objective: $\hat{L}(\pi_{\theta})\leftarrow \hat{L}_0^{\rm{clip}} (\pi_{\theta}, \pi_t) + \beta \hat{L}_1^{\rm{clip}} (\pi_{\theta}, \pi_t)$.\\
			\STATE \underline{Update policy network}:\\
			Update gradient: $\theta \leftarrow \theta + \alpha \nabla_{\theta} \hat{L}(\pi_{\theta})$.
			\ENDFOR
		\end{algorithmic}
		\caption{Reflective Policy Optimization (RPO)}\label{RPO-algorithm}
	\end{algorithm}
	
	When $ k=1 $, for $ \hat{L}_0(\pi, \hat{\pi}) $, we use the PPO's objective function:
	\begin{equation}
	\begin{aligned}\label{ppo_f}
	\hat{L}_0^{\rm{clip}}(\pi, \hat{\pi})=&\mathbb{E}_{(s,a)}
	\min\left[r(a|s)A^{\hat{\pi}}(s,a)\right.,
	\\&
	\left.\text{clip}\left(r(a|s), 1-\epsilon, 1+\epsilon \right)A^{\hat{\pi}}(s,a)\right],
	\end{aligned}
	\end{equation}
	where $ r(a|s)= \frac{\pi(a|s)}{\hat{\pi}(a|s)}$, $ \epsilon $ is the hyperparameter, and we ignore the distribution of random variables $ (s, a) $.
	
	When $ k=2 $, for $ \hat{L}_1(\pi, \hat{\pi}) $, we simply modify the clipping mechanism:
	\begin{equation}\label{ppo_next_f}
	\begin{aligned}
	\hat{L}_1^{\rm{clip}}(\pi, \hat{\pi})=&\mathbb{E}_{(s,a,s',a')}
	\min\left[r(a|s)r(a'|s')A^{\hat{\pi}}(s',a')\right.,
	\\&
	\left.C(r,r') A^{\hat{\pi}}(s',a')\right],
	\end{aligned}
	\end{equation}
	where $ C(r,r')= \text{clip}\left[r(a|s), 1-\epsilon, 1+\epsilon \right]\cdot\text{clip}\left[r(a'|s'),\right.$  $ \left. 1-\epsilon_1, 1+\epsilon_1 \right] $, $ r(a|s)= \frac{\pi(a|s)}{\hat{\pi}(a|s)}$, $ r(a'|s')= \frac{\pi(a'|s')}{\hat{\pi}(a'|s')}$, $ \epsilon $ and $ \epsilon_1 $ are the hyperparameter, we ignore the distribution of random variables $ (s, a, s', a') $.

	From the Eqn.~(\ref{ppo_next_f}), we implement the clipping mechanism for each ratio, not altogether. If the ratio $ r(a|s) $ is large and the ratio $ r(a'|s') $ is small, the product of $ r(a|s) $ and $ r(a'|s') $ may fall between $ 1-\epsilon $ and $ 1+\epsilon $. If their product is clipped, the policy will continue to be optimized, and the result may improve or worsen. 
	% become better or worse. 
	We have no control over this phenomenon. Therefore, using the separate clipping mechanism will be considered a reasonable situation. The clipping mechanism constrains the ratio variance, making the algorithm's training procedure more stable. In practice, we find that the parameter $ \epsilon_1 $ cannot be too large, and it's better to be a little smaller than or equal to the $ \epsilon $. We want to use the subsequent state-action information to subsidiarily optimize the current policy while avoiding abrupt changes between old and new policies.
	In this way, the training procedure can be more stable. 
	
	Additionally, $ k > 2 $, the function $ \hat{L}_k(\pi, \hat{\pi}) $ can be clipped using the same mechanism. Therefore, for the generalized lower bound function, we present a more practical version of the algorithm.
	
	Combining Eqn. (\ref{ppo_f}) and Eqn. (\ref{ppo_next_f}), we present the Reflective Policy Optimization algorithm (RPO), a practical variant for the generalized surrogate objective function:
	\begin{equation}\label{rpo-for}
	\begin{aligned}
	\hat{L}(\pi,\pi_t)=&\hat{L}_0^{\rm{clip}}(\pi, \pi_t)+ \beta\hat{L}_1^{\rm{clip}}(\pi, \pi_t),
	\end{aligned}
	\end{equation}
	where $ \hat{L}_0^{\rm{clip}}(\pi, \pi_t) $ is defined in Eqn.(\ref{ppo_f}), $ \hat{L}_1^{\rm{clip}}(\pi, \pi_t) $ is defined in Eqn.(\ref{ppo_next_f}), and $ \beta >0 $. By choosing the parameter $ \beta $, this parameter plays a role in weighting the use of subsequent state-action pair information.
	Eqn. (\ref{rpo-for}) is the optimization objective function for the $ t $-th update. 
	%	We did not modify the optimization method for the value function.
	This paper's optimization of the value function is the same as that of PPO.
	Algorithm \ref{RPO-algorithm} shows the detailed implementation pipeline. The RPO algorithm is divided into three steps in each iteration: collect samples, estimate policy objectives, and update the network. It can be seen that our proposed method is also an on-policy algorithm.

	\paragraph{Discuss with multi-step RL} 
	Multi-step reinforcement learning (RL) is a set of methods that aim to adjust the trade-off of utilization between the knowledge of the current and future return. Recent advances in multi-step RL have demonstrated remarkable empirical success \cite{WuY, Tang1}. 
	This approach does not directly optimize the current policy but is based on the value function estimated in multiple steps. It is difficult to see directly what role multi-step RL plays in the policy optimization procedure. However, the approach proposed in this paper is viewed from a multi-step perspective: subsequent state-action pairs directly affect policy optimization. 
	It has a direct effect on the actions of the agent and has better theoretical properties. Therefore, our proposed method is
	fundamentally different from traditional multi-step RL.

	\paragraph{Discuss with TayPO} 
	The surrogate objective function of the TayPO algorithm \cite{Tang} is denoted as $L_i(\pi, \hat{\pi})$ in Eqn. (\ref{gene_surr}). As discussed in Section \ref{gsf}, their algorithm includes a "1" term, which encounters the same problem. We have observed experimentally (refer to Figure \ref{performance} of the appendix) that the "1" term will severely damage the performance of their algorithm.  
	Furthermore, we establish an equality relationship between $ \eta(\pi)$ and the generalized surrogate function. Compared to their method, we give a tighter lower bound (please refer to Appendix \ref{cor_ineq1}).

	\begin{figure*}[ht!]
		\begin{minipage}[b]{.32\linewidth}
			\centering
			\subfigure[CliffWalking]{\includegraphics[width=0.99\textwidth]{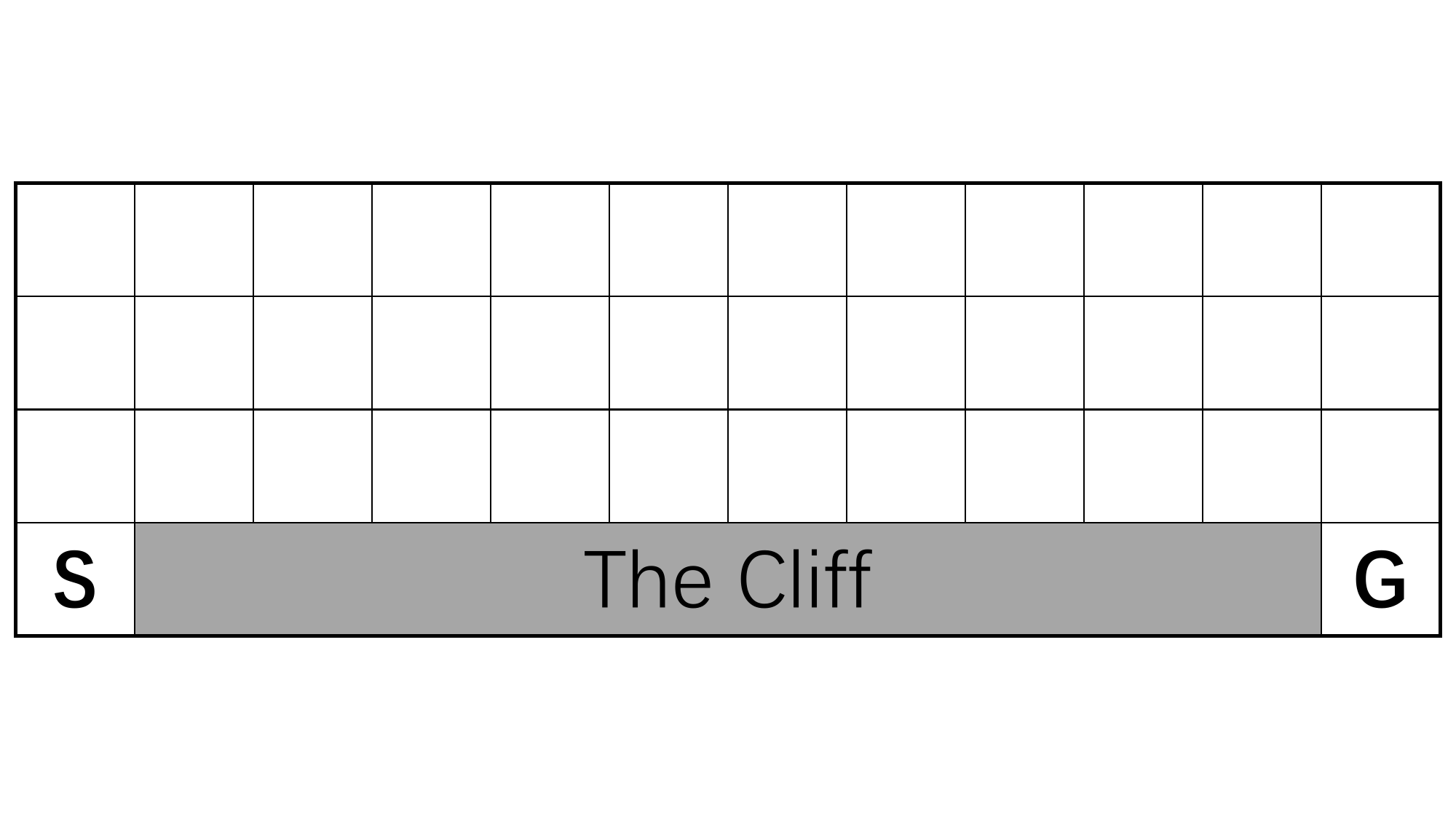}}
		\end{minipage}
		\begin{minipage}[b]{.32\linewidth}
			\centering
			\subfigure[Number]{\includegraphics[width=0.92\textwidth]{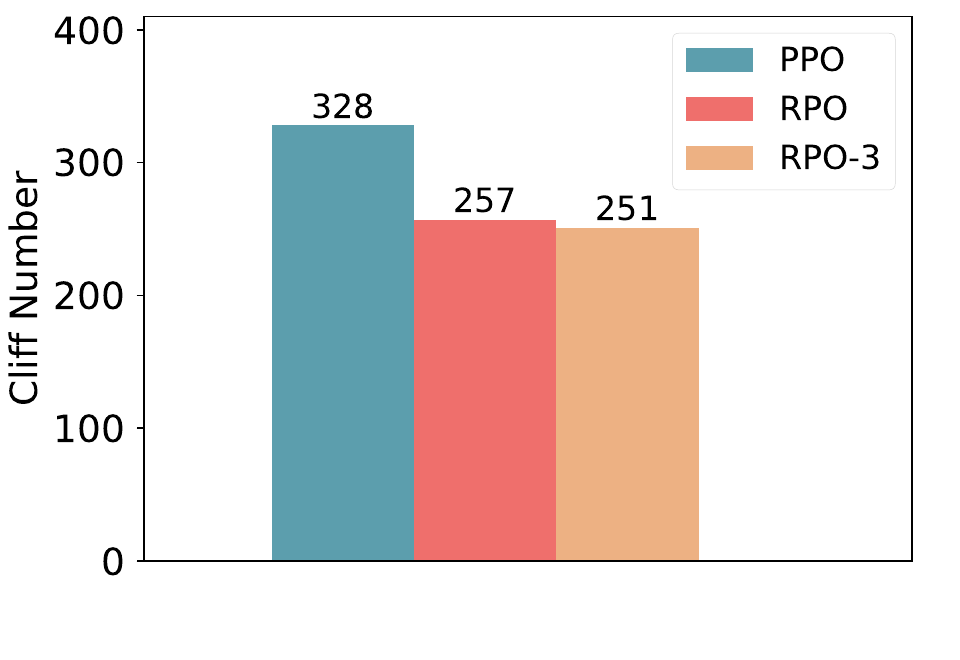}}
		\end{minipage}
		\begin{minipage}[b]{.32\linewidth}
			\centering
			\subfigure[Length]{\includegraphics[width=0.92\textwidth]{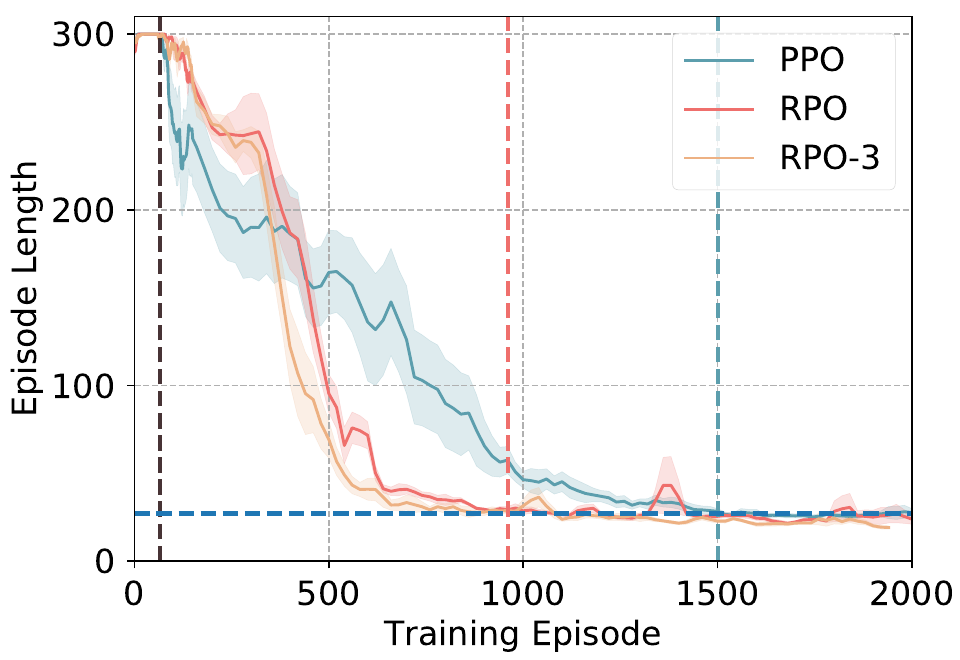}}
		\end{minipage}
		\caption{(a) is a CliffWalking environment. (b) represents the total number of times the agent fell into the Cliff during the training procedure. (c) represents the agent's steps to reach the goal $G$ during the training procedure. RPO-3 means that when $k=3$, the algorithm uses three ratios.
		}
		\label{Compa}
	\end{figure*}
	\begin{figure*}[ht!]
		\begin{minipage}[b]{.32\linewidth}
			\centering
			\subfigure[HalfCheetah]{\includegraphics[width=0.99\textwidth]{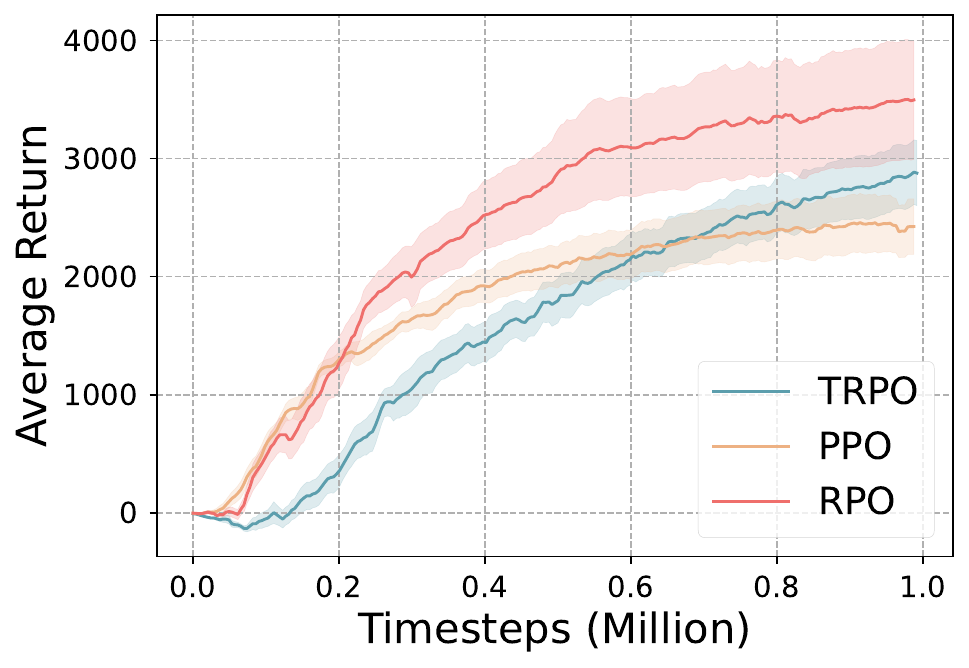}}
		\end{minipage}
		\begin{minipage}[b]{.32\linewidth}
			\centering
			\subfigure[Hopper]{\includegraphics[width=0.99\textwidth]{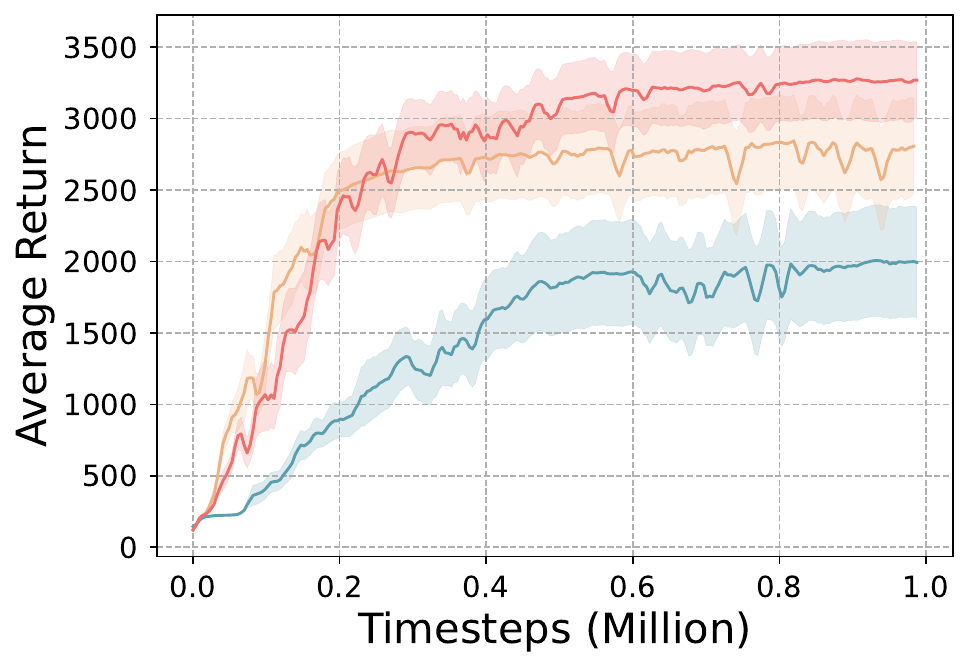}}
		\end{minipage}
		\begin{minipage}[b]{.32\linewidth}
			\centering
			\subfigure[Walker2d]{\includegraphics[width=0.99\textwidth]{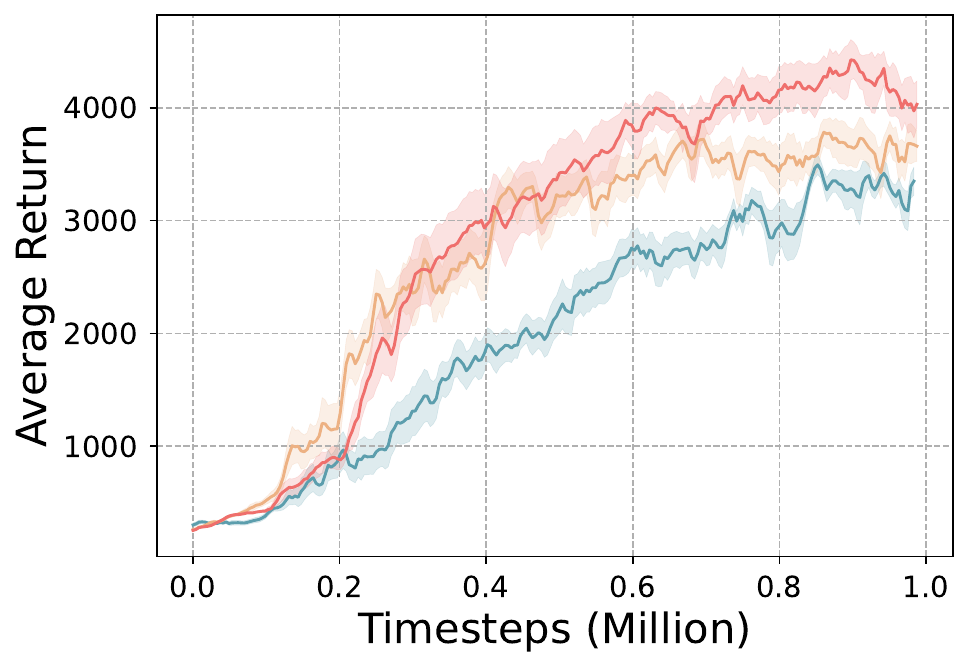}}
		\end{minipage}\\
		\begin{minipage}[b]{.32\linewidth}
			\centering
			\subfigure[Swimmer]{\includegraphics[width=0.99\textwidth]{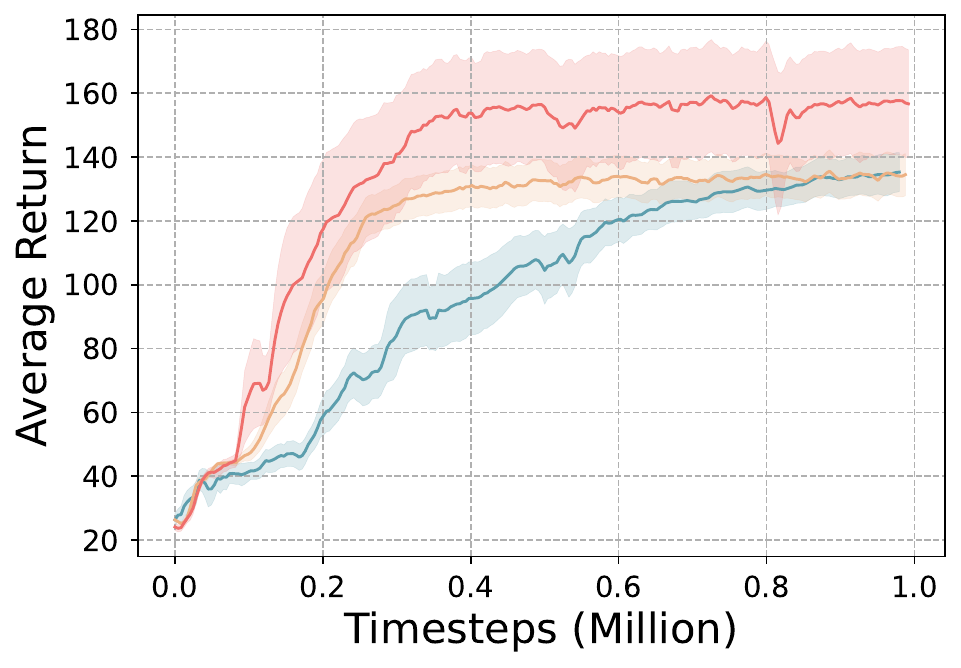}}
		\end{minipage}
		\begin{minipage}[b]{.32\linewidth}
			\centering
			\subfigure[Humanoid]{\includegraphics[width=0.99\textwidth]{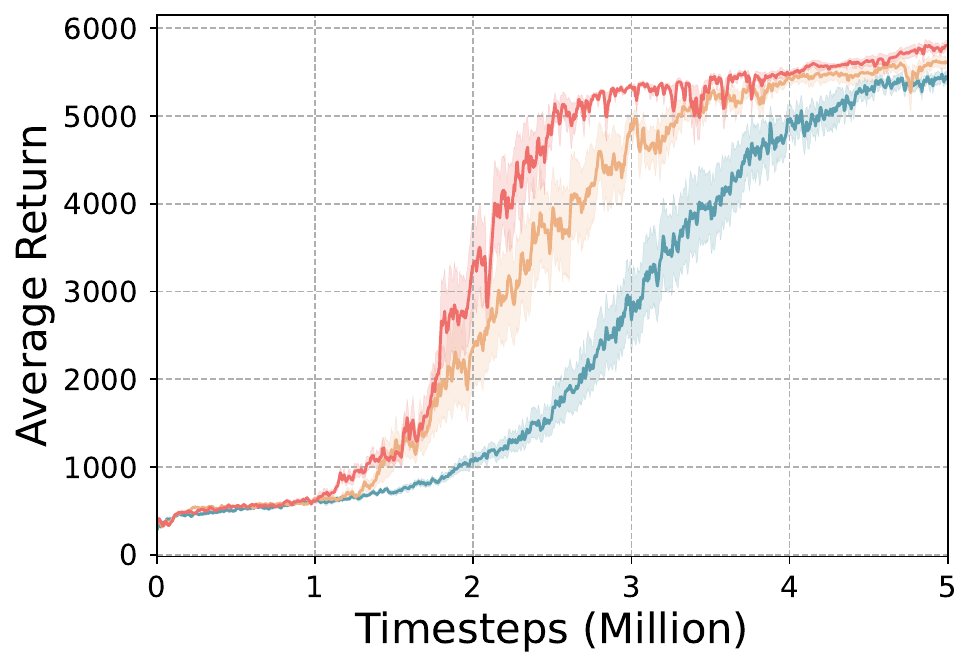}}
		\end{minipage}
		\begin{minipage}[b]{.32\linewidth}
			\centering
			\subfigure[Reacher]{\includegraphics[width=0.99\textwidth]{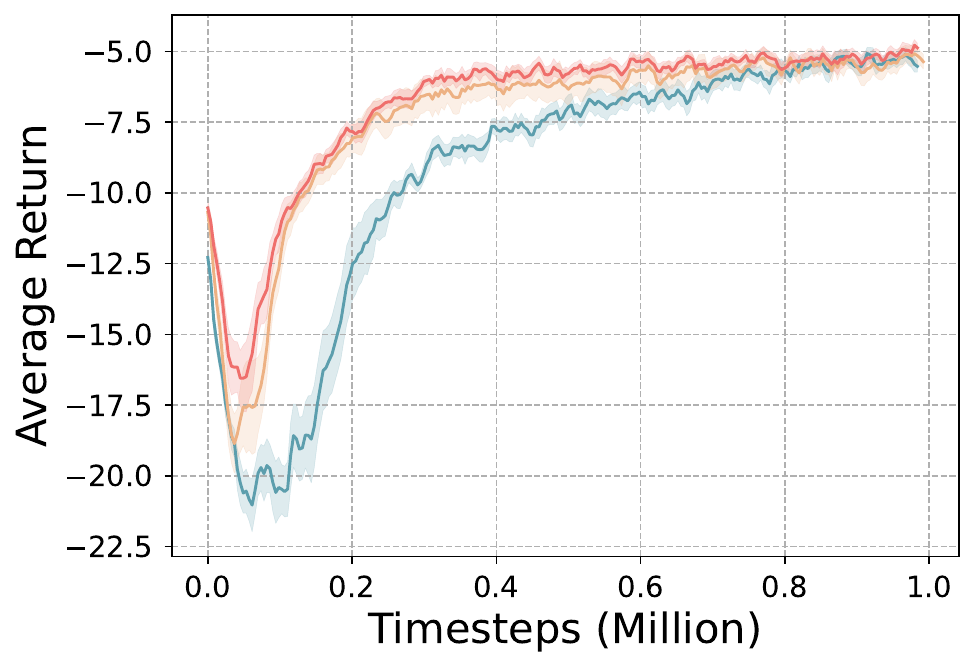}}
		\end{minipage}
		\caption{Learning curves on the Gym environments. Performance of RPO vs. PPO. The shaded region indicates the standard deviation of ten random seeds.
			The X-axis represents the timesteps in the environment. The Y-axis represents the average return.
		}
		\label{performancePPO}
	\end{figure*}

	\section{Experiments}\label{expe}
	To verify the effectiveness of the proposed RPO algorithm, we utilize several continuous and discrete environments from the MuJoCo \cite{Tod} and Atari games in OpenAI Gym \cite{Gre} extensively adopted in previous works.
	We conducted experiments with the Mujoco environment based on code from \cite{Que} and Atari games based on code from \cite{zhang}.
	
	\subsection{Visual Validation Experiment}\label{Visual_exp}
	To demonstrate the effectiveness of the ``Reflective Mechanism" of RPO, we conducted visual validation experiments in the CliffWalking environment. CliffWalking is a classic setting widely used for visualizing the performance of reinforcement learning algorithms. From Figure \ref{Compa} (a), it is characterized by a gird environment in which the agent starts from $ S $ and moves through several girds to reach the goal $ G $ while avoiding falling into the cliff.
	
	Figure \ref{Compa} illustrates the overall performance of RPO and its baseline algorithm during the training processing, mainly focusing on the frequency of falling off the cliff and the interaction step overhead, assisting in validating the advantages of RPO’s ``Reflective Mechanism".

	\begin{figure*}[t!]
		\begin{minipage}[b]{.245\linewidth}
			\centering
			\subfigure[HalfCheetah]{\includegraphics[width=0.99\textwidth]{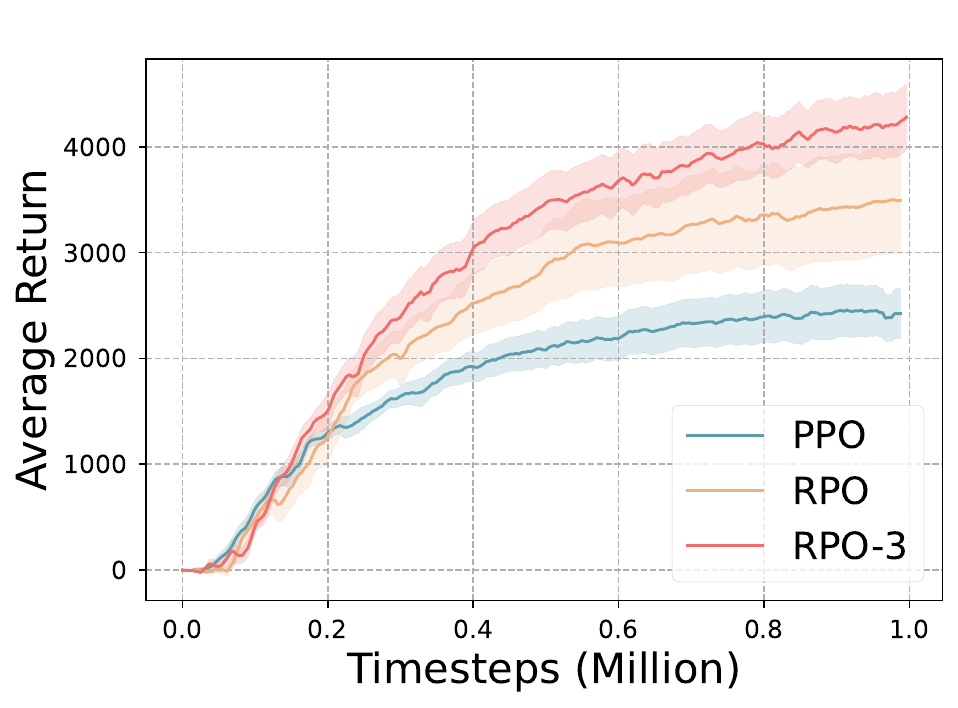}}
		\end{minipage}
		\begin{minipage}[b]{.245\linewidth}
			\centering
			\subfigure[Swimmer]{\includegraphics[width=0.99\textwidth]{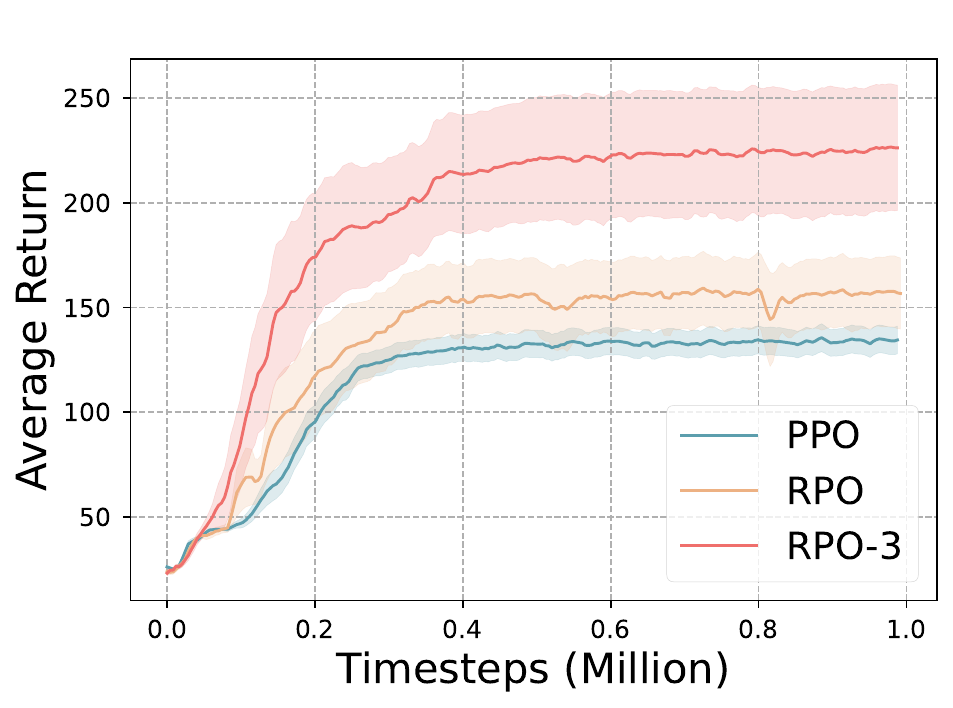}}
		\end{minipage}
		\begin{minipage}[b]{.245\linewidth}
			\centering
			\subfigure[HalfCheetah]{\includegraphics[width=0.99\textwidth]{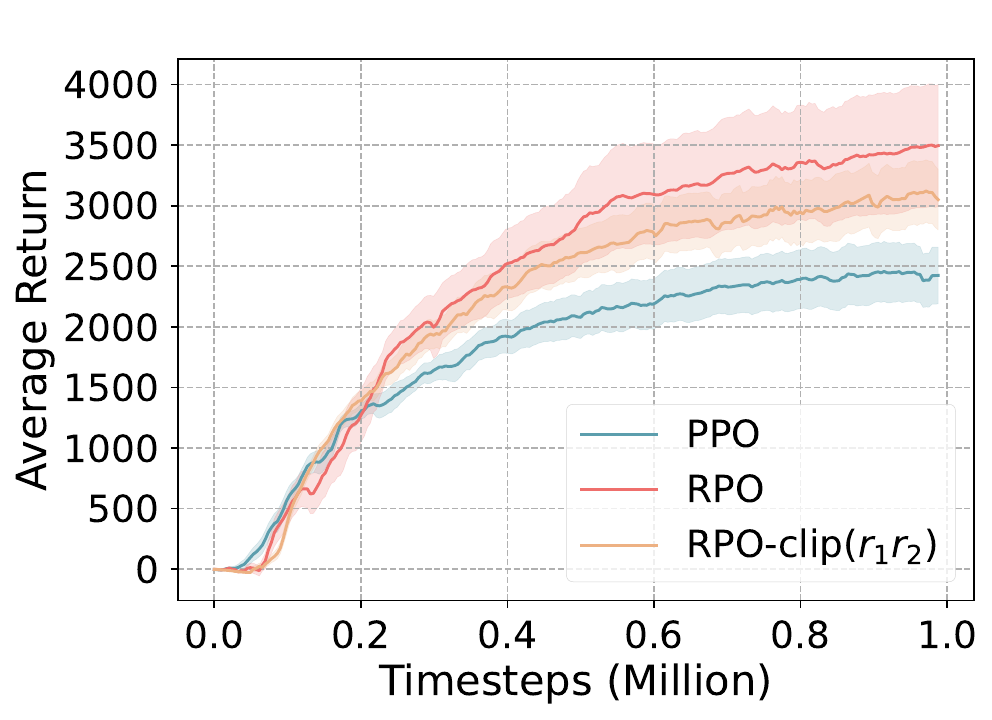}}
		\end{minipage}
		\begin{minipage}[b]{.245\linewidth}
			\centering
			\subfigure[Swimmer]{\includegraphics[width=0.99\textwidth]{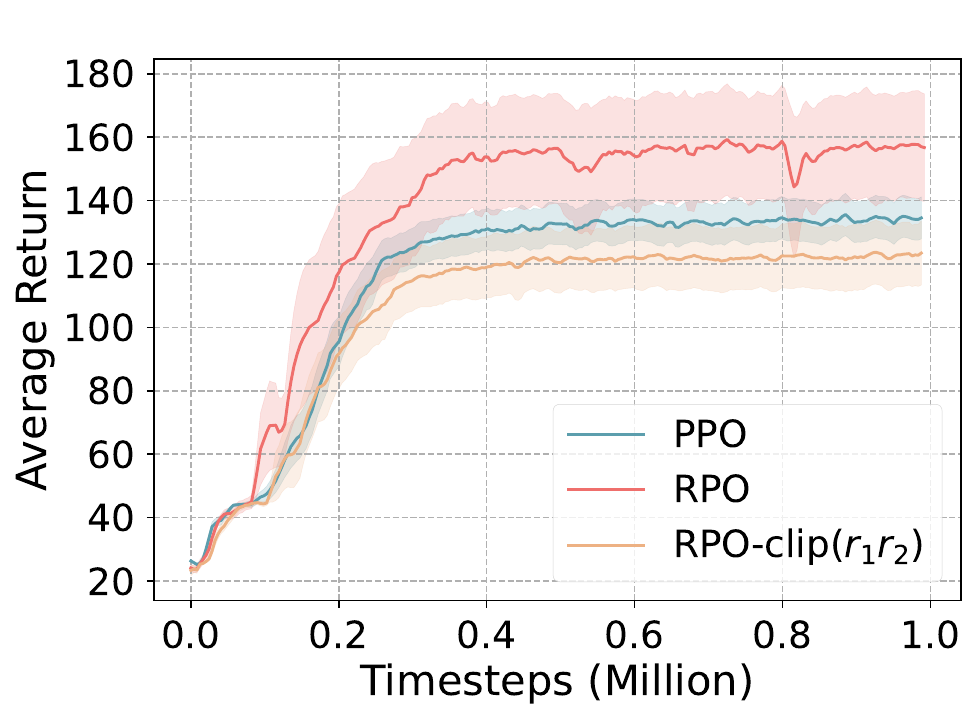}}
		\end{minipage}
		\caption{The figure (a) and (b) represent the performance of RPO vs. RPO-3 (means that when $k=3$, the algorithm uses three ratios), and the figure (c) and (d) represent the performance of RPO vs. RPO-clip($ r_1r_2 $) (means that the two ratios are clipped together.).
		}
		\label{adapt_para_r1r2_rpo-3}
	\end{figure*}
	
	As shown in Figure \ref{Compa} (b), RPO significantly reduces the frequency of falling off the cliff under equal iteration conditions. This data attests to the significant efficiency of RPO’s ``Reflective Mechanism". It capitalizes on previous interaction experiences, substantially reducing the occurrence rate of poor decisions. Figure \ref{Compa} (c) reveals that as the number of interactions increases, RPO markedly cuts down the interaction step overhead per episode, which further confirms the benefits of directly utilizing the subsequent data, \textit{i.e.} the ability to reflect on the action under state and gain the greater rewards. 
	In addition, the selection of $ k $ is further discussed. When $ k=3 $, using a similar clipping mechanism as in Eqn. (\ref{ppo_next_f}), we construct the generalized surrogate objective function. From Figure \ref{Compa}, the performance of $ k=3 $ is only a little better than $ k=2 $. Considering the simplicity and the complexity of the policy optimization, the case of $ k = 2 $ is considered in the main experiment later.
	
	The successful implementation of this RPO mechanism is attributed to its unique approach to comprehensive analysis of state-action pairs. Interestingly, RPO distinguishes itself from most existing algorithms by integrating current and subsequent data strengths. Unlike other algorithms, RPO efficiently utilizes “good” experiences and adjusts based on “bad” experiences. It can more accurately incorporate future states' development, a comprehensive feature that current peer algorithms do not have.

	\begin{figure*}[t!]
		\centering
		\begin{minipage}[b]{1.0\linewidth}
			\centering
			\subfigure{\includegraphics[width=0.99\textwidth]{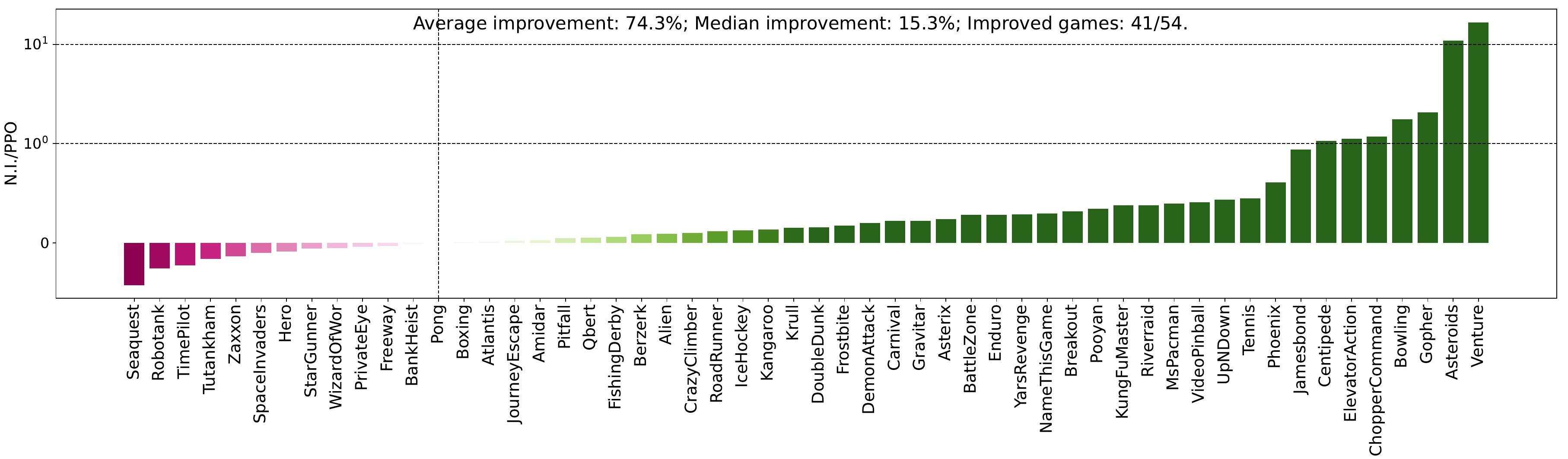}}
		\end{minipage}
		\caption{Normalized Improvement of RPO vs. PPO in 54 Atari 2600 games.
		}
		\label{normalized_atari}
	\end{figure*}

	\begin{table}[t]
		\centering
		\vskip -.1in
		\caption{Mean of return of ten random seeds for different methods in Mujoco. The best results are highlighted in \textbf{bold}. }
		\label{tab_all}
		\begin{small}
			\renewcommand\arraystretch{1.5}
			\setlength{\tabcolsep}{0.28mm}{
				\begin{tabular}{l|c|c|c|c|c|cr}
					\toprule
					& \begin{tabular}[c]{@{}c@{}}Half-\\Cheetah\end{tabular} & Hopper & Reacher & Walker2d & Swimmer & Humanoid \\
					\hline
					TRPO & 2862 & 1996 & -5.3 & 3198 & 135 & 5425 \\
					\hline
					PPO & 2408 & 2795 & -5.2 & 3643 & 134 & 5608 \\
					\hline
					ISPO & 1205 & 1984 & -6.7 & 2275 & 71 & 2736 \\
					\hline
					TayPO & -137 & 29 & -66.5 & 5.7 & -22 & 3717 \\
					\hline
					GePPO & 3153 & 3170 & -8.1 & 3690 & \textbf{157} & 4100 \\
					\hline
					OTRPO & 3075 & 1597 & -10.7 & 2435 & 122 & 1768 \\
					\hline
					RPO & \textbf{3495} & \textbf{3262} & \textbf{-4.9} & \textbf{4025} & \textbf{157} & \textbf{5793} \\
					\bottomrule
			\end{tabular}}
		\end{small}
		\vskip -.1in
	\end{table}
	
	\subsection{Main Experimental Analysis}
	Since the CliffWalking environment is especially conducive to showcasing RPO’s “Reflective Mechanism,” we performed auxiliary experiments in this setting. In this subsection, we conducted experiments in continuous and discrete action space environments to validate RPO's extensive effectiveness and universal adaptability in reinforcement learning scenarios. 
	
	We conducted a detailed comparative analysis with six related algorithms in the field (TRPO \cite{Schtrpo}, PPO \cite{Schppo}, GePPO \cite{Que}, OTRPO \cite{Wen}, TayPo \cite{Tang} and ISPO \cite{Tom}) in six major experimental environments of MuJoCo \cite{Tod}. The results (as shown in Table \ref{tab_all} and Figure \ref{performance} of the appendix) indicate that RPO consistently outperforms in all environments.% sub-environments.

	% In these six diverse continuous environments, 
	In the continuous environments, 
	we use the same hyperparameters $ \epsilon_1=0.1 $ and $ \beta = 0.3 $ in all environments. From Figure \ref{performancePPO} and Figure \ref{performance} of the appendix, RPO surpasses classic on-policy reinforcement learning algorithms PPO and TRPO not only in terms of average return but also in convergence speed. This improvement is attributed to RPO's incorporation of current and subsequent data strengths. RPO also exhibits significant advantages compared to the related off-policy algorithms OTRPO and GePPO. This shows that even if off-policy data are not used, better performance can be achieved by using only the data of the current policy itself. The TayPO's performance is not good because this method faces the problems discussed in Section \ref{gsf}.
	
	In the discrete Atari environments, the results are averaged over three seeds during 50M timesteps. We run our experiments across three seeds with fair evaluation metrics. We use the same hyperparameters $ \epsilon_1=0.1 $ and $ \beta = 3.0 $ in all environments. The normalized improvement (\textit{N.I.}) of the RPO final score $a$ \textit{w.r.t.} the final score of a baseline PPO $b$ is $\frac{a-b}{b-r}$, $ r $ is the random score \cite{Vie}. From Figure \ref{normalized_atari}, we compute the \textit{N.I./PPO} in each environment\footnote{For the Venture environment, since the final performance of PPO and random are both zero, whereas our method attains a final performance of 659 points. The calculated $N.I.=\infty $, and we clip the $N.I.$ to 12 for graphical representation.}. It demonstrates that RPO performs better than the PPO algorithm in most environments. We also calculated an average performance improvement of at least 70\%. Please refer to the appendix for a detailed curve (refer to Figure \ref{performance_atari}) and table results (refer to Table \ref{atari_table}).
	
	The exceptional performance of RPO is rooted in its unique reflective mechanism that facilitates the efficient utilization of both positive and negative experiences. By employing short trajectories composed of two consecutive states for learning and decision-making, a more profound reflection and utilization of experience is achieved. This approach has the following benefits: it enables the effective use of interaction experiences from adjacent states. Adopting this pair-wise state combination for short trajectory inputs minimizes the method's computational and storage overhead while maximizing the retention of 
	%temporality
	relevance
	, promoting a deeper utilization of experience and Reflective mechanism.

	The analysis above shows that RPO exhibits significant advantages in various aspects, especially in convergence speed and average return, compared to other algorithms. These empirical findings underscore the efficiency and applicability of the RPO algorithm in both complex continuous and discrete action space environments.
	
	\subsection{Ablation Studies}
	% In this section, we conduct ablation contrast experiments to evaluate several principal elements that might influence the algorithm’s performance. 
	
	We conduct ablation experiments to evaluate several principal elements that might influence the RPO’s performance.
	
	Initially, we conducted an ablation study on the number of states-action pairs, as shown in figures (a) and (b) of Figure \ref{adapt_para_r1r2_rpo-3}.
	When k = 3, we fine-tuned the hyperparameters of the algorithm RPO-3 (three ratios).
	The results reveal that RPO-3's performance is slightly better than RPO's (two ratios), but the cost of fine-tuning the hyperparameters increases.
	Considering the number of parameters and the algorithm's simplicity, we think the ratios in RPO need not exceed two, as two states suffice for effective reflection.
	
	Secondly, we conducted ablation experiments on whether the two ratios were clipped together or not, and by comparing it with the RPO-clip($ r_1r_2 $) algorithm (this means that the two ratios are clipped together), as shown in figure (c) and (d) of Figure \ref{adapt_para_r1r2_rpo-3}. 
	The RPO-clip($ r_1r_2 $) may not achieve better performance in some environments, and it may be that two ratios clipped together will face instability. And the RPO method exhibits greater performance, as indicated by the results in Figure \ref{adapt_para_r1r2_rpo-3}. 
	
	Finally, we conducted ablation experiments on the clipping and weighting coefficients (see Figure \ref{fix-parameter} in the appendix). 
	Adjusting these two clipping parameters aims to maintain policy stability and address the risk associated with current methods that solely apply clipping to the product. 
	This approach can avoid abrupt changes between old and new policies.
	We observed that the smaller clipping values yield more significant improvements under a fixed weighting coefficient. Furthermore, our findings suggest that reducing the weighting coefficient alongside certain clipping levels positively influences the results within an acceptable range. 
	It facilitates deep experiential learning from new short trajectories formed by preceding and succeeding states, promoting stable and enhanced performance and accelerating the algorithm's convergence rate.

	\section{Conclusive Remarks}
	
	This paper proposes a simple on-policy algorithm called Reflective Policy Optimization (RPO). This method aims to combine before and after state and action information of the trajectory data to optimize the current policy, thus allowing the agent to reflect on and modify the action of the current state to some extent. Furthermore, theoretical analyses show that our proposed method, in addition to satisfying the desirable property of the monotonic improvement of policy performance, can effectively reduce the optimized policy's solution space, speeding up the algorithm's convergence procedure. We verify the feasibility and effectiveness of the proposed method by a toy example and achieve better performance on RL benchmarks.
	
	In future work, since RPO optimizes the policy by leveraging information from past and future pairs, it is straightforward to integrate it into other Actor-Critic algorithms or maximum entropy methods, further exploiting its advantages. We anticipate that RPO will play a good role in developing new reinforcement learning algorithms.
	
	\section*{Acknowledgments}
	We thank the anonymous ICML reviewers for their insightful and constructive comments. Dr. Junliang Xing was partly supported by the Natural Science Foundation of China under Grant No. 62222606 and 62076238.

	\section*{Impact Statement}
	This paper presents work whose goal is to advance the field of Machine Learning. There are many potential societal consequences of our work, none of which we feel must be specifically highlighted here.

	% In the unusual situation where you want a paper to appear in the references without citing it in the main text, use \nocite
	\nocite{langley00}
	
	\bibliography{rpo}
	\bibliographystyle{icml2024}

	%%%%%%%%%%%%%%%%%%%%%%%%%%%%%%%%%%%%%%%%%%%%%%%%%%%%%%%%%%%%%%%%%%%%%%%%%%%%%%%
	%%%%%%%%%%%%%%%%%%%%%%%%%%%%%%%%%%%%%%%%%%%%%%%%%%%%%%%%%%%%%%%%%%%%%%%%%%%%%%%
	% APPENDIX
	%%%%%%%%%%%%%%%%%%%%%%%%%%%%%%%%%%%%%%%%%%%%%%%%%%%%%%%%%%%%%%%%%%%%%%%%%%%%%%%
	%%%%%%%%%%%%%%%%%%%%%%%%%%%%%%%%%%%%%%%%%%%%%%%%%%%%%%%%%%%%%%%%%%%%%%%%%%%%%%%
	\newpage
	\appendix
	\onecolumn

	\section{Proof}
	Let's start with some useful lemmas.
	\begin{lemma}\label{Kak}
		\cite{Kak}
		Consider any two policies $ \hat{\pi} $ and $ \pi $, we have 
		\begin{equation*}
		\eta(\pi)- \eta(\hat{\pi})
		=\frac{1}{1-\gamma}\mathbb{E}_{s,a \sim \rho^{\pi}} A^{\hat{\pi}}(s, a).
		\end{equation*}
	\end{lemma}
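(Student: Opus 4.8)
The plan is to prove this via the standard advantage-telescoping identity of Kakade and Langford, expanding the advantage along trajectories sampled from $\pi$ and collapsing the value terms. The starting point is the one-step decomposition of the advantage implied by the definitions in Section~\ref{Backg}, namely
\begin{equation*}
A^{\hat{\pi}}(s_t,a_t) = \mathbb{E}_{s_{t+1}\sim P(\cdot|s_t,a_t)}\left[R(s_t,a_t) + \gamma V^{\hat{\pi}}(s_{t+1}) - V^{\hat{\pi}}(s_t)\right],
\end{equation*}
which simply rewrites $A^{\hat{\pi}} = Q^{\hat{\pi}} - V^{\hat{\pi}}$ by unrolling $Q^{\hat{\pi}}$ one step.

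First I would form the discounted sum of advantages along a $\pi$-trajectory and take the expectation,
\begin{equation*}
\mathbb{E}_{\tau\sim\pi}\left[\sum_{t=0}^{\infty}\gamma^t A^{\hat{\pi}}(s_t,a_t)\right] = \mathbb{E}_{\tau\sim\pi}\left[\sum_{t=0}^{\infty}\gamma^t\left(R(s_t,a_t) + \gamma V^{\hat{\pi}}(s_{t+1}) - V^{\hat{\pi}}(s_t)\right)\right].
\end{equation*}
Second, I would split the right-hand side: the reward part is exactly $\eta(\pi)$, while the value part $\sum_{t}\gamma^t(\gamma V^{\hat{\pi}}(s_{t+1}) - V^{\hat{\pi}}(s_t))$ telescopes to $-V^{\hat{\pi}}(s_0)$, whose expectation over $s_0\sim\rho_0$ equals $-\eta(\hat{\pi})$. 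This yields $\mathbb{E}_{\tau\sim\pi}[\sum_t \gamma^t A^{\hat{\pi}}(s_t,a_t)] = \eta(\pi) - \eta(\hat{\pi})$. Third, I would convert the left-hand trajectory sum into the normalized visitation form: since $\rho^{\pi}(s,a) = (1-\gamma)\sum_t \gamma^t \mathbb{P}(s_t=s, a_t=a\,|\,\rho_0,\pi)$, interchanging the expectation with the sum over $t$ gives $\mathbb{E}_{\tau\sim\pi}[\sum_t \gamma^t A^{\hat{\pi}}(s_t,a_t)] = \frac{1}{1-\gamma}\mathbb{E}_{s,a\sim\rho^{\pi}}A^{\hat{\pi}}(s,a)$, and combining the two expressions closes the argument.

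The main obstacle is purely technical rather than conceptual: justifying the interchange of the infinite sum with the expectation, and confirming that the boundary term of the telescoping sum at infinity vanishes. Both follow from $\gamma\in[0,1)$ together with boundedness of the reward (hence of $V^{\hat{\pi}}$, bounded in magnitude by $R_{\max}/(1-\gamma)$), which renders the series absolutely convergent and forces $\gamma^T \mathbb{E}[V^{\hat{\pi}}(s_T)]\to 0$; with those bounds established the rearrangements are legitimate. Since this identity is precisely Eqn.~(\ref{dif_pi_hatpi}) restated, one could alternatively cite it directly, but the telescoping derivation above is the self-contained route.
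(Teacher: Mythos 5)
Your proof is correct: the telescoping argument (one-step Bellman decomposition of $A^{\hat{\pi}}$, discounted summation along a $\pi$-trajectory, cancellation of the value terms to $-V^{\hat{\pi}}(s_0)$, and conversion of the trajectory sum to the normalized visitation form) is the standard derivation of this identity, and your handling of the interchange of sum and expectation and of the vanishing boundary term via $\gamma\in[0,1)$ and bounded rewards is sound. Note that the paper itself gives no proof of Lemma~\ref{Kak} at all---it is quoted directly from \cite{Kak} and restated as Eqn.~(\ref{dif_pi_hatpi}) in Section~2---so your derivation coincides with the classical proof of the cited result rather than with any argument internal to this paper.
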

	
	\begin{corollary}\label{Kak_cor}
		Consider any two policies $ \hat{\pi} $ and $ \pi $, we have
		\begin{itemize}
			\item 
			$ V^{\pi}(s_0)-V^{\hat{\pi}}(s_0)
			=\frac{1}{1-\gamma}\mathbb{E}_{s, a \sim \rho^{\pi}(\cdot|s_0)} A^{\hat{\pi}}(s, a). $
			
			\item 
			$ Q^{\pi}(s_0, a_0)-Q^{\hat{\pi}}(s_0, a_0)
			=\frac{\gamma}{1-\gamma}\mathbb{E}_{s, a \sim \rho^{\pi}(\cdot|s_0, a_0)} A^{\hat{\pi}}(s, a) .$
		\end{itemize}
	\end{corollary}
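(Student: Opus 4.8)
The plan is to derive both bullets from the performance difference lemma (Lemma~\ref{Kak}) by specializing the initial state distribution and, for the second bullet, by peeling off a single transition step. The guiding observation is that Lemma~\ref{Kak} is valid for any choice of the initial distribution $\rho_0$ entering the definitions of $\eta$ and $\rho^{\pi}$, so both conditional identities are really instances of the same lemma seeded at a fixed state (or at a state reached after one step).

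For the first bullet, I would take $\rho_0 = \delta_{s_0}$, the degenerate distribution concentrated at $s_0$. Then the objective collapses to $\eta(\pi) = V^{\pi}(s_0)$ and $\eta(\hat{\pi}) = V^{\hat{\pi}}(s_0)$, while the discounted visitation distribution $\rho^{\pi}$ becomes exactly the conditional visitation distribution $\rho^{\pi}(\cdot|s_0)$ started from $s_0$. Substituting these three facts into Lemma~\ref{Kak} yields the first identity immediately. (If one prefers not to invoke the lemma at the level of an arbitrary $\rho_0$, the same identity follows by unrolling $V^{\pi}(s_0)-V^{\hat{\pi}}(s_0)$ with the telescoping argument that proves Lemma~\ref{Kak} itself, now initialized at $s_0$.)

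For the second bullet, I would start from the one-step decomposition $Q^{\pi}(s_0,a_0) = R(s_0,a_0) + \gamma\,\mathbb{E}_{s_1\sim P(\cdot|s_0,a_0)}[V^{\pi}(s_1)]$ together with the analogous expression for $\hat{\pi}$. Subtracting, the reward term $R(s_0,a_0)$ cancels because it is policy-independent, leaving $Q^{\pi}(s_0,a_0)-Q^{\hat{\pi}}(s_0,a_0) = \gamma\,\mathbb{E}_{s_1\sim P(\cdot|s_0,a_0)}[V^{\pi}(s_1)-V^{\hat{\pi}}(s_1)]$. I would then apply the first bullet inside the expectation, replacing $V^{\pi}(s_1)-V^{\hat{\pi}}(s_1)$ by $\frac{1}{1-\gamma}\mathbb{E}_{s,a\sim\rho^{\pi}(\cdot|s_1)}A^{\hat{\pi}}(s,a)$, which produces the $\frac{\gamma}{1-\gamma}$ prefactor.

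The remaining and main step is a bookkeeping identity on visitation distributions: I must show that averaging the started-at-$s_1$ visitation $\rho^{\pi}(\cdot|s_1)$ over $s_1\sim P(\cdot|s_0,a_0)$ reproduces the conditional visitation $\rho^{\pi}(\cdot|s_0,a_0)$. Expanding both as discounted sums of per-step laws, $\rho^{\pi}(\cdot|s_1)=(1-\gamma)\sum_{u\ge0}\gamma^{u}\,\mathbb{P}(s_{1+u}=\cdot\mid s_1,\pi)$ and $\rho^{\pi}(\cdot|s_0,a_0)=(1-\gamma)\sum_{u\ge0}\gamma^{u}\,\mathbb{P}(s_{1+u}=\cdot\mid s_0,a_0,\pi)$, the equality follows from the Markov property, since conditioning on $s_1$ and then averaging over $P(\cdot|s_0,a_0)$ coincides with conditioning on $(s_0,a_0)$. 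The one place to be careful, and the likely source of index errors, is the time-shift convention: $\rho^{\pi}(\cdot|s_0,a_0)$ must be read as the law of the states arising \emph{one step after} taking $a_0$ in $s_0$, so that its time origin aligns with $s_1$. This is exactly what makes the composition close and is consistent with the use of $\rho^{\pi}(\cdot|s,a)$ (indexed by the subsequent pair $s',a'$) elsewhere in the paper.
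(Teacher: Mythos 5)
Your proposal is correct and follows essentially the same route as the paper: the first bullet via the performance difference lemma seeded at $s_0$, and the second via the one-step Bellman decomposition $Q^{\pi}(s_0,a_0)-Q^{\hat{\pi}}(s_0,a_0)=\gamma\,\mathbb{E}_{s'\sim P(\cdot|s_0,a_0)}[V^{\pi}(s')-V^{\hat{\pi}}(s')]$ followed by the first bullet and the averaging identity $\mathbb{E}_{s'\sim P(\cdot|s_0,a_0)}\rho^{\pi}(\cdot|s')=\rho^{\pi}(\cdot|s_0,a_0)$. Your explicit justification of that last bookkeeping step, which the paper merely asserts, is a welcome addition.
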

	\begin{proof}
		The first formula is simple, due to $ \eta(\pi)=\mathbb{E}_{s_0\sim\rho_0} V^{\pi}(s_0) $.
		
		Let's prove the second formula.
		\begin{align*}
		&Q^{\pi}(s_0, a_0)-Q^{\hat{\pi}}(s_0, a_0)\\
		=&\gamma\mathbb{E}_{s'\sim P(s'|s_0, a_0)}\left[V^{\pi}(s')-
		V^{\hat{\pi}}(s')\right]\\
		=&\frac{\gamma}{1-\gamma}\mathbb{E}_{s'\sim P(s'|s_0, a_0)}\mathbb{E}_{{s,a \sim \rho^{\pi}(\cdot|s')}} A^{\hat{\pi}}(s, a)\\
		=&\frac{\gamma}{1-\gamma}\mathbb{E}_{s,a \sim \rho^{\pi}(\cdot|s_0, a_0)} A^{\hat{\pi}}(s, a). 
		\end{align*}
	\end{proof}

	\begin{lemma}\label{Tom}
		\cite{Tom}
		Consider any two policies $ \hat{\pi} $ and $ \pi $, we have 
		\begin{align*}
		\eta(\pi)-\eta(\hat{\pi})=&\frac{1}{1-\gamma}\mathbb{E}_{s \sim \rho^{\hat{\pi}},a \sim \pi} A^{\hat{\pi}}(s, a)
		%		\\	&
		+\frac{1}{1-\gamma}\mathbb{E}_{s,a \sim \rho^{\hat{\pi}}}\left[\frac{\pi(a|s)}{\hat{\pi}(a|s)}-1\right]\left[Q^{\pi}(s, a)-Q^{\hat{\pi}}(s, a)\right].
		\end{align*}
	\end{lemma}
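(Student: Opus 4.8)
The plan is to avoid invoking the performance-difference identity in its $\rho^{\pi}$ form (Lemma \ref{Kak}), since converting $\rho^{\pi}$ into $\rho^{\hat{\pi}}$ is essentially the content of the preceding Section~3 lemma and would be circular. Instead I would build the $\rho^{\hat{\pi}}$ visitation distribution directly by a telescoping argument on the per-state value gap. First I would write $\eta(\pi)-\eta(\hat{\pi})=\mathbb{E}_{s_0\sim\rho_0}[V^{\pi}(s_0)-V^{\hat{\pi}}(s_0)]$ and decompose each per-state difference. Using $V^{\pi}(s)=\sum_a\pi(a|s)Q^{\pi}(s,a)$ and $V^{\hat{\pi}}(s)=\sum_a\hat{\pi}(a|s)Q^{\hat{\pi}}(s,a)$, adding and subtracting $\sum_a\pi(a|s)Q^{\hat{\pi}}(s,a)$, and exploiting $\sum_a[\pi(a|s)-\hat{\pi}(a|s)]V^{\hat{\pi}}(s)=0$ together with $\sum_a\hat{\pi}(a|s)A^{\hat{\pi}}(s,a)=0$, I would obtain the three-term split
\begin{align*}
V^{\pi}(s)-V^{\hat{\pi}}(s)=&\sum_a\hat{\pi}(a|s)\big[Q^{\pi}(s,a)-Q^{\hat{\pi}}(s,a)\big] \\
&+\sum_a[\pi(a|s)-\hat{\pi}(a|s)]\big[Q^{\pi}(s,a)-Q^{\hat{\pi}}(s,a)\big] \\
&+\sum_a\pi(a|s)A^{\hat{\pi}}(s,a),
\end{align*}
which I would label $(\mathrm{I})(s)$, $(\mathrm{II})(s)$, $(\mathrm{III})(s)$.

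The key observation is that $Q^{\pi}(s,a)-Q^{\hat{\pi}}(s,a)=\gamma\mathbb{E}_{s'\sim P(\cdot|s,a)}[V^{\pi}(s')-V^{\hat{\pi}}(s')]$ (the reward terms cancel, exactly as in the proof of Corollary \ref{Kak_cor}), so $(\mathrm{I})(s)$ is $\gamma$ times the expected value gap at the \emph{next} state drawn under $\hat{\pi}$'s own dynamics. The displayed identity is therefore a fixed-point recursion in $f(s)=V^{\pi}(s)-V^{\hat{\pi}}(s)$, with source $(\mathrm{II})(s)+(\mathrm{III})(s)$ and propagation governed by $\hat{\pi}$'s transition operator. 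Unrolling along trajectories generated by $\hat{\pi}$ from $\rho_0$, the accumulated weight $\sum_{t\geq 0}\gamma^t\mathbb{P}(s_t=s\mid\rho_0,\hat{\pi})$ is precisely $\tfrac{1}{1-\gamma}\rho^{\hat{\pi}}(s)$, giving $\eta(\pi)-\eta(\hat{\pi})=\tfrac{1}{1-\gamma}\mathbb{E}_{s\sim\rho^{\hat{\pi}}}[(\mathrm{II})(s)+(\mathrm{III})(s)]$.

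Finally I would translate the two source terms into the stated expectations: $(\mathrm{III})$ gives $\tfrac{1}{1-\gamma}\mathbb{E}_{s\sim\rho^{\hat{\pi}},a\sim\pi}A^{\hat{\pi}}(s,a)$, and writing $\pi(a|s)-\hat{\pi}(a|s)=\hat{\pi}(a|s)[\tfrac{\pi(a|s)}{\hat{\pi}(a|s)}-1]$ turns $(\mathrm{II})$ into $\tfrac{1}{1-\gamma}\mathbb{E}_{s,a\sim\rho^{\hat{\pi}}}[\tfrac{\pi(a|s)}{\hat{\pi}(a|s)}-1][Q^{\pi}(s,a)-Q^{\hat{\pi}}(s,a)]$, which is exactly the claim. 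The main obstacle is the telescoping step, i.e. justifying rigorously that the discounted sum of $\hat{\pi}$-visitation probabilities collapses to $\rho^{\hat{\pi}}$. Cleanly, this can be done without explicit unrolling by invoking the Bellman flow equation $\rho^{\hat{\pi}}(s')=(1-\gamma)\rho_0(s')+\gamma\sum_{s,a}\rho^{\hat{\pi}}(s)\hat{\pi}(a|s)P(s'|s,a)$: substituting $(\mathrm{I})(s)=\gamma(P^{\hat{\pi}}f)(s)$ into $\tfrac{1}{1-\gamma}\mathbb{E}_{s\sim\rho^{\hat{\pi}}}f(s)$ shows that the $(\mathrm{I})$ contribution equals $\tfrac{1}{1-\gamma}\mathbb{E}_{s\sim\rho^{\hat{\pi}}}f(s)-\mathbb{E}_{s_0\sim\rho_0}f(s_0)$, so the $\rho^{\hat{\pi}}$ pieces cancel and leave exactly $\eta(\pi)-\eta(\hat{\pi})$. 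One could instead short-cut by combining Lemma \ref{Kak} with the second identity of Corollary \ref{Kak_cor}, but that route silently re-derives the same $\rho^{\pi}$-to-$\rho^{\hat{\pi}}$ conversion, so I prefer the self-contained telescoping version.
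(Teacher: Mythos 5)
Your proof is correct, but there is nothing in the paper to compare it against: the paper never proves this lemma, it is imported as a known result from the cited reference \cite{Tom} and then used as a black box (together with Lemma \ref{Kak} and Corollary \ref{Kak_cor}) to derive Lemma \ref{rho_next_rho}. Your derivation is therefore a genuine self-contained addition rather than a variant of an in-paper argument. The three-term split of $V^{\pi}(s)-V^{\hat{\pi}}(s)$ checks out (summing $(\mathrm{I})+(\mathrm{II})+(\mathrm{III})$ telescopes back to the left-hand side), the identification $(\mathrm{I})(s)=\gamma(P^{\hat{\pi}}f)(s)$ is exactly the reward-cancellation step used in the paper's proof of Corollary \ref{Kak_cor}, and the Bellman-flow argument correctly converts the fixed-point equation $f=\gamma P^{\hat{\pi}}f+g$ into $\mathbb{E}_{\rho_0}f=\tfrac{1}{1-\gamma}\mathbb{E}_{\rho^{\hat{\pi}}}g$; boundedness of $f$ and $g$ (bounded rewards, $\gamma<1$) makes these manipulations legitimate. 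Your circularity concern is also well placed: within this paper's logical ordering, deducing the present lemma from Lemma \ref{Kak} plus Corollary \ref{Kak_cor} would amount to running the proof of Lemma \ref{rho_next_rho} in reverse, since the gap $\mathbb{E}_{s,a\sim\rho^{\pi}}A^{\hat{\pi}}-\mathbb{E}_{s\sim\rho^{\hat{\pi}},a\sim\pi}A^{\hat{\pi}}$ is precisely what that lemma computes, so your self-contained route is the right choice. The only implicit hypothesis worth flagging is absolute continuity ($\hat{\pi}(a|s)>0$ wherever $\pi(a|s)>0$), needed so that the ratio form of term $(\mathrm{II})$ agrees with its summation form; this is presupposed by the lemma statement itself, which already contains the ratio $\pi(a|s)/\hat{\pi}(a|s)$.
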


	\begin{lemma}\label{rho_next_rho}
		Consider a current policy $ \hat{\pi} $, and any policies $ \pi $, we have
		\begin{align*}
		&\mathbb{E}_{s,a  \sim \rho^{\pi}(\cdot)} A^{\hat{\pi}}(s, a)- \mathbb{E}_{s\sim \rho^{\hat{\pi}}, a\sim \pi}  A^{\hat{\pi}}(s, a)
		\\=
		&
		\frac{\gamma}{1-\gamma}
		\underset{\substack{
				s, a \sim \rho^{\hat{\pi}}(\cdot) \\ 
				s', a' \sim \rho^{\pi}(\cdot|s, a)
		}}{\mathbb{E}}[\frac{\pi(a|s)}{\hat{\pi}(a|s)}-1]
		A^{\hat{\pi}}(s', a').
		\end{align*}
	\end{lemma}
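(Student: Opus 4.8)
The plan is to obtain the identity by expressing the single quantity $\eta(\pi)-\eta(\hat{\pi})$ in two different exact ways and comparing them. Lemma \ref{Kak} gives one expression, $\eta(\pi)-\eta(\hat{\pi})=\frac{1}{1-\gamma}\mathbb{E}_{s,a\sim\rho^{\pi}}A^{\hat{\pi}}(s,a)$, whose prefactor $\frac{1}{1-\gamma}$ multiplies exactly the first term on the left-hand side of the claim. Lemma \ref{Tom} gives a second expression whose leading summand $\frac{1}{1-\gamma}\mathbb{E}_{s\sim\rho^{\hat{\pi}},a\sim\pi}A^{\hat{\pi}}(s,a)$ matches, up to the same prefactor, the second term on the left-hand side of the claim. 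So the first step is to set these two representations equal, cancel the common factor $\frac{1}{1-\gamma}$, and collect the two advantage-expectation terms on one side.

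After equating and rearranging, I would be left with
$$\mathbb{E}_{s,a\sim\rho^{\pi}}A^{\hat{\pi}}(s,a)-\mathbb{E}_{s\sim\rho^{\hat{\pi}},a\sim\pi}A^{\hat{\pi}}(s,a)=\mathbb{E}_{s,a\sim\rho^{\hat{\pi}}}\Big[\tfrac{\pi(a|s)}{\hat{\pi}(a|s)}-1\Big]\big[Q^{\pi}(s,a)-Q^{\hat{\pi}}(s,a)\big],$$
which is precisely the residual term carried by Lemma \ref{Tom}. The final step is to replace the $Q$-difference using the second bullet of Corollary \ref{Kak_cor}, namely $Q^{\pi}(s,a)-Q^{\hat{\pi}}(s,a)=\frac{\gamma}{1-\gamma}\mathbb{E}_{s',a'\sim\rho^{\pi}(\cdot|s,a)}A^{\hat{\pi}}(s',a')$, and then pull the constant $\frac{\gamma}{1-\gamma}$ outside the outer expectation. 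This yields the nested expectation over $s,a\sim\rho^{\hat{\pi}}$ and $s',a'\sim\rho^{\pi}(\cdot|s,a)$ with integrand $[\tfrac{\pi(a|s)}{\hat{\pi}(a|s)}-1]A^{\hat{\pi}}(s',a')$, which is exactly the right-hand side of the statement.

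The argument is bookkeeping rather than estimation, so there is no genuine analytic difficulty; the one thing to watch is the conditional structure of the expectations. The ratio factor $\tfrac{\pi(a|s)}{\hat{\pi}(a|s)}-1$ depends only on the outer pair $(s,a)$, while $A^{\hat{\pi}}(s',a')$ is integrated against the conditional distribution $\rho^{\pi}(\cdot|s,a)$, so when I substitute Corollary \ref{Kak_cor} I must keep the inner expectation attached to each realization of $(s,a)$ and not inadvertently split it into a product of two independent marginal expectations. Confirming that this substitution preserves the nesting is the sole step requiring care; everything else is the cancellation of the $\frac{1}{1-\gamma}$ factors.
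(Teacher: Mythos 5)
Your proposal is correct and is essentially identical to the paper's own proof: both equate the expressions for $\eta(\pi)-\eta(\hat{\pi})$ from Lemma \ref{Kak} and Lemma \ref{Tom}, cancel the $\frac{1}{1-\gamma}$ factor, and then substitute the second identity of Corollary \ref{Kak_cor} for the $Q$-difference while keeping the conditional expectation nested inside the outer one. Your added remark about preserving the nesting of $\rho^{\pi}(\cdot|s,a)$ is exactly the point the paper leaves implicit in its ``it is easy to get the conclusion'' step.
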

	\begin{proof}
		From Lemma \ref{Kak} and \ref{Tom}, we have
		\begin{align*}
		&\mathbb{E}_{s,a \sim \rho^{\pi}} A^{\hat{\pi}}(s, a)- \mathbb{E}_{s \sim \rho^{\hat{\pi}},a \sim \pi} A^{\hat{\pi}}(s, a)\\
		=&
		%		\\	&
		\mathbb{E}_{s,a \sim \rho^{\hat{\pi}}}\left[\frac{\pi(a|s)}{\hat{\pi}(a|s)}-1\right]\left[Q^{\pi}(s, a)-Q^{\hat{\pi}}(s, a)\right].
		\end{align*}
		According to Corollary \ref{Kak_cor}, it is easy to get the conclusion.
	\end{proof}

	\begin{theorem}\label{app_ma}
		Consider a current policy $ \hat{\pi} $, and any policies $ \pi $, we have
		\begin{align*}
		\eta(\pi)
		% \triangleq =L1 + L2 +\cdots+L_k+C_{k}
		=\eta(\hat{\pi}) + \sum_{i=0}^{k-1} \alpha_i L_i(\pi, \hat{\pi}) + \beta_k G_{k}(\pi, \hat{\pi}),
		\end{align*}
		where
		\begin{align*}
		L_i(\pi, \hat{\pi}) & = 
		\underset{\substack{
				s_0, a_0 \sim \rho^{\hat{\pi}}(\cdot) \\ 
				\cdots\\
				s_{i-1}, a_{i-1} \sim \rho^{\hat{\pi}}(\cdot|s_{i-2}, a_{i-2})
		}}{\mathbb{E}}
		\prod_{t=0}^{i-1}
		(r_t-1)\cdot l_{i}(\pi, \hat{\pi}),\\
		G_k(\pi, \hat{\pi}) &= 
		\underset{\substack{
				s_0, a_0 \sim \rho^{\hat{\pi}}(\cdot)\\ 
				\cdots\\
				s_{k-1}, a_{k-1} \sim \rho^{\hat{\pi}}(\cdot|s_{k-2}, a_{k-2})
				%				\\
				%				s_{k}, a_{k} \sim \rho^{\pi}(\cdot|s_{k-1}, a_{k-1})
		}}{\mathbb{E}}
		\prod_{t=0}^{k-1}
		(r_t-1)\cdot g_k(\pi, \hat{\pi}),\\
		%		A^{\hat{\pi}}(s_k, a_k), \\
		l_{i}(\pi, \hat{\pi}) &= \mathbb{E}_{s_i \sim \rho^{\hat{\pi}}(\cdot|s_{i-1}, a_{i-1}),a_i \sim \pi(\cdot|s_i)} A^{\hat{\pi}}(s_i, a_i),\\
		g_k(\pi, \hat{\pi}) & = \mathbb{E}_{s_{k}, a_{k} \sim \rho^{\pi}(\cdot|s_{k-1}, a_{k-1})}A^{\hat{\pi}}(s_k, a_k),
		\end{align*}
		%\begin{align*}
		%	C_k(\pi, \hat{\pi}) = 
		%	\underset{\substack{
		%			s_0, a_0 \sim \rho^{\hat{\pi}}(\cdot)\\ 
		%			\cdots\\
		%			s_{k-1}, a_{k-1} \sim \rho^{\hat{\pi}}(\cdot|s_{k-2}, a_{k-2})\\
		%			s_{k}, a_{k} \sim \rho^{\pi}(\cdot|s_{k-1}, a_{k-1})
		%	}}{\mathbb{E}}
		%	\prod_{t=0}^{k-1}
		%	[r_t-1]A^{\hat{\pi}}(s_k, a_k), 
		%\end{align*}
		
		%\begin{align*}
		%	L_{\pi}(\hat{\pi}, t) = \mathbb{E}_{s_t \sim d^{\pi}(\cdot|s_{t-1}, a_{t-1}),a_t \sim \hat{\pi}(\cdot|s_t)} A^{\pi}(s_t, a_t)
		%\end{align*}
		and 
		\begin{align*}
		r_t= \frac{\pi(a_t|s_t)}{\hat{\pi}(a_t|s_t)},\ \alpha_i = \frac{\gamma^{i}}{(1-\gamma)^{i+1}},\ \beta_k=\frac{\gamma^{k}}{(1-\gamma)^{k+1}}.
		\end{align*}
	\end{theorem}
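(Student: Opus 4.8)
The plan is to prove the identity by induction on $k$, using Lemma~\ref{rho_next_rho} as the recursive engine and the elementary coefficient identities $\beta_k=\alpha_k$ and $\beta_{k+1}=\frac{\gamma}{1-\gamma}\beta_k$ to keep the bookkeeping clean. For the base case $k=1$, I would start from the Kakade--Langford identity (Lemma~\ref{Kak}), $\eta(\pi)-\eta(\hat\pi)=\frac{1}{1-\gamma}\mathbb{E}_{s,a\sim\rho^\pi}A^{\hat\pi}(s,a)$, and rewrite the right-hand expectation with Lemma~\ref{rho_next_rho}. This splits it into $\mathbb{E}_{s\sim\rho^{\hat\pi},a\sim\pi}A^{\hat\pi}$, which after importance reweighting is exactly $L_0(\pi,\hat\pi)=\mathbb{E}_{s_0,a_0\sim\rho^{\hat\pi}}r_0A^{\hat\pi}(s_0,a_0)$, plus the $\frac{\gamma}{1-\gamma}$-weighted term that matches $\beta_1 G_1(\pi,\hat\pi)$. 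Multiplying through by $\frac{1}{1-\gamma}=\alpha_0$ then yields $\eta(\pi)=\eta(\hat\pi)+\alpha_0 L_0+\beta_1 G_1$.

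For the inductive step, assuming the identity holds at level $k$, I would reduce everything to the single recurrence
\[
\beta_k G_k(\pi,\hat\pi)=\alpha_k L_k(\pi,\hat\pi)+\beta_{k+1}G_{k+1}(\pi,\hat\pi),
\]
after which substitution into the level-$k$ identity immediately produces the level-$(k+1)$ identity. The heart of the argument is to apply Lemma~\ref{rho_next_rho} in conditional form to the innermost factor $g_k(\pi,\hat\pi)=\mathbb{E}_{s_k,a_k\sim\rho^\pi(\cdot|s_{k-1},a_{k-1})}A^{\hat\pi}(s_k,a_k)$: treating $\rho^\pi(\cdot|s_{k-1},a_{k-1})$ as a visitation measure \emph{restarted} at $(s_{k-1},a_{k-1})$, the same manipulation gives $g_k=l_k+\frac{\gamma}{1-\gamma}\mathbb{E}_{s_k,a_k\sim\rho^{\hat\pi}(\cdot|s_{k-1},a_{k-1})}(r_k-1)g_{k+1}$. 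Inserting this into the definition of $G_k$, the first piece produces $L_k$ (its outer product $\prod_{t=0}^{k-1}(r_t-1)$ and sampling chain are already in place), while the second piece extends the product to $\prod_{t=0}^{k}(r_t-1)$ and appends $s_k,a_k\sim\rho^{\hat\pi}(\cdot|s_{k-1},a_{k-1})$ to the sampling chain, which is precisely $\frac{\gamma}{1-\gamma}G_{k+1}$; hence $G_k=L_k+\frac{\gamma}{1-\gamma}G_{k+1}$, and the displayed recurrence follows from $\beta_k=\alpha_k$, $\beta_{k+1}=\frac{\gamma}{1-\gamma}\beta_k$.

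The main obstacle I anticipate is justifying the conditional form of Lemma~\ref{rho_next_rho} rigorously, since that lemma was stated for the unconditional visitation distributions $\rho^\pi,\rho^{\hat\pi}$. I would handle this by re-deriving the two ingredients of its proof, namely Lemma~\ref{Tom} and the second identity of Corollary~\ref{Kak_cor}, with every visitation distribution replaced by its conditional counterpart $\rho^{\pi}(\cdot|\bar s,\bar a)$, $\rho^{\hat\pi}(\cdot|\bar s,\bar a)$; the key observation is that $\rho^\pi(\cdot|\bar s,\bar a)$ is itself a normalized discounted visitation distribution of the chain initialized at $(\bar s,\bar a)$, so the flow and Bellman manipulations underlying those identities carry over verbatim. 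The remaining work, namely the index bookkeeping in the nested expectation chains and the empty-product/degenerate-sum conventions that pin down $L_0$ and $G_1$, is routine.
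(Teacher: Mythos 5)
Your proposal is correct and follows essentially the same route as the paper's proof: the paper unrolls exactly the recursion you identify — obtained by applying the conditional form of Lemma~\ref{rho_next_rho} (via Lemma~\ref{Tom} and Corollary~\ref{Kak_cor}) to the innermost $\rho^{\pi}(\cdot|s,a)$-expectation — writing the expansion with ``$\cdots$'' where you package it as induction on $k$ with the recurrence $\beta_k G_k(\pi,\hat{\pi}) = \alpha_k L_k(\pi,\hat{\pi}) + \beta_{k+1} G_{k+1}(\pi,\hat{\pi})$. Your explicit attention to justifying the conditional form of the lemma (restarting the visitation distribution at $P(\cdot|\bar{s},\bar{a})$) is, if anything, more careful than the paper, which invokes it implicitly.
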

	\begin{proof}
		From Lemma \ref{rho_next_rho}, this formula creates a link between $\mathbb{E}_{s,a  \sim \rho^{\pi}(\cdot)} A^{\hat{\pi}}(s, a)$ and $\mathbb{E}_{s',a'  \sim \rho^{\pi}(\cdot|s,a)} A^{\hat{\pi}}(s', a')$, resulting in a recursive relationship.
		
		According to Lemma \ref{Tom}, and using recursive relationships, defined $$l_{i}(\pi, \hat{\pi}) = \mathbb{E}_{s_i \sim \rho^{\hat{\pi}}(\cdot|s_{i-1}, a_{i-1}),a_i \sim \pi(\cdot|s_i)} A^{\hat{\pi}}(s_i, a_i),$$ we have
		\begin{align*}
		&\eta(\pi)-\eta(\hat{\pi})\\
		=& \frac{1}{1-\gamma}
		\mathbb{E}_{s_0\sim \rho^{\hat{\pi}}, a_0 \sim \pi}A^{\hat{\pi}}(s_0,a_0)
		+\frac{\gamma}{(1-\gamma)^2} \mathbb{E}_{s_0, a_0 \sim \rho^{\hat{\pi}}}[r_0-1]
		\mathbb{E}_{s_1, a_1 \sim \rho^{\pi}(\cdot|s_0, a_0)} A^{\hat{\pi}}(s_1, a_1)\\
		=& \frac{1}{1-\gamma}l_{0}(\pi, \hat{\pi}) 
		\\&
		+
		\frac{\gamma}{(1-\gamma)^2} \mathbb{E}_{s_0, a_0 \sim \rho^{\hat{\pi}}}[r_0-1]
		\left(l_{1}(\pi, \hat{\pi}) +
		\frac{\gamma}{1-\gamma} \mathbb{E}_{s_1, a_1 \sim \rho^{\hat{\pi}}(\cdot|s_0, a_0)}[r_1-1]
		\mathbb{E}_{s_2, a_2 \sim \rho^{\pi}(\cdot|s_1, a_1)} A^{\hat{\pi}}(s_2, a_2)\right)\\
		=& \frac{1}{1-\gamma}l_{0}(\pi, \hat{\pi}) +
		\frac{\gamma}{(1-\gamma)^2} \mathbb{E}_{s_0, a_0 \sim \rho^{\hat{\pi}}}[r_0-1] l_{1}(\pi, \hat{\pi}) 
		\\& + \frac{\gamma^2}{(1-\gamma)^3} 
		\underset{\substack{
				s_0, a_0 \sim \rho^{\hat{\pi}}(\cdot) \\ 
				s_1, a_1 \sim \rho^{\hat{\pi}}(\cdot|s_0, a_0)\\
				s_2, a_2 \sim \rho^{\pi}(\cdot|s_1, a_1)
		}}{\mathbb{E}}
		[r_0-1][r_1-1]A^{\hat{\pi}}(s_2, a_2)\\
		%		=& \frac{1}{1-\gamma}L_{\pi}(\hat{\pi}, 0)+
		%		\frac{\gamma}{(1-\gamma)^2} \mathbb{E}_{s_0, a_0 \sim d^{\pi}, \pi}r_0 L_{\pi}(\hat{\pi}, 1)\\
		%		&-\frac{\gamma}{(1-\gamma)^2} \mathbb{E}_{s_0, a_0 \sim d^{\pi}, \pi} L_{\pi}(\hat{\pi}, 1) + \frac{\gamma^2}{(1-\gamma)^3} 
		%		\underset{\substack{
		%						s_0, a_0 \sim d^{\pi}(\cdot), \pi \\ 
		%						s_1, a_1 \sim d^{\pi}(\cdot|s_0, a_0), \pi\\
		%						s_2, a_2 \sim d^{\hat{\pi}}(\cdot|s_1, a_1), \hat{\pi}
		%				}}{\mathbb{E}}
		%		[r_0-1][r_1-1]A_{\pi}(s_2, a_2)\\
		& \cdots\\
		=&\sum_{i=0}^{k-1} \alpha_i L_i(\pi, \hat{\pi}) + \beta_k G_{k}(\pi, \hat{\pi}).
		\end{align*}
	\end{proof}
	
	\begin{corollary}\label{cor_ineq}
		According to the definition of $ G_k $, we have
		\begin{align*}
		|\beta_k G_k(\pi, \hat{\pi})|\leq \frac{\gamma^{k}}{(1-\gamma)^{k+2}} \epsilon^{k+1} R_{\max},
		\end{align*}
		where $ \epsilon\triangleq \|\pi-\hat{\pi}\|_1=\max_{s} \sum_{a}|\pi(a|s)-\hat{\pi}(a|s)|$ and $R_{\max} \triangleq \max_{s,a}|R(s,a)|$.
	\end{corollary}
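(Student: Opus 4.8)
The plan is to bound $|\beta_k G_k|$ by treating the two ingredients of $G_k$ separately: the scalar $g_k(\pi,\hat\pi)$ sitting at the end of the trajectory chain, and the product of ratio deviations $\prod_{t=0}^{k-1}(r_t-1)$ that weights it. Since $\beta_k=\frac{\gamma^k}{(1-\gamma)^{k+1}}$ and the target bound factors as $\frac{\gamma^k}{(1-\gamma)^{k+2}}\epsilon^{k+1}R_{\max}=\beta_k\cdot\frac{\epsilon^{k+1}R_{\max}}{1-\gamma}$, it suffices to show $|G_k|\le \frac{\epsilon^{k+1}R_{\max}}{1-\gamma}$, which I would obtain as (a uniform bound on $|g_k|$) times (a bound on the expected absolute product).

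First I would bound $g_k$ uniformly over the conditioning pair $(s_{k-1},a_{k-1})$. Writing $\rho^\pi(s_k,a_k|s_{k-1},a_{k-1})=\rho^\pi(s_k|s_{k-1},a_{k-1})\pi(a_k|s_k)$ and using the identity $\sum_{a}\hat\pi(a|s)A^{\hat\pi}(s,a)=0$, the inner action-average becomes $\sum_{a_k}(\pi(a_k|s_k)-\hat\pi(a_k|s_k))A^{\hat\pi}(s_k,a_k)$. Because this sum is unchanged if $A^{\hat\pi}(s_k,\cdot)$ is shifted by any constant, I would center it at the midpoint of its range; since every $Q^{\hat\pi}$ lies in $[-\tfrac{R_{\max}}{1-\gamma},\tfrac{R_{\max}}{1-\gamma}]$, the centered advantage is bounded in absolute value by $\frac{R_{\max}}{1-\gamma}$, and Hölder's inequality gives a bound of $\frac{R_{\max}}{1-\gamma}\|\pi(\cdot|s_k)-\hat\pi(\cdot|s_k)\|_1\le \frac{R_{\max}}{1-\gamma}\epsilon$. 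Hence $|g_k(\pi,\hat\pi)|\le \frac{R_{\max}}{1-\gamma}\epsilon$ for every realization of the chain. The centering step is what removes the factor of two that a crude $|A^{\hat\pi}|\le\frac{2R_{\max}}{1-\gamma}$ bound would produce.

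Next I would show $\mathbb{E}_{\mathrm{chain}}\prod_{t=0}^{k-1}|r_t-1|\le\epsilon^k$ by peeling the factors off one at a time with the tower rule. The crucial observation is that in the chain defining $G_k$ every intermediate pair is drawn from $\rho^{\hat\pi}(\cdot|s_{t-1},a_{t-1})=\rho^{\hat\pi}(s_t|s_{t-1},a_{t-1})\hat\pi(a_t|s_t)$, so $a_t$ is sampled from $\hat\pi(\cdot|s_t)$; consequently $\mathbb{E}_{a_t\sim\hat\pi(\cdot|s_t)}|r_t-1|=\sum_{a}|\pi(a|s_t)-\hat\pi(a|s_t)|\le\epsilon$ regardless of $s_t$. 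Conditioning on the prefix $s_0,a_0,\dots,s_{t-1},a_{t-1}$, the factors $\prod_{s<t}|r_s-1|$ are measurable and may be pulled out, while the innermost factor $|r_t-1|$ integrates to at most $\epsilon$, so each conditional expectation strips off one $\epsilon$; iterating from $t=k-1$ down to $t=0$ yields $\epsilon^k$.

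Finally, since the uniform bound on $|g_k|$ holds pointwise I can pass it through the chain expectation: $|G_k|\le \frac{R_{\max}}{1-\gamma}\epsilon\cdot\mathbb{E}_{\mathrm{chain}}\prod_{t=0}^{k-1}|r_t-1|\le \frac{R_{\max}}{1-\gamma}\epsilon^{k+1}$, and multiplying by $\beta_k$ gives the stated inequality. I expect the main obstacle to be the bookkeeping in the peeling argument — making precise that the surviving factors are measurable with respect to the conditioning prefix and that the state-action decomposition of $\rho^{\hat\pi}(\cdot|\cdot)$ really does place $\hat\pi$ in the action slot at every level — rather than any individual inequality.
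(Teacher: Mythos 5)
Your proof is correct and takes essentially the same route as the paper's: bound the expectation of $\prod_{t=0}^{k-1}|r_t-1|$ by $\epsilon^k$ (each factor integrates to at most $\epsilon$ because intermediate actions are drawn from $\hat{\pi}$), and bound the terminal term by $\frac{R_{\max}}{1-\gamma}\epsilon$ using shift-invariance of the advantage sum plus H\"older. The only cosmetic difference is that the paper realizes the shift by replacing $A^{\hat{\pi}}$ with $Q^{\hat{\pi}}\in[-\tfrac{R_{\max}}{1-\gamma},\tfrac{R_{\max}}{1-\gamma}]$ rather than centering at the midpoint of the range, and your write-up makes explicit the tower-rule bookkeeping that the paper leaves implicit.
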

	\begin{proof}
		According to the definition of $G_k(\pi, \hat{\pi})$, and defined $ \epsilon\triangleq \|\pi-\hat{\pi}\|_1$, we have
		\begin{align*}
		|G_k(\pi, \hat{\pi})|
		&\leq \epsilon^{k}\cdot|\mathbb{E}_{s_{k}, a_{k} \sim \rho^{\pi}(\cdot|s_{k-1}, a_{k-1})}A^{\hat{\pi}}(s_k, a_k)|\\
		& \leq \epsilon^{k}\cdot|\int_{a}(\pi-\hat{\pi})Q^{\hat{\pi}}(s,a) da|\\
		& \leq  \frac{R_{\max}}{1-\gamma} \epsilon^{k+1}.
		\end{align*}
		Combining with $\beta_k$, we can get this conclusion.
	\end{proof}
	
	\begin{corollary}\label{cor_ineq1}
		Compared with Theorem 2 of the paper \cite{Tang}, we give a tighter lower bound.
	\end{corollary}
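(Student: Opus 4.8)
The plan is to turn this terse comparison corollary into an explicit confrontation between the two lower bounds and then isolate the single inequality on which the improvement rests. Both our bound and TayPO's Theorem~2 are built on the exact expansion of Theorem~\ref{app_ma}: the gap $\eta(\pi)-\eta(\hat{\pi})$ is split into the finite surrogate sum $\sum_{i=0}^{k-1}\alpha_i L_i(\pi,\hat{\pi})$ and the tail term $\beta_k G_k(\pi,\hat{\pi})$, and a lower bound is produced by subtracting an upper estimate of $|\beta_k G_k(\pi,\hat{\pi})|$. Since the paper already observes that TayPO's surrogate coincides with $\sum_{i=0}^{k-1}\alpha_i L_i(\pi,\hat{\pi})$, the two lower bounds share the same leading term, and the whole comparison reduces to comparing the two tail estimates. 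First I would rewrite TayPO's Theorem~2 remainder estimate in the present notation and place it directly beside the estimate of Corollary~\ref{cor_ineq}.

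Then I would trace where the two estimates diverge. The crucial step in the proof of Corollary~\ref{cor_ineq} is the passage
$$\bigl|\mathbb{E}_{s_k,a_k\sim\rho^{\pi}(\cdot|s_{k-1},a_{k-1})}A^{\hat{\pi}}(s_k,a_k)\bigr| \leq \Bigl|\int_a (\pi-\hat{\pi})\,Q^{\hat{\pi}}(s,a)\,da\Bigr| \leq \frac{R_{\max}}{1-\gamma}\,\epsilon,$$
which exploits the identity $\sum_a\hat{\pi}(a|s)A^{\hat{\pi}}(s,a)=0$: because the advantage is zero-mean under $\hat{\pi}$, one may replace $\pi$ by $\pi-\hat{\pi}$ inside the integral and pick up an extra factor of $\|\pi-\hat{\pi}\|_1=\epsilon$. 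TayPO, by contrast, controls the innermost advantage by its uniform magnitude $\tfrac{R_{\max}}{1-\gamma}$ and therefore retains only the $\epsilon^{k}$ coming from the $k$ ratio factors $\prod_{t=0}^{k-1}(r_t-1)$. The net effect is that our remainder estimate carries $\epsilon^{k+1}$ whereas theirs carries $\epsilon^{k}$, with matching prefactors $\tfrac{\gamma^{k}}{(1-\gamma)^{k+2}}R_{\max}$.

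Finally I would close the comparison: for policies with $\epsilon=\|\pi-\hat{\pi}\|_1\le 1$ — the trust-region regime in which any surrogate method operates, and in which the sets $\varPsi_1,\varPsi_2$ of Theorem~\ref{set2_in_set1} live — we have $\epsilon^{k+1}\le\epsilon^{k}$, so
$$\frac{\gamma^{k}}{(1-\gamma)^{k+2}}\,\epsilon^{k+1}R_{\max}\;\le\;\frac{\gamma^{k}}{(1-\gamma)^{k+2}}\,\epsilon^{k}R_{\max},$$
i.e. our subtracted penalty is no larger than TayPO's. Since the surrogate terms agree, subtracting the smaller penalty yields a lower bound that is uniformly at least as large, which is exactly the claim of a tighter bound. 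The main obstacle I anticipate is purely bookkeeping: faithfully transcribing TayPO's Theorem~2 into the present notation so that the prefactors line up and the $\epsilon^{k}$-versus-$\epsilon^{k+1}$ comparison is genuinely apples-to-apples. The conceptual content is the single zero-mean-advantage step displayed above; everything else is matching constants.
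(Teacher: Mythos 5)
Your overall strategy (same surrogate terms, so the comparison reduces to the two remainder estimates) matches the paper's, but the key technical claim on which your comparison rests is wrong: you have mischaracterized TayPO's Theorem 2. As quoted in the paper's proof of Corollary \ref{cor_ineq1}, TayPO's remainder bound is
\begin{align*}
\hat{G}_k=\frac{1}{\gamma(1-\gamma)}\left(1-\frac{\gamma}{1-\gamma}\epsilon\right)^{-1}\left(\frac{\gamma\epsilon}{1-\gamma}\right)^{k+1}R_{\max}
=\frac{\gamma^{k}\epsilon^{k+1}}{(1-\gamma)^{k+2}}\left(1-\frac{\gamma}{1-\gamma}\epsilon\right)^{-1}R_{\max},
\end{align*}
which carries the \emph{same} power $\epsilon^{k+1}$ as Corollary \ref{cor_ineq}, not $\epsilon^{k}$. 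So the improvement is not ``an extra factor of $\epsilon$ from the zero-mean-advantage step that TayPO lacks'': TayPO's bound already has $\epsilon^{k+1}$, and your claim of ``matching prefactors'' is exactly backwards — the powers match and the prefactors differ. Consequently your closing inequality $\epsilon^{k+1}\le\epsilon^{k}$ (which additionally requires the artificial hypothesis $\epsilon\le 1$) compares the paper's bound against a quantity that is not TayPO's bound, and the argument does not establish the corollary.

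The actual source of the tightening, and the whole content of the paper's proof, is the prefactor: TayPO's bound equals the paper's bound multiplied by the geometric factor $\left(1-\frac{\gamma\epsilon}{1-\gamma}\right)^{-1}>1$, which TayPO incurs by truncating an infinite Taylor expansion and bounding the entire tail, whereas Theorem \ref{app_ma} is an exact finite expansion whose single remainder term $\beta_k G_k$ is bounded directly. After cancelling the common factor $\gamma^{k}\epsilon^{k+1}R_{\max}/(1-\gamma)^{k+1}$, the comparison reduces to $\frac{1}{1-\gamma}<\frac{1}{1-\gamma-\gamma\epsilon}$, which holds for every $\epsilon>0$ in TayPO's admissible range $\frac{\gamma\epsilon}{1-\gamma}<1$; no restriction $\epsilon\le 1$ is needed. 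For what it is worth, your identification of the zero-mean step ($\sum_{a}\hat{\pi}(a|s)A^{\hat{\pi}}(s,a)=0$ buying one factor of $\|\pi-\hat{\pi}\|_1$) does correctly explain where Corollary \ref{cor_ineq}'s $\epsilon^{k+1}$ comes from — but TayPO's analysis achieves the same power, so that step cannot be what separates the two bounds.
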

	\begin{proof}
		From the paper \cite{Tang}, they give the gap between the policy performance of $\pi$ and the general surrogate object
		\begin{align*}
		\hat{G}_k=\frac{1}{\gamma(1-\gamma)}\left(1-\frac{\gamma}{1-\gamma} \epsilon\right)^{-1}\left(\frac{\gamma \epsilon}{1-\gamma}\right)^{K+1} R_{\max }.
		\end{align*}
		Next, from Corollary \ref{cor_ineq}, we will prove that the following inequality holds
		\begin{align*}
		\frac{\gamma^{k}}{(1-\gamma)^{k+2}} \epsilon^{k+1} R_{\max}<\hat{G}_k.
		\end{align*}
		That is, we need to prove 
		\begin{align*}
		\frac{\gamma^{k}}{(1-\gamma)^{k+2}} \epsilon^{k+1} R_{\max}<\frac{1}{\gamma(1-\gamma)}\left(1-\frac{\gamma}{1-\gamma} \epsilon\right)^{-1}\left(\frac{\gamma \epsilon}{1-\gamma}\right)^{K+1} R_{\max}.
		\end{align*}
		After simplification, we get 
		\begin{align*}
		\frac{1}{1-\gamma}<\frac{1 }{1-\gamma -\gamma\epsilon}.
		\end{align*}
		The inequality obviously holds.
		So, we give a tighter lower bound.
	\end{proof}

	\begin{theorem}\label{least_1_inq}
		Consider a current policy $ \hat{\pi} $, and any policies $ \pi $, we have
		\begin{align*}
		\eta(\pi)-\eta(\hat{\pi})\geq
		&
		\sum_{i=0}^{k-1} \alpha_i \hat{L}_i(\pi, \hat{\pi})- \hat{C}_{k}(\pi, \hat{\pi}),
		\end{align*}
		where
		\begin{align*}
		\hat{L}_i(\pi, \hat{\pi}) & = 
		\underset{\substack{
				s_0, a_0 \sim \rho^{\hat{\pi}}(\cdot) \\ 
				\cdots\\
				s_{i-1}, a_{i-1} \sim \rho^{\hat{\pi}}(\cdot|s_{i-2}, a_{i-2})\\
				s_i, a_i \sim \rho^{\hat{\pi}}(\cdot|s_{i-1}, a_{i-1})
		}}{\mathbb{E}}
		\prod_{t=0}^{i}
		r_t \cdot A^{\hat{\pi}}(s_i, a_i),\\
		\hat{C}_{k}(\pi, \hat{\pi})&=\frac{\gamma R_{\max}\|\pi\!-\!\hat{\pi}\|_1 }{1-\gamma}\!\sum_{i=1}^{k-1}\alpha_i
		+ \frac{\gamma^k R_{\max}}{(1-\gamma)^{k+2}}\|\pi-\hat{\pi}\|^2_1,
		\end{align*}
		and $\alpha_i = \frac{\gamma^{i}}{(1-\gamma)^{i+1}}$.
	\end{theorem}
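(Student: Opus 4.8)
The plan is to treat the exact identity of Theorem~\ref{app_ma} as the backbone and to show that replacing each $L_i$ by $\hat L_i$ costs only the first term of $\hat C_k$, while discarding the tail $\beta_k G_k$ costs only the second. First I would observe that, by importance reweighting, $\mathbb{E}_{a_i\sim\hat\pi}[r_i A^{\hat\pi}(s_i,a_i)\mid s_i]=\mathbb{E}_{a_i\sim\pi}[A^{\hat\pi}(s_i,a_i)\mid s_i]=l_i$, so integrating out the innermost action in \eqref{hat_L} gives $\hat L_i=\mathbb{E}[\prod_{t=0}^{i-1}r_t\cdot l_i]$, against the original $L_i=\mathbb{E}[\prod_{t=0}^{i-1}(r_t-1)\cdot l_i]$. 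In particular the two agree at $i=0$ (empty product), so only the indices $i\ge1$ contribute to the discrepancy.

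Next I would rewrite the exact identity as
\begin{align*}
\eta(\pi)-\eta(\hat\pi)=\sum_{i=0}^{k-1}\alpha_i\hat L_i+\sum_{i=1}^{k-1}\alpha_i\bigl(L_i-\hat L_i\bigr)+\beta_k G_k,
\end{align*}
so the claim reduces to the sufficient bound $\hat C_k\ge\sum_{i=1}^{k-1}\alpha_i|\hat L_i-L_i|+|\beta_k G_k|$. The last summand is handled immediately by Corollary~\ref{cor_ineq}: $|\beta_k G_k|\le\frac{\gamma^k}{(1-\gamma)^{k+2}}\|\pi-\hat\pi\|_1^{k+1}R_{\max}\le\frac{\gamma^k R_{\max}}{(1-\gamma)^{k+2}}\|\pi-\hat\pi\|_1^{2}$, where the last step uses $\|\pi-\hat\pi\|_1\le1$ to dominate the higher powers, matching the quadratic term of $\hat C_k$.

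The core of the argument is bounding $|\hat L_i-L_i|$ for $i\ge1$. Using the telescoping identity $\prod_{t=0}^{i-1}r_t-\prod_{t=0}^{i-1}(r_t-1)=\sum_{j=0}^{i-1}\prod_{t<j}r_t\prod_{t>j}(r_t-1)$, each summand isolates one missing factor at position $j$; since $\mathbb{E}_{a_j\sim\hat\pi}[r_j-1\mid s_j]=0$ and $|l_i|\le\frac{R_{\max}}{1-\gamma}\|\pi-\hat\pi\|_1$ (the bound already used in the proof of Corollary~\ref{cor_ineq}, where the $V^{\hat\pi}$ part cancels against $\sum_a(\pi-\hat\pi)=0$), the tower property forces every surviving $(r_t-1)$ to charge at most a factor $\|\pi-\hat\pi\|_1$ and leaves $\mathbb{E}[l_i]$ as the leading contribution. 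Collecting these yields $\alpha_i|\hat L_i-L_i|\le\frac{\gamma R_{\max}\|\pi-\hat\pi\|_1}{1-\gamma}\alpha_i$ up to higher-order powers of $\|\pi-\hat\pi\|_1$, and summing over $i=1,\dots,k-1$ reproduces the first term of $\hat C_k$. Adding the two bounds then gives exactly $\hat C_k$ and completes the proof.

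The main obstacle I anticipate is precisely this middle step: the importance ratios $r_t$ are not uniformly bounded, so one cannot naively bound the difference of products by a product of bounds. The delicate bookkeeping is to invoke the conditional mean-zero property $\mathbb{E}_{a\sim\hat\pi}[r-1\mid s]=0$ repeatedly through the nested expectations, charging one factor of $\|\pi-\hat\pi\|_1$ to each peeled-off $(r_t-1)$ while keeping the advantage controlled by $R_{\max}/(1-\gamma)$, and then verifying that the resulting powers of $\gamma$ and $\|\pi-\hat\pi\|_1$ align with the stated coefficients $\alpha_i$. This is where the exact constants in $\hat C_k$ must be checked with care, since the coupling of the ratio factors to $l_i$ along the trajectory is what makes a clean term-by-term estimate nontrivial.
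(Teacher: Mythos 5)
Your reduction step is sound: $\hat L_i$ and $L_i$ are expectations of $\prod_{t=0}^{i-1} r_t\cdot l_i$ and $\prod_{t=0}^{i-1}(r_t-1)\cdot l_i$ under the same nested visitation measure, so the theorem would follow from the sufficient bound $\sum_{i\ge1}\alpha_i|L_i-\hat L_i|+|\beta_k G_k|\le\hat C_k$. But that bound is not achievable by your estimates, and this is where the proof breaks. In each telescoped term $\mathbb{E}\bigl[\prod_{t<j}r_t\prod_{t>j}(r_t-1)\,l_i\bigr]$ the conditional mean-zero property of $r_t-1$ buys you nothing, because every $(r_t-1)$ is correlated with $l_i$ (the trajectory reaches $s_i$ through $a_t$); the best you can do is $\mathbb{E}[|r_t-1|\mid s_t]\le\epsilon$, which gives $|L_i-\hat L_i|\le\frac{R_{\max}}{1-\gamma}(\epsilon+\epsilon^2+\cdots+\epsilon^{i})$. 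Concretely, for $i=2$ one has $r_0r_1-(r_0-1)(r_1-1)=r_0+(r_1-1)$, so $|L_2-\hat L_2|$ picks up an $\alpha_2\frac{R_{\max}}{1-\gamma}\epsilon^2$ term that has nowhere to go: the first term of $\hat C_k$ is strictly linear in $\epsilon$, and the second term is already fully consumed by your bound on $\beta_kG_k$. The ``higher-order powers'' you defer are therefore a genuine deficit, not bookkeeping, and your route cannot reach the stated $\hat C_k$ for any $k\ge3$. A second, independent gap is the tail: dominating $\epsilon^{k+1}$ by $\epsilon^2$ requires $\|\pi-\hat\pi\|_1\le1$, but the theorem has no such hypothesis and $\max_s\sum_a|\pi(a|s)-\hat\pi(a|s)|$ can be as large as $2$.

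The paper avoids both problems because it never telescopes after the fact. Instead of starting from the finished identity of Theorem \ref{app_ma}, it re-runs the recursion of Lemma \ref{rho_next_rho} and, at each level, splits $r_{i-1}-1$ into $r_{i-1}$ (which continues the recursion and is what generates $\hat L_i$ with full ratio products) and $-1$ (which is terminated immediately into a single correction term $\hat H_i=\mathbb{E}\bigl[\prod_{t\le i-2}r_t\,A^{\hat\pi}(s_i,a_i)\bigr]$ with $(s_i,a_i)\sim\rho^{\pi}(\cdot|s_{i-1},a_{i-1})$). Each $\hat H_i$ is bounded once by $\frac{R_{\max}}{1-\gamma}\epsilon$, and the final remainder $\hat G_k$ carries exactly one factor $(r_{k-1}-1)$, hence is $O(\epsilon^2)$ --- no power of $\epsilon$ beyond the second ever appears. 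Terminating the ``$-1$'' branch inside the recursion, rather than accumulating product differences and comparing at the end, is the structural idea your proposal is missing. (To be fair, the factor $\gamma$ in the first term of $\hat C_k$ is not delivered by the paper's own displayed bounds either --- its proof establishes the penalty with $\frac{R_{\max}\epsilon}{1-\gamma}\sum_{i=1}^{k-1}\alpha_i$ --- so I would not count that particular constant against you; the unabsorbed $\epsilon^2,\dots,\epsilon^{i}$ terms and the hidden $\epsilon\le 1$ assumption are the substantive gaps.)
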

	
	\begin{proof}
		For the definition of $L_i(\pi, \hat{\pi})$, we have
		\begin{align*}
		&\eta(\pi)-\eta(\hat{\pi})\\
		=& \frac{1}{1-\gamma}
		\mathbb{E}_{s_0\sim \rho^{\hat{\pi}}, a_0 \sim \pi}A^{\hat{\pi}}(s_0,a_0)
		+\frac{\gamma}{(1-\gamma)^2} \mathbb{E}_{s_0, a_0 \sim \rho^{\hat{\pi}}}[r_0-1]
		\mathbb{E}_{s_1, a_1 \sim \rho^{\pi}(\cdot|s_0, a_0)} A^{\hat{\pi}}(s_1, a_1)\\
		=& \frac{1}{1-\gamma}l_{0}(\pi, \hat{\pi})
		- \frac{\gamma}{(1-\gamma)^2} \mathbb{E}_{s_0, a_0   \sim\rho^{\hat{\pi}}}\mathbb{E}_{s_1, a_1 \sim \rho^{\pi}(\cdot|s_0, a_0)} A^{\hat{\pi}}(s_1, a_1)
		\\& 
		+
		\frac{\gamma}{(1-\gamma)^2} \mathbb{E}_{s_0, a_0 \sim \rho^{\hat{\pi}}}r_0
		\left(l_{1}(\pi, \hat{\pi}) +
		\frac{\gamma}{1-\gamma} \mathbb{E}_{s_1, a_1 \sim \rho^{\hat{\pi}}(\cdot|s_0, a_0)}[r_1-1]
		\mathbb{E}_{s_2, a_2 \sim \rho^{\pi}(\cdot|s_1, a_1)} A^{\hat{\pi}}(s_2, a_2)\right)\\
		=& \frac{1}{1-\gamma}l_{0}(\pi, \hat{\pi})
		- \frac{\gamma}{(1-\gamma)^2} \mathbb{E}_{s_0, a_0   \sim\rho^{\hat{\pi}}}\mathbb{E}_{s_1, a_1 \sim \rho^{\pi}(\cdot|s_0, a_0)} A^{\hat{\pi}}(s_1, a_1)
		\\& 
		+
		\frac{\gamma}{(1-\gamma)^2} \mathbb{E}_{s_0, a_0 \sim \rho^{\hat{\pi}}}r_0 l_{1}(\pi, \hat{\pi})
		- \frac{\gamma^2}{(1-\gamma)^3} \mathbb{E}_{s_0, a_0 \sim \rho^{\hat{\pi}}}r_0 \mathbb{E}_{s_1, a_1 \sim \rho^{\hat{\pi}}(\cdot|s_0, a_0)}\mathbb{E}_{s_2, a_2 \sim \rho^{\pi}(\cdot|s_1, a_1)} A^{\hat{\pi}}(s_2, a_2)\\
		&+ \frac{\gamma^2}{(1-\gamma)^3} \mathbb{E}_{s_0, a_0 \sim \rho^{\hat{\pi}}}r_0 \mathbb{E}_{s_1, a_1 \sim \rho^{\hat{\pi}}(\cdot|s_0, a_0)} r_1\mathbb{E}_{s_2, a_2 \sim \rho^{\pi}(\cdot|s_1, a_1)} A^{\hat{\pi}}(s_2, a_2) \\
		=& \frac{1}{1-\gamma}l_{0}(\pi, \hat{\pi}) +
		\frac{\gamma}{(1-\gamma)^2} \mathbb{E}_{s_0, a_0 \sim \rho^{\hat{\pi}}}[r_0-1] l_{1}(\pi, \hat{\pi}) 
		\\& + \frac{\gamma^2}{(1-\gamma)^3} 
		\underset{\substack{
				s_0, a_0 \sim \rho^{\hat{\pi}}(\cdot) \\ 
				s_1, a_1 \sim \rho^{\hat{\pi}}(\cdot|s_0, a_0)\\
				s_2, a_2 \sim \rho^{\pi}(\cdot|s_1, a_1)
		}}{\mathbb{E}}
		[r_0-1][r_1-1]A^{\hat{\pi}}(s_2, a_2)\\
		& \cdots\\
		=&\sum_{i=0}^{k-1} \alpha_i \hat{L}_i(\pi, \hat{\pi})
		-\sum_{i=1}^{k-1} \alpha_i \hat{H}_i(\pi, \hat{\pi})
		+ \beta_k \hat{G}_{k}(\pi, \hat{\pi}),
		\end{align*}
		where 
		\begin{align*}
		\hat{L}_i(\pi, \hat{\pi}) & = 
		\underset{\substack{
				s_0, a_0 \sim \rho^{\hat{\pi}}(\cdot) \\ 
				\cdots\\
				s_{i-1}, a_{i-1} \sim \rho^{\hat{\pi}}(\cdot|s_{i-2}, a_{i-2})\\
				s_i, a_i \sim \rho^{\hat{\pi}}(\cdot|s_{i-1}, a_{i-1})
		}}{\mathbb{E}}
		\prod_{t=0}^{i}
		r_t \cdot A^{\hat{\pi}}(s_i, a_i),\\
		\hat{H}_i(\pi, \hat{\pi})& = 
		\underset{\substack{
				s_0, a_0 \sim \rho^{\hat{\pi}}(\cdot) \\ 
				\cdots\\
				s_{i-1}, a_{i-1} \sim \rho^{\hat{\pi}}(\cdot|s_{i-2}, a_{i-2})\\
				s_i, a_i \sim \rho^{\pi}(\cdot|s_{i-1}, a_{i-1})
		}}{\mathbb{E}}
		\prod_{t=0}^{i-2}
		r_t \cdot A^{\hat{\pi}}(s_i, a_i),\\
		\hat{G}_{k}(\pi, \hat{\pi}) &  = 
		\underset{\substack{
				s_0, a_0 \sim \rho^{\hat{\pi}}(\cdot) \\ 
				\cdots\\
				s_{i-1}, a_{i-1} \sim \rho^{\hat{\pi}}(\cdot|s_{i-2}, a_{i-2})\\
				s_i, a_i \sim \rho^{\pi}(\cdot|s_{i-1}, a_{i-1})
		}}{\mathbb{E}}
		\prod_{t=0}^{i-2}
		r_t \cdot (r_{i-1}-1)\cdot A^{\hat{\pi}}(s_i, a_i),\\
		\end{align*}
		and $\alpha_i = \frac{\gamma^{i}}{(1-\gamma)^{i+1}},\ \beta_k=\frac{\gamma^{k}}{(1-\gamma)^{k+1}}$.
		
		It is easy to prove that the following inequality holds
		\begin{align*}
		\hat{H}_i(\pi, \hat{\pi})\leq \frac{R_{\max}}{1-\gamma}\|\pi-\hat{\pi}\|_1,\ 
		\hat{G}_{k}(\pi, \hat{\pi}) \leq \frac{R_{\max}}{1-\gamma}\|\pi-\hat{\pi}\|^2_1.
		\end{align*}
		% Since $\sum_{k-1}^{i=0}\alpha_i=\frac{\gamma}{(1-\gamma)(1-2\gamma)}\left(1-\frac{\gamma^{k}}{(1-\gamma)^{k}}\right)$, 
		We have
		\begin{align*}
		\eta(\pi)-\eta(\hat{\pi})\geq 
		\sum_{i=0}^{k-1} \alpha_i \hat{L}_i(\pi, \hat{\pi})
		- \frac{\gamma R_{\max}\|\pi\!-\!\hat{\pi}\|_1 }{1-\gamma}\!\sum_{i=1}^{k-1}\alpha_i
		- \frac{\gamma^k R_{\max}}{(1-\gamma)^{k+2}}\|\pi-\hat{\pi}\|^2_1.
		\end{align*}
	\end{proof}

	\begin{theorem}\label{subset}
		Define two sets 
		\begin{align*}
		\varPsi_1 &= \left\{\mu\ |\ \alpha_0 \hat{L}_0(\mu, \hat{\pi})-\hat{C}_1(\mu,\hat{\pi})\geq 0, \|\mu-\hat{\pi}\|_1\leq \frac{1}{2}\right\},\\
		\varPsi_2 &= \left\{\mu\ |\ \alpha_0 \hat{L}_0(\mu, \hat{\pi}) + \alpha_1 \hat{L}_1(\mu, \hat{\pi})
		-\hat{C}_2(\mu,\hat{\pi})\geq 0, \|\mu-\hat{\pi}\|_1\leq \frac{1}{2}\right\},
		\end{align*}
		then we have
		\begin{align*}
		\varPsi_2 \subseteq \varPsi_1.
		\end{align*}
	\end{theorem}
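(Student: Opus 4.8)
The plan is to reduce the set inclusion to a single pointwise inequality and then bound the one ``extra'' surrogate term. Fix any $\mu$ with $\|\mu-\hat\pi\|_1\le\tfrac12$ and assume $\mu\in\varPsi_2$, i.e.\ $\alpha_0\hat L_0(\mu,\hat\pi)+\alpha_1\hat L_1(\mu,\hat\pi)-\hat C_2(\mu,\hat\pi)\ge 0$. Since $\varPsi_1$ asks for $\alpha_0\hat L_0(\mu,\hat\pi)-\hat C_1(\mu,\hat\pi)\ge 0$ under the same radius constraint, it suffices to establish the pointwise bound
\[
\alpha_1\hat L_1(\mu,\hat\pi)\ \le\ \hat C_2(\mu,\hat\pi)-\hat C_1(\mu,\hat\pi),
\]
which I will call $(\star)$: granting $(\star)$, one writes $\alpha_0\hat L_0-\hat C_1=\bigl(\alpha_0\hat L_0+\alpha_1\hat L_1-\hat C_2\bigr)+\bigl(\hat C_2-\hat C_1-\alpha_1\hat L_1\bigr)\ge 0+0$, so $\mu\in\varPsi_1$. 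Conceptually $(\star)$ says that the larger penalty at $k=2$ more than absorbs the additional surrogate term, so the $k=2$ lower bound never exceeds the $k=1$ lower bound and its nonnegativity region shrinks.

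The core of the argument is therefore an upper bound on $\alpha_1\hat L_1(\mu,\hat\pi)$. I would expand
\[
\hat L_1(\mu,\hat\pi)=\underset{\substack{s_0,a_0\sim\rho^{\hat\pi}(\cdot)\\ s_1,a_1\sim\rho^{\hat\pi}(\cdot|s_0,a_0)}}{\mathbb E}\,r_0\,r_1\,A^{\hat\pi}(s_1,a_1)
\]
and peel the ratios off one at a time using $\sum_a\hat\pi(a|s)A^{\hat\pi}(s,a)=0$. Integrating $a_1$ against $\hat\pi$ replaces $r_1A^{\hat\pi}(s_1,a_1)$ by $\sum_{a_1}(\mu-\hat\pi)(a_1|s_1)A^{\hat\pi}(s_1,a_1)$, whose modulus is at most $\|\mu-\hat\pi\|_1\,\|Q^{\hat\pi}(s_1,\cdot)\|_\infty\le\epsilon R_{\max}/(1-\gamma)$, with $\epsilon:=\|\mu-\hat\pi\|_1$---the same estimate used in Corollary~\ref{cor_ineq}. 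Splitting $r_0=(r_0-1)+1$ then decomposes $\hat L_1$ into a first-order term $\mathbb E_{s_0,a_0\sim\rho^{\hat\pi}}[\cdots]$ and a second-order term $\mathbb E_{s_0,a_0\sim\rho^{\hat\pi}}(r_0-1)[\cdots]$; since $\mathbb E_{a_0\sim\hat\pi}|r_0-1|=\|\mu(\cdot|s_0)-\hat\pi(\cdot|s_0)\|_1\le\epsilon$, this gives an estimate of the shape $\hat L_1(\mu,\hat\pi)\le\frac{R_{\max}}{1-\gamma}(\epsilon+\epsilon^2)$ and hence $\alpha_1\hat L_1\le\frac{\gamma R_{\max}}{(1-\gamma)^3}(\epsilon+\epsilon^2)$. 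These are precisely the $\hat H_1$- and $\hat G_2$-type quantities already isolated in the proof of Theorem~\ref{least_1_inq}, so I can reuse those bounds verbatim.

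It then remains to substitute the closed forms $\hat C_1(\mu,\hat\pi)=\frac{\gamma R_{\max}}{(1-\gamma)^3}\epsilon^2$ and $\hat C_2(\mu,\hat\pi)=\frac{\gamma R_{\max}}{1-\gamma}\alpha_1\epsilon+\frac{\gamma^2R_{\max}}{(1-\gamma)^4}\epsilon^2$ into $(\star)$, cancel the common factor $\frac{\gamma R_{\max}}{(1-\gamma)^3}$, and reduce $(\star)$ to a polynomial inequality in $\epsilon$ whose coefficients depend only on $\gamma$. The radius constraint $\|\mu-\hat\pi\|_1\le\tfrac12$ is used exactly here, to dominate the quadratic contributions by the linear ones via $\epsilon^2\le\tfrac12\epsilon$. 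I expect this last comparison to be the main obstacle: the surrogate correction $\alpha_1\hat L_1$ is genuinely of first order in $\epsilon$ and is sign-indefinite, so its leading coefficient must be matched tightly against that of the penalty increment $\hat C_2-\hat C_1$, leaving little slack and forcing careful bookkeeping of the explicit $\gamma$-dependent constants. Once $(\star)$ is verified for every admissible $\mu$, the inclusion $\varPsi_2\subseteq\varPsi_1$ follows immediately, and the identical peeling-and-comparison scheme yields the chain $\varPsi_1\supseteq\varPsi_2\supseteq\varPsi_3\supseteq\cdots$ for general $k$.
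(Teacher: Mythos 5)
Your reduction of the inclusion to the pointwise inequality $(\star)$: $\alpha_1\hat L_1(\mu,\hat\pi)\le \hat C_2(\mu,\hat\pi)-\hat C_1(\mu,\hat\pi)$ is where the argument fails: $(\star)$ is not just tight, it is false, even when restricted to $\mu\in\varPsi_2$. Substituting the closed forms you quote, the penalty increment is
\begin{align*}
\hat C_2(\mu,\hat\pi)-\hat C_1(\mu,\hat\pi)=\frac{\gamma^2 R_{\max}}{(1-\gamma)^3}\,\epsilon+\frac{\gamma(2\gamma-1) R_{\max}}{(1-\gamma)^4}\,\epsilon^2,
\end{align*}
whereas your own worst-case estimate of the left side is $\frac{\gamma R_{\max}}{(1-\gamma)^3}(\epsilon+\epsilon^2)$. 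Comparing the linear terms, the penalty carries coefficient $\gamma^2$ against the surrogate's $\gamma$; after simplification $(\star)$ would require $(1-\gamma)^2\le(3\gamma-2)\epsilon$, which fails outright for $\gamma\le 2/3$ and, for every $\gamma$, for all sufficiently small $\epsilon$. The constraint $\epsilon\le\frac12$ cannot rescue this, because the deficit is in the \emph{linear} term, and $\epsilon^2\le\frac12\epsilon$ only helps dominate quadratic terms by linear ones, not the reverse. Nor is this an artifact of bounding $|\hat L_1|$ too crudely: take $\mu$ that shifts mass toward actions with positive advantage, so that $\hat L_1\approx\hat L_0>0$ is genuinely of first order in $\epsilon$; such $\mu$ lie in $\varPsi_2$ (both surrogate terms are positive) yet violate $(\star)$, since $\alpha_1\hat L_1$ then exceeds the fixed $O(\gamma^2\epsilon)$ penalty increment. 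For these $\mu$ the theorem is true for a completely different reason — $\hat L_0$ itself is large and positive — which your decomposition cannot see.

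The idea you are missing, and which the paper's proof is built on, is that the first-order part of $\hat L_1$ is not a sign-indefinite error to be absorbed by penalties: it is \emph{exactly} $\hat L_0$. Your peeling computation actually produces the right object — the quantity $\sum_{a_1}(\mu-\hat\pi)(a_1|s_1)A^{\hat\pi}(s_1,a_1)$ averaged over $s_1$ — but you immediately take its modulus. Instead, using the marginalization identity $\mathbb{E}_{s_0\sim\rho^{\hat\pi},\,a_0\sim\hat\pi}\,\rho^{\hat\pi}(\cdot|s_0,a_0)=\rho^{\hat\pi}(\cdot)$ and H\"older only on the $a_0$-swap, one gets the signed bound $\hat L_1(\mu,\hat\pi)\le \hat L_0(\mu,\hat\pi)+\frac{R_{\max}}{1-\gamma}\|\mu-\hat\pi\|_1^2$. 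Feeding this into the $\varPsi_2$ inequality merges the two surrogate terms constructively, $\alpha_0\hat L_0+\alpha_1\hat L_0=\frac{1}{(1-\gamma)^2}\hat L_0$, so the hypothesis becomes a lower bound on $\hat L_0$ alone; what remains is a polynomial residue proportional to $2\epsilon^2-\epsilon$, which is nonpositive precisely because $\epsilon\le\frac12$ — this is the only place the radius constraint enters. In short: the $k=2$ condition implies the $k=1$ condition not because the extra penalty dominates the extra surrogate term, but because the extra surrogate term is (up to $O(\epsilon^2)$) a positive multiple of the $k=1$ surrogate itself.
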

	
	\begin{proof}
		Let $\mu\in\varPsi_1$, we have
		\begin{align}\label{ineq_trpo}
		\hat{L}_0(\mu, \hat{\pi})-
		\frac{\gamma R_{\max}}{(1-\gamma)^2}\|\mu-\hat{\pi}\|^2_1\geq 0.
		\end{align}
		
		Below, we will show that $\mu$ may not be in the set $\varPsi_2$.
		
		For $\hat{L}_1(\pi, \hat{\pi})$, we can get
		\begin{align}\label{l1_lower_bound}
		\hat{L}_1(\pi, \hat{\pi}) 
		=& \underset{\substack{
				s_0 \sim \rho^{\hat{\pi}}(\cdot), a_0\sim\pi(\cdot|s_0)\\ 
				s_1 \sim \rho^{\hat{\pi}}(\cdot|s_{0}, a_{0}),a_1 \sim \pi(\cdot|s_1)
		}}{\mathbb{E}}
		A^{\hat{\pi}}(s_1, a_1)
		\\
		=& \underset{\substack{
				s_0 \sim \rho^{\hat{\pi}}(\cdot), a_0\sim\hat{\pi}(\cdot|s_0)\\ 
				s_1 \sim \rho^{\hat{\pi}}(\cdot|s_{0}, a_{0}),a_1 \sim \pi(\cdot|s_1)
		}}{\mathbb{E}}
		A^{\hat{\pi}}(s_1, a_1) + \left(
		\underset{\substack{
				s_0 \sim \rho^{\hat{\pi}}(\cdot), a_0\sim\pi(\cdot|s_0)\\ 
				s_1 \sim \rho^{\hat{\pi}}(\cdot|s_{0}, a_{0}),a_1 \sim \pi(\cdot|s_1)
		}}{\mathbb{E}}
		-
		\underset{\substack{
				s_0 \sim \rho^{\hat{\pi}}(\cdot), a_0\sim\hat{\pi}(\cdot|s_0)\\ 
				s_1 \sim \rho^{\hat{\pi}}(\cdot|s_{0}, a_{0}),a_1 \sim \pi(\cdot|s_1)
		}}{\mathbb{E}}
		\right)A^{\hat{\pi}}(s_1, a_1)\\
		\geq & \underset{\substack{
				%				s_0 \sim \rho^{\hat{\pi}}(\cdot), a_0\sim\hat{\pi}(\cdot|s_0)\\ 
				s_1 \sim \rho^{\hat{\pi}}(\cdot),a_1 \sim \pi(\cdot|s_1)
		}}{\mathbb{E}}
		A^{\hat{\pi}}(s_1, a_1)- \frac{R_{\max}}{1-\gamma}\|\pi-\hat{\pi}\|^2_1.
		\end{align}
		The last inequality uses $\mathbb{E}_{s_0 \sim \rho^{\hat{\pi}}(\cdot), a_0\sim\hat{\pi}(\cdot|s_0)} \rho^{\hat{\pi}}(\cdot|s_{0}, a_{0})= \rho^{\hat{\pi}}(\cdot)$ and Hölder's inequality \citep{Hel}.
		
		Combining with $\hat{L}_0(\pi, \hat{\pi}) $ and $\hat{C}_2(\pi,\hat{\pi}) $, we have
		\begin{align*}
		&\hat{L}_0(\pi, \hat{\pi}) + \frac{\gamma}{1-\gamma}\hat{L}_1(\pi, \hat{\pi})-\frac{\gamma R_{\max}}{(1-\gamma)^2}\|\pi-\hat{\pi}\|_1-\frac{\gamma^2 R_{\max}}{(1-\gamma)^3}\|\pi-\hat{\pi}\|^2_1\\
		\geq &\hat{L}_0(\pi, \hat{\pi}) +\frac{\gamma}{1-\gamma}
		\left(\underset{\substack{
				%				s_0 \sim \rho^{\hat{\pi}}(\cdot), a_0\sim\hat{\pi}(\cdot|s_0)\\ 
				s_1 \sim \rho^{\hat{\pi}}(\cdot),a_1 \sim \pi(\cdot|s_1)
		}}{\mathbb{E}}
		A^{\hat{\pi}}(s_1, a_1)- \frac{R_{\max}}{1-\gamma}\|\pi-\hat{\pi}\|^2_1\right)
		-\frac{\gamma R_{\max}}{(1-\gamma)^2}\|\pi-\hat{\pi}\|_1-\frac{\gamma^2 R_{\max}}{(1-\gamma)^3}\|\pi-\hat{\pi}\|^2_1\\
		%		=& \frac{1}{1-\gamma} \hat{L}_0(\pi, \hat{\pi})- \frac{\gamma R_{\max}}{(1-\gamma)^2}\|\pi-\hat{\pi}\|^2_1-\frac{R_{\max}}{(1-\gamma)^2}\|\pi-\hat{\pi}\|_1-\frac{\gamma^2 R_{\max}}{(1-\gamma)^3}\|\pi-\hat{\pi}\|^2_1\\
		=& \frac{1}{1-\gamma} \hat{L}_0(\pi, \hat{\pi})
		- \frac{\gamma R_{\max}}{(1-\gamma)^3}\|\pi-\hat{\pi}\|^2_1
		-\frac{\gamma R_{\max}}{(1-\gamma)^2}\|\pi-\hat{\pi}\|_1.
		\end{align*}
		
		Combining with the inequality (\ref{ineq_trpo}), we have
		\begin{align*}
		&\hat{L}_0(\mu, \hat{\pi}) + \frac{\gamma}{1-\gamma}\hat{L}_1(\mu, \hat{\pi})-\frac{\gamma R_{\max}}{(1-\gamma)^2}\|\mu-\hat{\pi}\|_1-\frac{\gamma^2 R_{\max}}{(1-\gamma)^3}\|\mu-\hat{\pi}\|^2_1
		\geq 
		-\frac{\gamma R_{\max}}{(1-\gamma)^2}\|\mu-\hat{\pi}\|_1.
		\end{align*}
		
		From the above inequality, it shows that $\mu$ may not be in set $\varPsi_2$.
		
		On the contrary, we will show that if $ \mu\in  \varPsi_2$, then $ \mu\in  \varPsi_1$.\\
		According to the Eqn. (\ref{l1_lower_bound}),we can get the upper bound of $ \hat{L}_1(\pi, \hat{\pi})  $:
		\begin{align*}
		\hat{L}_1(\pi, \hat{\pi}) 
		\leq & \underset{\substack{
				s_1 \sim \rho^{\hat{\pi}}(\cdot),a_1 \sim \pi(\cdot|s_1)
		}}{\mathbb{E}}
		A^{\hat{\pi}}(s_1, a_1)+ \frac{R_{\max}}{1-\gamma}\|\pi-\hat{\pi}\|^2_1.
		\end{align*}
		If $ \mu\in  \varPsi_2$, we have
		\begin{equation}\label{f1_upp_b}
		\begin{aligned}
		0\leq&\hat{L}_0(\mu, \hat{\pi}) + \frac{\gamma}{1-\gamma}\hat{L}_1(\mu, \hat{\pi})-\frac{\gamma R_{\max}}{(1-\gamma)^2}\|\mu-\hat{\pi}\|_1-\frac{\gamma^2 R_{\max}}{(1-\gamma)^3}\|\mu-\hat{\pi}\|^2_1\\
		\leq &\hat{L}_0(\mu, \hat{\pi}) +\frac{\gamma}{1-\gamma}
		\left(\underset{\substack{
				s_1 \sim \rho^{\hat{\pi}}(\cdot),a_1 \sim \mu(\cdot|s_1)
		}}{\mathbb{E}}
		A^{\hat{\pi}}(s_1, a_1)+ \frac{R_{\max}}{1-\gamma}\|\mu-\hat{\pi}\|^2_1\right)\\
		&-\frac{\gamma R_{\max}}{(1-\gamma)^2}\|\mu-\hat{\pi}\|_1-\frac{\gamma^2 R_{\max}}{(1-\gamma)^3}\|\mu-\hat{\pi}\|^2_1\\
		%		=& \frac{1}{1-\gamma} \hat{L}_0(\mu, \hat{\pi})+\frac{\gamma R_{\max}}{(1-\gamma)^2}\|\mu-\hat{\pi}\|^2_1-\frac{\gamma R_{\max}}{(1-\gamma)^2}\|\mu-\hat{\pi}\|_1-\frac{\gamma^2 R_{\max}}{(1-\gamma)^3}\|\mu-\hat{\pi}\|^2_1\\
		=& \frac{1}{1-\gamma} \hat{L}_0(\mu, \hat{\pi})-\frac{\gamma R_{\max}}{(1-\gamma)^2}\|\mu-\hat{\pi}\|_1
		+\frac{\gamma(1-2\gamma) R_{\max}}{(1-\gamma)^3}\|\mu-\hat{\pi}\|^2_1\\
		= &\frac{1}{1-\gamma} \hat{L}_0(\mu, \hat{\pi})-\frac{\gamma R_{\max}}{(1-\gamma)^3}\|\mu-\hat{\pi}\|^2_1\\
		&+\frac{\gamma R_{\max}}{(1-\gamma)^3}\|\mu-\hat{\pi}\|^2_1
		-\frac{\gamma R_{\max}}{(1-\gamma)^2}\|\mu-\hat{\pi}\|_1
		+\frac{\gamma(1-2\gamma) R_{\max}}{(1-\gamma)^3}\|\mu-\hat{\pi}\|^2_1.\\
		\end{aligned}
		\end{equation}
		
		Next, we will show that the second line of the last equation in the above formula is less than or equal to 0. We have
		\begin{align*}
		&\frac{\gamma R_{\max}}{(1-\gamma)^3}\|\mu-\hat{\pi}\|^2_1
		-\frac{\gamma R_{\max}}{(1-\gamma)^2}\|\mu-\hat{\pi}\|_1
		+\frac{\gamma(1-2\gamma) R_{\max}}{(1-\gamma)^3}\|\mu-\hat{\pi}\|^2_1\\
		=&\frac{\gamma R_{\max}}{(1-\gamma)^2}\left(\frac{1}{1-\gamma}\|\mu-\hat{\pi}\|^2_1-\|\mu-\hat{\pi}\|_1+\frac{1-2\gamma }{1-\gamma}\|\mu-\hat{\pi}\|^2_1\right)\\
		=&\frac{\gamma R_{\max}}{(1-\gamma)^2}\left(2\|\mu-\hat{\pi}\|^2_1-\|\mu-\hat{\pi}\|_1\right)\\
		\leq& 0.
		\end{align*}
		The last inequality holds because $ \|\mu-\hat{\pi}\|_1\leq \frac{1}{2} $.
		So, from the Eqn.(\ref{f1_upp_b}), we have
		\begin{align*}
		0\leq
		&\frac{1}{1-\gamma} \hat{L}_0(\mu, \hat{\pi})-\frac{\gamma R_{\max}}{(1-\gamma)^3}\|\mu-\hat{\pi}\|^2_1.
		\end{align*}
		%		The above inequality show that $ \mu\in\varPsi_1 $.
		
		In summary, we have that $ \varPsi_2 \subseteq \varPsi_1 $ holds.
	\end{proof}

	\section{Additional experimental results}

	To verify the effectiveness of the proposed RPO method, we select six continuous control tasks from the MuJoCo environments \cite{Tod} in OpenAI Gym \cite{Gre}. We conduct all the experiments mainly based on the code from \cite{Que}. The test procedures are averaged over ten test episodes across ten independent runs.
	For the experimental parameters, we use the default parameters from \cite{baselines, Hen}, for example, the discount factor is $ \gamma=0.995 $, and we use the Adam optimizer \cite{Kin} throughout the training progress. The learning rate $ \phi = 3e-4 $ except for Humanoid which is $ 1e-5 $. 
	For RPO, the clipping parameters are $ \epsilon=0.2 $ and $ \epsilon_1=0.1 $, and the weighted parameter $ \beta=0.3 $ on MuJoCo environments and do not fine-tune them.
	The RPO algorithm involves hyperparameters, but we have yet to extensively fine-tun them, from Table \ref{tab_all1}, we supplemented some experiments for $k=4$.
	
	To verify the effectiveness of the proposed RPO method in discrete Atari environments, the code is based on \cite{zhang}. We run our experiments across three seeds with fair evaluation metrics. We use the same hyperparameters $ \epsilon_1=0.1 $ and $ \beta = 3.0 $ in all environments and do not fine-tune them. Firstly, in the discrete state-action space of Atari, rewards are sparser compared to the continuous state-action space of MuJoCo. Therefore, to better utilize information from subsequent states, $\beta$ should be larger than the setting in MuJoCo. Secondly, we did not specifically select the hyperparameters but set them arbitrarily. Supplementary experiments were conducted on six randomly selected Atari games to demonstrate this. In Table \ref{tab_all_at}, experimental results show that the performance of the RPO algorithm with different parameters is better than that of PPO, demonstrating better robustness.

	\begin{figure}[t]
		\begin{minipage}[b]{.32\linewidth}
			\centering
			\subfigure[HalfCheetah]{\includegraphics[width=0.99\textwidth]{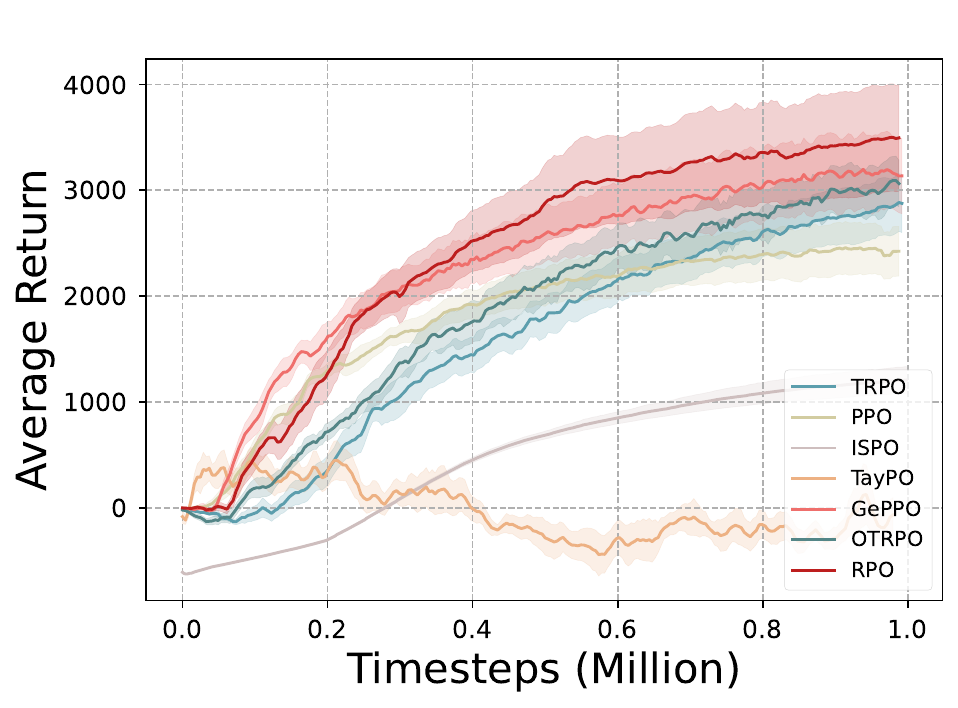}}
		\end{minipage}
		\begin{minipage}[b]{.32\linewidth}
			\centering
			\subfigure[Hopper]{\includegraphics[width=0.99\textwidth]{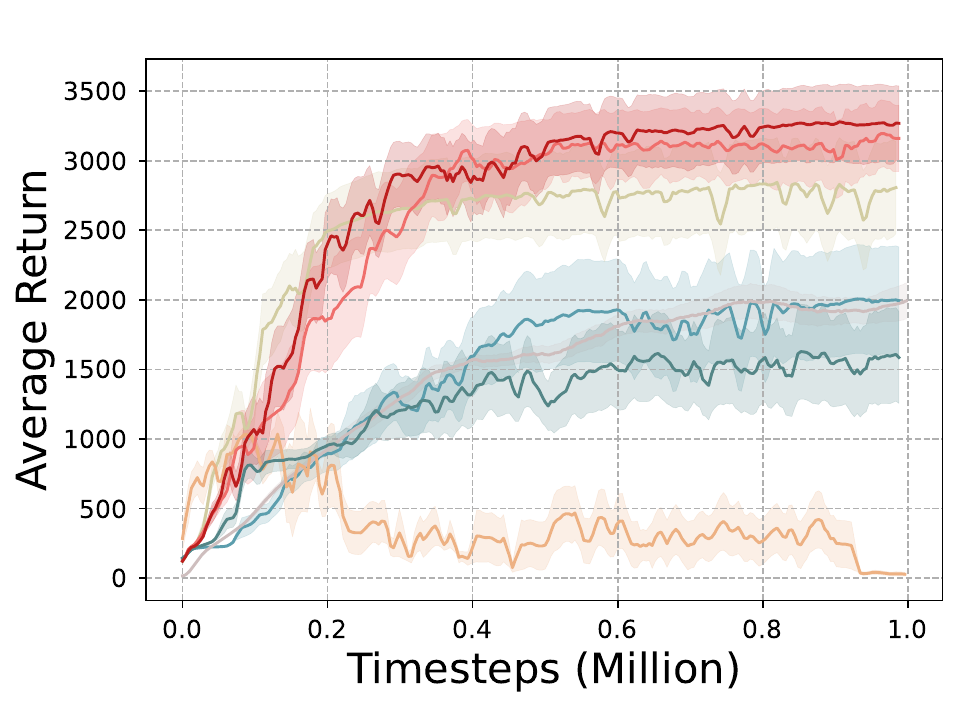}}
		\end{minipage}
		\begin{minipage}[b]{.32\linewidth}
			\centering
			\subfigure[Reacher]{\includegraphics[width=0.99\textwidth]{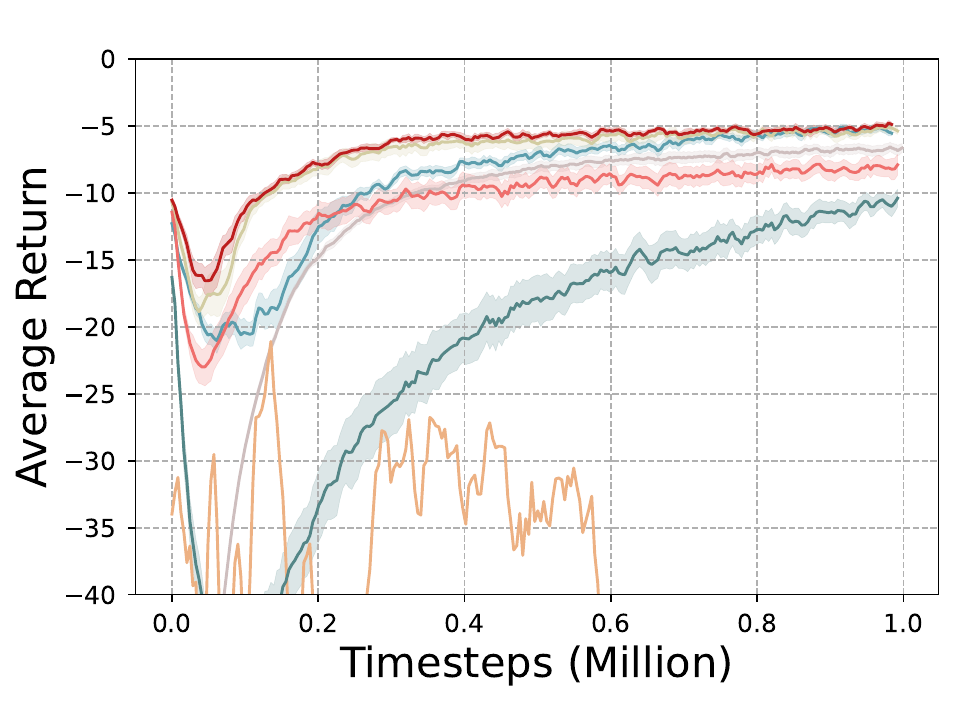}}
		\end{minipage}\\
		\begin{minipage}[b]{.32\linewidth}
			\centering
			\subfigure[Walker2d]{\includegraphics[width=0.99\textwidth]{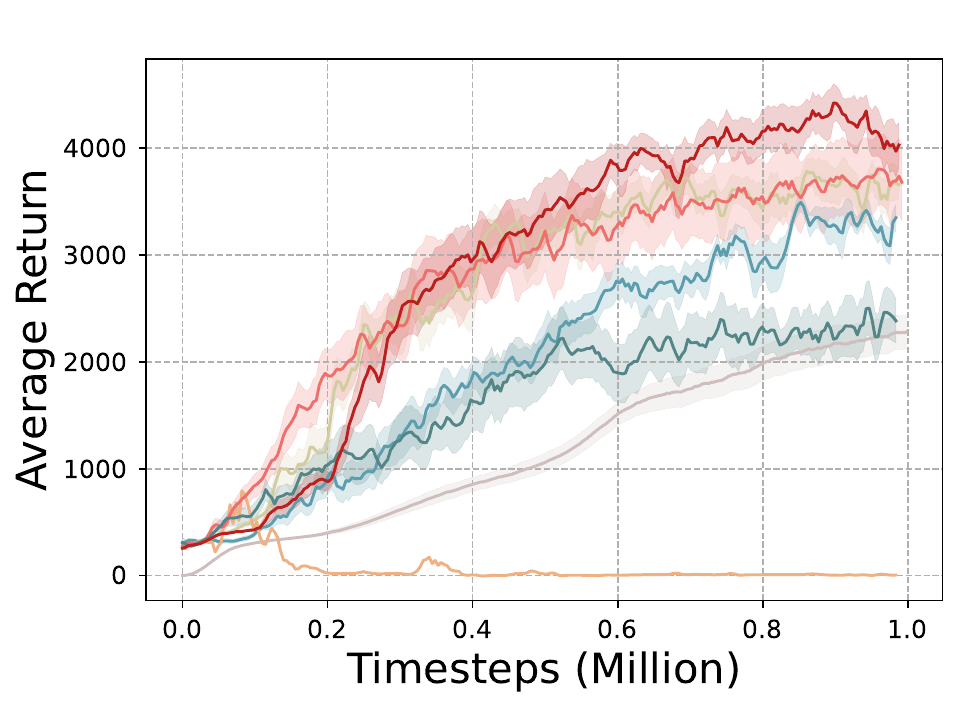}}
		\end{minipage}
		\begin{minipage}[b]{.32\linewidth}
			\centering
			\subfigure[Swimmer]{\includegraphics[width=0.99\textwidth]{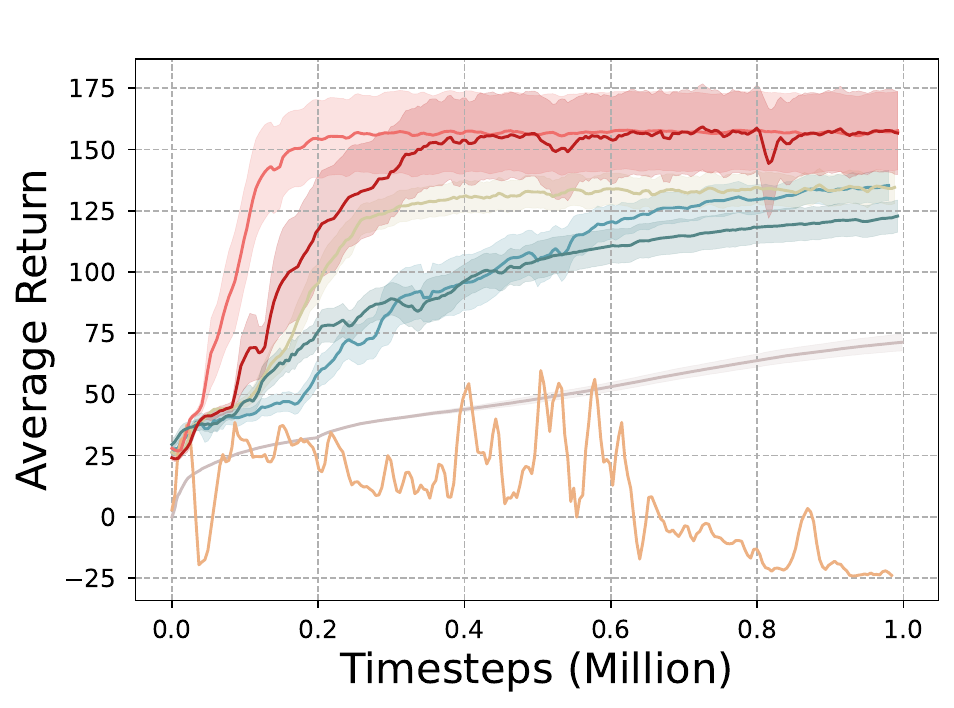}}
		\end{minipage}
		\begin{minipage}[b]{.32\linewidth}
			\centering
			\subfigure[Humanoid]{\includegraphics[width=0.99\textwidth]{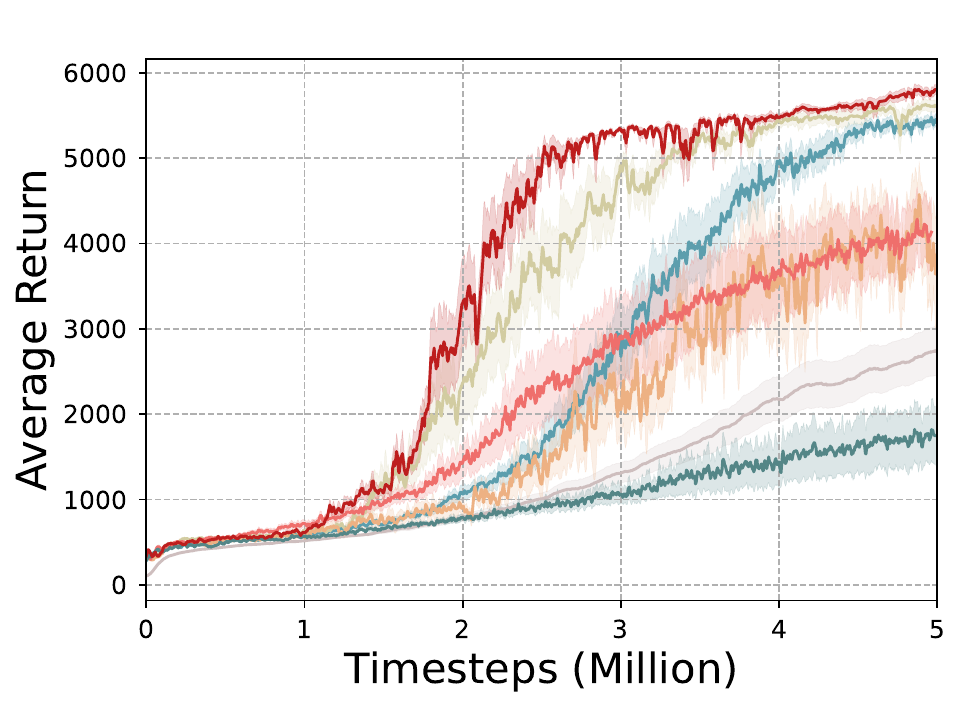}}
		\end{minipage}
		\caption{Learning curves on the Gym environments. Performance of RPO vs. PPO, TRPO, OTRPO, GePPO, ISPO and TayPO.
		}
		\label{performance}
	\end{figure}

	\begin{figure}[t]
		\begin{minipage}[b]{.24\linewidth}
			\centering
			\subfigure[HalfCheetah]{\includegraphics[width=0.99\textwidth]{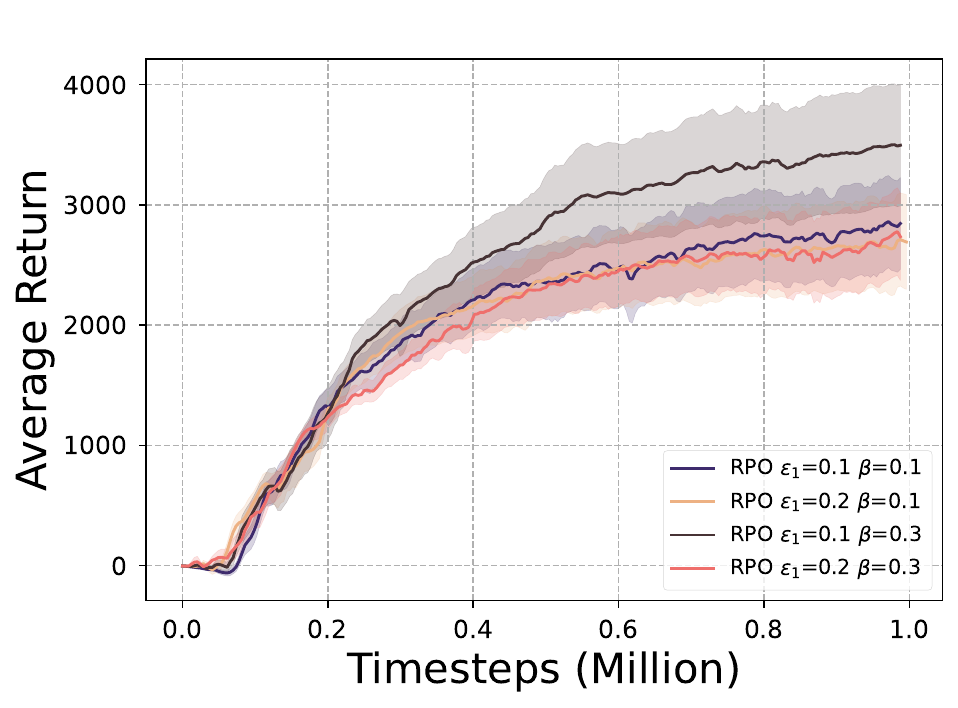}}
		\end{minipage}
		\begin{minipage}[b]{.24\linewidth}
			\centering
			\subfigure[Reacher]{\includegraphics[width=0.99\textwidth]{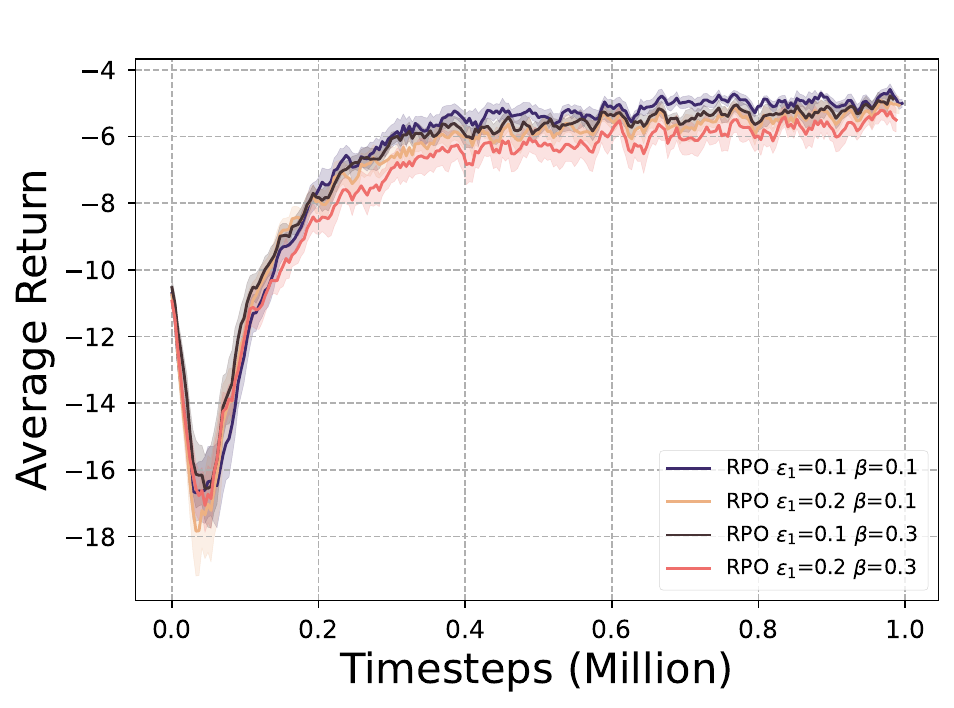}}
		\end{minipage}
		\begin{minipage}[b]{.24\linewidth}
			\centering
			\subfigure[Swimmer]{\includegraphics[width=0.99\textwidth]{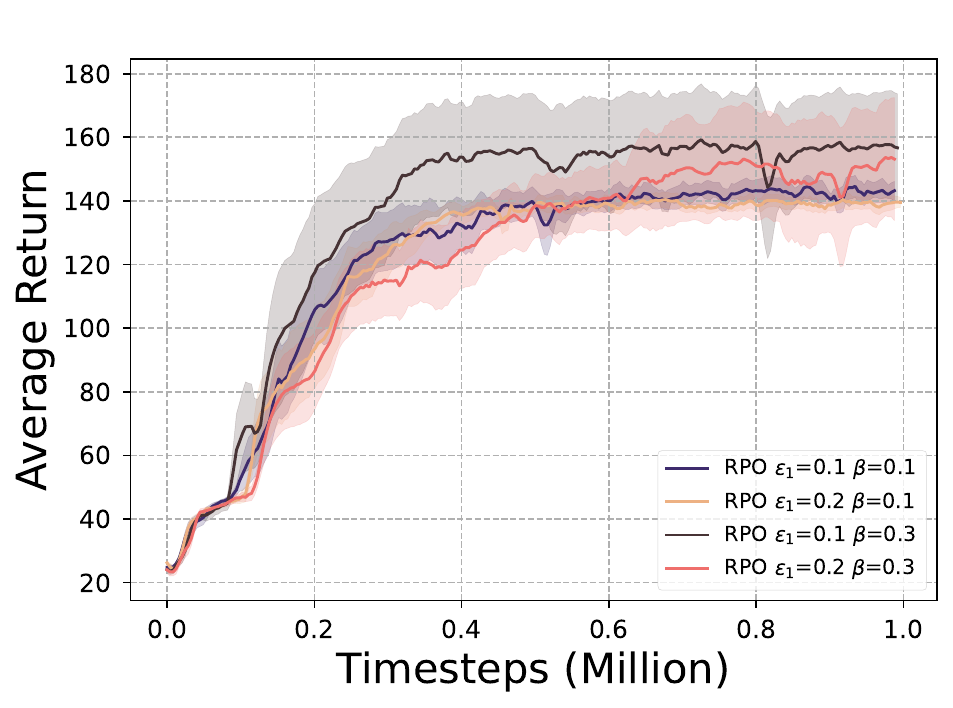}}
		\end{minipage}
		\begin{minipage}[b]{.24\linewidth}
			\centering
			\subfigure[Walker2d]{\includegraphics[width=0.99\textwidth]{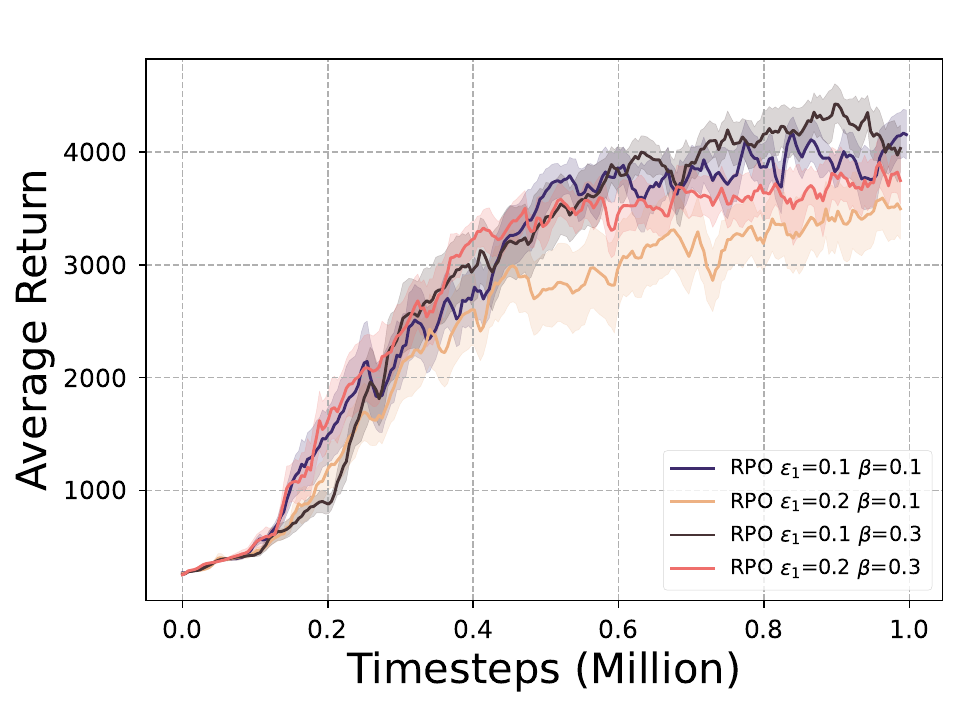}}
		\end{minipage}\\
		\begin{minipage}[b]{.24\linewidth}
			\centering
			\subfigure[HalfCheetah]{\includegraphics[width=0.99\textwidth]{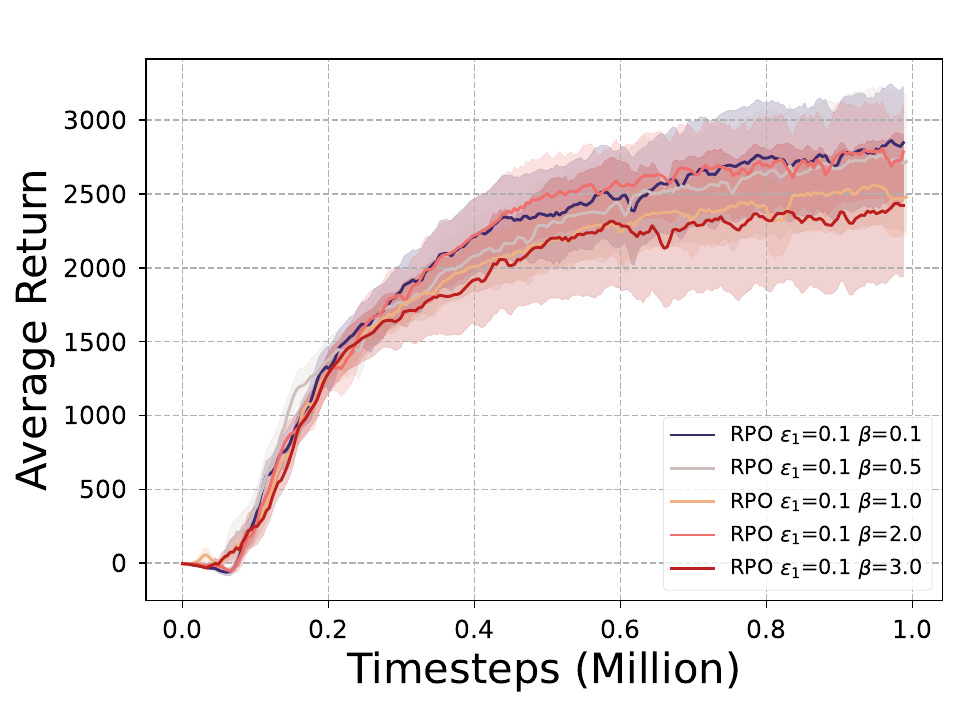}}
		\end{minipage}
		\begin{minipage}[b]{.24\linewidth}
			\centering
			\subfigure[Reacher]{\includegraphics[width=0.99\textwidth]{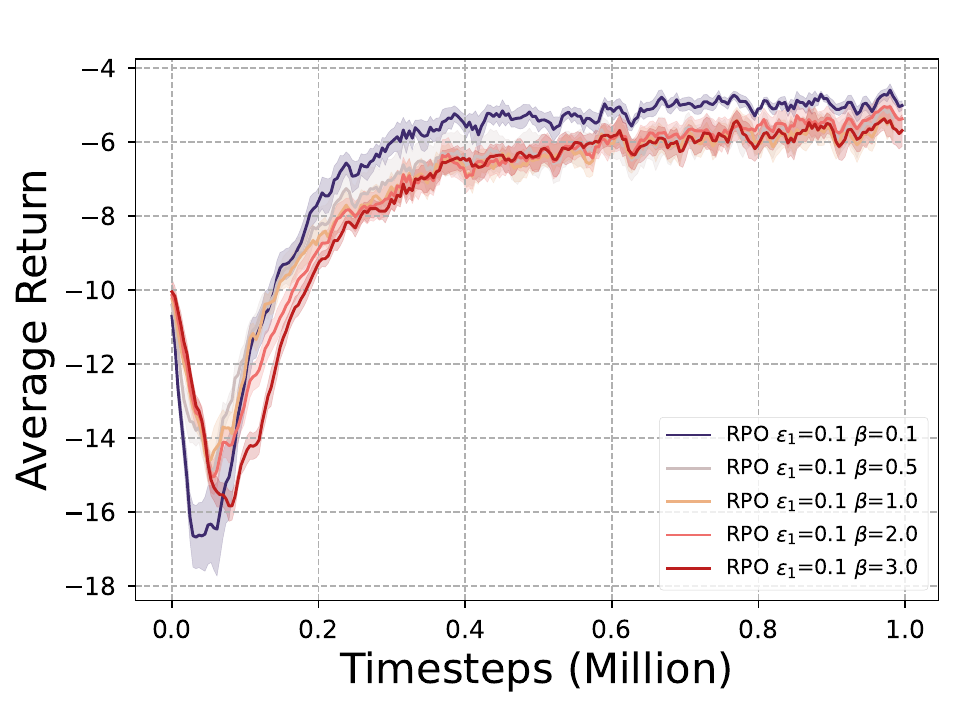}}
		\end{minipage}
		\begin{minipage}[b]{.24\linewidth}
			\centering
			\subfigure[Swimmer]{\includegraphics[width=0.99\textwidth]{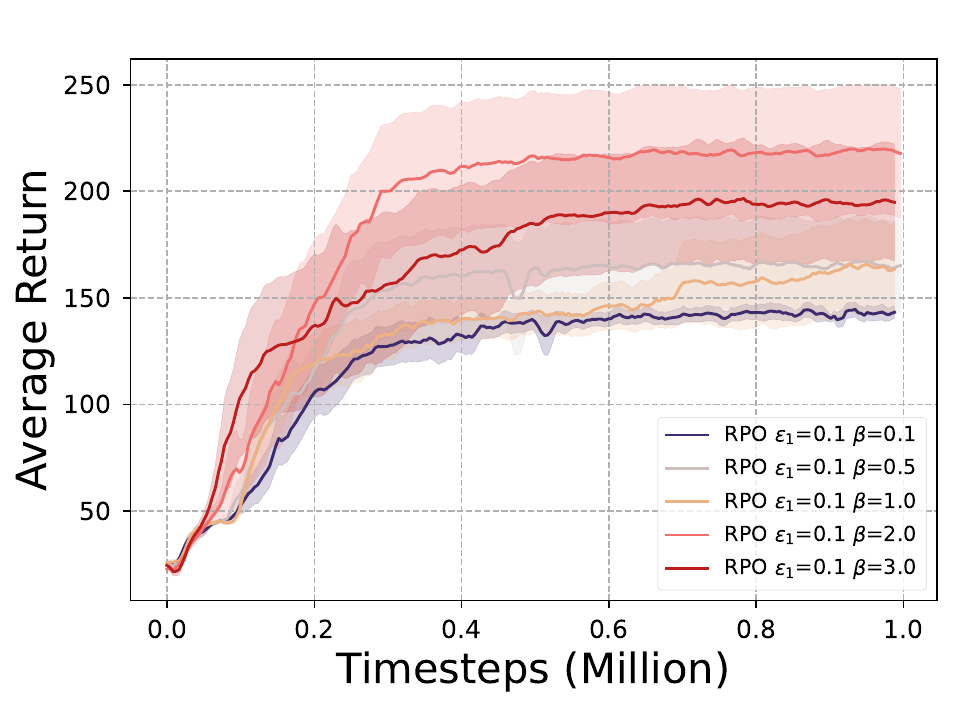}}
		\end{minipage}
		\begin{minipage}[b]{.24\linewidth}
			\centering
			\subfigure[Walker2d]{\includegraphics[width=0.99\textwidth]{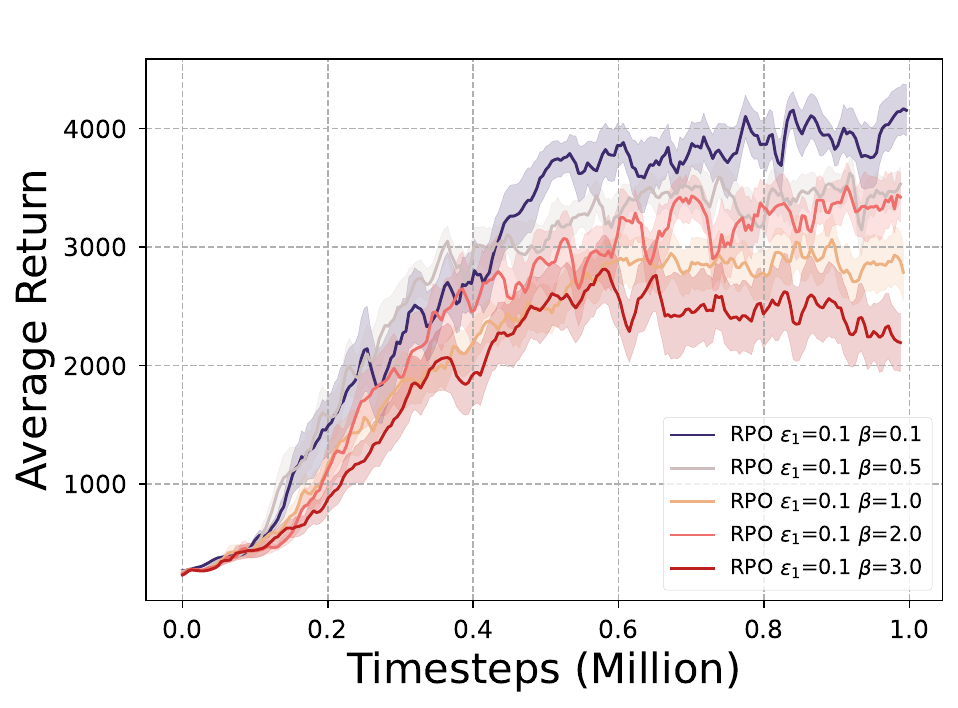}}
		\end{minipage}
		\caption{The top line represents the performance under the condition of $\beta$ fixed, and the bottom line represents the performance under the condition of $\epsilon_1$ fixed.
		}
		\label{fix-parameter}
	\end{figure}

	\begin{figure}[t]
		\begin{minipage}[b]{.16\linewidth}
			\centering
			\subfigure{\includegraphics[width=0.99\textwidth]{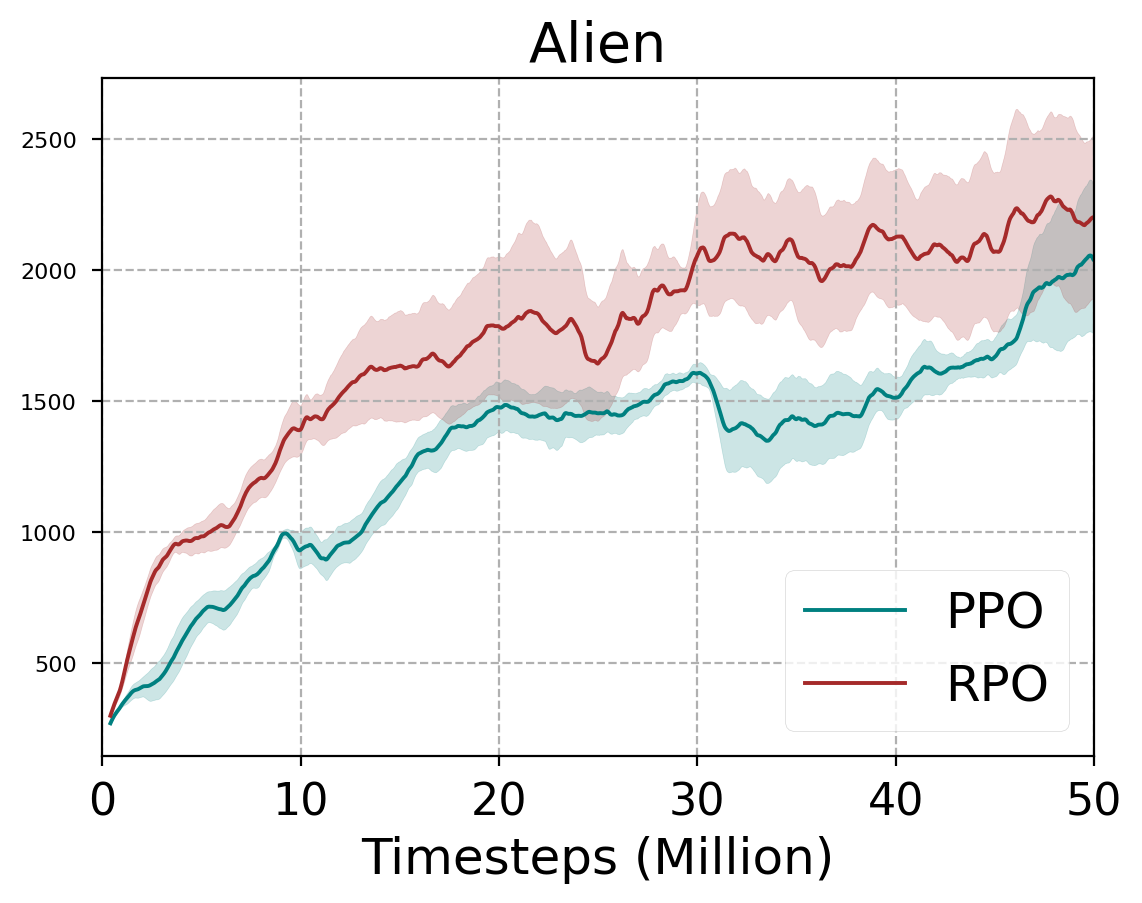}}
		\end{minipage}
		\begin{minipage}[b]{.16\linewidth}
			\centering
			\subfigure{\includegraphics[width=0.99\textwidth]{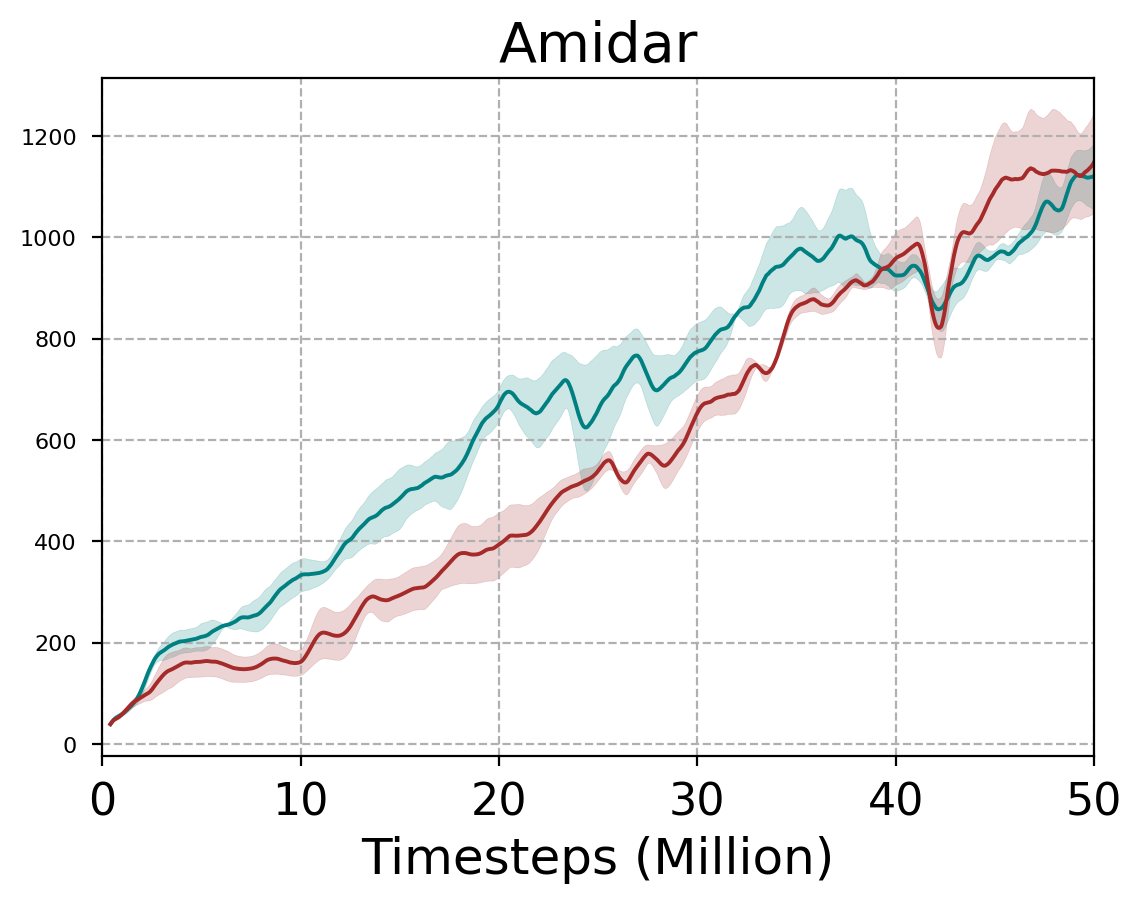}}
		\end{minipage}
		\begin{minipage}[b]{.16\linewidth}
			\centering
			\subfigure{\includegraphics[width=0.99\textwidth]{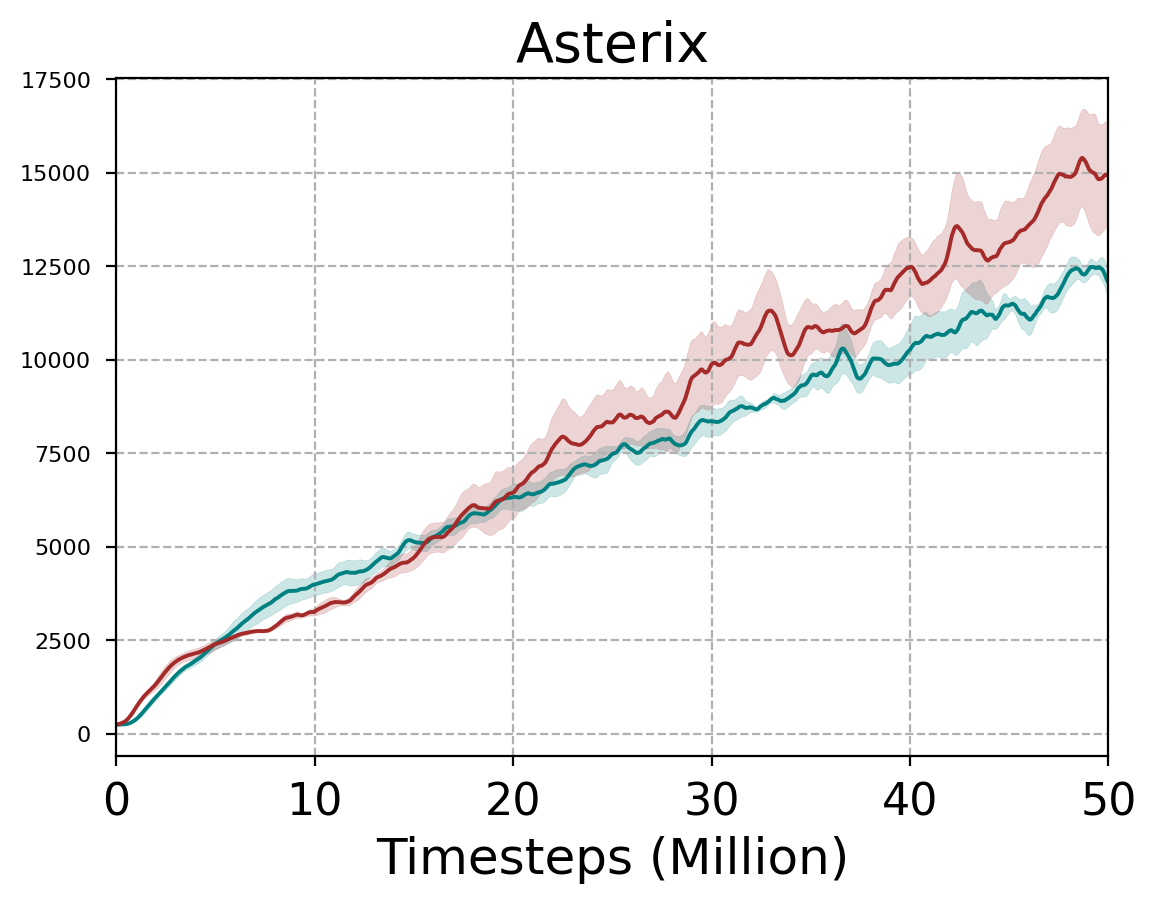}}
		\end{minipage}
		\begin{minipage}[b]{.16\linewidth}
			\centering
			\subfigure{\includegraphics[width=0.99\textwidth]{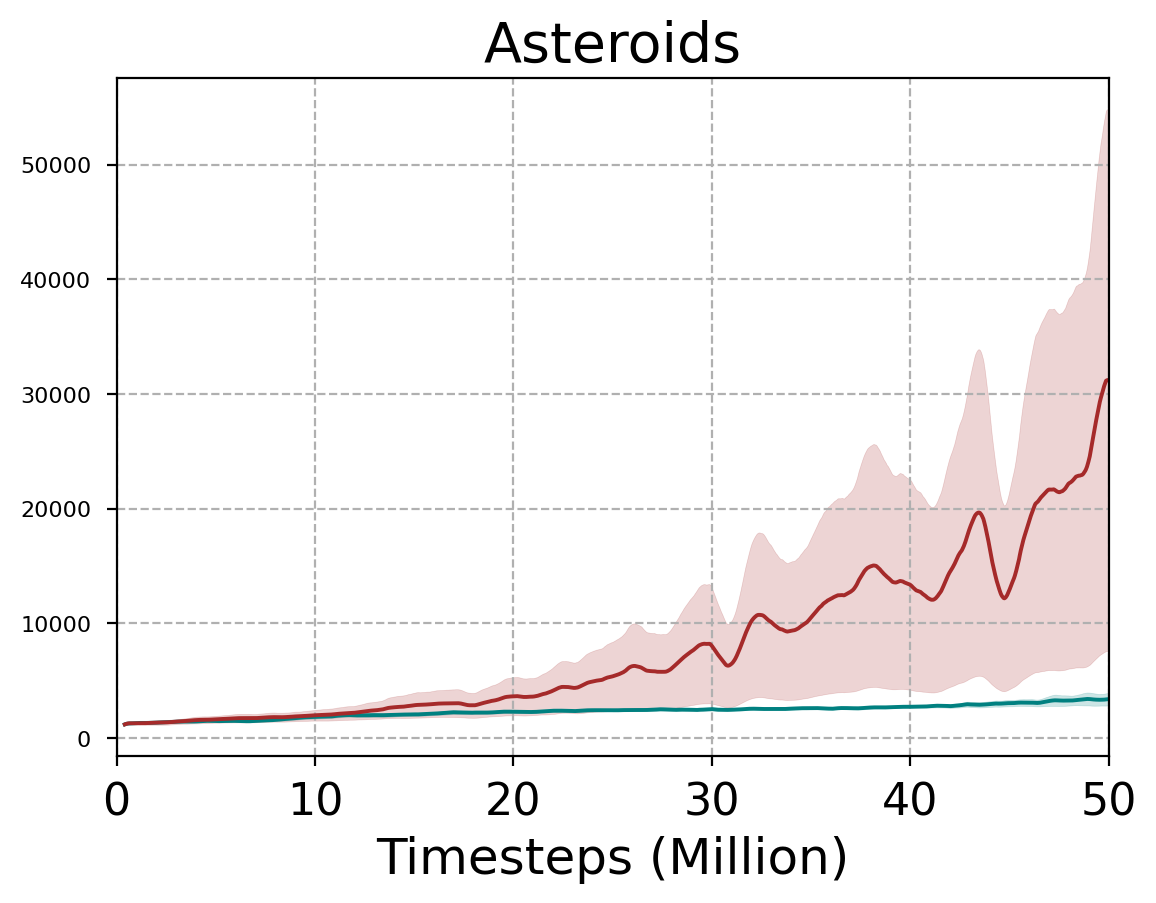}}
		\end{minipage}
		\begin{minipage}[b]{.16\linewidth}
			\centering
			\subfigure{\includegraphics[width=0.99\textwidth]{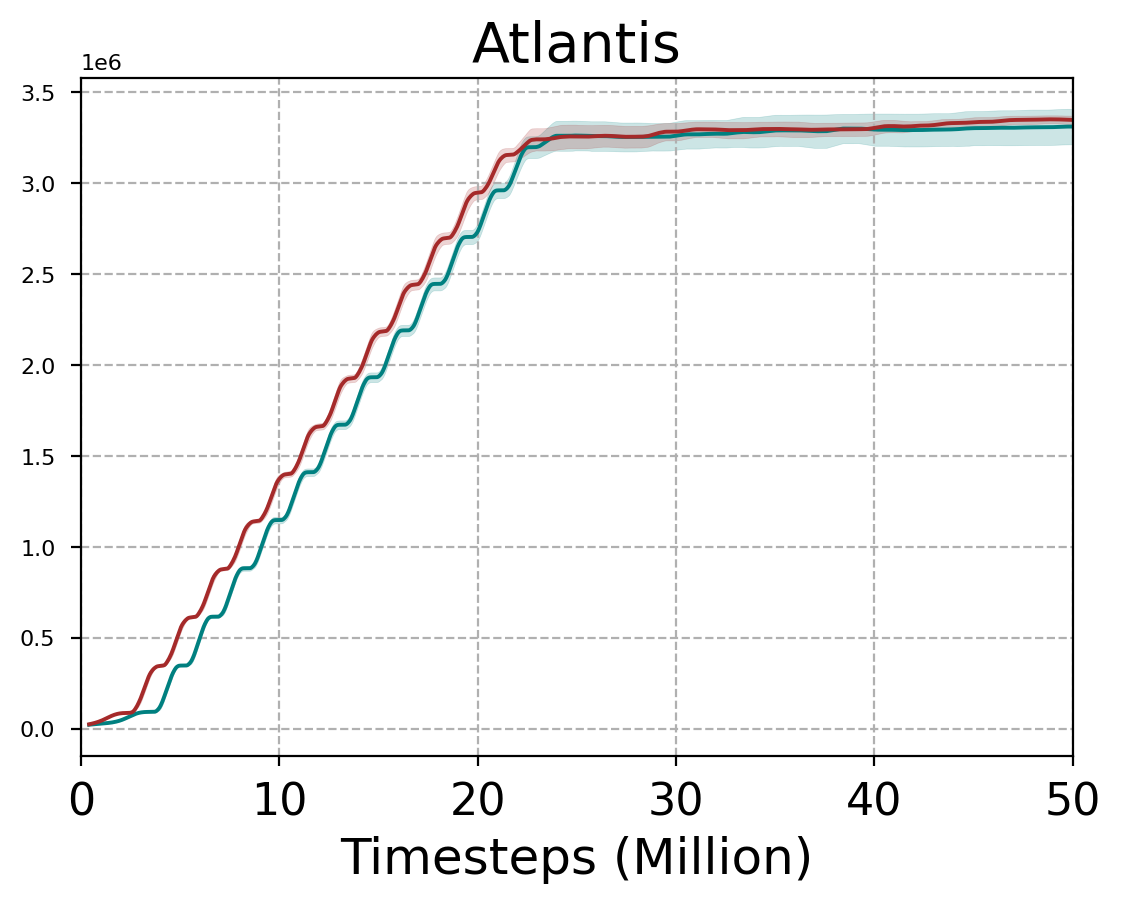}}
		\end{minipage}
		\begin{minipage}[b]{.16\linewidth}
			\centering
			\subfigure{\includegraphics[width=0.99\textwidth]{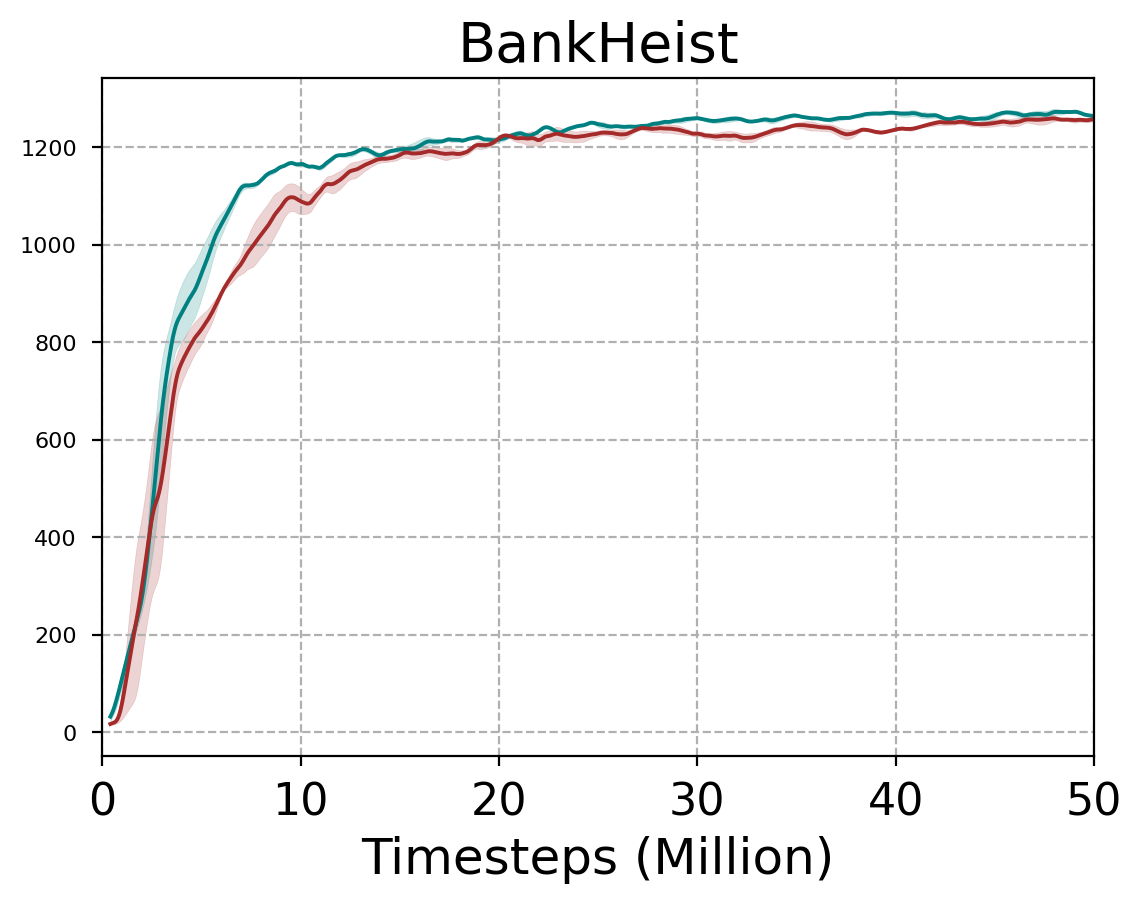}}
		\end{minipage}
		\\
		\begin{minipage}[b]{.16\linewidth}
			\centering
			\subfigure{\includegraphics[width=0.99\textwidth]{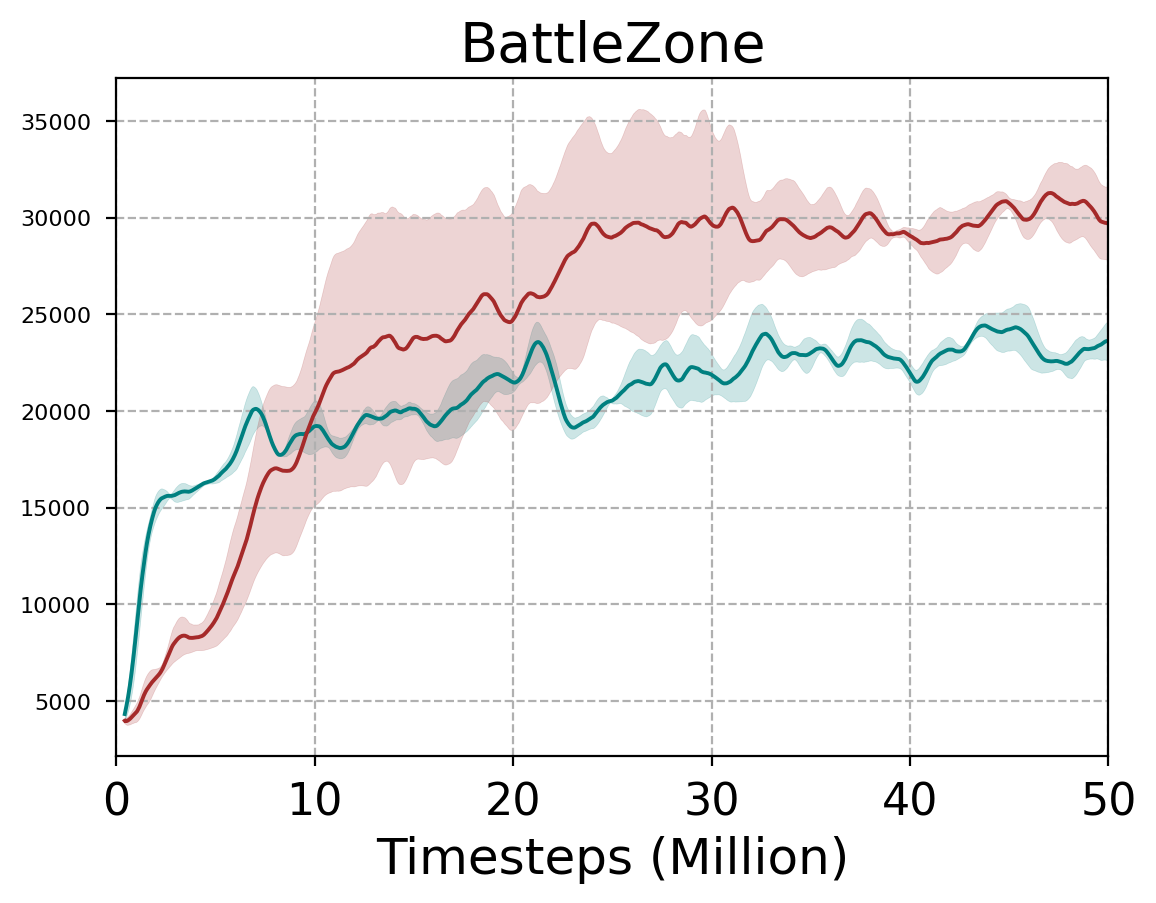}}
		\end{minipage}
		\begin{minipage}[b]{.16\linewidth}
			\centering
			\subfigure{\includegraphics[width=0.99\textwidth]{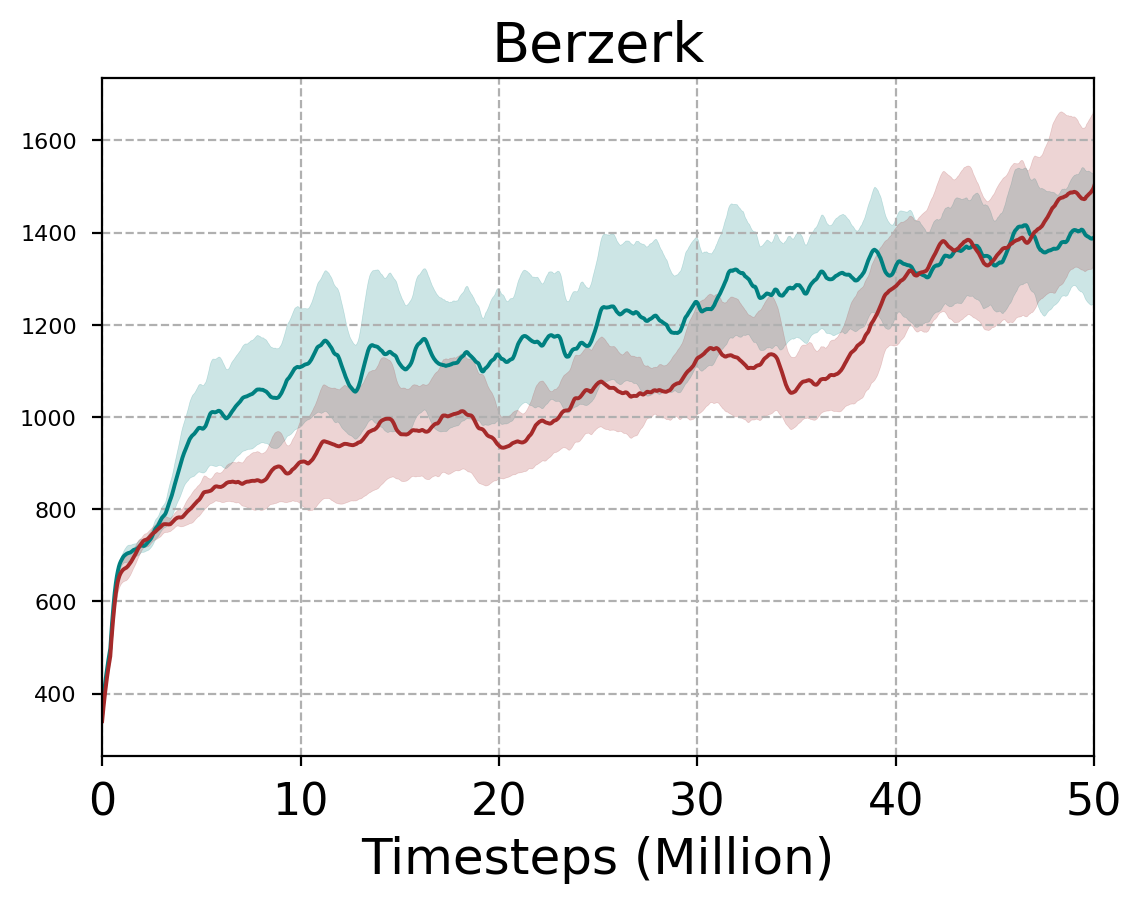}}
		\end{minipage}
		\begin{minipage}[b]{.16\linewidth}
			\centering
			\subfigure{\includegraphics[width=0.99\textwidth]{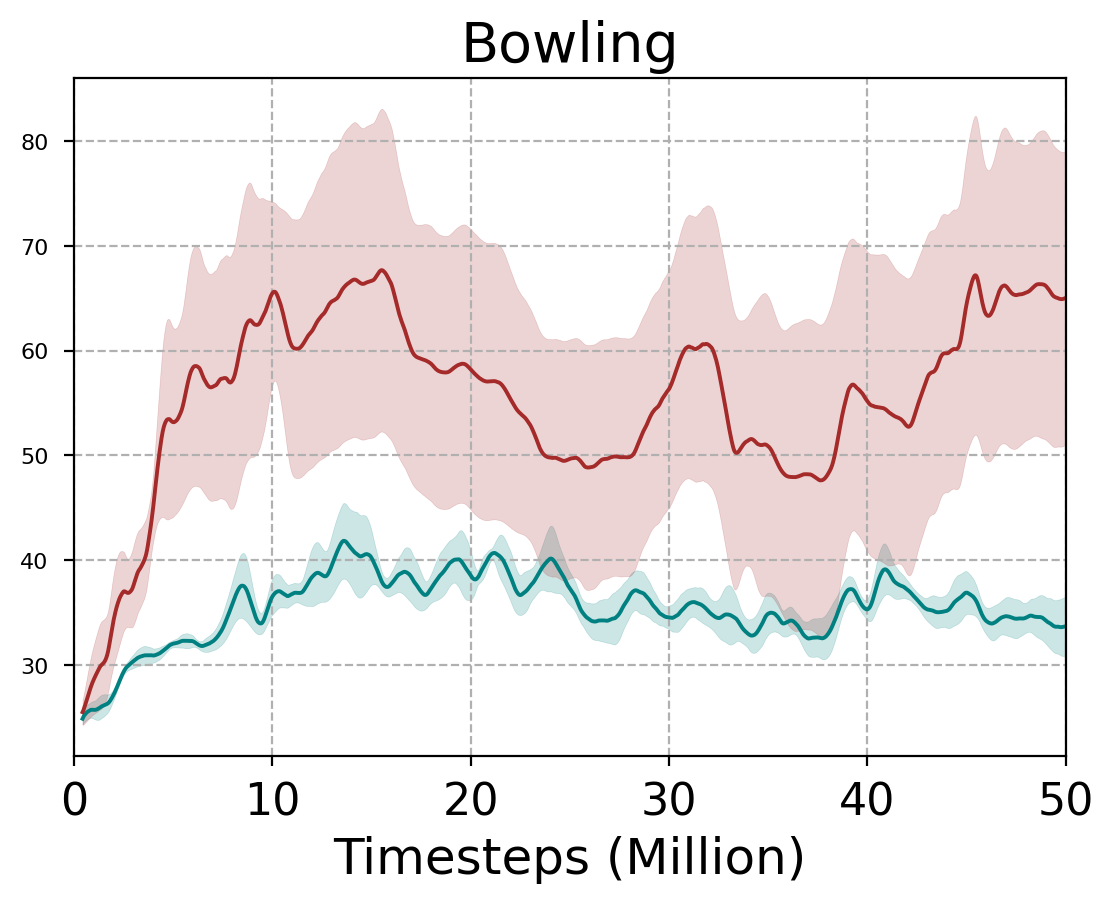}}
		\end{minipage}
		\begin{minipage}[b]{.16\linewidth}
			\centering
			\subfigure{\includegraphics[width=0.99\textwidth]{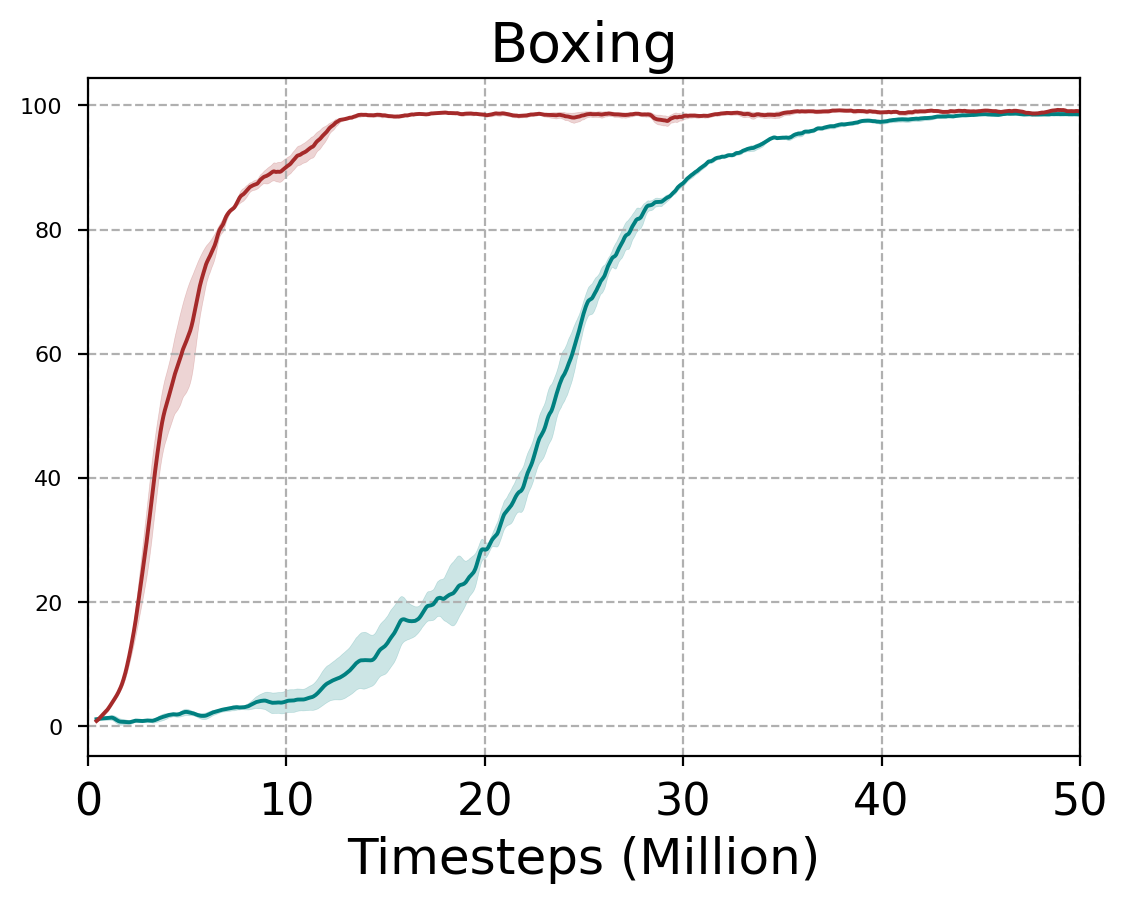}}
		\end{minipage}
		\begin{minipage}[b]{.16\linewidth}
			\centering
			\subfigure{\includegraphics[width=0.99\textwidth]{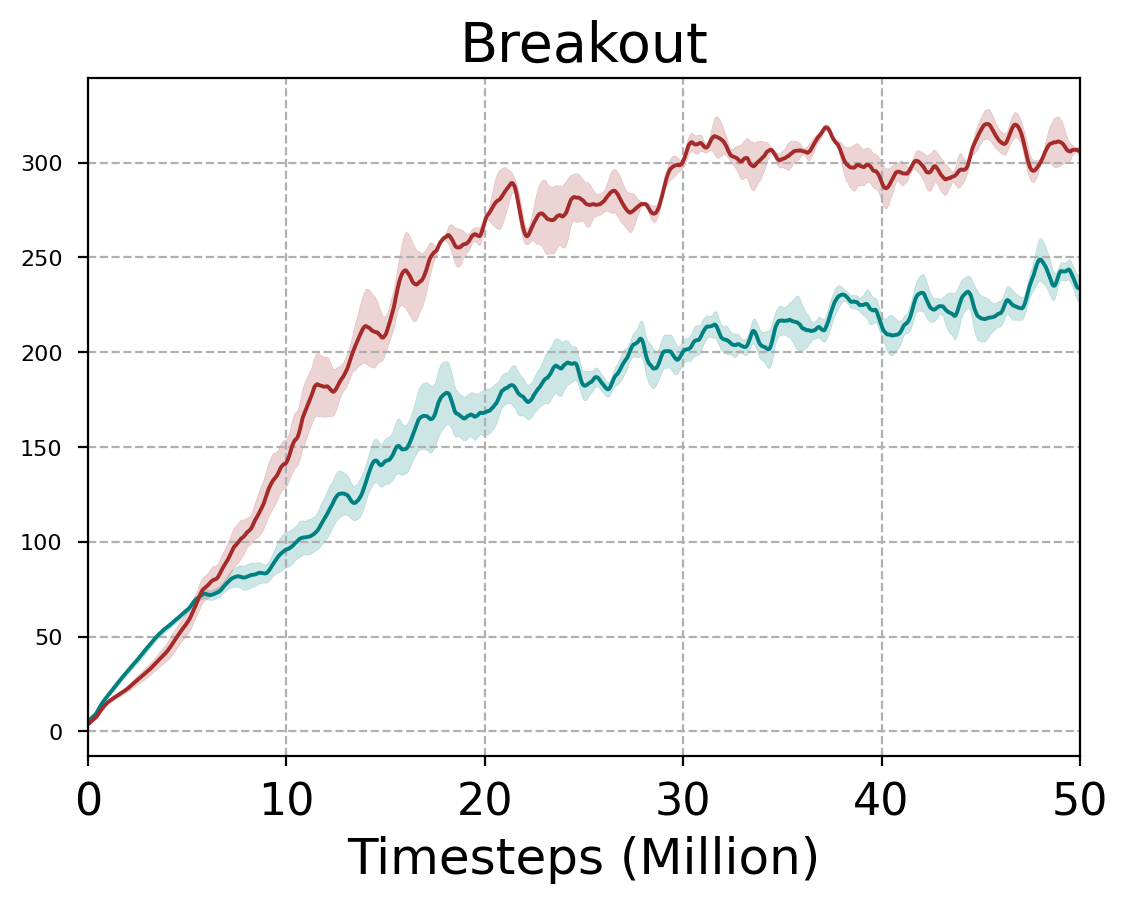}}
		\end{minipage}
		\begin{minipage}[b]{.16\linewidth}
			\centering
			\subfigure{\includegraphics[width=0.99\textwidth]{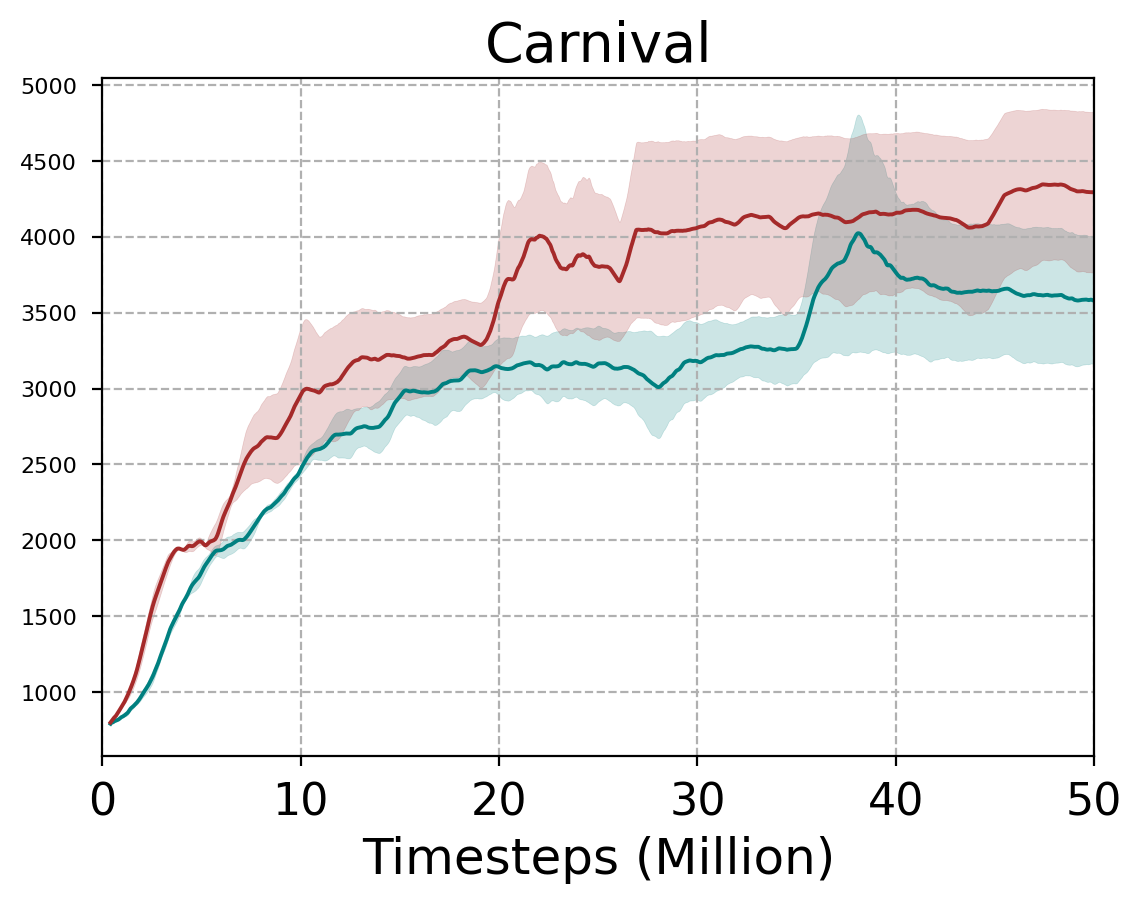}}
		\end{minipage}
		\\
		\begin{minipage}[b]{.16\linewidth}
			\centering
			\subfigure{\includegraphics[width=0.99\textwidth]{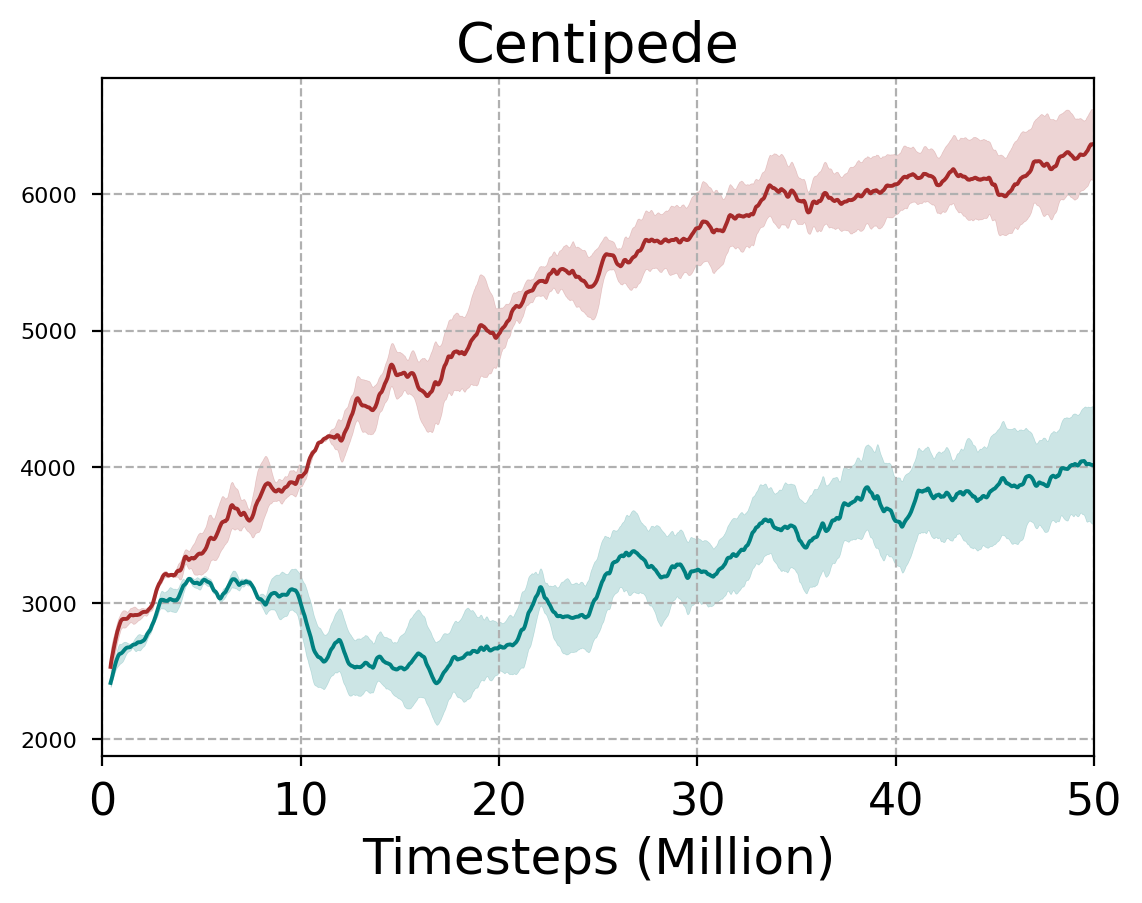}}
		\end{minipage}
		\begin{minipage}[b]{.16\linewidth}
			\centering
			\subfigure{\includegraphics[width=0.99\textwidth]{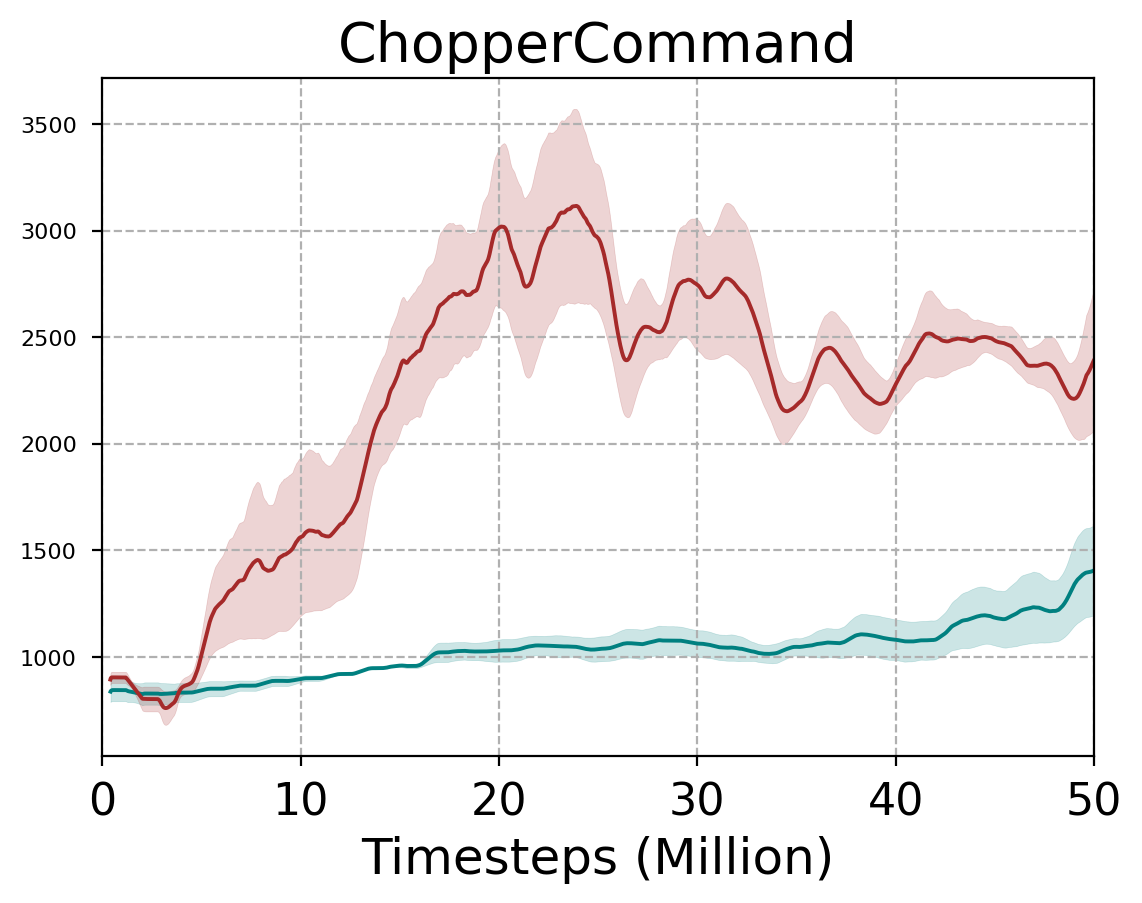}}
		\end{minipage}
		\begin{minipage}[b]{.16\linewidth}
			\centering
			\subfigure{\includegraphics[width=0.99\textwidth]{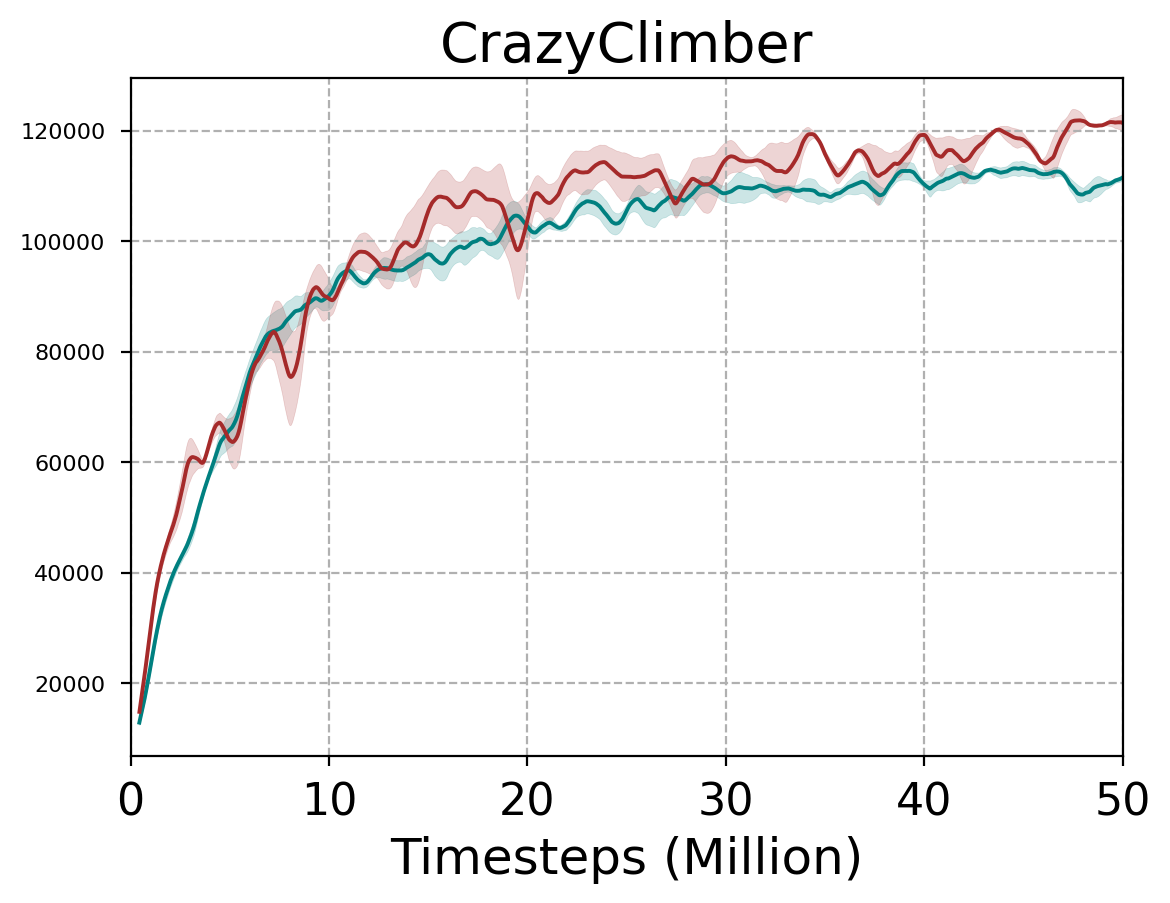}}
		\end{minipage}
		\begin{minipage}[b]{.16\linewidth}
			\centering
			\subfigure{\includegraphics[width=0.99\textwidth]{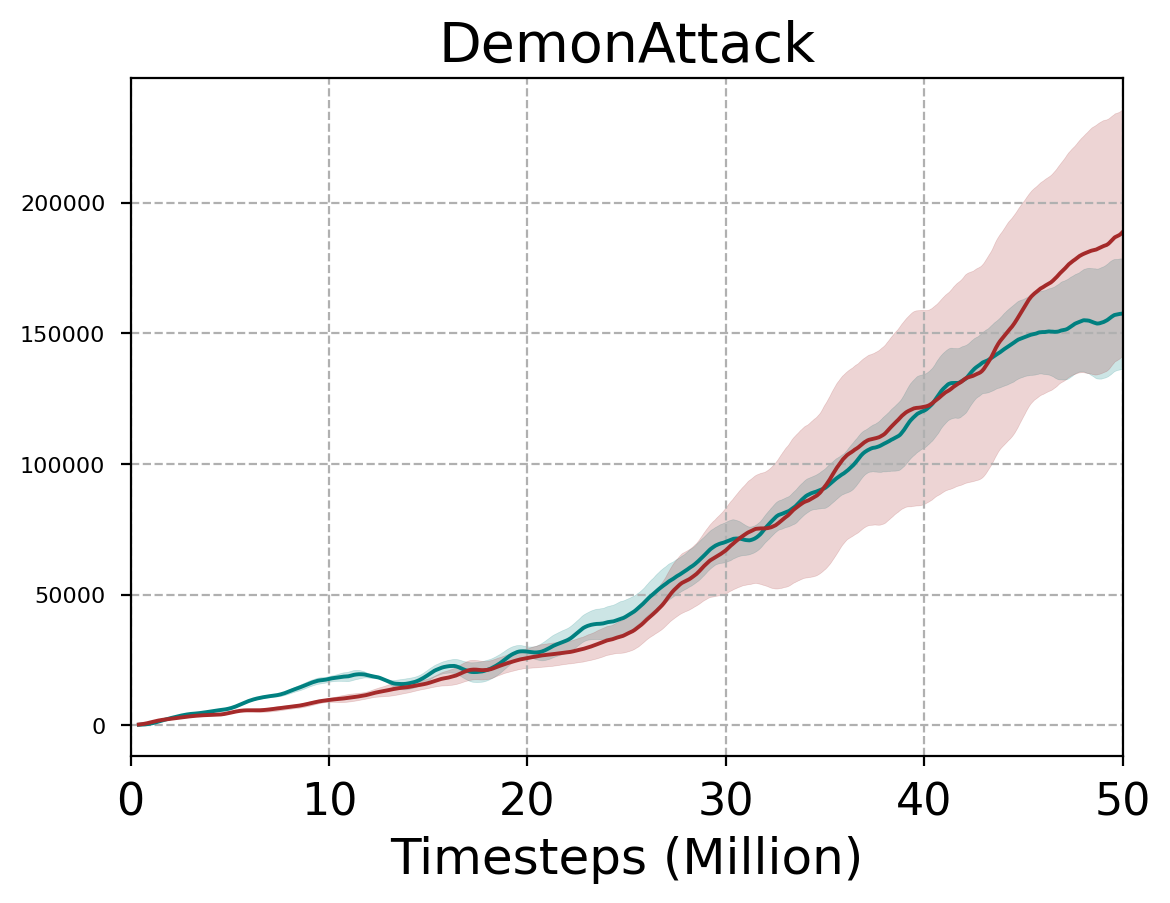}}
		\end{minipage}
		\begin{minipage}[b]{.16\linewidth}
			\centering
			\subfigure{\includegraphics[width=0.99\textwidth]{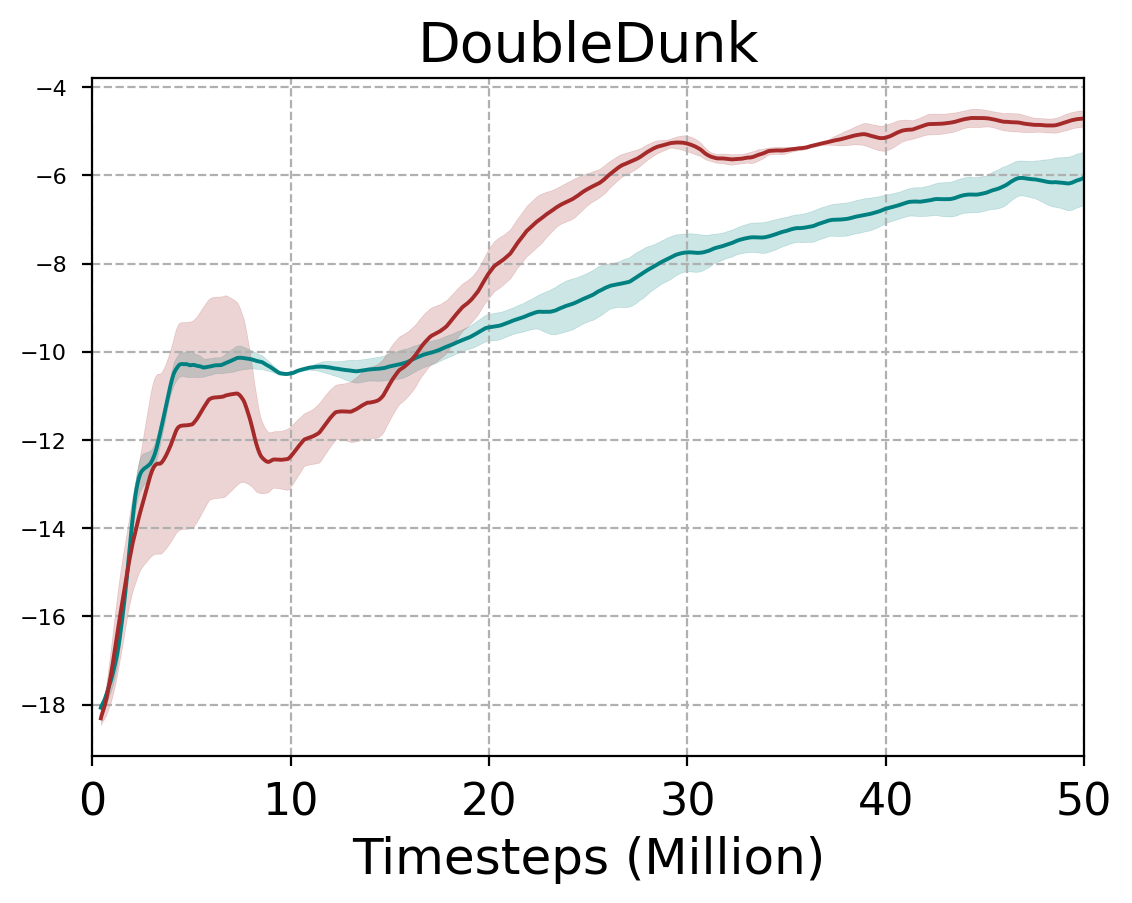}}
		\end{minipage}
		\begin{minipage}[b]{.16\linewidth}
			\centering
			\subfigure{\includegraphics[width=0.99\textwidth]{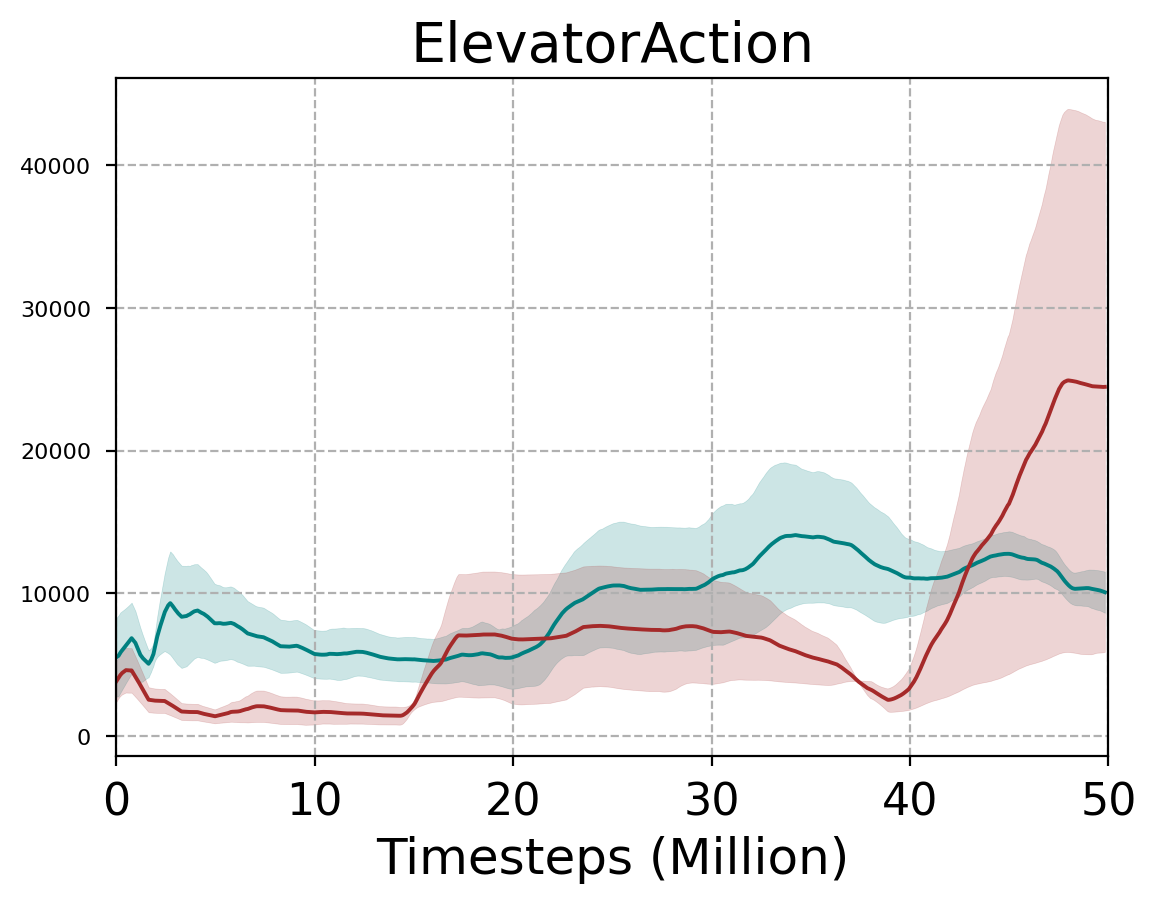}}
		\end{minipage}
		\\
		\begin{minipage}[b]{.16\linewidth}
			\centering
			\subfigure{\includegraphics[width=0.99\textwidth]{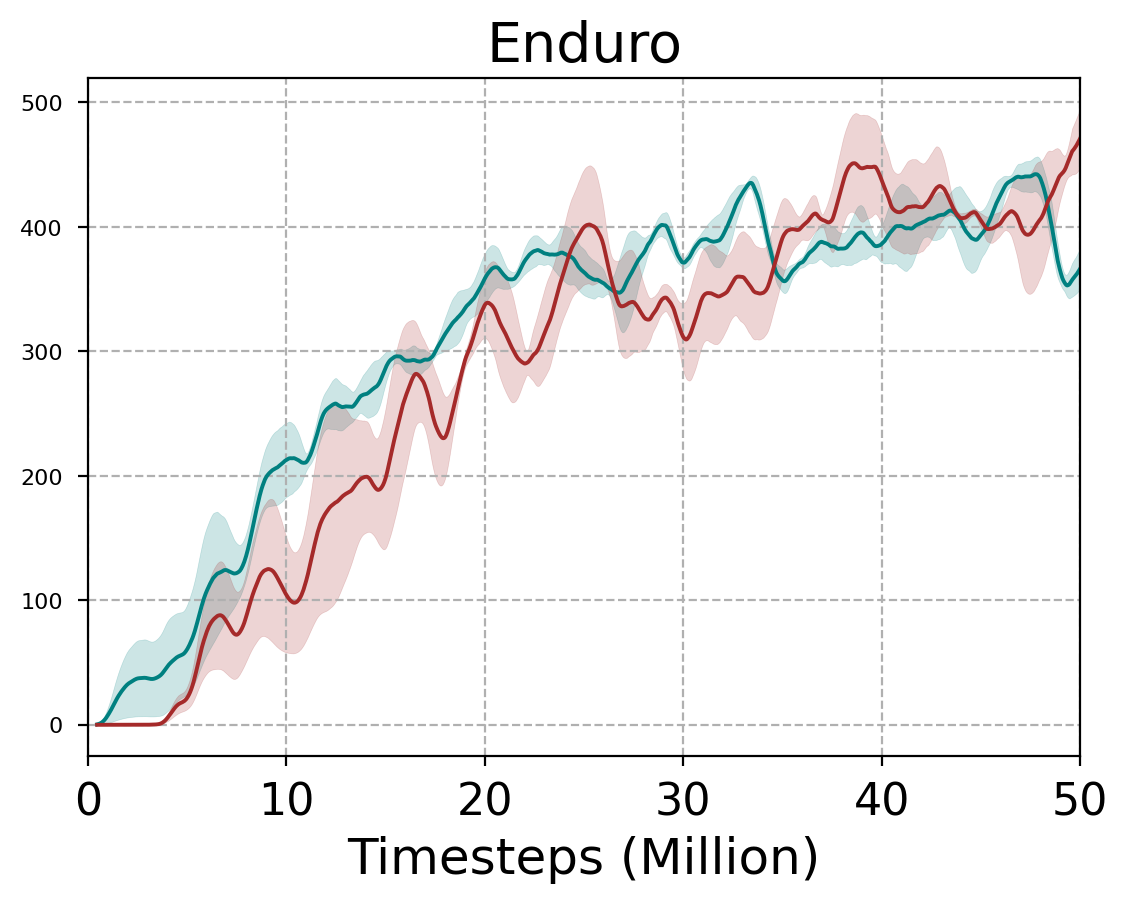}}
		\end{minipage}
		\begin{minipage}[b]{.16\linewidth}
			\centering
			\subfigure{\includegraphics[width=0.99\textwidth]{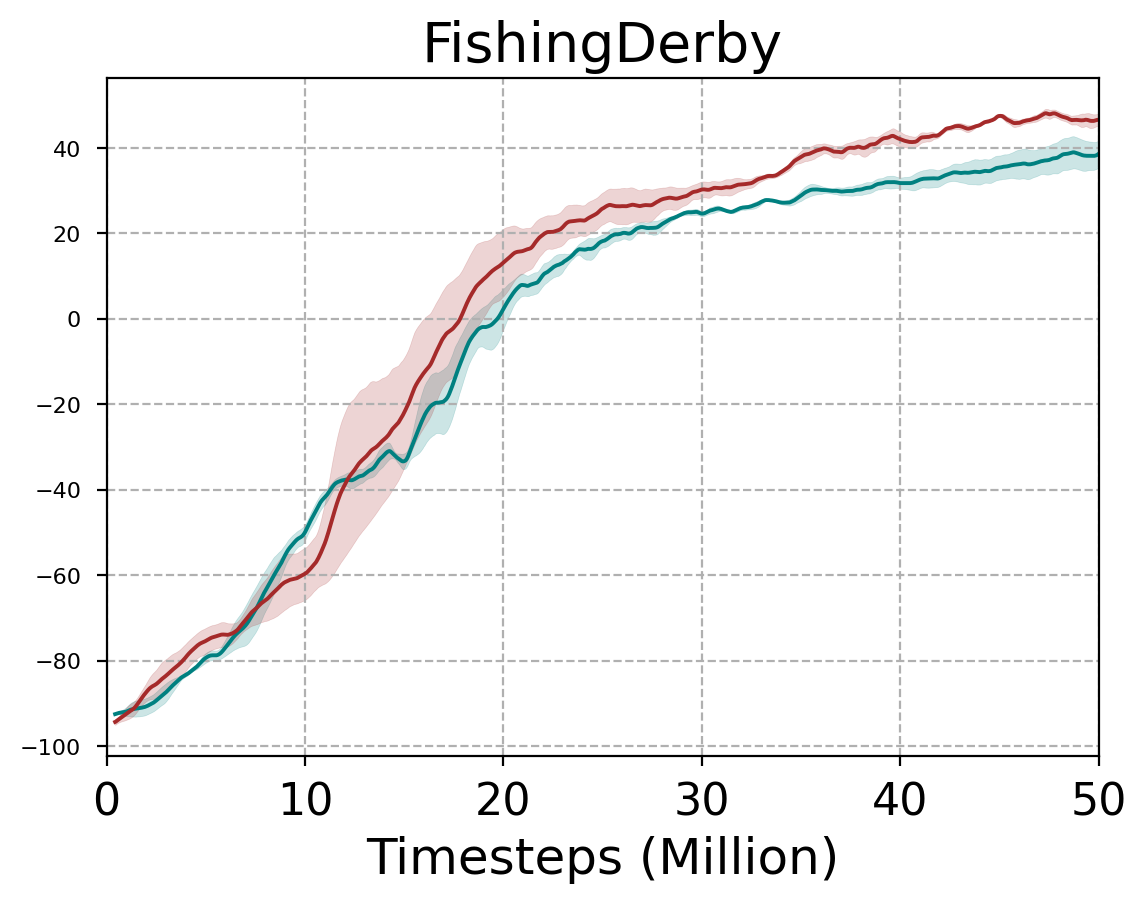}}
		\end{minipage}
		\begin{minipage}[b]{.16\linewidth}
			\centering
			\subfigure{\includegraphics[width=0.99\textwidth]{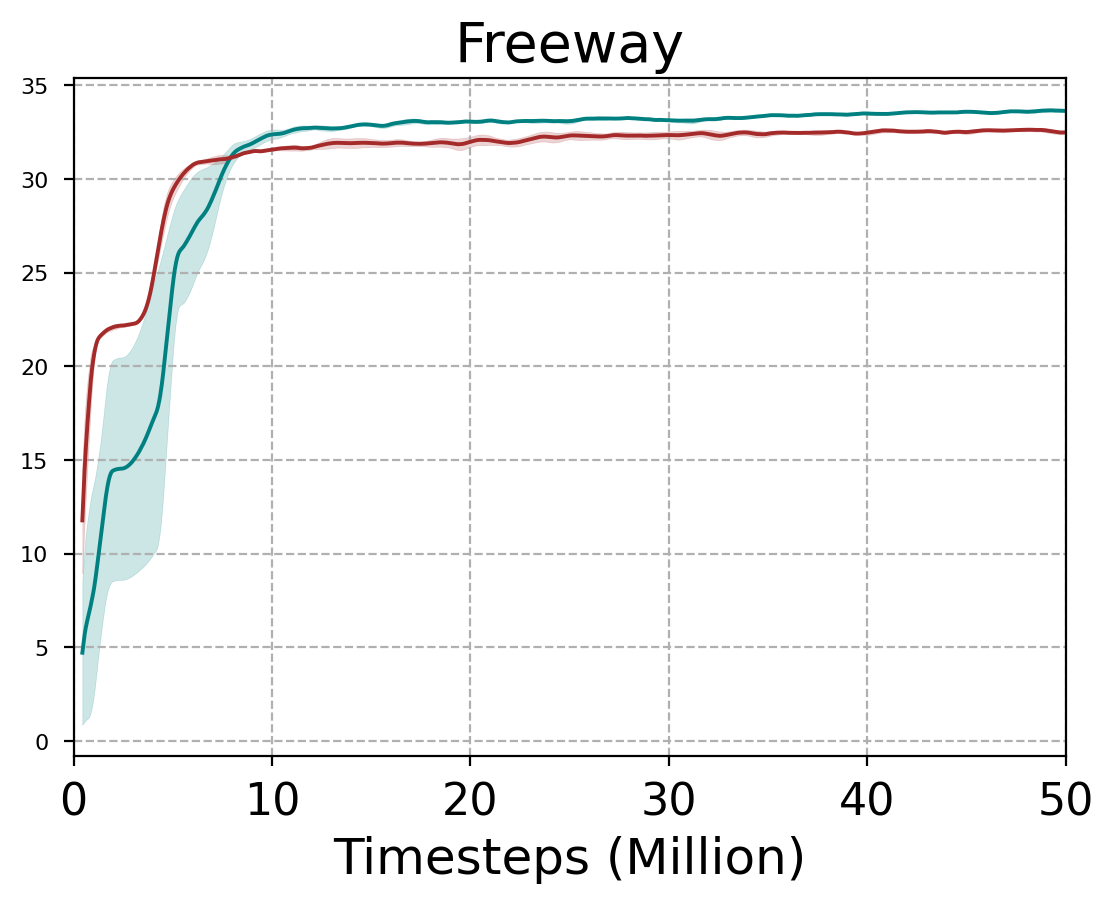}}
		\end{minipage}
		\begin{minipage}[b]{.16\linewidth}
			\centering
			\subfigure{\includegraphics[width=0.99\textwidth]{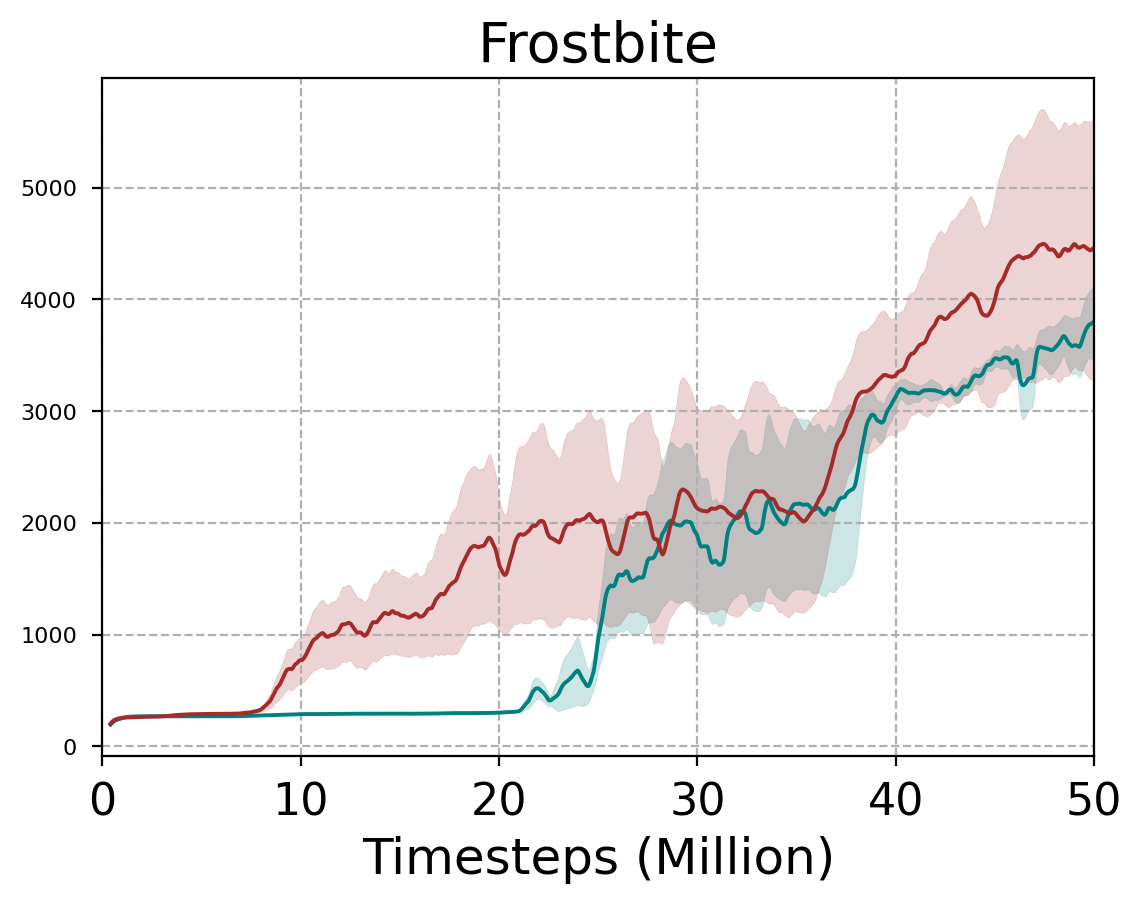}}
		\end{minipage}
		\begin{minipage}[b]{.16\linewidth}
			\centering
			\subfigure{\includegraphics[width=0.99\textwidth]{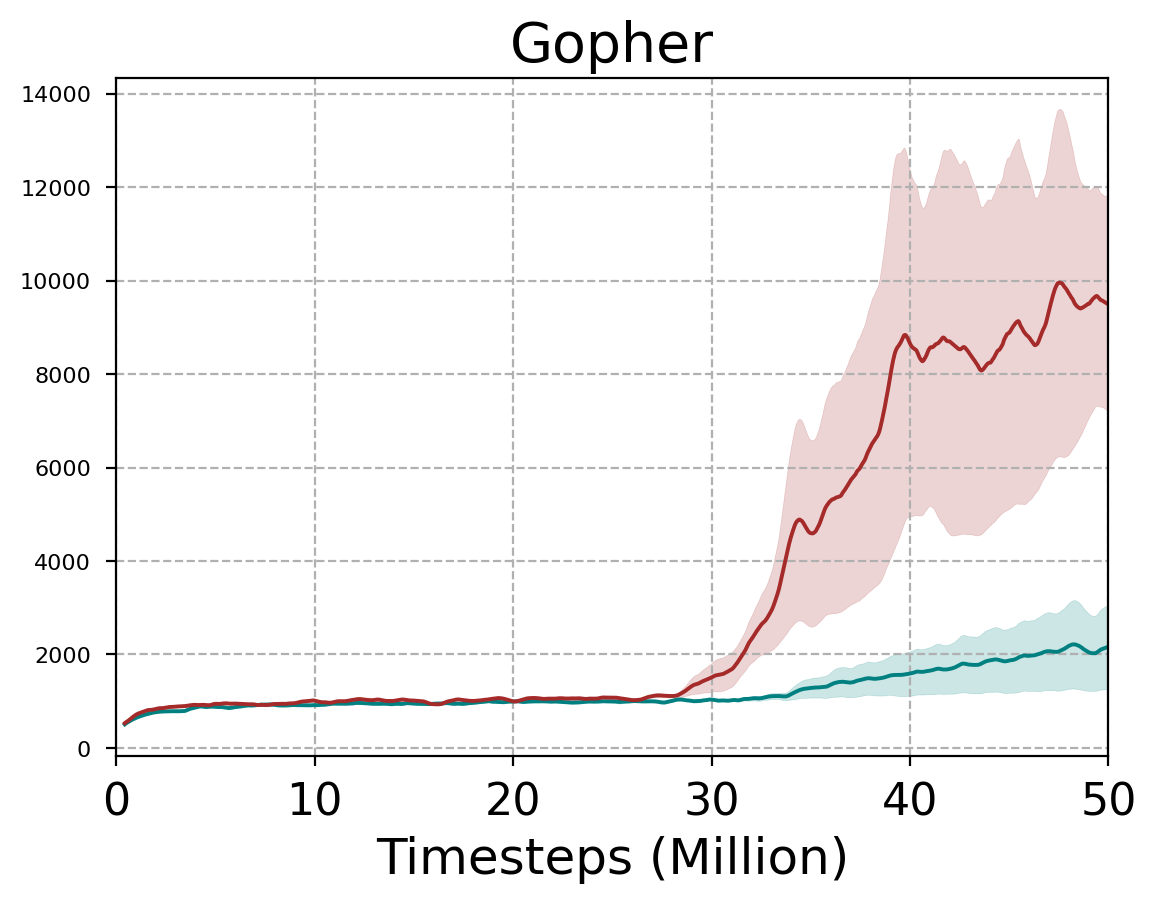}}
		\end{minipage}
		\begin{minipage}[b]{.16\linewidth}
			\centering
			\subfigure{\includegraphics[width=0.99\textwidth]{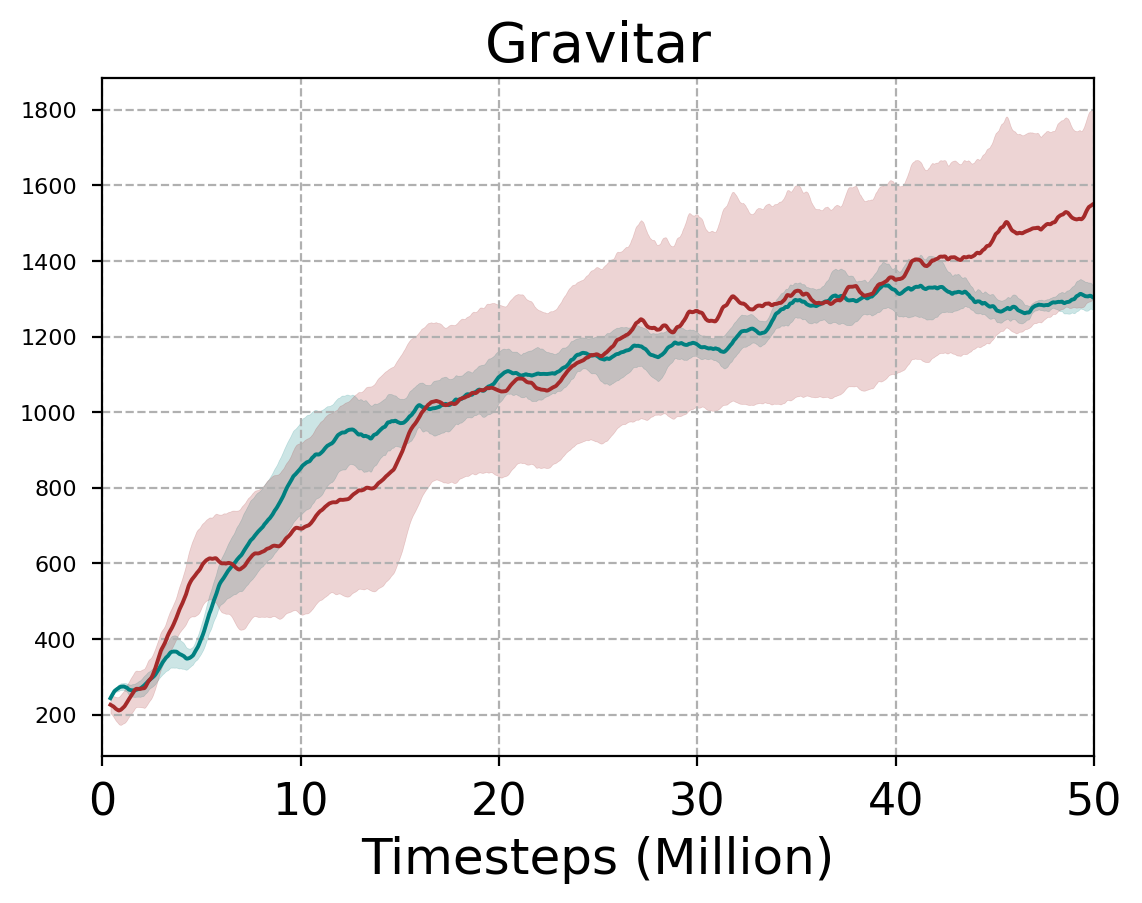}}
		\end{minipage}
		\\
		\begin{minipage}[b]{.16\linewidth}
			\centering
			\subfigure{\includegraphics[width=0.99\textwidth]{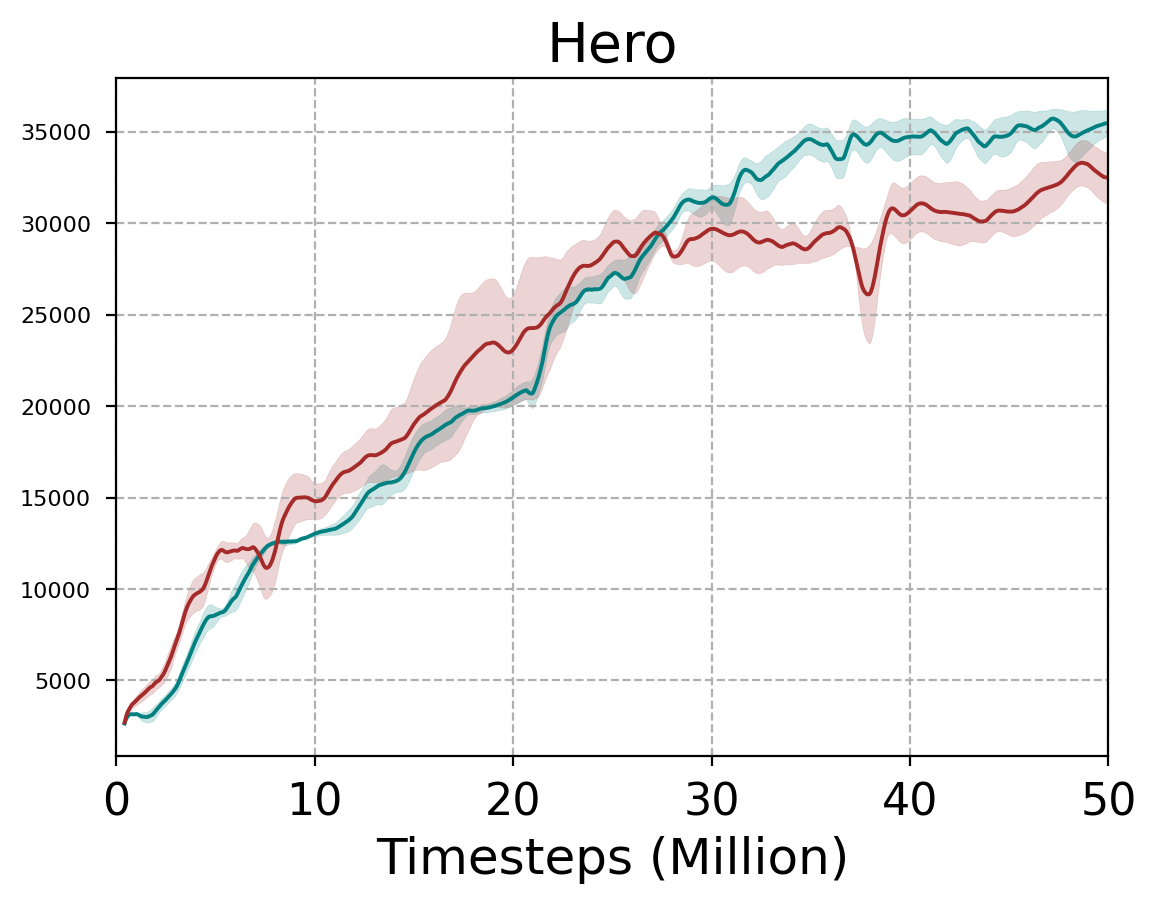}}
		\end{minipage}
		\begin{minipage}[b]{.16\linewidth}
			\centering
			\subfigure{\includegraphics[width=0.99\textwidth]{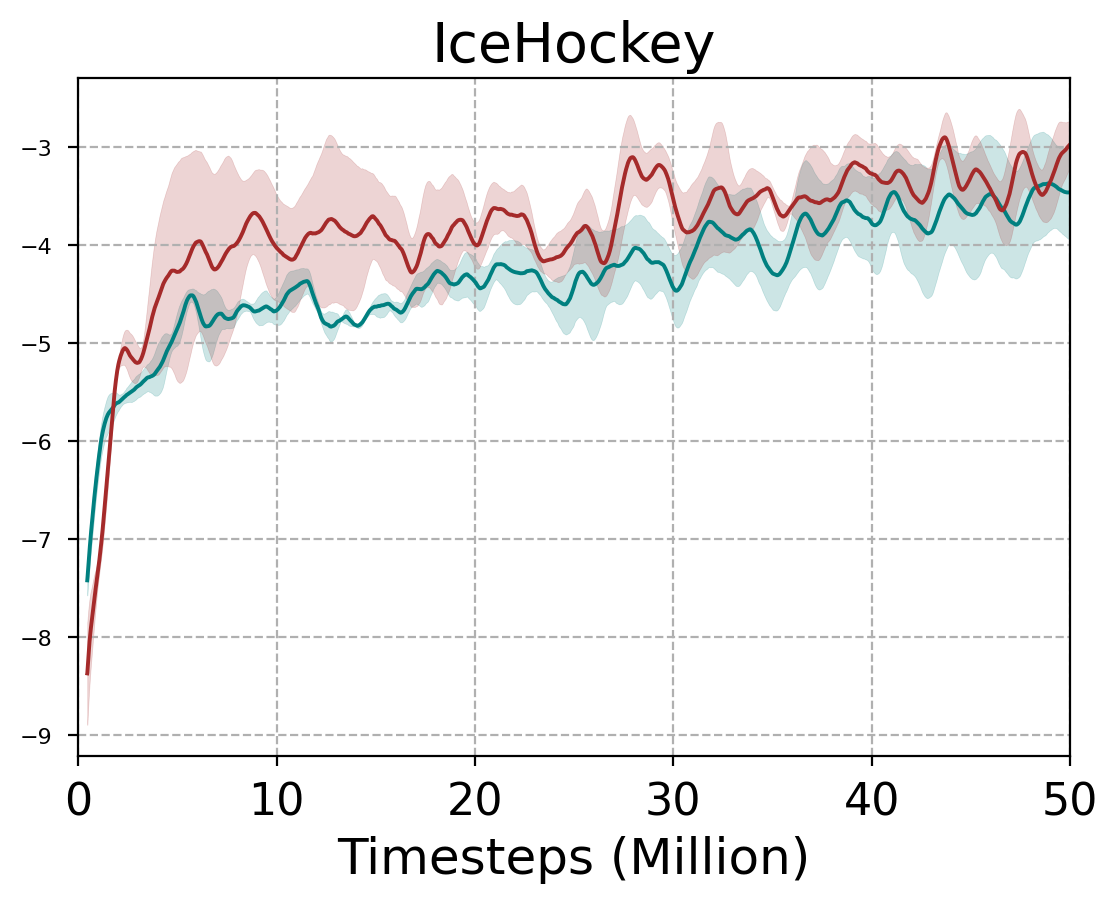}}
		\end{minipage}
		\begin{minipage}[b]{.16\linewidth}
			\centering
			\subfigure{\includegraphics[width=0.99\textwidth]{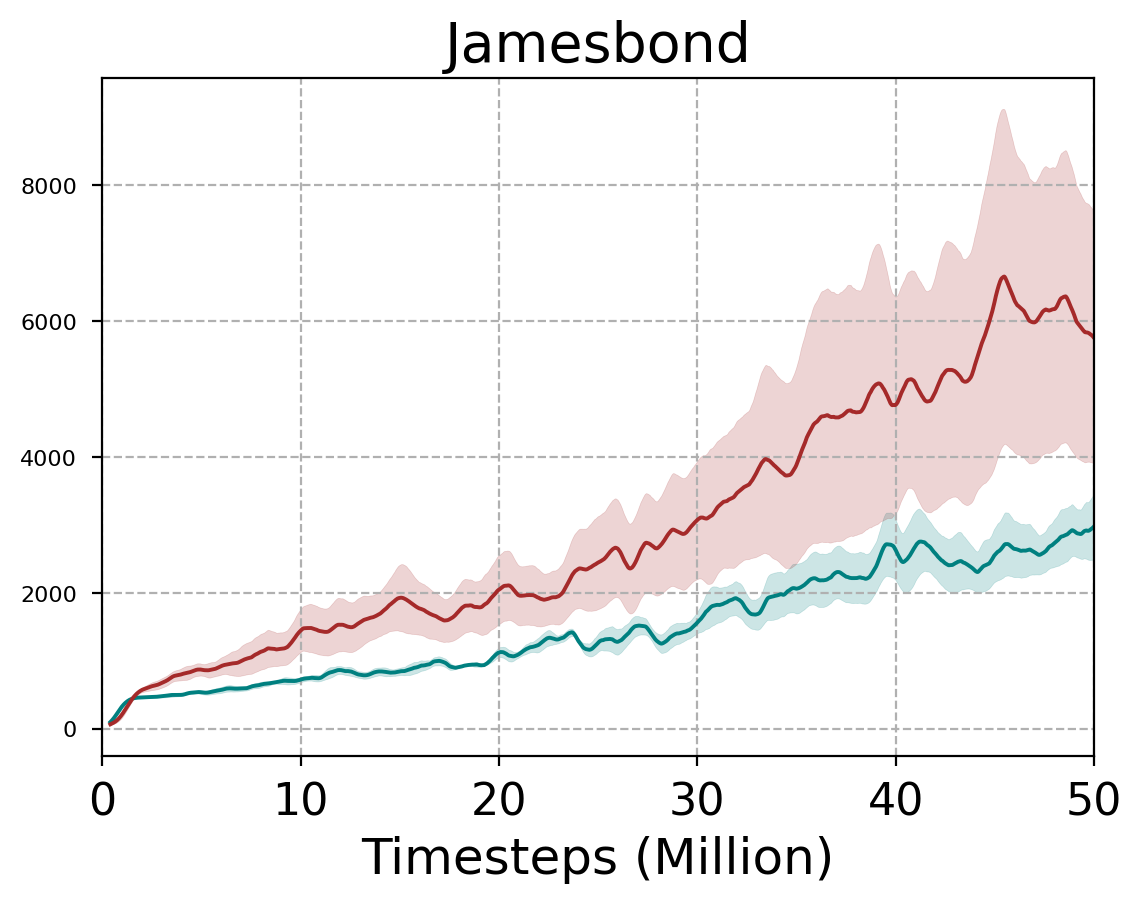}}
		\end{minipage}
		\begin{minipage}[b]{.16\linewidth}
			\centering
			\subfigure{\includegraphics[width=0.99\textwidth]{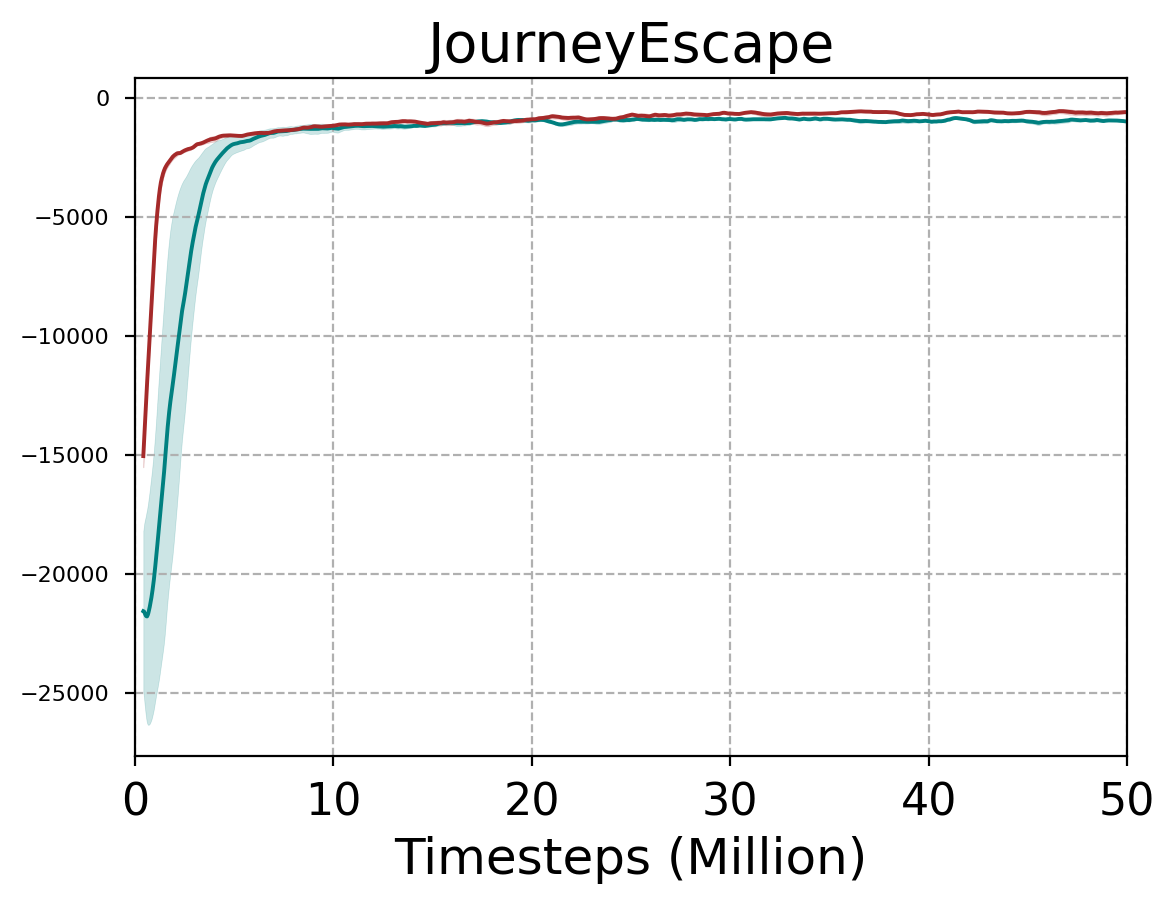}}
		\end{minipage}
		\begin{minipage}[b]{.16\linewidth}
			\centering
			\subfigure{\includegraphics[width=0.99\textwidth]{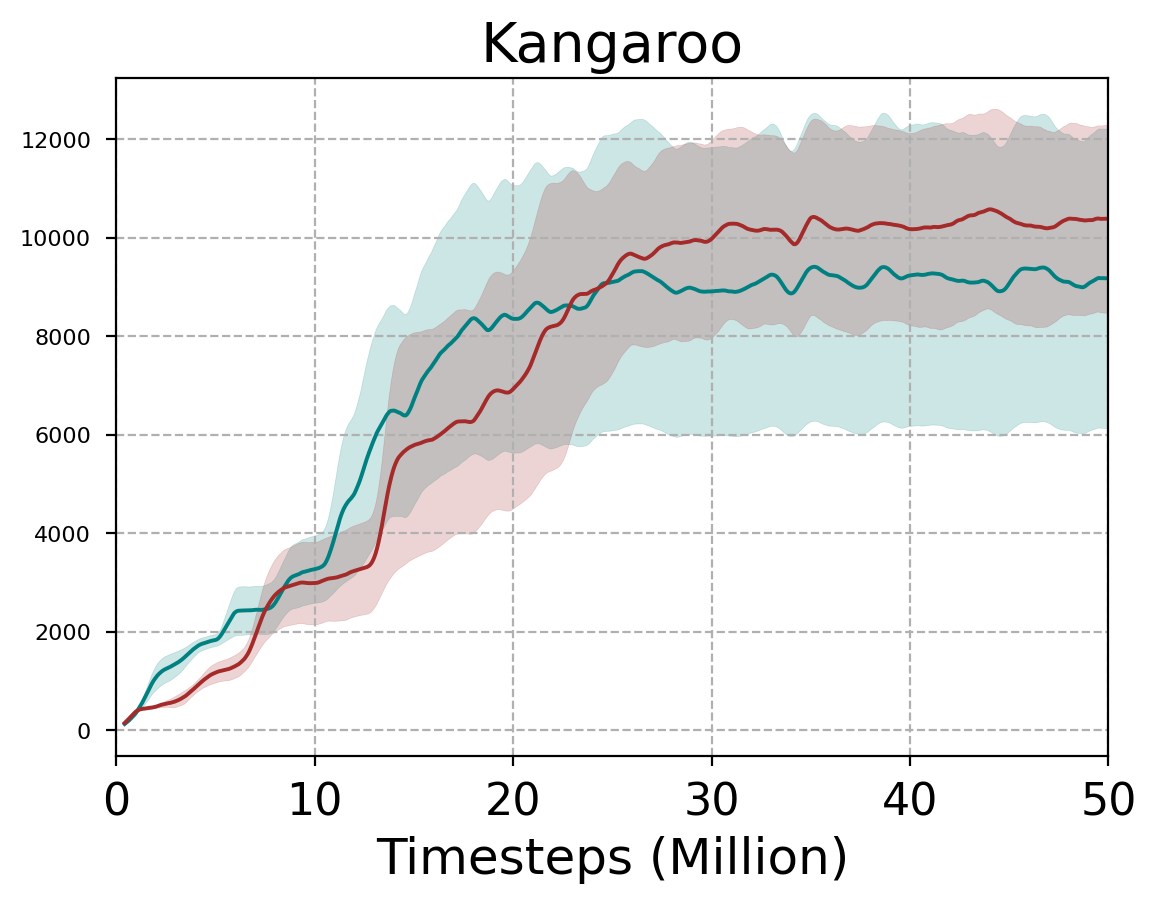}}
		\end{minipage}
		\begin{minipage}[b]{.16\linewidth}
			\centering
			\subfigure{\includegraphics[width=0.99\textwidth]{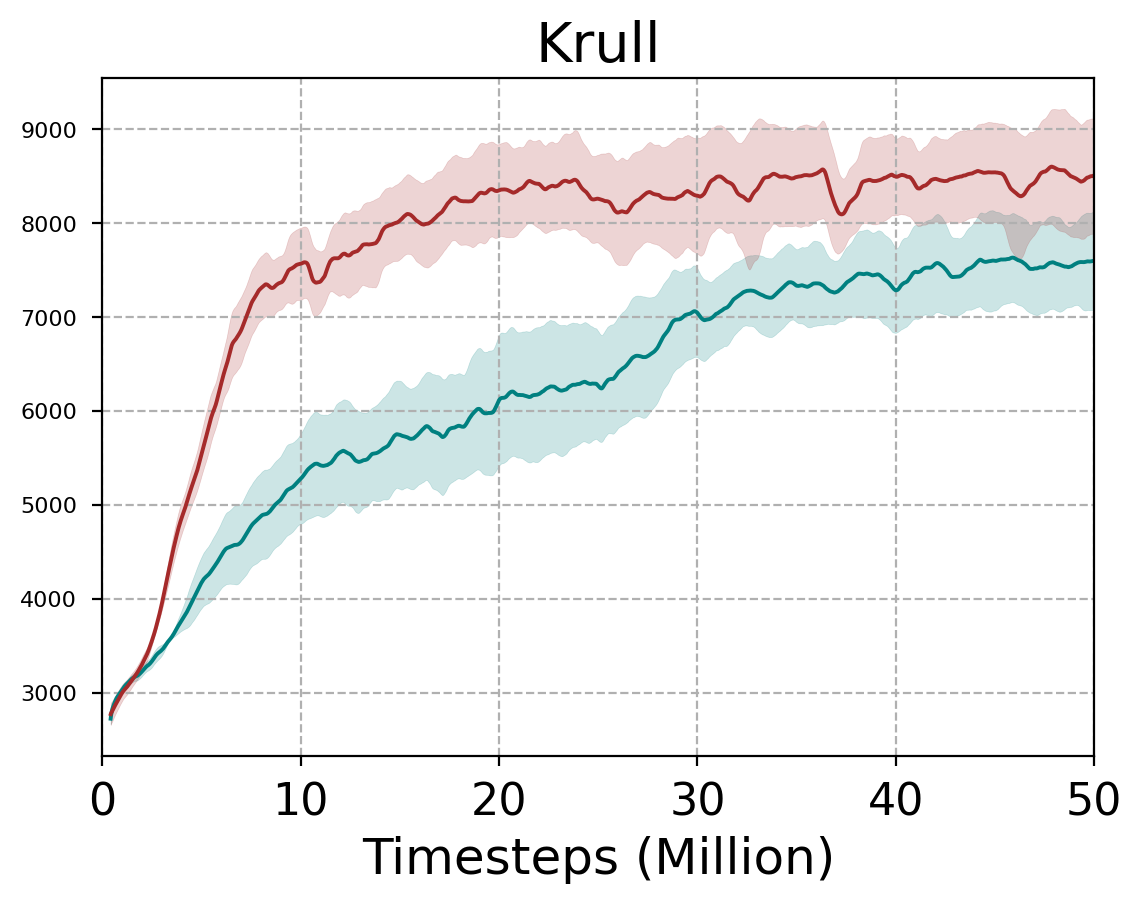}}
		\end{minipage}
		\\
		\begin{minipage}[b]{.16\linewidth}
			\centering
			\subfigure{\includegraphics[width=0.99\textwidth]{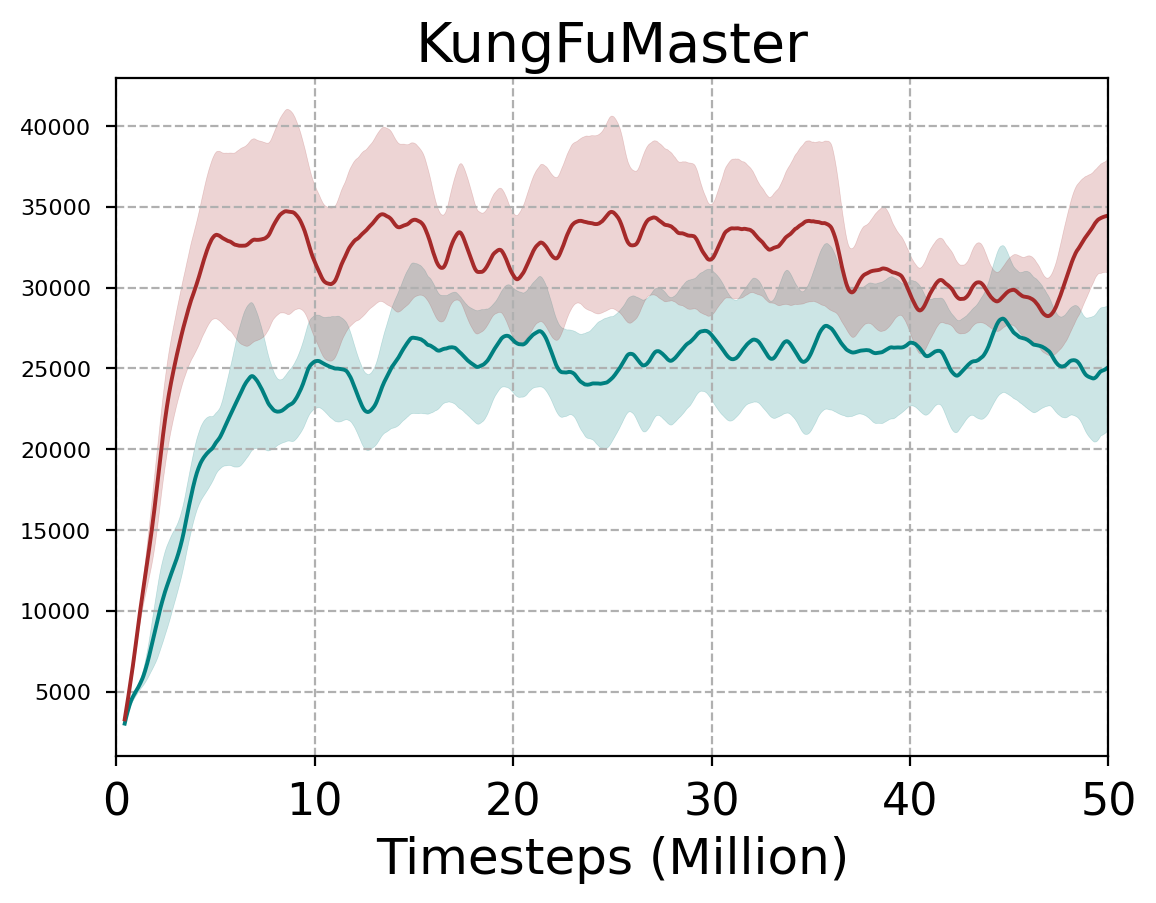}}
		\end{minipage}
		\begin{minipage}[b]{.16\linewidth}
			\centering
			\subfigure{\includegraphics[width=0.99\textwidth]{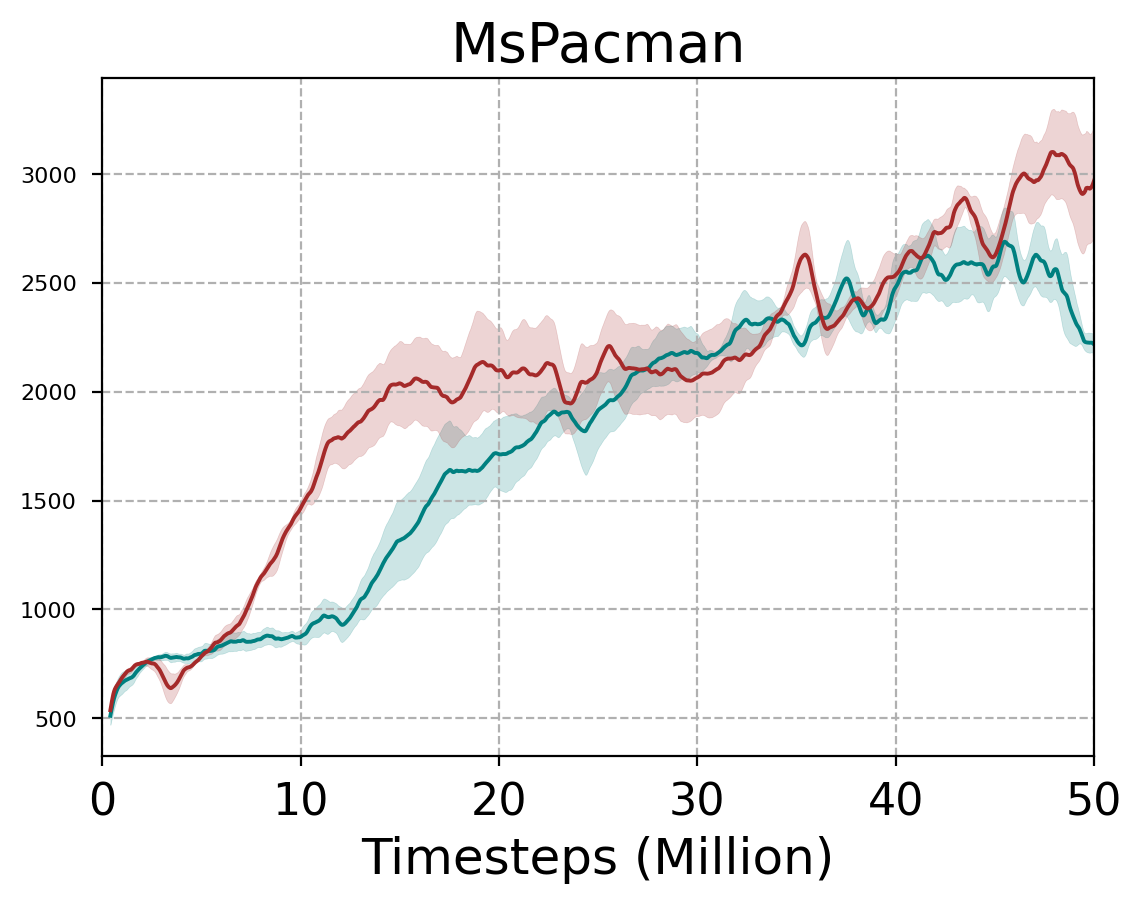}}
		\end{minipage}
		\begin{minipage}[b]{.16\linewidth}
			\centering
			\subfigure{\includegraphics[width=0.99\textwidth]{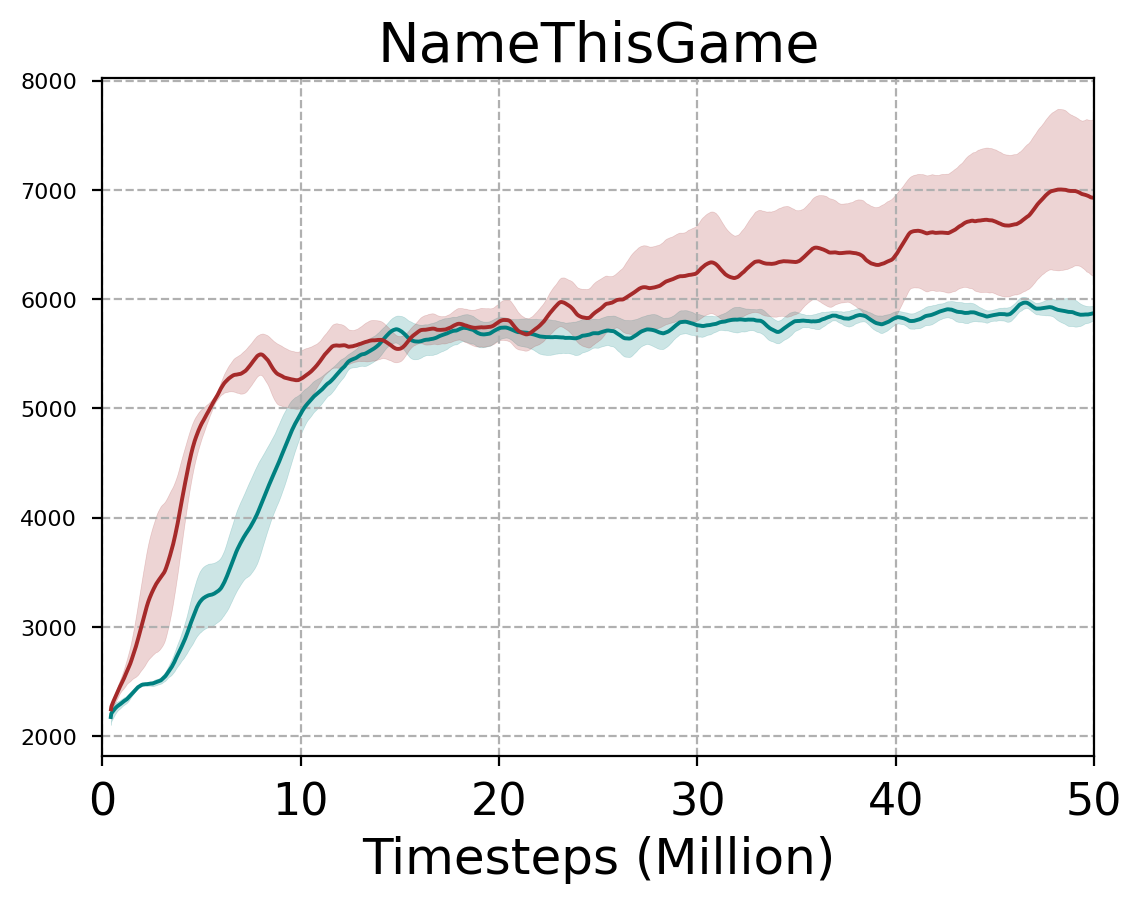}}
		\end{minipage}
		\begin{minipage}[b]{.16\linewidth}
			\centering
			\subfigure{\includegraphics[width=0.99\textwidth]{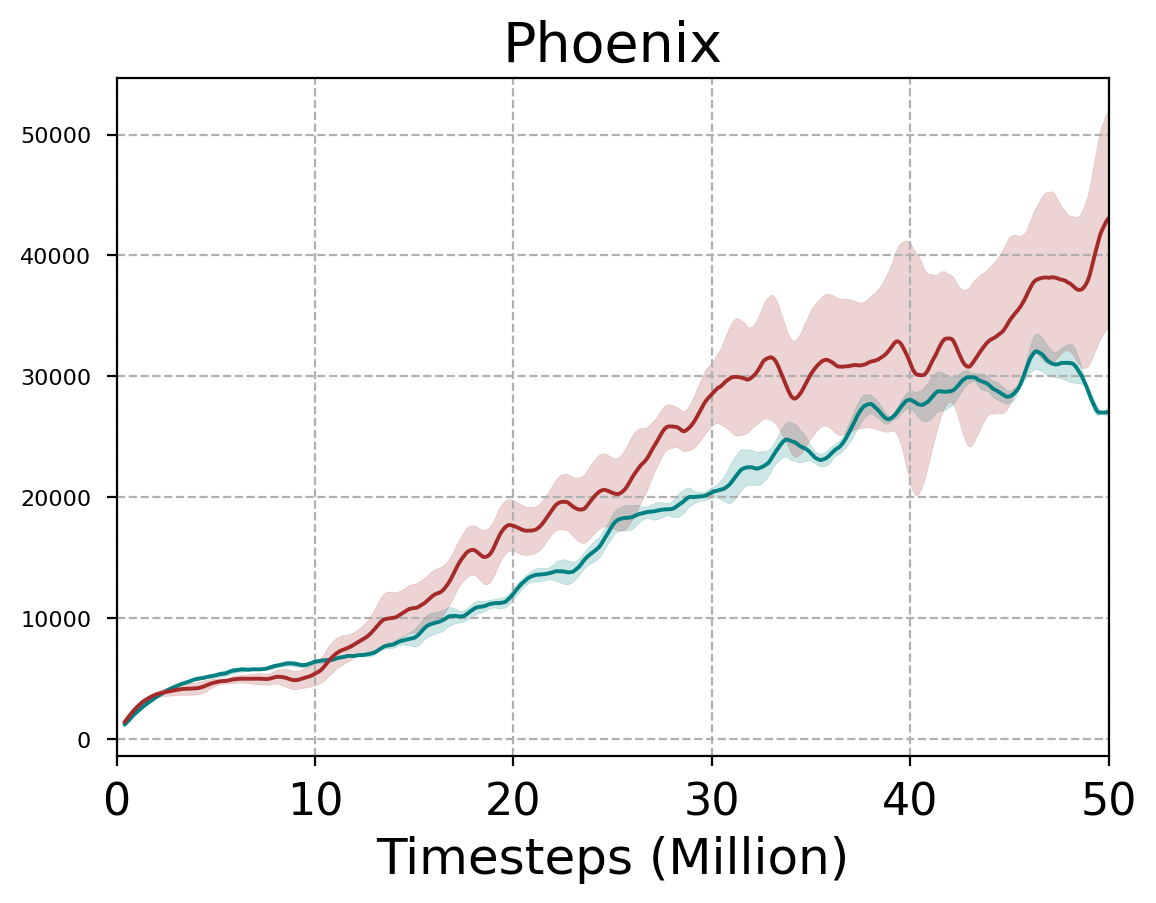}}
		\end{minipage}
		\begin{minipage}[b]{.16\linewidth}
			\centering
			\subfigure{\includegraphics[width=0.99\textwidth]{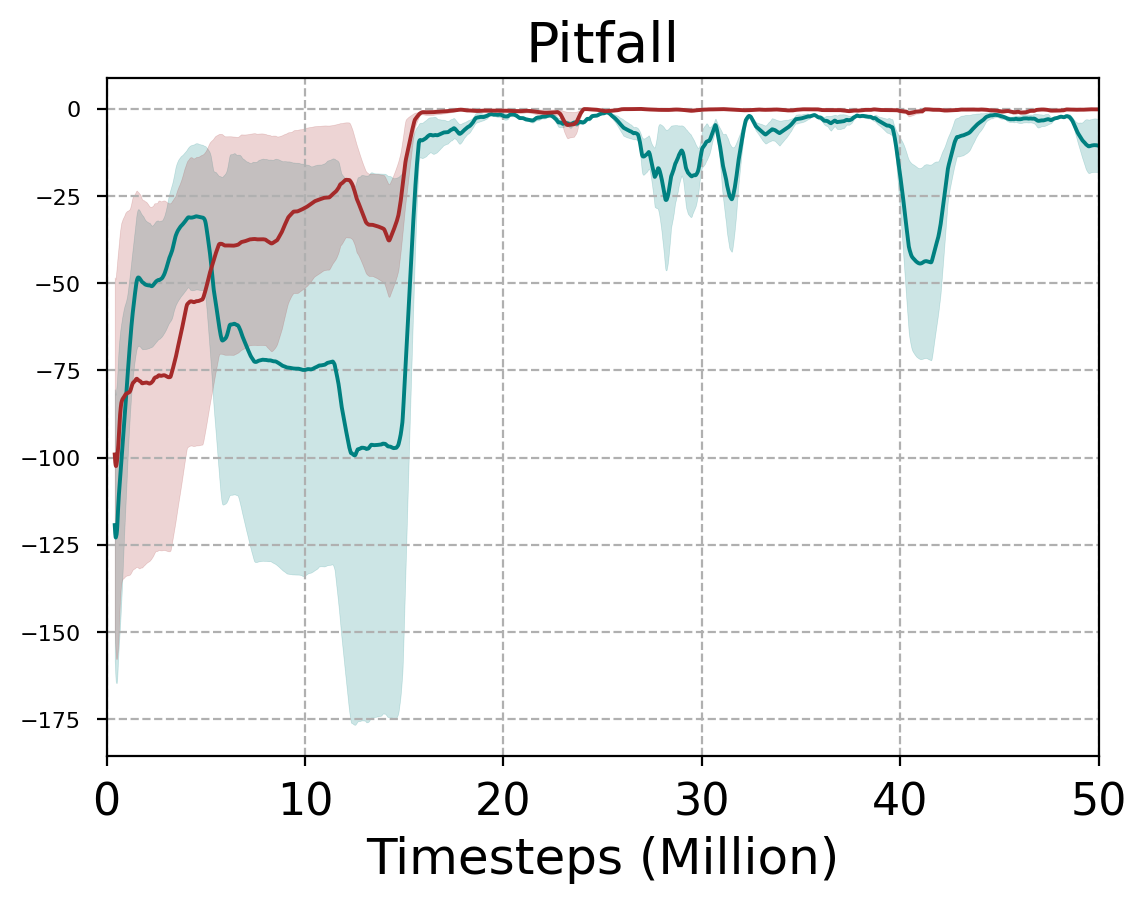}}
		\end{minipage}
		\begin{minipage}[b]{.16\linewidth}
			\centering
			\subfigure{\includegraphics[width=0.99\textwidth]{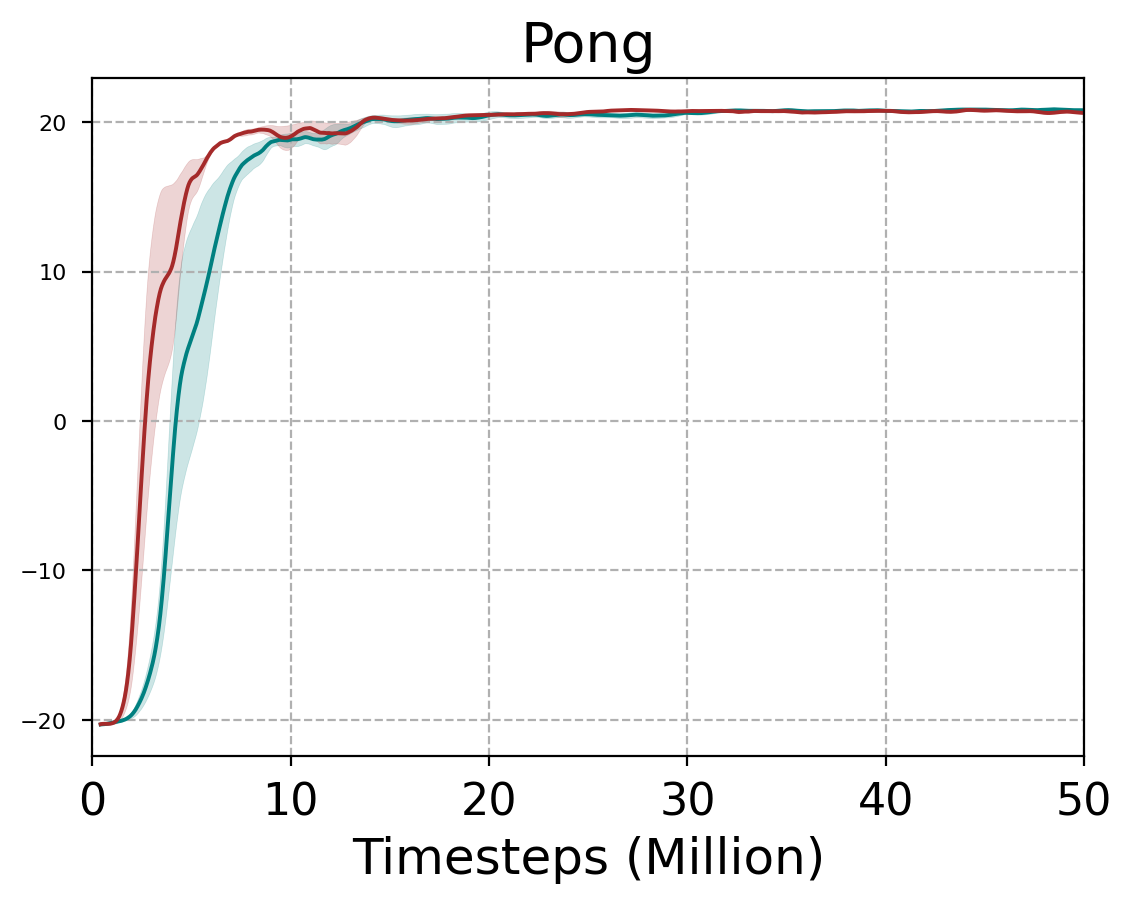}}
		\end{minipage}
		\\
		\begin{minipage}[b]{.16\linewidth}
			\centering
			\subfigure{\includegraphics[width=0.99\textwidth]{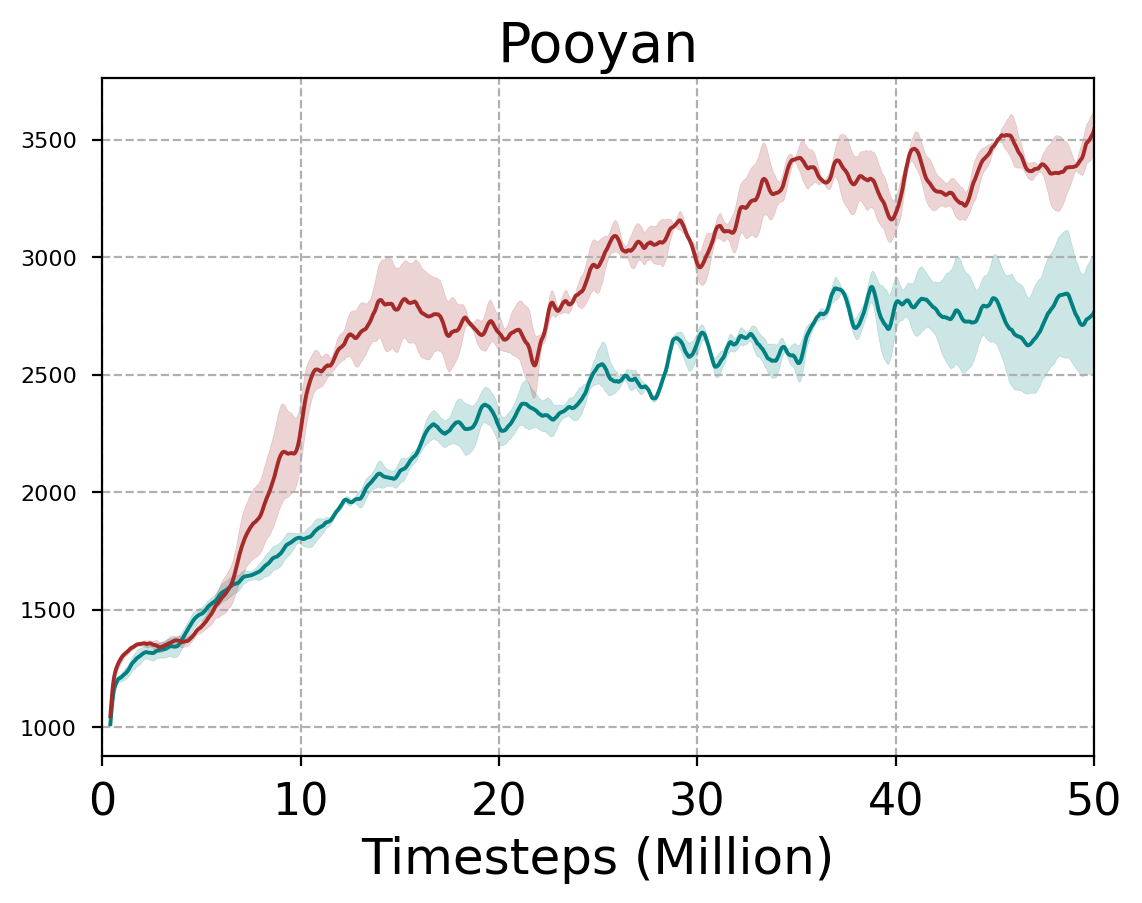}}
		\end{minipage}
		\begin{minipage}[b]{.16\linewidth}
			\centering
			\subfigure{\includegraphics[width=0.99\textwidth]{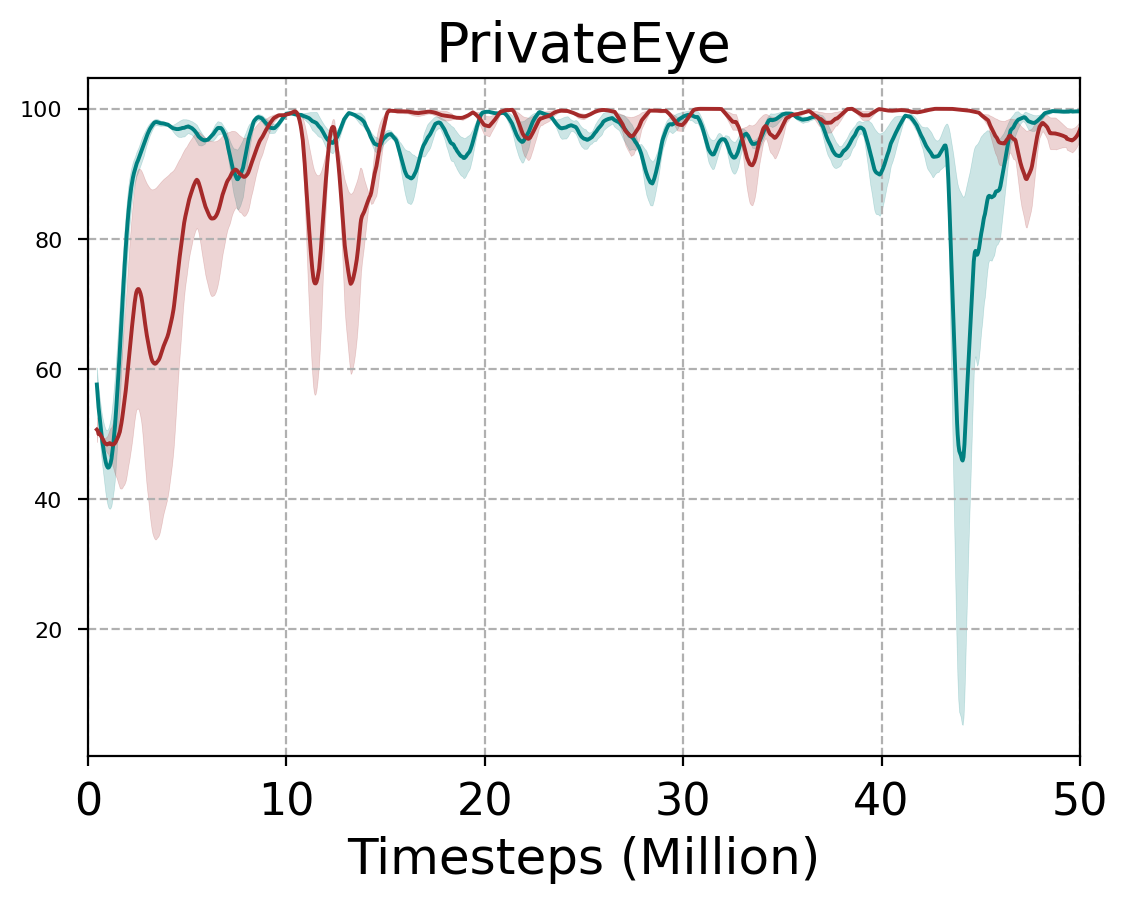}}
		\end{minipage}
		\begin{minipage}[b]{.16\linewidth}
			\centering
			\subfigure{\includegraphics[width=0.99\textwidth]{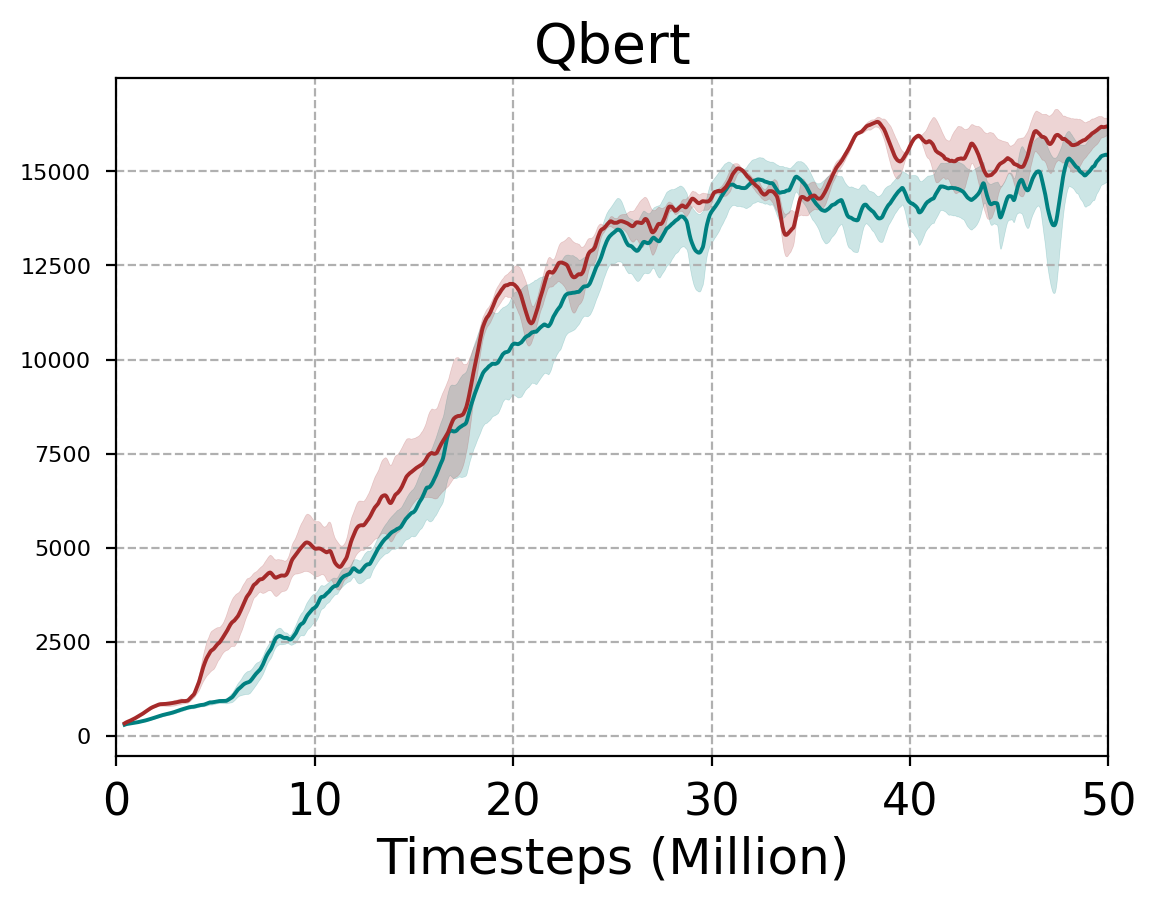}}
		\end{minipage}
		\begin{minipage}[b]{.16\linewidth}
			\centering
			\subfigure{\includegraphics[width=0.99\textwidth]{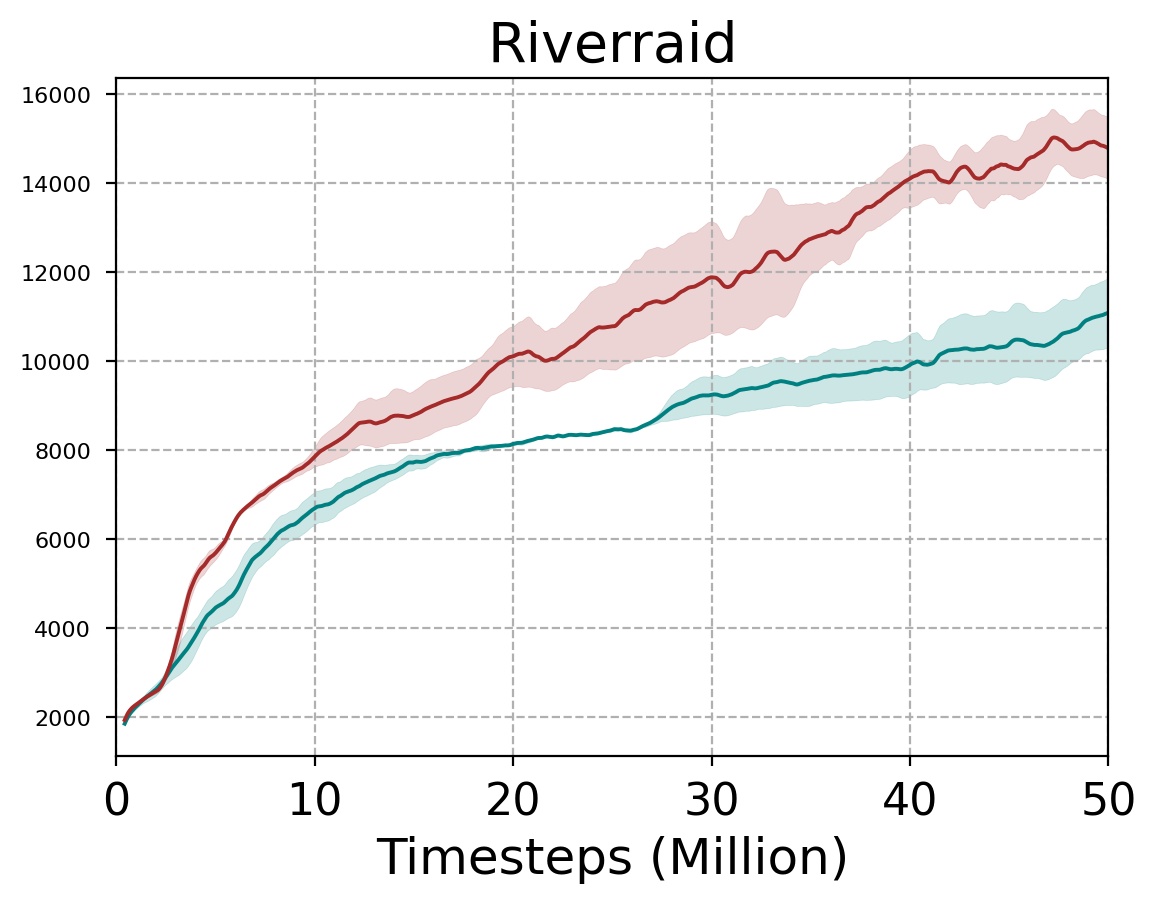}}
		\end{minipage}
		\begin{minipage}[b]{.16\linewidth}
			\centering
			\subfigure{\includegraphics[width=0.99\textwidth]{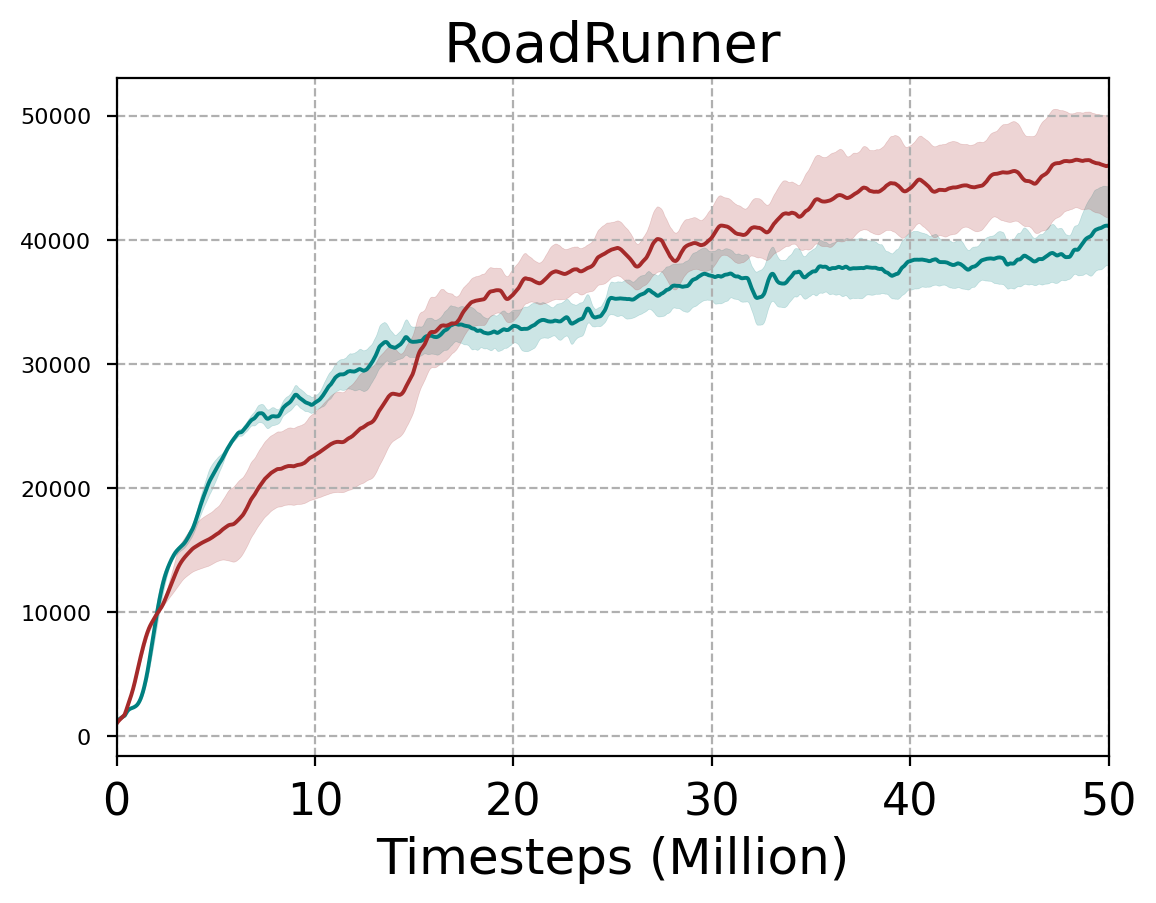}}
		\end{minipage}
		\begin{minipage}[b]{.16\linewidth}
			\centering
			\subfigure{\includegraphics[width=0.99\textwidth]{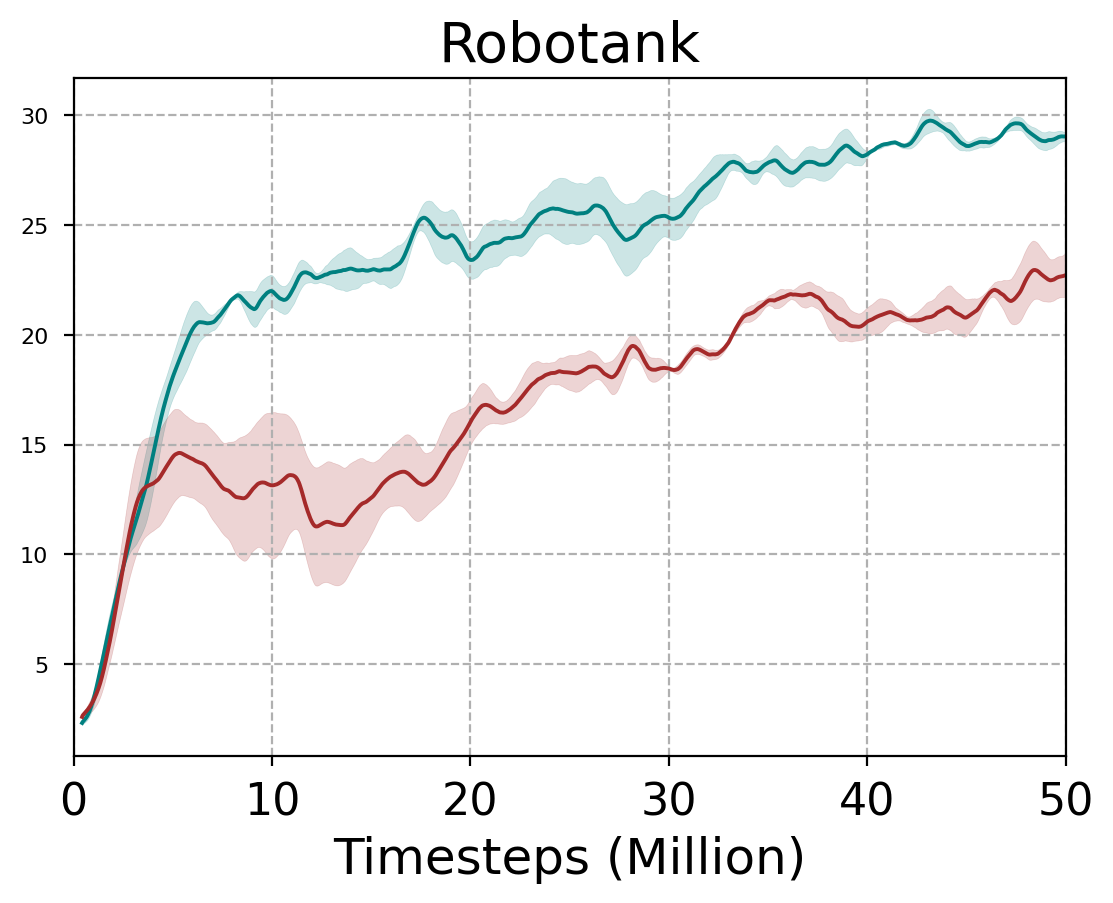}}
		\end{minipage}
		\\
		\begin{minipage}[b]{.16\linewidth}
			\centering
			\subfigure{\includegraphics[width=0.99\textwidth]{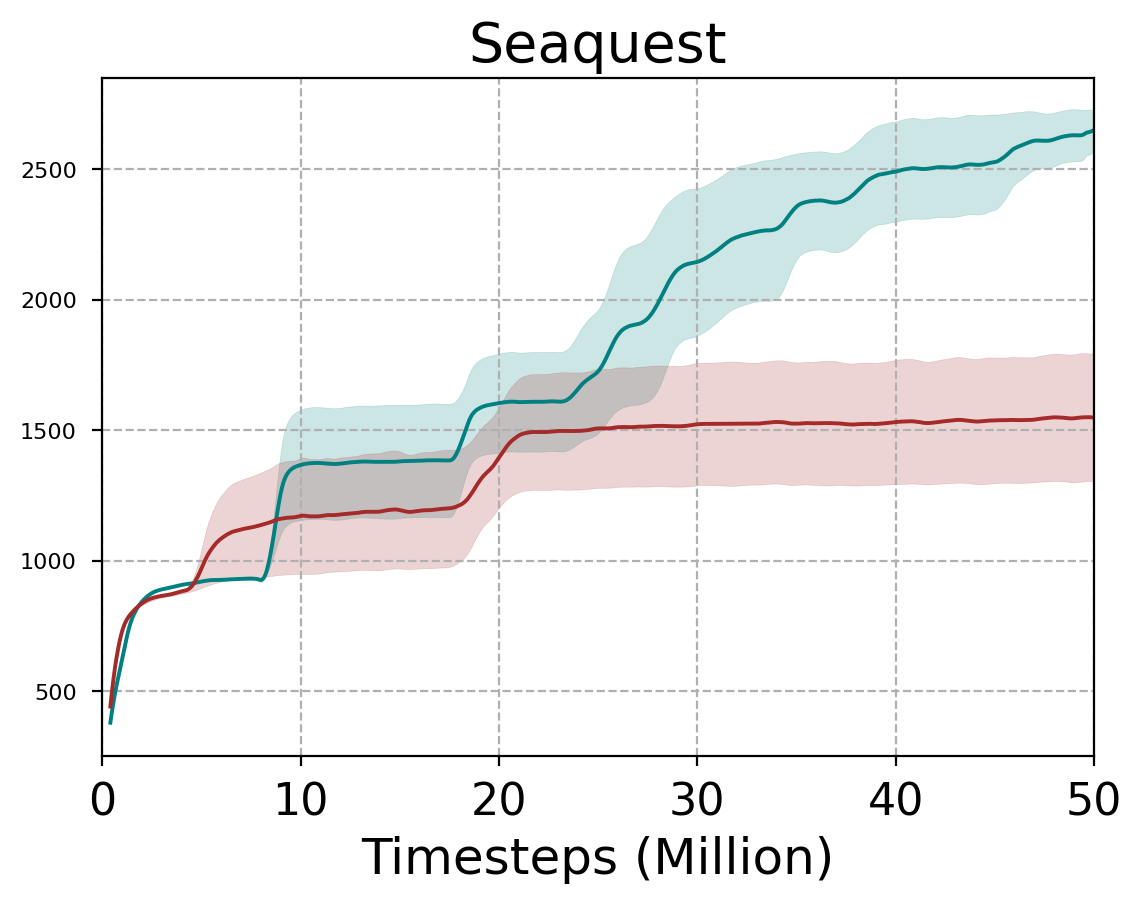}}
		\end{minipage}
		\begin{minipage}[b]{.16\linewidth}
			\centering
			\subfigure{\includegraphics[width=0.99\textwidth]{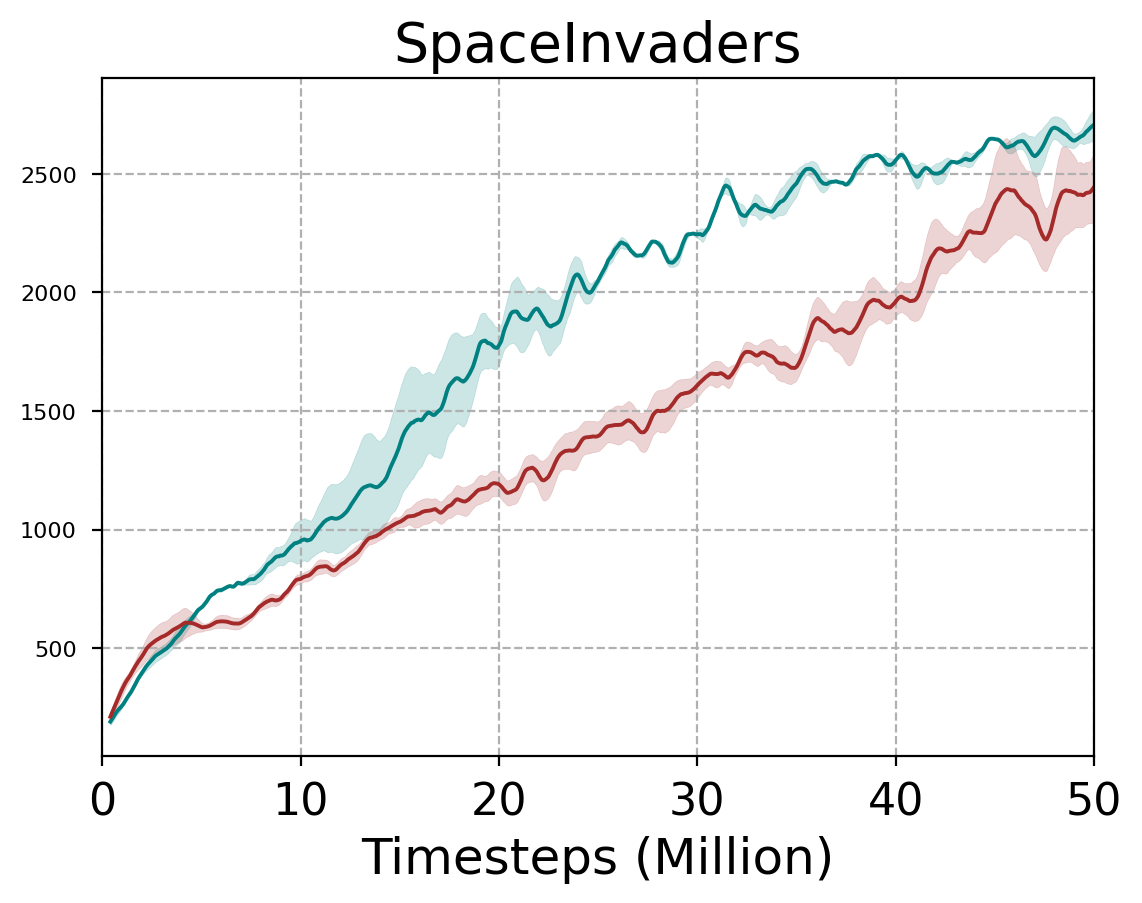}}
		\end{minipage}
		\begin{minipage}[b]{.16\linewidth}
			\centering
			\subfigure{\includegraphics[width=0.99\textwidth]{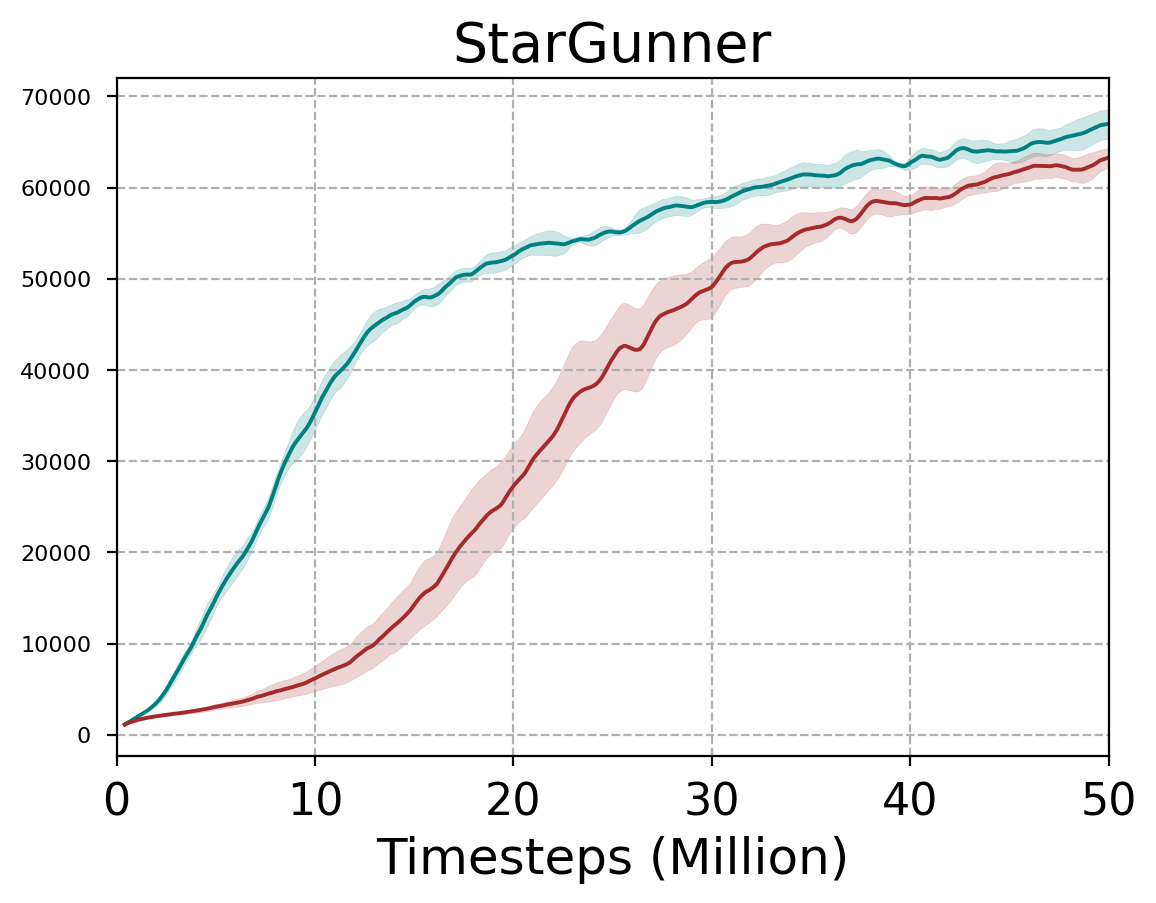}}
		\end{minipage}
		\begin{minipage}[b]{.16\linewidth}
			\centering
			\subfigure{\includegraphics[width=0.99\textwidth]{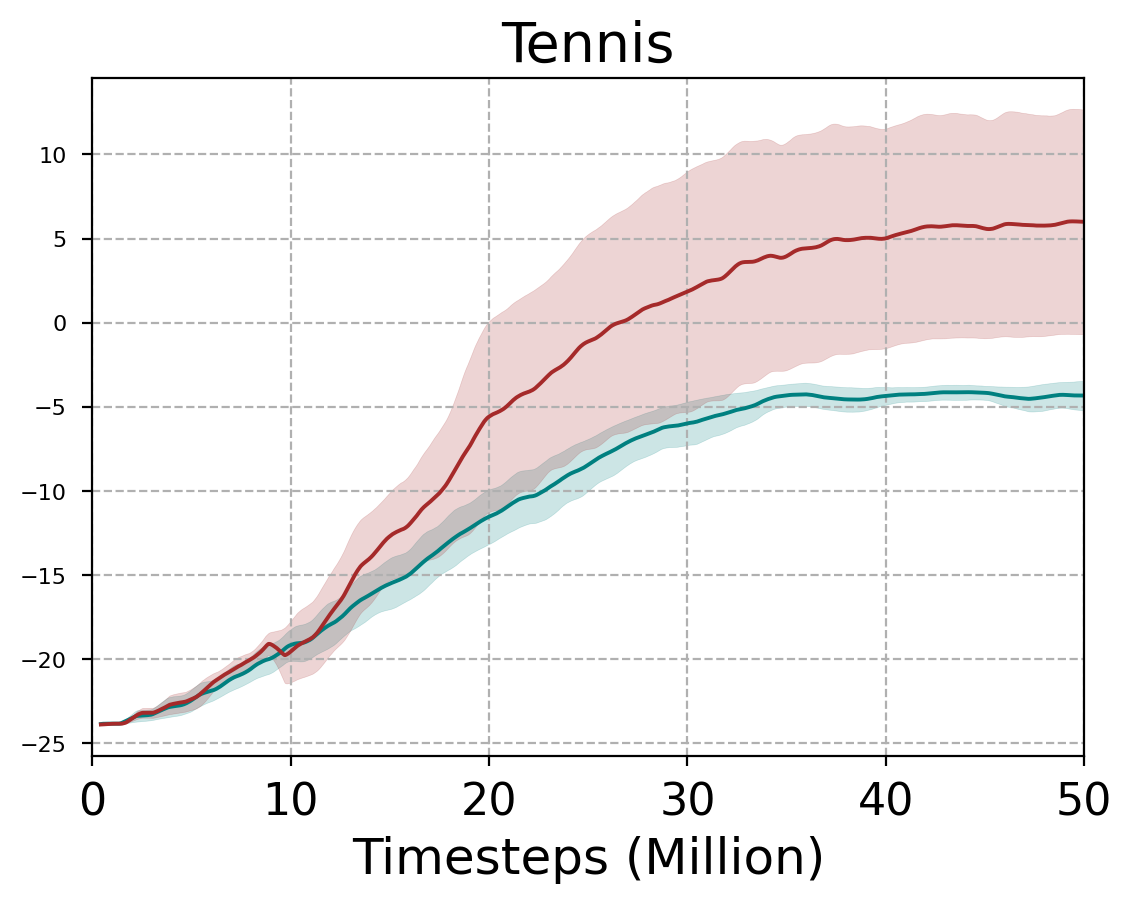}}
		\end{minipage}
		\begin{minipage}[b]{.16\linewidth}
			\centering
			\subfigure{\includegraphics[width=0.99\textwidth]{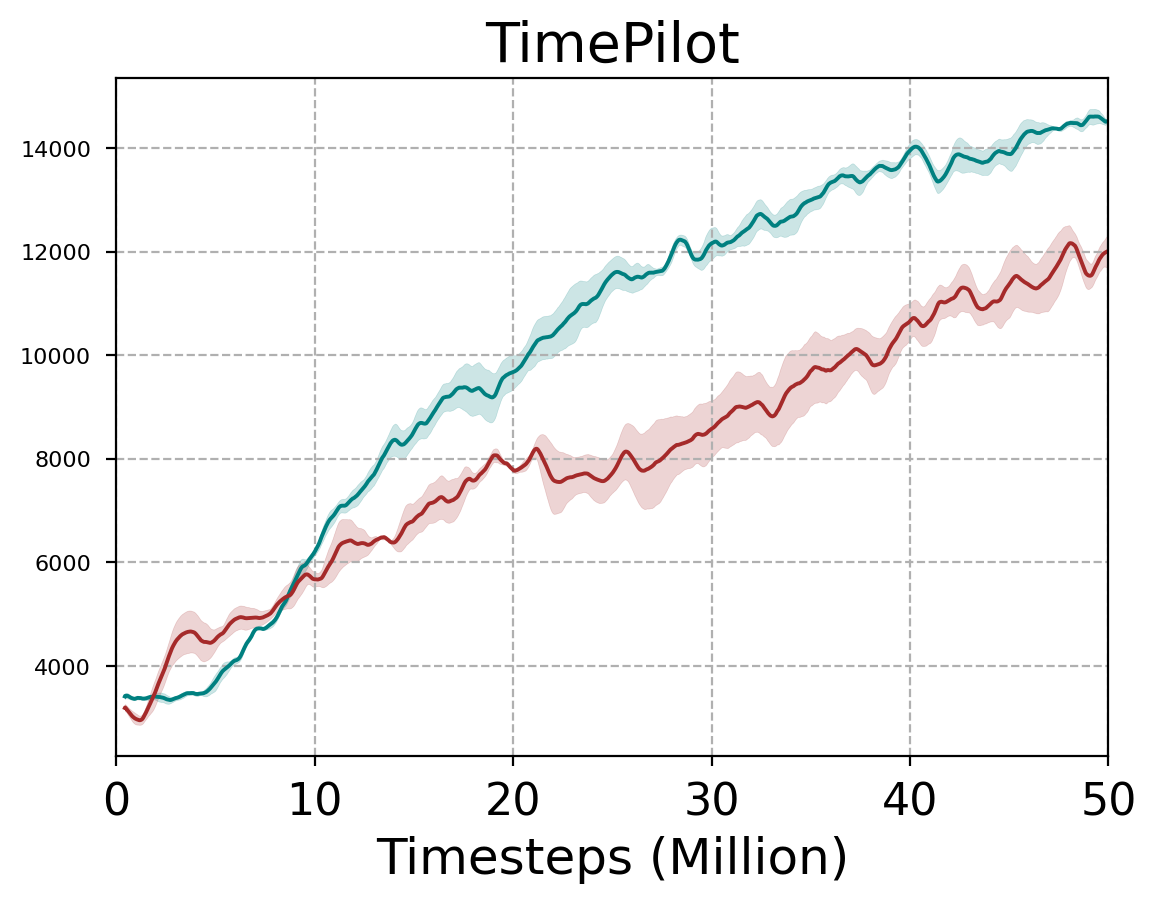}}
		\end{minipage}
		\begin{minipage}[b]{.16\linewidth}
			\centering
			\subfigure{\includegraphics[width=0.99\textwidth]{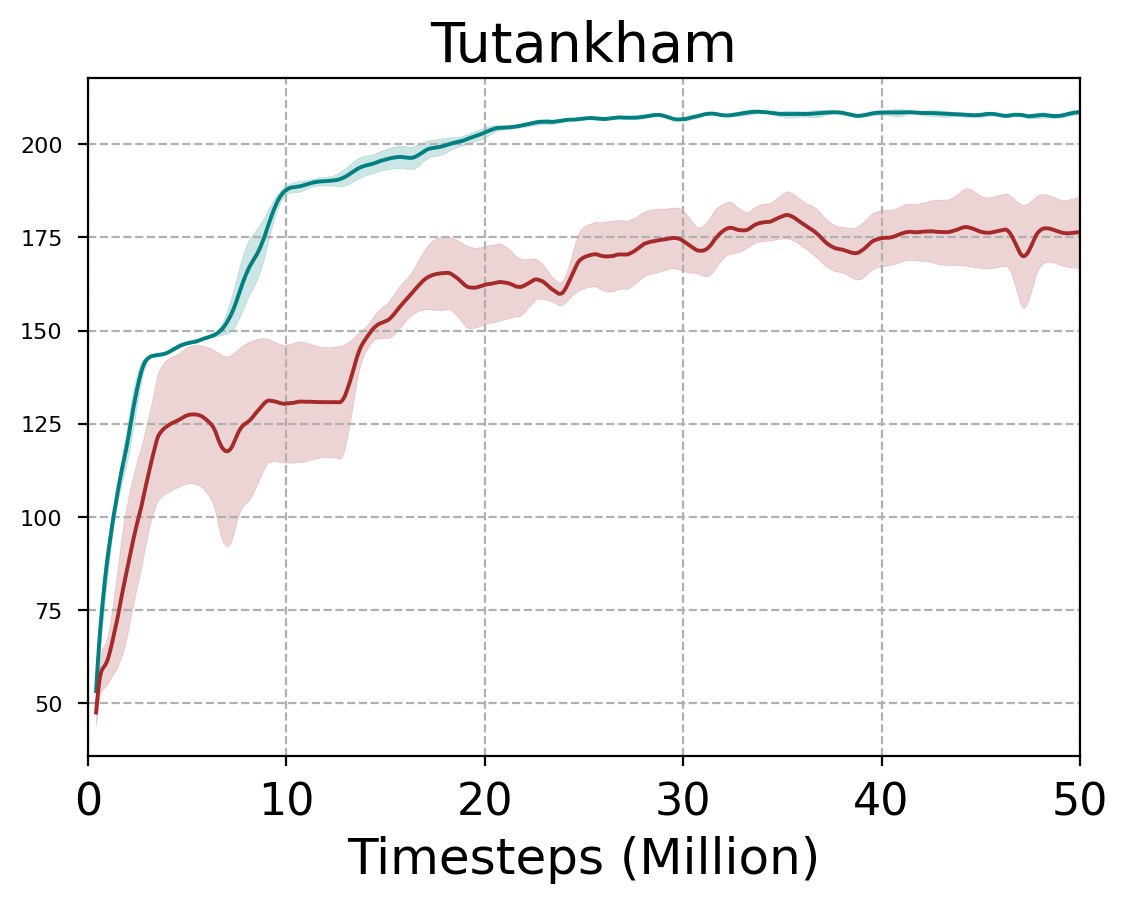}}
		\end{minipage}
		\\
		\begin{minipage}[b]{.16\linewidth}
			\centering
			\subfigure{\includegraphics[width=0.99\textwidth]{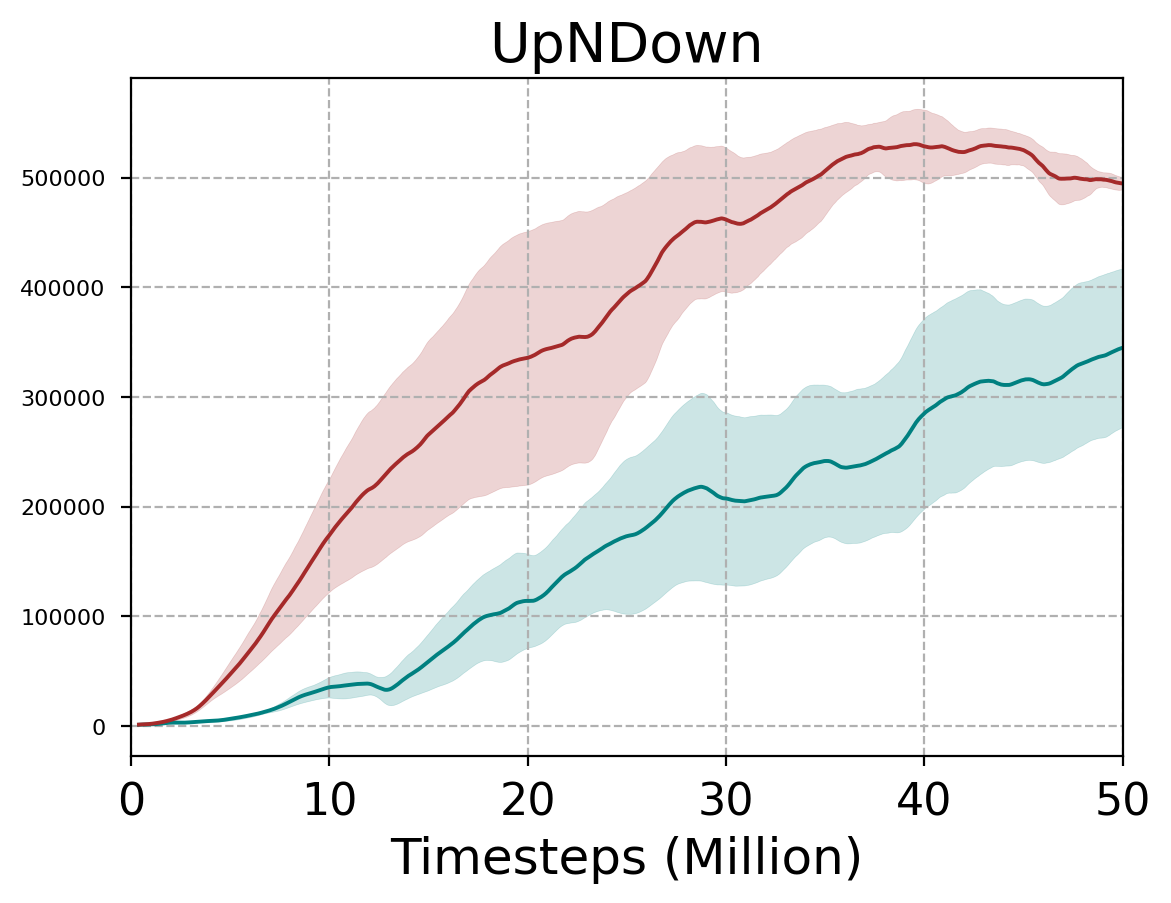}}
		\end{minipage}
		\begin{minipage}[b]{.16\linewidth}
			\centering
			\subfigure{\includegraphics[width=0.99\textwidth]{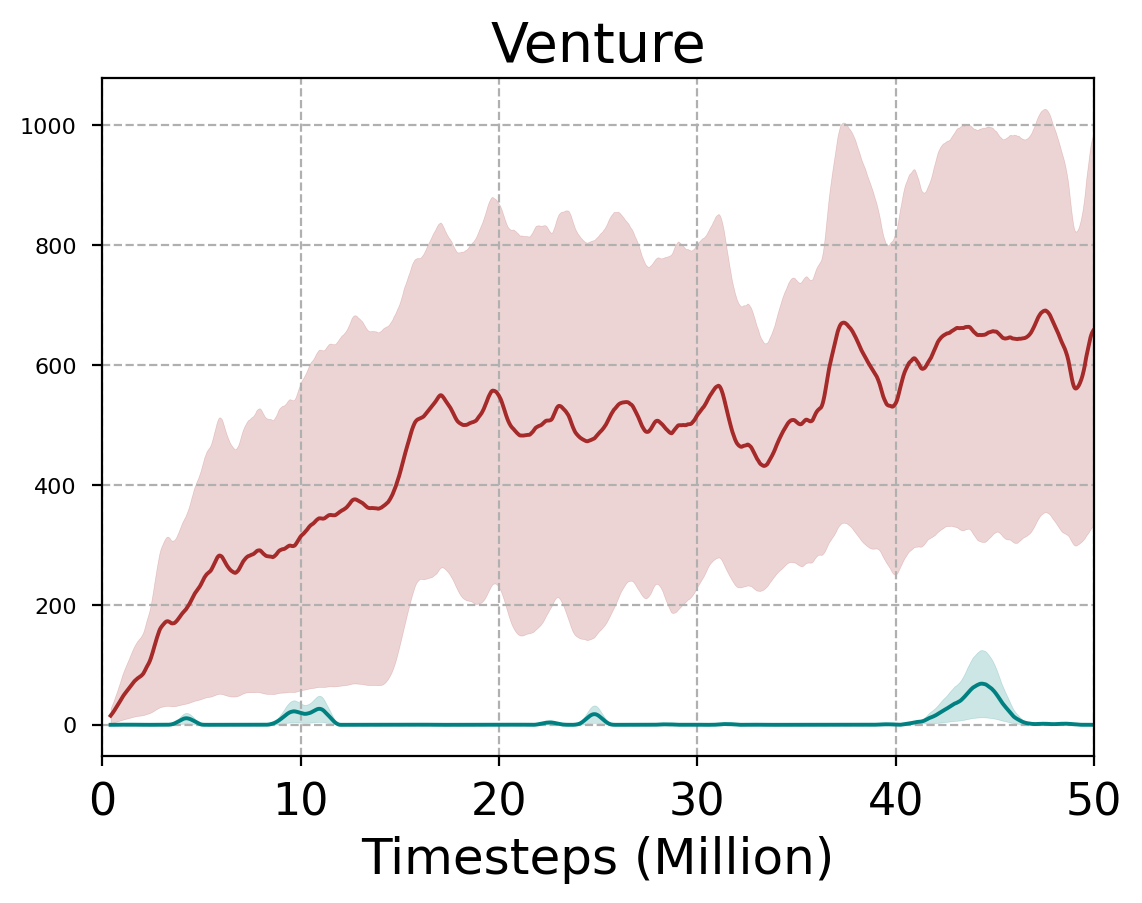}}
		\end{minipage}
		\begin{minipage}[b]{.16\linewidth}
			\centering
			\subfigure{\includegraphics[width=0.99\textwidth]{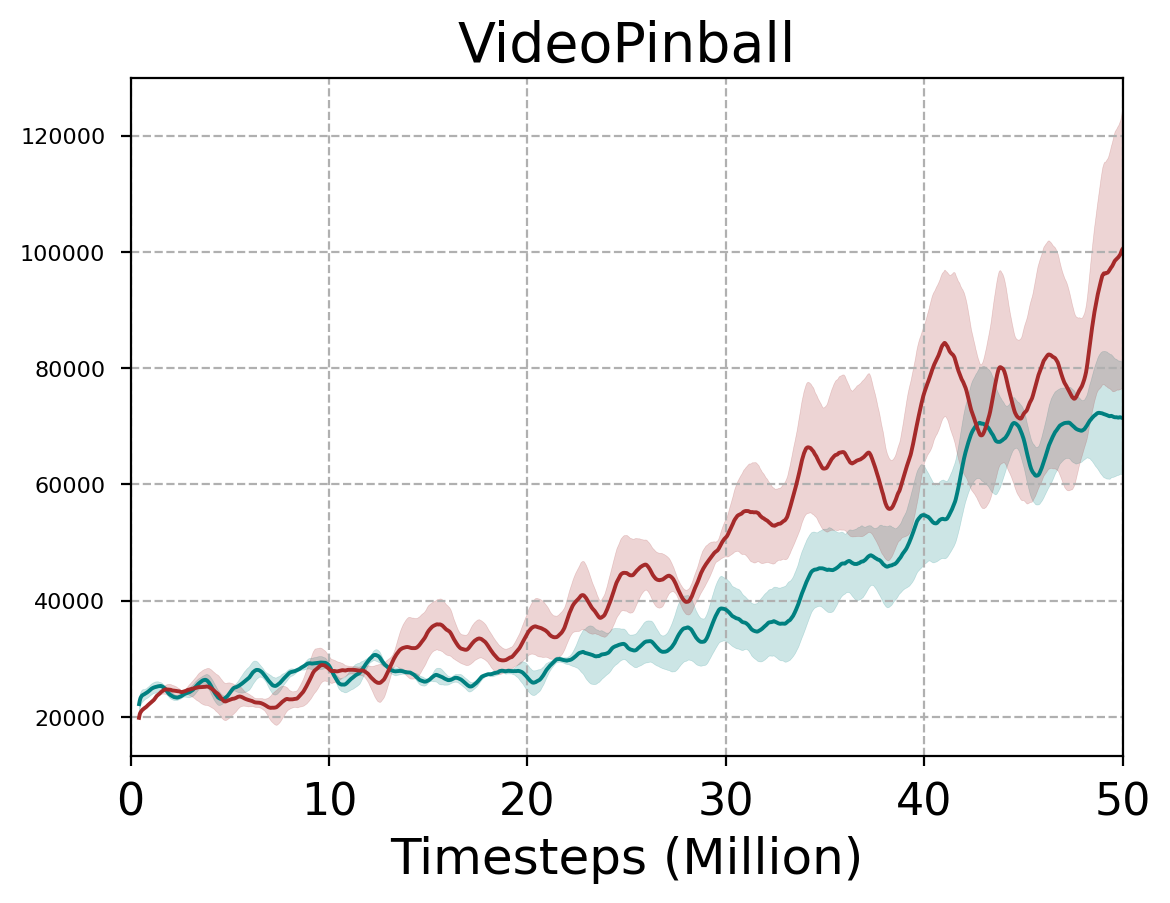}}
		\end{minipage}
		\begin{minipage}[b]{.16\linewidth}
			\centering
			\subfigure{\includegraphics[width=0.99\textwidth]{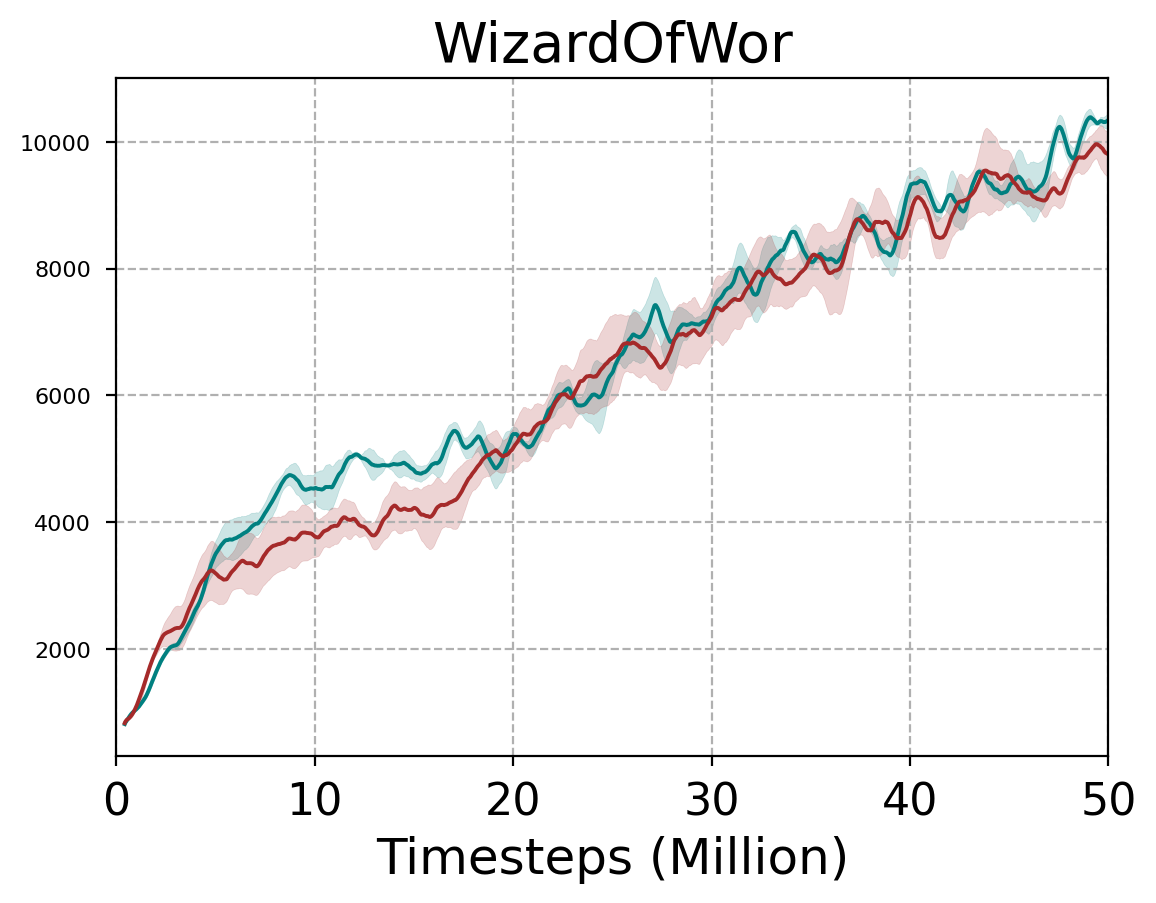}}
		\end{minipage}
		\begin{minipage}[b]{.16\linewidth}
			\centering
			\subfigure{\includegraphics[width=0.99\textwidth]{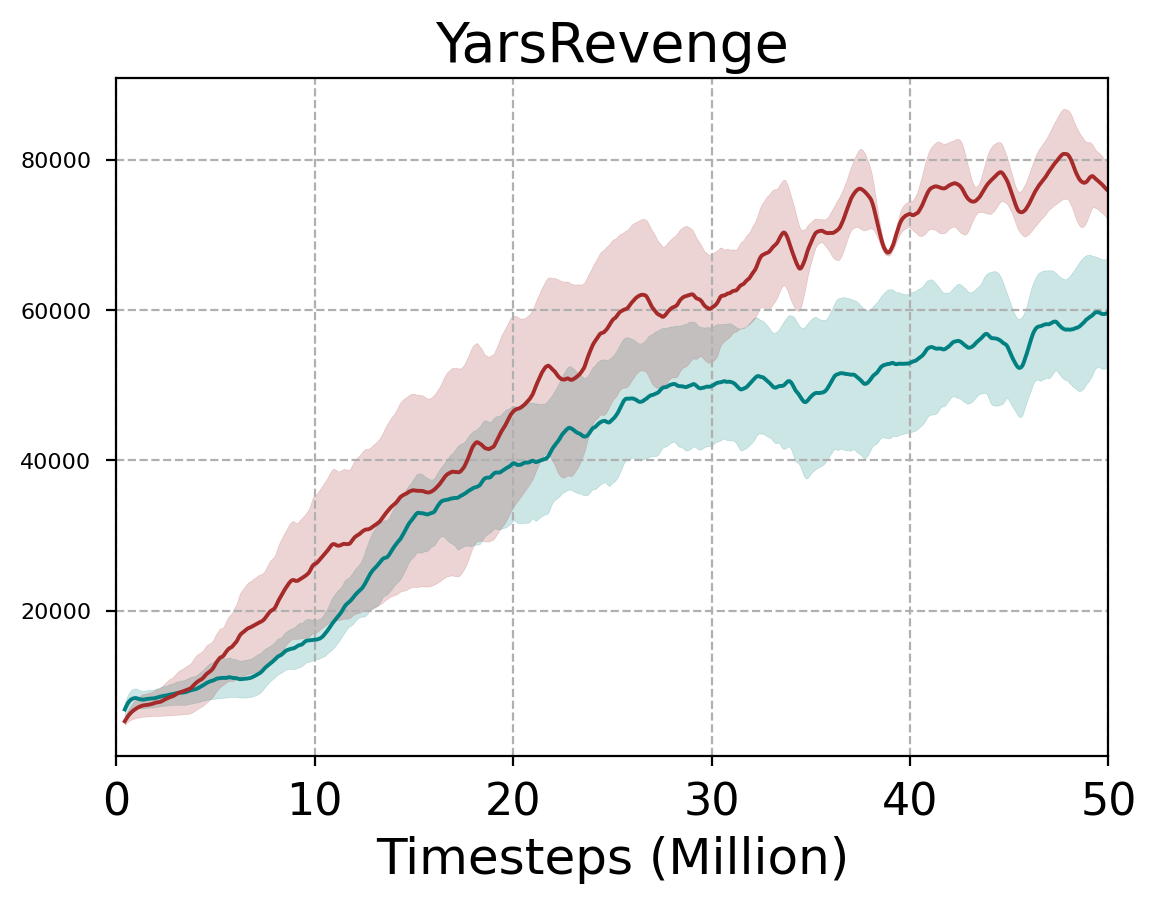}}
		\end{minipage}
		\begin{minipage}[b]{.16\linewidth}
			\centering
			\subfigure{\includegraphics[width=0.99\textwidth]{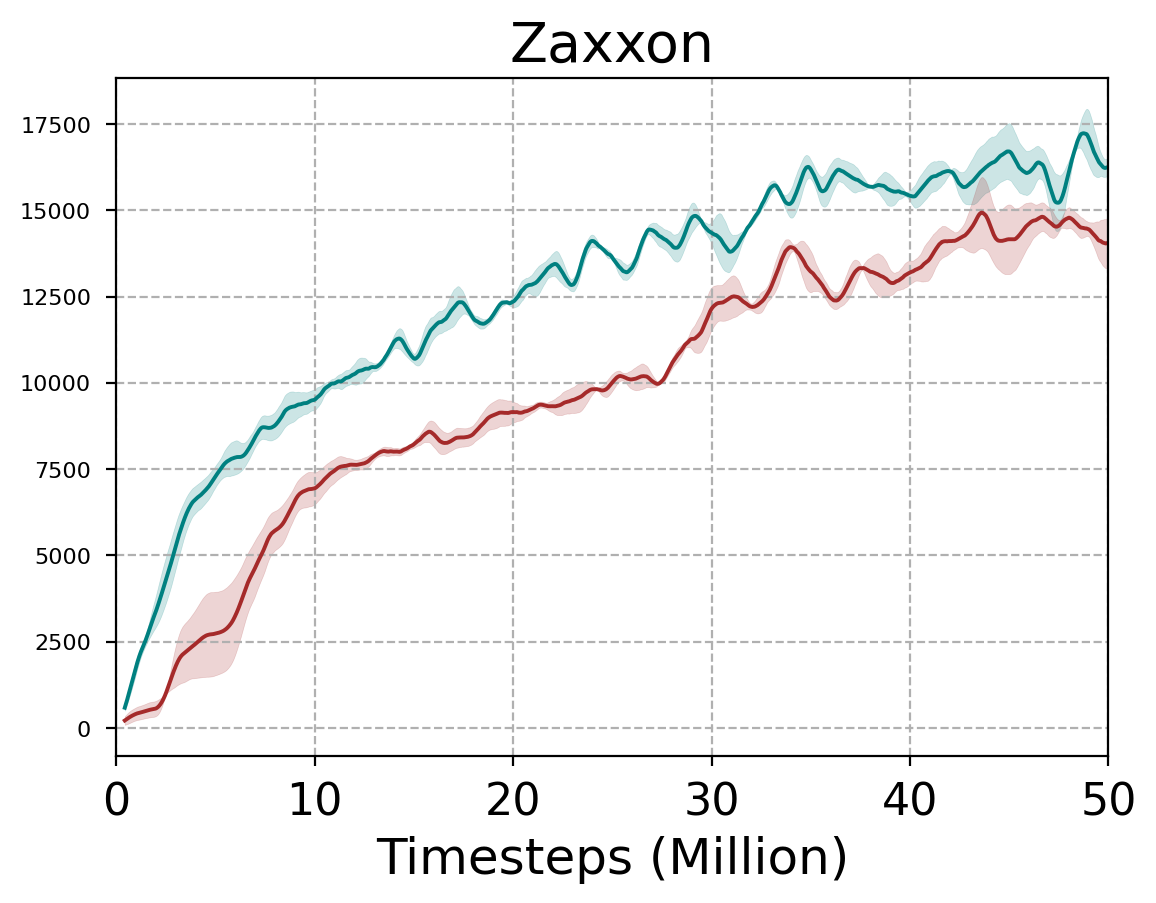}}
		\end{minipage}
		\caption{Learning curves on the Atari games. Performance of RPO vs. PPO.
		}
		\label{performance_atari}
	\end{figure}
	
	\begin{table}[t]
		\centering
		%	\vskip -.21in
		\caption{Mean of returns of three random seeds of last 100 episodes for different methods in the Atari environments. The best results are highlighted in \textbf{bold}. The bottom row counts the number of games where the algorithm or human performed best.}
		\label{atari_table}
		\setlength{\tabcolsep}{15pt}
		\begin{small}
			\begin{tabular}{lcccr}
				\toprule
				Algorithm & Random & Human & PPO & RPO \\
				\midrule
				Alien & 228 & \textbf{7128} & 2037 & 2201 \\
				Amidar & 6 & \textbf{1720} & 1121 & 1148 \\
				Asterix & 210 & 8503 & 12048 & \textbf{14904} \\
				Asteroids & 719 & \textbf{47389} & 3409 & 31230 \\
				Atlantis & 12850 & 29028 & 3311130 & \textbf{3347654} \\
				BankHeist & 14 & 753 & \textbf{1264} & 1257 \\
				BattleZone & 2360 & \textbf{37188} & 23629 & 29674 \\
				Berzerk & 124 & \textbf{2630} & 1390 & 1500 \\
				Bowling & 23 & \textbf{161} & 33 & 65 \\
				Boxing & 0 & 12 & 98 & \textbf{99} \\
				Breakout & 2 & 30 & 233 & \textbf{306} \\
				Carnival & 380 & 4000 & 3578 & \textbf{4289} \\
				Centipede & 2091 & \textbf{12017} & 4007 & 6361 \\
				ChopperCommand & 811 & \textbf{7388} & 1407 & 2394 \\
				CrazyClimber & 10780 & 35829 & 111516 & \textbf{121374} \\
				DemonAttack & 152 & 1971 & 157620 & \textbf{188806} \\
				DoubleDunk & -19 & -16 & -6 & \textbf{-4} \\
				ElevatorAction & 0 & 3000 & 10085 & \textbf{24473} \\
				Enduro & 0 & \textbf{860} & 366 & 470 \\
				FishingDerby & -92 & -39 & 38 & \textbf{46} \\
				Freeway & 0 & 30 & \textbf{33} & 32 \\
				Frostbite & 65 & 4335 & 3808 & \textbf{4455} \\
				Gopher & 258 & 2412 & 2165 & \textbf{9489} \\
				Gravitar & 173 & \textbf{3351} & 1301 & 1551 \\
				Hero & 1027 & 30826 & \textbf{35477} & 32507 \\
				IceHockey & -11 & \textbf{1} & -3 & -2 \\
				Jamesbond & 29 & 303 & 2979 & \textbf{5751} \\
				JourneyEscape & -18000 & -1000 & -1008 & \textbf{-609} \\
				Kangaroo & 52 & 3035 & 9173 & \textbf{10383} \\
				Krull & 1598 & 2666 & 7593 & \textbf{8507} \\
				KungFuMaster & 258 & 22736 & 25088 & \textbf{34463} \\
				MsPacman & 307 & \textbf{6952} & 2215 & 2969 \\
				NameThisGame & 2292 & \textbf{8049} & 5872 & 6925 \\
				Phoenix & 761 & 7243 & 27074 & \textbf{43073} \\
				Pitfall & -229 & \textbf{6464} & -10 & 0 \\
				Pong & -21 & 15 & \textbf{21} & \textbf{21} \\
				Pooyan & 500 & 1000 & 2765 & \textbf{3542} \\
				PrivateEye & 25 & \textbf{69571} & 99 & 96 \\
				Qbert & 164 & 13455 & 15410 & \textbf{16200} \\
				Riverraid & 1338 & \textbf{17118} & 11093 & 14785 \\
				RoadRunner & 12 & 7845 & 41111 & \textbf{45982} \\
				Robotank & 2 & 12 & \textbf{29} & 22 \\
				Seaquest & 68 & \textbf{42055} & 2651 & 1548 \\
				SpaceInvaders & 148 & 1669 & \textbf{2706} & 2445 \\
				StarGunner & 664 & 10250 & \textbf{66997} & 63297 \\
				Tennis & -24 & -8 & -4 & \textbf{5} \\
				TimePilot & 3568 & 5229 & \textbf{14527} & 12012 \\
				Tutankham & 11 & 168 & \textbf{208} & 176 \\
				UpNDown & 533 & 11693 & 344972 & \textbf{494588} \\
				Venture & 0 & \textbf{1188} & 0 & 659 \\
				VideoPinball & 0 & 17668 & 71406 & \textbf{100495} \\
				WizardOfWor & 564 & 4756 & \textbf{10347} & 9805 \\
				YarsRevenge & 3093 & 54577 & 59645 & \textbf{75887} \\
				Zaxxon & 32 & 9173 & \textbf{16258} & 14066 \\
				\midrule
				Best & 0 & 18 & 11 & 26 \\
				\bottomrule
			\end{tabular}
		\end{small}
	\end{table}

	\newpage

	\begin{table}[t]
		\centering
		\vskip -.1in
		\caption{Mean of return for different $k$ in Mujoco.}
		\label{tab_all1}
		\begin{tabular}{l|c|c|c|c}
			\toprule
			& PPO & RPO ($k=2$) & RPO ($k=3$) & RPO ($k=4$)\\
			\hline
			HalfCheetah & 2408 & 3495 & 4239 & 4381  \\
			\hline
			Swimmer & 134 & 157 & 226 & 227 \\
			\bottomrule
		\end{tabular}
	\end{table}
	
	\begin{table}[h]
		\centering
		\vskip -.1in
		\caption{Mean of return of three random seeds for different $\beta$ in some Atari. The best results are highlighted in \textbf{bold}. }
		\label{tab_all_at}
		\begin{tabular}{l|c|c|c|c|c|cr}
			\toprule
			& PPO & RPO ($\beta=1$) & RPO ($\beta=2$) & RPO ($\beta=3$) & RPO ($\beta=4$) & RPO ($\beta=5$) \\
			\hline
			Asterix & 12048 & 15198 & \textbf{17138} & 14904 & 16054 & 13806 \\
			\hline
			Breakout & 233 & 272 & 303 & 306 & 357 & \textbf{360} \\
			\hline
			CrazyClimber & 111516 & 112760 & 118933 & 121374 & 114350 & \textbf{123890} \\
			\hline
			Kangaroo & 9173 & \textbf{12434} & 10611 & 10383 & 10239 & 11590 \\
			\hline
			Phoenix & 27074 & 28240 & 34910 & \textbf{43073} & 42200 & 42082 \\
			\hline
			Qbert & 15410 & 16634 & 16213 & 16200 & 16137 & \textbf{16958}\\
			\bottomrule
		\end{tabular}
	\end{table}
	
	\begin{table}[t]
		\begin{minipage}{0.48\linewidth}
			\centering
			\caption{Hyperparameters for RPO on Mujoco tasks.}\label{Hy-rpo}	
			\begin{tabular}{l|r}
				\toprule
				\multicolumn{1}{l}{Hyperparameter} &
				\multicolumn{1}{r}{ Value } \\
				\midrule
				Discount rate $ \gamma $ & 0.995 \\
				GAE parameter            & 0.97  \\
				Minibatches per epoch    & 32    \\
				Epochs per update        & 10    \\
				Optimizer                & Adam \\
				Learning rate $ \phi $         & 3e-4  \\
				Minimum batch size ($ n $)     & 2048\\
				First clipping parameter $ \epsilon $			& 0.2\\
				Second clipping parameter $ \epsilon_1 $			& 0.1\\
				Weighting parameter $ \beta $     & 0.3 \\	
				\bottomrule
			\end{tabular}
		\end{minipage}
		\hfill
		\begin{minipage}{0.48\linewidth}
			\centering
			\caption{Hyperparameters for RPO on Atari tasks.}\label{Hy-rpo_atari}
			\begin{tabular}{l|r}
				\toprule
				\multicolumn{1}{l}{Hyperparameter} &
				\multicolumn{1}{r}{ Value } \\
				\midrule
				Discount rate $ \gamma $ & 0.99 \\
				GAE parameter            & 0.95  \\
				Number Workers    & 16    \\
				Epochs per update        & 4    \\
				Optimizer                & Adam \\
				Learning rate $ \phi $         & 2.5e-4  \\
				Rollout Length     & 256\\
				First clipping parameter $ \epsilon $			& 0.1\\
				Second clipping parameter $ \epsilon_1 $			& 0.1\\
				Weighting parameter $ \beta $     & 3.0 \\	
				\bottomrule
			\end{tabular}
		\end{minipage}
	\end{table}

	%%%%%%%%%%%%%%%%%%%%%%%%%%%%%%%%%%%%%%%%%%%%%%%%%%%%%%%%%%%%%%%%%%%%%%%%%%%%%%%
	%%%%%%%%%%%%%%%%%%%%%%%%%%%%%%%%%%%%%%%%%%%%%%%%%%%%%%%%%%%%%%%%%%%%%%%%%%%%%%%

\end{document}